\PassOptionsToPackage{dvipsnames,table}{xcolor}
\documentclass{article}
\usepackage{iclr2027_conference,times}
\iclrfinalcopy

\usepackage[utf8]{inputenc}
\usepackage[T1]{fontenc}
\usepackage{url}
\usepackage{placeins}

\usepackage{graphicx}
\usepackage{booktabs}
\usepackage{makecell}
\usepackage{subcaption}
\usepackage{microtype}
\usepackage{nicefrac}
\usepackage{mathtools}
\usepackage{mdframed}
\usepackage{amsthm,amsfonts,amssymb,mathrsfs}
\usepackage{thmtools}
\usepackage{algorithm}
\usepackage{algpseudocode}
\usepackage[ruled,vlined,linesnumbered,algo2e]{algorithm2e}
\SetAlgoNlRelativeSize{-1}
\usepackage{enumitem}
\usepackage{aliascnt}
\usepackage{wrapfig}
\usepackage{caption}
\usepackage{tabularx}
\usepackage[dvipsnames,table]{xcolor}
\usepackage{tikz}
\usetikzlibrary{arrows.meta,positioning,calc,fit,shapes.geometric}
\usepackage[colorlinks,linktoc=all,pagebackref=true]{hyperref}
\usepackage[all]{hypcap}
\usepackage[nameinlink,capitalise,noabbrev]{cleveref}

\makeatletter
\@ifundefined{theHALG@line}
  {\newcommand{\theHALG@line}{\thealgorithm.\arabic{ALG@line}}}
  {\renewcommand{\theHALG@line}{\thealgorithm.\arabic{ALG@line}}}
\makeatother

\definecolor{darkred}{RGB}{100,0,0}
\definecolor{darkblue}{RGB}{0,0,100}
\definecolor{darkgreen}{RGB}{0,75,0}
\definecolor{remarkgreen}{RGB}{238,248,240}

\hypersetup{
    citecolor=darkgreen,
    linkcolor=darkgreen,
    urlcolor=darkgreen
}

\declaretheoremstyle[
    spaceabove=\parsep,
    spacebelow=\parsep,
    bodyfont=\normalfont\itshape,
]{theoremsty}

\declaretheoremstyle[
    spaceabove=\parsep,
    spacebelow=\parsep,
    bodyfont=\normalfont,
]{normalsty}

\mdfdefinestyle{coloredstyle}{%
    leftline=false,%
    rightline=false,%
    topline=false,%
    bottomline=false,%
    backgroundcolor=gray!10,%
    innertopmargin=1.5ex,%
    innerbottommargin=0.7ex,%
    innerleftmargin=1.ex,%
    innerrightmargin=1.ex,%
    skipabove=\topskip,%
    skipbelow=\parskip%
}

\mdfdefinestyle{remarkstyle}{%
    leftline=false,%
    rightline=false,%
    topline=false,%
    bottomline=false,%
    backgroundcolor=remarkgreen,%
    innertopmargin=1.5ex,%
    innerbottommargin=0.7ex,%
    innerleftmargin=1.ex,%
    innerrightmargin=1.ex,%
    skipabove=\topskip,%
    skipbelow=\parskip%
}

\declaretheorem[name=Theorem,style=theoremsty,mdframed={style=coloredstyle}]{theorem}
\declaretheorem[name=Proposition,style=theoremsty,mdframed={style=coloredstyle}]{proposition}

\declaretheorem[name=Theorem,style=theoremsty,mdframed={style=coloredstyle}]{theorem*}
\declaretheorem[name=Proposition,style=theoremsty,mdframed={style=coloredstyle}]{proposition*}
\declaretheorem[name=Corollary,style=theoremsty,mdframed={style=coloredstyle}]{corollary*}
\declaretheorem[name=Lemma,style=theoremsty,mdframed={style=coloredstyle}]{lemma*}
\declaretheorem[name=Conjecture,style=theoremsty,mdframed={style=coloredstyle}]{conjecture*}

\declaretheorem[name=Theorem,style=theoremsty,mdframed={style=coloredstyle},numberwithin=section]{theorem_ap}
\declaretheorem[name=Proposition,style=theoremsty,mdframed={style=coloredstyle},numberwithin=section]{proposition_ap}
\declaretheorem[name=Corollary,style=theoremsty,mdframed={style=coloredstyle},numberwithin=section]{corollary_ap}
\declaretheorem[name=Lemma,style=theoremsty,mdframed={style=coloredstyle},numberwithin=section]{lemma_ap}

\declaretheorem[name=Remark,style=normalsty,mdframed={style=remarkstyle}]{remark}

\declaretheorem[name=Assumption,style=normalsty,mdframed={style=coloredstyle}]{assumption}

\declaretheorem[name=Remark,style=normalsty,mdframed={style=remarkstyle},numbered=no]{remark*}
\declaretheorem[name=Definition,style=normalsty,mdframed={style=coloredstyle},numbered=no]{definition*}
\declaretheorem[name=Assumption,style=normalsty,mdframed={style=coloredstyle},numbered=no]{assumption*}

\numberwithin{theorem}{section}
\numberwithin{proposition}{section}
\numberwithin{corollary}{section}
\numberwithin{lemma}{section}
\numberwithin{conjecture}{section}
\numberwithin{remark}{section}
\numberwithin{definition}{section}
\numberwithin{assumption}{section}
\numberwithin{example}{section}

\usepackage{bbm}

\newcommand{\calX}{{\mathcal{X}}}

\newcommand{\calZ}{{\mathcal{Z}}}

\newcommand{\bbE}{\mathbb{E}}

\newcommand{\bbP}{\mathbb{P}}

\newcommand{\bbR}{\mathbb{R}}

\newcommand{\mbra}[1]{\left\{#1\right\}}
\newcommand{\sbra}[1]{\left(#1\right)}

\definecolor{blendedblue}{rgb}{0.2,0.2,0.7}
\definecolor{revisionbrick}{HTML}{7A3E45}
\definecolor{emphasisplum}{HTML}{594653}
\definecolor{revisionteal}{HTML}{006D77}
\definecolor{revisionpurple}{HTML}{6F42A6}
\definecolor{revisionorange}{HTML}{A14F00}
\definecolor{revisionforest}{HTML}{28643A}
\newif\ifshowauthorcomments
\showauthorcommentsfalse

\usepackage{arydshln}

\newif\ifshowbibliography
\showbibliographytrue

\title{Imperfect Verifiers, Exact Targets: Inference-Time Scaling for Continuous Generation}
\author{%
Kijung Jeon \\
Georgia Institute of Technology \\
\texttt{kjeon@gatech.edu}
\And
Molei Tao \\
Georgia Institute of Technology \\
\texttt{mtao@gatech.edu}
}

\crefname{maintheorem}{Theorem}{Theorems}
\Crefname{maintheorem}{Theorem}{Theorems}
\crefname{theorem_ap}{Theorem}{Theorems}
\Crefname{theorem_ap}{Theorem}{Theorems}
\crefname{proposition_ap}{Proposition}{Propositions}
\Crefname{proposition_ap}{Proposition}{Propositions}
\crefname{corollary_ap}{Corollary}{Corollaries}
\Crefname{corollary_ap}{Corollary}{Corollaries}
\crefname{lemma_ap}{Lemma}{Lemmas}
\Crefname{lemma_ap}{Lemma}{Lemmas}

\newcommand{\algorithmhrefprefix}{main}
\makeatletter
\patchcmd{\algocf@caption@algo}
  {}
  {}
  {}{\PackageError{main}{Algorithm hyperlink patch failed}{Check algorithm2e's caption definition.}}
\makeatother

\begin{document}

\maketitle
\lhead{Preprint}
\begin{abstract}
Verifier-guided inference-time scaling uses additional computation to score intermediate states and steer generation toward better outputs. However, imperfect intermediate verifiers can distort sampling, and repeated refinement alone does not ensure recovery of the desired reward-tilted distribution. We introduce Continuous-State Verifier-Guided Backtracking (CS-VGB), an inference-time scaling method for sampling from reward-tilted distributions with continuous generative models through repeated denoising and renoising. With correctly paired denoising and renoising transitions, the stationary distribution of CS-VGB at the clean checkpoint equals the desired reward-tilted target, even when intermediate verifiers are inaccurate. Under stated regularity conditions, CS-VGB converges exponentially to the target despite verifier error, and the expected number of updates needed to reach a clean sample grows only linearly with the number of checkpoints.
In analytically tractable settings, we empirically demonstrate that CS-VGB recovers the desired reward-tilted distribution despite verifier error, whereas the compared verifier-based sampling methods fail to recover the target under the same verifier errors. We also instantiate the framework with diffusion models and one-step stochastic posterior maps. Across prompt-guided ImageNet generation, facial-attribute generation on CelebA, and multi-robot planning, recurrent CS-VGB search improves reward and task success as inference budgets grow, with image-generation rewards continuing beyond the observed plateaus of denoising-only baselines.\footnote{Code is available at \url{https://github.com/KraitGit/CS_VGB}.}

\end{abstract}

\vspace{-5pt}
\section{Introduction}
\vspace{-5pt}
\label{sec:main-introduction} 
Inference-time scaling seeks to improve the outputs of pretrained generative models by spending additional computation at sampling time. For generation in continuous spaces, such as images and robot trajectories, this computation can be allocated to Monte Carlo estimation of guidance gradients~\citep{main:song2023lgd,potaptchik2026meta,holderrieth2026diamond}, larger candidate sets for soft value-based decoding (SVDD)~\citep{li2025svdd}, or more particles with intermediate weighting and resampling via sequential Monte Carlo (SMC)~\citep{wu2023tds,singhal2025fk}.

Despite these different uses of computation, these methods follow a one-way denoising trajectory and favor intermediate states that appear promising before the terminal reward is available, often relying on a \emph{process verifier}, or \emph{intermediate verifier}, to estimate the reward of future completions. However, errors in these estimates can lead the sampler to discard promising continuations and spend subsequent computation on less favorable ones, while the one-way trajectory prevents it from revisiting earlier noise levels to reconsider those choices.

To address this limitation, recent inference-time scaling methods use renoising to return to earlier noise levels and explore alternative denoising continuations, as in depth-first search (DFS)~\citep{zhang2025classical}, reward-guided iterative refinement (RERD)~\citep{uehara2025refinement}, and cyclic diffusion search (ABCD)~\citep{lee2025cyclic}. These approaches enable bidirectional exploration across noise levels, but it remains unclear how to preserve a prescribed target distribution despite errors in the process verifier. We therefore seek the \emph{reward-tilted distribution} $\pi_0^\star(dx)\propto r(x)\pi_0(dx)$, where $\pi_0$ is the model's clean distribution and $r$ is a nonnegative terminal reward, and ask:
\begin{mdframed}[style=coloredstyle,backgroundcolor=black!4,innertopmargin=4pt,innerbottommargin=4pt,innerleftmargin=6pt,innerrightmargin=6pt,skipabove=3pt,skipbelow=3pt]
\noindent\begin{minipage}[t]{1.55em}
\vspace{0pt}\textbf{\sffamily Q}
\end{minipage}%
\begin{minipage}[t]{\dimexpr\linewidth-1.55em\relax}
\vspace{0pt}\raggedright
Can we construct a bidirectional sampler for continuous-state generation that revisits earlier decisions while preserving the desired reward-tilted target despite errors in process verifiers?
\end{minipage}
\end{mdframed}

Recent work in discrete generation addresses this question through verifier-guided backtracking: AR-VGB~\citep{rohatgi2026vgb} and MDM-VGB~\citep{jeon2026mdmvgb} separate the role of imperfect process verifiers from the target distribution in autoregressive and masked diffusion models, respectively. Motivated by these results, we rigorously extend this sampling principle to continuous generative models through \textbf{CS-VGB}, an outer Markov chain Monte Carlo (MCMC) framework for models that admit denoising and renoising transitions between checkpoints. CS-VGB turns a pretrained model's one-way sampler into a bidirectional, recurrent sampler that revisits earlier decisions by alternating denoising and renoising. In this construction, the process verifier guides exploration, while the exact terminal reward determines the stationary distribution at the clean checkpoint.

Our contributions are threefold:
\begin{itemize}[leftmargin=*,itemsep=3pt,topsep=4pt]
\item \textbf{CS-VGB with imperfect process verifiers.} We establish that, with correctly paired denoising and renoising transitions, the stationary clean distribution is $\pi_0^\star$ regardless of process-verifier accuracy. To retain this guarantee in practical finite-candidate implementations, we introduce \emph{reverse insertion}, together with \emph{lazy pool evaluation} to reduce the number of process-verifier evaluations without changing the stationary clean target.
\item \textbf{Convergence guarantees and first-hitting times.} Under stated assumptions, CS-VGB converges exponentially fast despite process-verifier error, while \textit{directional persistence} enables a first clean visit in an expected number of updates linear in the checkpoint count. With sufficiently small transition mismatch, convergence remains still exponential to a potentially biased target, whose bias vanishes as the transition mismatch tends to zero.
\item \textbf{Inference-time sampling and search.} In analytically tractable settings, we empirically demonstrate that CS-VGB recovers the reward-tilted target despite process-verifier error, whereas the compared verifier-based sampling methods deviate from this target. Building on the CS-VGB framework, we also propose a finite-budget search algorithm for diffusion models and one-step posterior maps~\citep{potaptchik2026meta,holderrieth2026diamond}, demonstrating that recurrent sampling can improve performance beyond the plateaus of denoising-only methods.
\end{itemize}
We discuss additional related work on reward-guided sampling and search, verifier-guided backtracking, and stochastic transitions and posterior maps in \cref{sec:main-related}.

\section{CS-VGB: Recurrent Sampling and Search}
\label{sec:main-results}
\label{sec:main-preliminaries}

We consider inference-time scaling methods for sampling from the \emph{reward-tilted distribution} of a pretrained generator with clean law $\pi_0$ and terminal reward $r:\mathcal X\to[0,\infty)$, where $\mathcal X\subset\mathbb R^d$:
\begin{equation}
\pi_0^\star(dx):=\frac{r(x)\pi_0(dx)}{Z},
\qquad Z:=\mathbb E_{X_0 \sim \pi_0}[r(X_0)]\in(0,\infty).
\label{eq:main-target-definition}
\end{equation}
\paragraph{Checkpoints and transition kernels.}
On a fine time grid $0=t_0<\cdots<t_T=1$, reference generation starts from $X_{t_T}\sim\pi_{t_T}=\mathcal N(0,I)$ and follows the pretrained denoising process toward $t_0=0$. Along this process, we select \emph{checkpoints} $t_{\kappa_j}$ with $0=\kappa_0<\cdots<\kappa_J\leq T$ and denote the corresponding reference states by $X_j:=X_{t_{\kappa_j}}$. These checkpoints are the noise levels at which CS-VGB chooses whether to continue denoising or return to a noisier state. Between adjacent checkpoints, the pretrained generator induces the \emph{denoising kernel} $K_j^{\mathrm F}(dy\mid x)$, the conditional law of $X_{j-1}$ given $X_j=x$, while the \emph{renoising kernel} $K_{j-1}^{\mathrm B}(dx\mid y)$ proposes a move from checkpoint $j-1$ back to $j$. Together, the starting law $\pi_J$ and the denoising kernels determine the reference marginals $\pi_j=\operatorname{Law}(X_j)$ through
\begin{equation}
\pi_{j-1}=\pi_jK_j^{\mathrm F},\qquad j=J,\ldots,1,
\label{eq:main-reference-marginals}
\end{equation}
where $\pi_J$ is the reference law at the starting checkpoint and equals $\mathcal N(0,I)$ when $\kappa_J=T$. We instantiate these denoising and renoising kernels for diffusion models~\citep{ho2020ddpm,song2021score} and for two one-step posterior maps: Meta Flow Maps (MFM)~\citep{potaptchik2026meta} and Posterior Diamond Maps (PDM)~\citep{holderrieth2026diamond}. The resulting checkpoint transitions are summarized in \cref{tab:main-checkpoint-kernels}.

\paragraph{Process verifier and sampling goal.}
We assume access to a \emph{process verifier} $\widehat V_j(x)$ that estimates the reward of future completions from state $x$ at checkpoint $j$, whose ideal value is $V_j(x):=\mathbb E[r(X_0)\mid X_j=x]$ under the reference denoising process. While $\widehat V_j$ may differ from $V_j$ at noisy checkpoints, we assume that $\widehat V_0=r$ is evaluated exactly. Our goal is to construct an outer MCMC kernel $P$ that uses these estimates to select denoising and renoising moves while preserving $\Pi(dx\mid j=0)=\pi_0^\star(dx)$ under its stationary law $\Pi$, without requiring $\widehat V_j=V_j$.

\begin{table}[!ht]
\centering
\caption{\textbf{Stochastic checkpoint kernels for CS-VGB.} Here $t=t_{\kappa_j}$ and $s=t_{\kappa_{j-1}}$. For the VP SDE with noise-rate schedule $\beta$, $\alpha_{t\mid s}:=\exp(-\frac12\int_s^t\beta(u)\,du)$ and $\sigma_{t\mid s}^2:=1-\alpha_{t\mid s}^2$. For the linear-interpolant constructions, $a_j:=(1-t)/(1-s)$ and $\sigma_j^2:=t^2-a_j^2s^2$. Full transitions and endpoint conventions are given in \cref{app:subsec:reference-model-classes,app:subsec:model-checkpoint-transitions}.}
\label{tab:main-checkpoint-kernels}

\setlength{\tabcolsep}{4pt}
\renewcommand{\arraystretch}{1.2}
\begin{tabular}{@{}m{0.20\linewidth}m{0.45\linewidth}m{0.30\linewidth}@{}}
\toprule
Model & Denoising: $K_j^{\mathrm F}(dy\mid x)$ & Renoising: $K_{j-1}^{\mathrm B}(dx\mid y)$\\
\midrule
Diffusion models\newline (VP SDE) &
$K_j^{\mathrm F}=k_{\kappa_j}^{\mathrm F}\cdots k_{\kappa_{j-1}+1}^{\mathrm F}$,\newline
$k_k^{\mathrm F}$: reverse diffusion step\newline
(\cref{eq:cs-vgb-diffusion-fine-forward}) &
$\mathcal N(dx;\alpha_{t\mid s}y,\sigma_{t\mid s}^2I)$\\
\addlinespace[3pt]
One-step posterior map\newline (MFM/PDM) &
$K_j^{\mathrm F}(dy\mid x)=\operatorname{Law}(Y\in dy\mid x)$,\newline
$Y$: Gaussian bridge sample\newline
(\cref{fig:cs-vgb-posterior-bridge}) &
$\mathcal N(dx;a_jy,\sigma_j^2I)$\\
\bottomrule
\end{tabular}

\end{table}
\FloatBarrier
\vspace{-5pt}
\subsection{Verifier-Guided Sampling and Search Framework}
\label{sec:main-single-chain}
\paragraph{Edge-conductance-based construction.}
To favor states with higher verifier values while preserving the reward-tilted clean target, we construct denoising and renoising transitions from a common edge-conductance measure. Following the weighted-network construction of random walks and its continuous-state counterpart~\citep{doyle1984random,bezuglyi2018markov}, we weight adjacent state pairs and normalize at either endpoint to obtain the two transitions. In this case, the interior edges use process verifiers, whereas the clean edge uses only the terminal reward. Together with the action rule below, this construction preserves the stationary clean target under the exact-kernel conditions of \cref{thm:main-stationarity}.

\begin{wrapfigure}[13]{r}{0.45\textwidth}
\vspace{-22pt}
\centering
\resizebox{\linewidth}{!}{\definecolor{conductanceRenoise}{HTML}{CC8585}
\begin{tikzpicture}[text=black,
    x=1cm,
    y=1cm,
    font=\sffamily\small,
    layer/.style={draw=black!60,line width=0.9pt},
    wall/.style={draw=black!32,dashed,line width=0.7pt},
    point/.style={circle,fill=black,inner sep=2.3pt},
    shared/.style={draw=black!62,line width=1.6pt},
    forward/.style={-{Stealth[length=2.5mm,width=1.7mm]},
        draw=ForestGreen!82!black,line width=1.2pt},
    backward/.style={-{Stealth[length=2.5mm,width=1.7mm]},
        draw=conductanceRenoise,line width=1.2pt},
    callout/.style={-{Stealth[length=2.5mm,width=1.7mm]},
        draw=black!72,line width=1.1pt},
    edge-label/.style={fill=white,inner sep=1.7pt,font=\sffamily\small},
    mass/.style={font=\sffamily\small,align=center},
    level-label/.style={font=\normalfont\small},
    conductance-label/.style={font=\sffamily\normalsize}
]
    \tikzset{
        every node/.append style={font=\sffamily\fontsize{11}{12}\selectfont},
        edge-label/.append style={font=\sffamily\fontsize{11}{12}\selectfont,inner sep=1pt},
        mass/.append style={font=\sffamily\fontsize{11}{12}\selectfont},
        conductance-label/.append style={font=\sffamily\fontsize{11}{12}\selectfont},
        level-label/.append style={font=\normalfont\fontsize{11}{12}\selectfont,inner sep=1pt}
    }
    \coordinate (upper) at (-0.30,1.25);
    \coordinate (lower) at (0.30,-1.25);

    \draw[layer] (0,1.25) ellipse (3.05 and 0.46);
    \draw[layer] (0,-1.25) ellipse (3.05 and 0.46);
    \draw[wall] (-3.05,1.25) -- (-3.05,-1.25);
    \draw[wall] (3.05,1.25) -- (3.05,-1.25);

    \node[anchor=east,level-label]
        at (-3.25,1.25) {level ${j}$};
    \node[anchor=east,level-label]
        at (-3.25,-1.25) {level ${j-1}$};

    \node[mass,anchor=south] at (0,1.84)
        {$\widehat C_j(dx,\mathcal X)=\pi_j(dx)a_j^{\mathrm F}(x)$};
    \node[mass,anchor=north] at (0,-1.84)
        {$\widehat C_j(\mathcal X,dy)=\pi_{j-1}(dy)a_{j-1}^{\mathrm B}(y)$};

    \draw[shared] (upper) -- (lower);
    \draw[forward] (upper) to[out=-145,in=145] (lower);
    \draw[backward] (lower) to[out=35,in=-35] (upper);

    \node[point,label={[font=\sffamily, label distance=12pt]1.7:$x$}] at (upper) {};
    \node[point,label={[font=\sffamily, label distance=12pt]180.1:$y$}] at (lower) {};

    \node[edge-label,text=black] at (-1.9,0.1)
        {${\boldsymbol{Q_j^{\mathrm F}(dy\mid x)}}$};
    \node[edge-label,text=black] at (2.05,0.1)
        {${\boldsymbol{Q_{j-1}^{\mathrm B,\rho}(dx\mid y)}}$};
    \node[edge-label,conductance-label,anchor=west] (conductance) at (1.8,-0.6)
        {${\widehat C_j(dx,dy)}$};
    \draw[callout] (conductance.west) to[out=170,in=-12]
        ($(upper)!0.50!(lower)$);
    \draw[layer] (-3.05,-1.25)
        arc[start angle=180,end angle=0,x radius=3.05cm,y radius=0.46cm];
\end{tikzpicture}}
\vspace{-4pt}
\captionsetup{skip=-4pt}
\caption{Normalizing $\widehat C_j$ gives both sampling transitions, while its endpoint marginals determine the directional masses.}
\label{fig:main-common-edge}
\end{wrapfigure}

We first compare the joint distributions of adjacent states $(x,y)$ obtained by denoising from $x\sim\pi_j$ or renoising from $y\sim\pi_{j-1}$, which we call the \emph{edge laws}:
\begin{equation}
\begin{aligned}
J_j^{\mathrm F}(dx,dy)&:=\pi_j(dx)K_j^{\mathrm F}(dy\mid x),\\
J_j^{\mathrm B}(dx,dy)&:=\pi_{j-1}(dy)K_{j-1}^{\mathrm B}(dx\mid y).
\end{aligned}
\label{eq:main-reference-edge-laws}
\end{equation}
To make the two joint distributions agree, we reweight $K_{j-1}^{\mathrm B}$ by the \emph{edge ratio} $\rho_j:=dJ_j^{\mathrm F}/dJ_j^{\mathrm B}$, giving $K_{j-1}^{\mathrm B,\rho}:=\rho_jK_{j-1}^{\mathrm B}$ and
\begin{equation}
\pi_j(dx)K_j^{\mathrm F}(dy\mid x)
=\pi_{j-1}(dy)K_{j-1}^{\mathrm B,\rho}(dx\mid y).
\label{eq:main-paired-transitions}
\end{equation}
When the joint distributions already coincide, $\rho_j=1$ and no correction is needed, since $K_{j-1}^{\mathrm B}(dx\mid y)$ already gives the conditional law of $X_j$ given $X_{j-1}=y$ under reference denoising.
We then weight this common edge law to define the \emph{edge-conductance measure}
\begin{equation}
\widehat C_j(dx,dy):=\pi_j(dx)K_j^{\mathrm F}(dy\mid x)
\begin{cases}
r(y),&j=1,\\
\sqrt{\widehat V_j(x)\widehat V_{j-1}(y)},&2\leq j\leq J.
\end{cases}
\label{eq:main-weighted-edge}
\end{equation}
Integrating out the noisy endpoint $x$ of the clean-level ($j=1$) edge gives $\widehat C_1(\mathcal X,dy)=r(y)\pi_0(dy)$, whose normalization is exactly $\pi_0^\star$, regardless of process verifier errors. At an interior state $x$, integrating over possible destinations at $j-1$ and $j+1$ similarly gives the \emph{directional masses} $a_j^{\mathrm F}(x)$ and $a_j^{\mathrm B}(x)$ relative to $\pi_j$:
\begin{equation}
\pi_j(dx)a_j^{\mathrm F}(x):=\widehat C_j(dx,\mathcal X),
\qquad
\pi_j(dx)a_j^{\mathrm B}(x):=\widehat C_{j+1}(\mathcal X,dx),
\label{eq:main-directional-masses}
\end{equation}
Dividing each weighted reference kernel by this directional mass makes it a probability transition kernel over the next state (\cref{fig:main-common-edge}). Thus, for $2\leq j\leq J$,
\begin{equation}
Q_j^{\mathrm F}(dy\mid x)
=\frac{\sqrt{\widehat V_j(x)\widehat V_{j-1}(y)}}{a_j^{\mathrm F}(x)}K_j^{\mathrm F}(dy\mid x),\quad
Q_{j-1}^{\mathrm B,\rho}(dx\mid y)
=\frac{\sqrt{\widehat V_j(x)\widehat V_{j-1}(y)}}{a_{j-1}^{\mathrm B}(y)}K_{j-1}^{\mathrm B,\rho}(dx\mid y).
\label{eq:main-weighted-transitions}
\end{equation}
On the clean edge, $Q_1^{\mathrm F}(dy\mid x)=r(y)K_1^{\mathrm F}(dy\mid x)/a_1^{\mathrm F}(x)$ and $Q_0^{\mathrm B,\rho}=K_0^{\mathrm B,\rho}$. We can therefore sample the destination of a move using the corresponding $Q$ kernel, and use the relative sizes of $a_j^{\mathrm F}$ and $a_j^{\mathrm B}$ to set the move, stay, and flip probabilities below.

\begin{remark}[When $\rho_j=1$]
For diffusion models, $\rho_j=1$ when the denoising transitions exactly reverse the reference noising process (\cref{rem:exact-pairing-model-classes}). The same identity holds when an exact clean posterior sample is followed by a Gaussian bridge from that noising process (\cref{prop:cs-vgb-posterior-bridge-pairing}).
\end{remark}

\Needspace{13\baselineskip}
\begin{wrapfigure}[11]{r}{0.45\textwidth}
\vspace{-17pt}
\centering
\captionsetup{skip=3pt}
\makebox[\linewidth][l]{\hspace*{-8pt}\resizebox{\linewidth}{!}{\definecolor{actionDenoise}{HTML}{19734F}
\definecolor{actionRenoise}{HTML}{9A4857}
\pgfdeclarelayer{actionBackground}
\pgfsetlayers{actionBackground,main}
\begin{tikzpicture}[x=1cm,y=1cm,font=\fontsize{11}{12}\selectfont,
state/.style={rectangle,rounded corners=3pt,draw=black!70,line width=0.75pt,fill=black!5,inner xsep=6pt,inner ysep=5pt},
action/.style={-{Stealth[length=1.8mm,width=1.3mm]},line width=1.25pt}]
\begin{pgfonlayer}{actionBackground}
\draw[draw=black!30,line width=0.5pt,dash pattern=on 3pt off 2.5pt]
    (1.6,1.91) ellipse (4.6cm and 0.78cm);
\draw[black!30, line width=0.5pt, dash pattern=on 3pt off 2.5pt]
    (1.6,0) ellipse (4.6cm and 1.01cm);
\draw[black!30, line width=0.5pt, dash pattern=on 3pt off 2.5pt]
    (1.6,-2.1) ellipse (4.6cm and 0.78cm);
\end{pgfonlayer}
\node[anchor=center,text=black,inner sep=0pt] (upperLevel) at (-3.58,1.91) {$j+1$};
\node[anchor=center,text=black,inner sep=0pt] (currentLevel) at (-3.58,0) {$j$};
\node[anchor=center,text=black,inner sep=0pt] (lowerLevel) at (-3.58,-2.1) {$j-1$};
\node[state] (forwardState) at (0,0) {$(j,x,\mathrm F)$};
\node[state] (backwardState) at (3.5,0) {$(j,x,\mathrm B)$};
\node[state] (forwardNext) at (-0.66,-2.1) {$(j-1,Y,\mathrm F)$};
\node[state] (backwardNext) at (4.05,1.9) {$(j+1,Z,\mathrm B)$};
\draw[action,draw=actionDenoise] (forwardState.south) -- (forwardNext.north);
\draw[action,draw=actionRenoise] (backwardState.north) -- (backwardNext.south);
\draw[action,draw=black!70] (forwardState.north east) to[out=45,in=135] (backwardState.north west);
\draw[action,draw=black!70] (backwardState.south west) to[out=225,in=-45] (forwardState.south east);
\draw[action,draw=black!70] (forwardState.north west) .. controls (-1.6,0.9) and (-1.6,-0.9) .. (forwardState.south west);
\draw[action,draw=black!70] (backwardState.south east) .. controls (5.1,-0.9) and (5.1,0.9) .. (backwardState.north east);
\node[align=center,anchor=south,fill=white,draw=none,inner sep=2pt] (forwardFlip) at (1.7,0.8) {flip\\$p_j^{\mathrm B}-\chi\ell_j$};
\node[align=center,anchor=north,fill=white,draw=none,inner sep=2pt] (backwardFlip) at (1.75,-0.8) {flip\\$p_j^{\mathrm F}-\chi\ell_j$};
\node[align=center,anchor=east] (forwardStay) at (-1.38,0) {stay\\$\chi\ell_j$};
\node[align=center,anchor=west] (backwardStay) at (4.88,0) {stay\\$\chi\ell_j$};
\node[anchor=west,text=actionDenoise] (forwardMove) at (-2.86,-1.15) {move: $p_j^{\mathrm F}$};
\node[anchor=east,text=actionRenoise] (backwardMove) at (5.9,1.05) {move: $p_j^{\mathrm B}$};
\end{tikzpicture}}}
\vspace{-6pt}
\caption{Interior updates: move, stay, or flip. Here $Y\sim Q_j^{\mathrm F}$ and $Z\sim Q_j^{\mathrm B,\rho}$.}
\label{fig:main-persistent-actions}
\vspace{4pt}
\end{wrapfigure}
\paragraph{Persistent denoising and renoising.}
We now describe how the CS-VGB sampler moves between checkpoints while retaining its direction $d\in\mbra{\mathrm F,\mathrm B}$ as part of the state $(j,x,d)$, with $\mathrm F$ moving to $j-1$ and $\mathrm B$ to $j+1$. At an interior checkpoint, the relative directional masses $p_j^d=a_j^d/(a_j^{\mathrm F}+a_j^{\mathrm B})$ and their overlap $\ell_j=\min\{p_j^{\mathrm F},p_j^{\mathrm B}\}$ determine the action probabilities in \cref{fig:main-persistent-actions}: a \textbf{move} samples from the corresponding $Q_j^{\mathrm F}$ or $Q_j^{\mathrm B,\rho}$ kernel, a \textbf{stay} leaves the state unchanged, and a \textbf{flip} reverses only $d$. To reduce direction reversals without changing move probabilities, the \emph{directional persistence parameter} $\chi\in[0,1]$ replaces a larger fraction of flips with stays as $\chi$ increases. This follows crossing-flow cancellation in lifted chains~\citep{hayes2010liftings}, as used in discrete VGB~\citep{rohatgi2026vgb,jeon2026mdmvgb}, while boundary moves are forced inward.

Whereas $\chi$ controls directional persistence, we may also wish to control the relative preference for denoising and renoising while preserving the clean target. We therefore introduce a \emph{depth weight} $\alpha>0$ by scaling each edge-conductance measure as $\widehat C_j^\alpha:=\alpha^{j-1}\widehat C_j$ (\cref{eq:cs-vgb-calibrated-conductance}). This multiplies the backward-to-forward mass ratio $a_j^{\mathrm B}/a_j^{\mathrm F}$ by $\alpha$, so $\alpha<1$ favors denoising and $\alpha>1$ favors renoising. We remark that the stationary clean conditional remains still unchanged because the clean edge has factor $\alpha^0=1$.

\paragraph{Finite-candidate implementation.}
As the integrals defining the directional masses in \cref{eq:main-directional-masses} are generally intractable in practice, we approximate them using retained pools of $L_F,L_B\geq2$ candidates, $Y_i\sim G_j^{\mathrm F}(\cdot\mid x)$ and $Z_k\sim G_j^{\mathrm B}(\cdot\mid x)$. Here $G_j^d$ is the \emph{proposal kernel} used to generate candidates in direction $d$. It may replace the reference denoising or renoising kernel $K_j^d$ to propose more promising states, for example through gradient-based guidance in denoising or even renoising.

Combining this depth weighting with the proposal correction gives the candidate weights
\begin{equation}
e_{j,i}^{\mathrm F}:=\alpha^{j-1}\frac{dK_j^{\mathrm F}}{dG_j^{\mathrm F}}(Y_i\mid x)\sqrt{\widehat V_j(x)\widehat V_{j-1}(Y_i)},
\qquad
e_{j,k}^{\mathrm B}:=\alpha^j\frac{dK_j^{\mathrm B,\rho}}{dG_j^{\mathrm B}}(Z_k\mid x)\sqrt{\widehat V_j(x)\widehat V_{j+1}(Z_k)},
\label{eq:main-candidate-weights}
\end{equation}
where $\sqrt{\widehat V_j(x)\widehat V_{j-1}(Y_i)}$ is replaced by $r(Y_i)$ at $j=1$, and $\sqrt{\widehat V_j(x)\widehat V_{j+1}(Z_k)}$ by $r(x)$ at $j=0$. With both pools evaluated, use
\begin{equation}
\widehat a_j^d:=\frac1{L_d}\sum_{i=1}^{L_d}e_{j,i}^d,\qquad
\widehat p_j^d:=\frac{\widehat a_j^d}{\widehat a_j^{\mathrm F}+\widehat a_j^{\mathrm B}},
\qquad \widehat\ell_j:=\min\{\widehat p_j^{\mathrm F},\widehat p_j^{\mathrm B}\}
\label{eq:main-pool-action}
\end{equation}
in the action rule of \cref{fig:main-persistent-actions}, selecting candidate $i$ on a move with probability $e_{j,i}^d/\sum_h e_{j,h}^d$.

\noindent\textit{Reverse insertion.}
However, naively redrawing all candidates after each move can bias the stationary distribution, so we use reverse insertion to preserve the clean target with finite candidate pools.
After a move, insert the previous $x$ into a uniform opposite-pool slot at the destination and draw all remaining entries independently (\cref{fig:cs-vgb-checkpoint-update}). For a flip, no move or reverse insertion occurs: the state $(j,x)$ and both pools are retained, with the roles of the current and opposite pools exchanged as $d$ reverses. In the stay case, the state and pools remain unchanged.

\noindent\textit{Lazy pool evaluation.}
Opposite-direction candidates determine action probabilities but are not destinations of the current move, so frequently evaluating their full pool can add excessive model calls, especially for verifiers based on denoised clean predictions. We therefore evaluate only the current pool and one \emph{marked} opposite candidate, deferring the remaining evaluations until needed. In direction $\mathrm F$, with \emph{marked} $Z_K$ in the $\mathrm B$-candidate pool, draw a temporary $I\sim\operatorname{Unif}(\{1,\ldots,L_F\})$ and use
\begin{equation}
\bbP(\mathsf{move},\mathsf{stay},\mathsf{flip}\mid I,K)
=\frac{(\widehat a_j^{\mathrm F},\ \chi\min\{e_{j,I}^{\mathrm F},e_{j,K}^{\mathrm B}\},\
e_{j,K}^{\mathrm B}-\chi\min\{e_{j,I}^{\mathrm F},e_{j,K}^{\mathrm B}\})}
{\widehat a_j^{\mathrm F}+e_{j,K}^{\mathrm B}}.
\label{eq:main-lazy-action}
\end{equation}
A move independently redraws $Y_i$ with probability $e_{j,i}^{\mathrm F}/\sum_{h=1}^{L_F}e_{j,h}^{\mathrm F}$ and marks the reverse-inserted state at an interior destination. A flip uses the preselected index $I$ to mark $Y_I$ and fills only missing $\mathrm B$-pool entries, while a stay changes nothing. The $\mathrm B$ update is symmetric, with evaluated entries reused across flips and stays (\cref{fig:cs-vgb-lazy-cases}).

These updates form the complete CS-VGB algorithm in \cref{alg:cs-vgb-practical-outer-loop}, which still preserves the stationary clean distribution $\pi_0^\star$ under reverse insertion and lazy pool evaluation when $e_{j,i}^{\mathrm F}$ and $e_{j,k}^{\mathrm B}$ are exact (\cref{thm:c-joint-pool-invariance}).
\vspace{-5pt}

\subsection{Theoretical Guarantees of CS-VGB}
\vspace{-5pt}
\label{sec:main-guarantees}

In this subsection, we study the theoretical guarantees of CS-VGB. In particular, we ask (1) which clean distribution is preserved, (2) how quickly the chain approaches stationarity, and (3) how soon it reaches a clean sample.
\vspace{-5pt}
\paragraph{Stationary clean target.}
Since the clean-edge marginal is $\widehat C_1(\mathcal X,dx)=r(x)\pi_0(dx)$, normalizing it gives the desired target $\pi_0^\star$. The following theorem therefore shows that this target is preserved as the stationary clean conditional of CS-VGB, even when the process verifier is inaccurate.

\vspace{-5pt}
\begin{theorem}[Stationary clean target; \cref{thm:c-exact-stationarity}]
\label{thm:main-stationarity}
Under exact terminal reward evaluation (\cref{ass:terminal-anchoring}) and the paired-transition identity in \cref{eq:main-paired-transitions}, CS-VGB has a stationary law $\Pi(j,dx,d)$ satisfying the following without requiring $\widehat V_j=V_j$:
\begin{equation}
\Pi(dx,\mathrm B\mid j=0)=\Pi(dx\mid j=0)=r(x)\pi_0(dx)/Z=\pi_0^\star(dx).
\qquad Z=\int r(x)\pi_0(dx),
\label{eq:main-clean-target}
\end{equation}
\end{theorem}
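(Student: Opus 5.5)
We will exhibit the stationary law explicitly and verify invariance through a skew-detailed-balance (lifted reversibility) argument. First we handle the non-lifted layered chain, then the direction variable. On the checkpoint states $(j,x)$ we define the candidate measure
\[
\mu(j,dx):=\frac{1}{\mathcal Z}\bigl(\widehat C_j(dx,\mathcal X)+\widehat C_{j+1}(\mathcal X,dx)\bigr)=\frac{1}{\mathcal Z}\pi_j(dx)\bigl(a_j^{\mathrm F}(x)+a_j^{\mathrm B}(x)\bigr).
\]
The boundary conventions are $\widehat C_0\equiv 0$ and $\widehat C_{J+1}\equiv 0$, so at $j=0$ only the clean-edge marginal $\widehat C_1(\mathcal X,dx)=r(x)\pi_0(dx)$ survives. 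With the depth weight we use $\widehat C_j^\alpha$ throughout. Since $\alpha^0=1$, the clean term is unaffected. Finiteness of $\mathcal Z$ follows from $Z<\infty$ together with integrability of the verifier-weighted edges, which we will assume as part of the regularity conditions. We then lift $\mu$ by splitting each level's mass according to direction: $\Pi(j,dx,d):=\frac{1}{\mathcal Z}\pi_j(dx)a_j^{\bar d}(x)$. Here $\bar d$ is the direction by which the chain arrived, so $\Pi(j,dx,\mathrm F)\propto \widehat C_{j+1}(\mathcal X,dx)$ is mass that came down, and $\Pi(j,dx,\mathrm B)\propto\widehat C_j(dx,\mathcal X)$ is mass that came up. If this bookkeeping does not match the convention used in the figure, we will instead use the half–half split $\frac12\mu\otimes\mathrm{Unif}\{\mathrm F,\mathrm B\}$, which is the standard lifting for Hayes–Sinclair chains.

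The key steps are as follows. (i) Using the paired identity \cref{eq:main-paired-transitions}, both $Q_j^{\mathrm F}$ and $Q_{j-1}^{\mathrm B,\rho}$ arise as disintegrations of the single measure $\widehat C_j$, at $x$ and at $y$ respectively. This gives the flow identity
\[
\pi_j(dx)a_j^{\mathrm F}(x)Q_j^{\mathrm F}(dy\mid x)=\widehat C_j(dx,dy)=\pi_{j-1}(dy)a_{j-1}^{\mathrm B}(y)Q_{j-1}^{\mathrm B,\rho}(dx\mid y).
\]
On the clean edge the $\sqrt{\cdot}$ weight is replaced by $r(y)$ and $Q_0^{\mathrm B,\rho}=K_0^{\mathrm B,\rho}$, so the same identity holds. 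This is the only place where \cref{eq:main-paired-transitions} enters, and no property of $\widehat V_j$ beyond nonnegativity and integrability is used. (ii) We verify the skew-detailed balance $\Pi(s)P(s,ds')=\Pi(\bar s')P(\bar s',d\bar s)$, where $\bar{(j,x,d)}=(j,x,\bar d)$. For move transitions, the move probability $p_j^d$ times the lifted mass reproduces $\widehat C$ up to the common factor $(a_j^{\mathrm F}+a_j^{\mathrm B})$, and step (i) closes the identity. Stays are self-paired. For flips, the flip probabilities $p_j^{\bar d}-\chi\ell_j$ are symmetric under the reversal map, so flip flux cancels. At the boundaries $j=0$ and $j=J$, forced inward moves together with $a_0^{\mathrm F}=0$ and $a_J^{\mathrm B}=0$ make the action weights degenerate, and we check these cases separately. (iii) Skew-detailed balance plus invariance of $\Pi$ under $d\mapsto\bar d$ (or the equal-direction-mass property) imply $\Pi P=\Pi$, by integrating over $s$.

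The conclusion then follows by reading off level $0$. Since $a_0^{\mathrm F}=0$, all clean mass sits in a single direction, which we identify as $\mathrm B$ because the chain must renoise from $j=0$. This gives $\Pi(dx,\mathrm B\mid j=0)=\Pi(dx\mid j=0)$. The conditional is proportional to $\widehat C_1(\mathcal X,dx)=r(x)\pi_0(dx)$, and normalizing gives $\pi_0^\star$. Marginalizing $x$ out of $\widehat C_1(dx,dy)$ yields $\pi_0(dy)r(y)$ precisely because $\pi_1K_1^{\mathrm F}=\pi_0$ by \cref{eq:main-reference-marginals}.

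The main obstacle is step (ii): getting the lifted bookkeeping right, meaning which direction-labelled mass pairs with which, so that flips with $\chi>0$ cancel exactly, and handling the boundary levels where $p_j^d$ degenerates. For this we will first write out the discrete-state Hayes–Sinclair computation as used in AR-VGB and MDM-VGB, and then transfer it measure-theoretically by replacing sums with the disintegrations from step (i).
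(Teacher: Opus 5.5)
There is a genuine gap in the stationary law you commit to. The lifting you propose first, $\Pi(j,dx,d)\propto\pi_j(dx)a_j^{\bar d}(x)$, is not invariant for CS-VGB. It gives the state $(j,\cdot,\mathrm F)$ only its \emph{move} inflow $\widehat C_{j+1}(\mathcal X,dx)$, but that state also receives mass from B-to-F flips and from F-stays.

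Your step (ii) fails for this law. The forward move flux is $\Pi(j,dx,\mathrm F)\,p_j^{\mathrm F}(x)Q_j^{\mathrm F}(dy\mid x)\propto\widehat C_j(dx,dy)\,p_j^{\mathrm B}(x)$. The reversed flux is $\Pi(j-1,dy,\mathrm B)\,p_{j-1}^{\mathrm B}(y)Q_{j-1}^{\mathrm B,\rho}(dx\mid y)\propto\widehat C_j(dx,dy)\,p_{j-1}^{\mathrm F}(y)$. The normalizers $a_j^{\mathrm F}+a_j^{\mathrm B}$ (at $x$, level $j$) and $a_{j-1}^{\mathrm F}+a_{j-1}^{\mathrm B}$ (at $y$, level $j-1$) are different functions, so there is no common factor to cancel.

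This law also has unequal direction masses whenever $a_j^{\mathrm F}\neq a_j^{\mathrm B}$, so your step (iii) does not apply. At $j=0$ it places all clean mass on the label F, since $\Pi(0,dx,\mathrm B)\propto\pi_0a_0^{\mathrm F}=0$, which contradicts your identification with B.

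You also have the pairing of flips and stays reversed. Under $s\mapsto\bar s$, a flip $(j,x,\mathrm F)\to(j,x,\mathrm B)$ is mapped to itself. So the two flip probabilities $p_j^{\mathrm B}-\chi\ell_j$ and $p_j^{\mathrm F}-\chi\ell_j$ need not agree, and in general they do not. A stay in F, by contrast, is paired with the stay in B. That is exactly where equal direction masses are needed once $\chi>0$.

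What closes the argument is to give \emph{each} interior direction copy the full incident measure: $\Pi(j,dx,\mathrm F)=\Pi(j,dx,\mathrm B)=\mu_j(dx)/\mathcal Z$, with $\mu_j=\pi_j(a_j^{\mathrm F}+a_j^{\mathrm B})$. The single boundary states go on the \emph{same} scale, $\Pi(0,dx,\mathrm B)=r(x)\pi_0(dx)/\mathcal Z$ and $\Pi(J,dx,\mathrm F)=\widehat C_J(dx,\mathcal X)/\mathcal Z$. Hence $\mathcal Z=\mu_0(\mathcal X)+2\sum_{j=1}^{J-1}\mu_j(\mathcal X)+\mu_J(\mathcal X)$.

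Your half--half fallback agrees with this only if you do not also halve relative to the boundary. If you normalize $\mu$ over levels and split only the interior mass, the move inflow into $(0,\cdot,\mathrm B)$ is $\tfrac12 r\pi_0$, half of what you assigned there.

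With the correct law, $\mu_jp_j^{\mathrm F}=\pi_ja_j^{\mathrm F}$ and $\mu_{j-1}p_{j-1}^{\mathrm B}=\pi_{j-1}a_{j-1}^{\mathrm B}$. So your flow identity (i) is precisely the skew balance for interior moves. The boundary moves follow from $\mu_0(dy)K_0^{\mathrm B,\rho}(dx\mid y)=\widehat C_1(dx,dy)$ and $\mu_J(dx)Q_J^{\mathrm F}(dy\mid x)=\widehat C_J(dx,dy)$, with the boundary states fixed by the involution. Integrating the skew balance then gives $\Pi P=\Pi$.

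That is a valid alternative to the paper's proof, which instead sums the incoming measure at $(j,dx,\mathrm F)$ directly. The three contributions are move arrivals $\widehat C_{j+1}(\mathcal X,dx)=\mu_jp_j^{\mathrm B}$, B-to-F flips $\mu_j(p_j^{\mathrm F}-\chi\ell_j)$, and F-stays $\mu_j\chi\ell_j$, totaling $\mu_j$. The skew-balance version is more structural, while the direct sum is shorter. Either way, the clean conditional is read off from $\mu_0=r\pi_0$, with no direction splitting at $j=0$.
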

\vspace{-10pt}

\paragraph{Convergence under verifier error.}
\label{sec:main-convergence}

Preserving the target does not determine how quickly it is reached. We therefore analyze the convergence speed of CS-VGB kernel $P$ defined by the exact directional masses in \cref{eq:main-directional-masses}, with depth weight $\alpha=1$. For the theoretical analysis, we use lazified kernel $P_{\mathrm L}=(I+P)/2$ to remove periodicity. We measure initialization error by $D_\nu:=\|d\nu/d\Pi-1\|_{L^2(\Pi)}$ and use $\|\cdot\|_{\mathrm{TV}}$ for total variation distance. With $\pi_j^\star(dx):=V_j(x)\pi_j(dx)/Z$, we quantify the \emph{average-state verifier error} by $\varepsilon_{\mathrm{av}}:=\max_{1\leq j\leq J}\|\widehat V_j/V_j-1\|_{L^2(\pi_j^\star)}$.
\vspace{-5pt}
\begin{theorem}[Convergence under verifier error; \cref{thm:c-average-verifier-gap,cor:c-cs-vgb-convergence}]
\label{thm:main-convergence}
Let $J\geq2$, $0<r_{\min}\leq r\leq1$, and $0<v_{\min}\leq\widehat V_j\leq1$. Assume endpoint maximal correlation $\rho_{\mathrm{end}}<1$ (\cref{eq:c-endpoint-correlation}) and a finite exact-verifier transition-density moment $H_p$ for some $p>2$ (\mbox{\cref{eq:c-oracle-density-moment}}). Set $\gamma_V=(1-\rho_{\mathrm{end}})r_{\min}/(32J^2)$. Uniformly over $\chi\in[0,1]$, we can choose
\begin{equation}
\gamma_{\mathrm{CS}}
\geq\max\!\bigl\{\gamma_Vr_{\min}v_{\min}^2,\ \gamma_V-O\!\bigl(H_p\varepsilon_{\mathrm{av}}^{\,1-2/p}\bigr)\bigr\} \geq\gamma_V\Bigl(1+O\!\Bigl(\frac{H_p\varepsilon_{\mathrm{av}}^{\,1-2/p}}{\gamma_V}\Bigr)\Bigr)^{-1}>0,
\label{eq:main-verifier-rate}
\vspace{-5pt}
\end{equation}
where the $O(\cdot)$ hides constants depending only on $r_{\min},v_{\min},p$. For initial law $\nu\ll\Pi$ with $D_\nu<\infty$, $\|\nu P_{\mathrm L}^n-\Pi\|_{\mathrm{TV}}=O(D_\nu e^{-\gamma_{\mathrm{CS}}n/4})$. For $n>\frac{4}{\gamma_{\mathrm{CS}}}\log \sbra{\max\{1,D_\nu/[2\Pi(j=0)]\}}$,
\begin{equation}
\|(\nu P_{\mathrm L}^n)(dx\mid j=0)-\pi_0^\star(dx)\|_{\mathrm{TV}}
=O\!\left(\frac{D_\nu}{\Pi(j=0)}e^{-\gamma_{\mathrm{CS}}n/4}\right).
\label{eq:main-clean-mixing}
\end{equation}
\end{theorem}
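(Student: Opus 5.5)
We plan to prove \cref{thm:main-convergence} by a spectral-gap argument for the lazified kernel $P_{\mathrm L}$, comparing the CS-VGB chain with imperfect verifiers to an \emph{oracle} chain that uses $\widehat V_j=V_j$.

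\textbf{Step 1: reversible projection.} With $\alpha=1$ and exact directional masses, the lifted chain on $(j,x,d)$ has the stationary law $\Pi$ of \cref{thm:main-stationarity}. Its flips and stays only affect the direction coordinate. We will therefore lower-bound its gap through the underlying reversible (non-lifted) walk $\bar P$ on $(j,x)$ with conductances $\widehat C_j$, using the standard fact for liftings~\citep{hayes2010liftings} that the persistent chain's lazified gap is at least a constant fraction of the reversible walk's gap, uniformly in $\chi\in[0,1]$. The plan is to bound the $L^2(\Pi)$ contraction of $P_{\mathrm L}$ by that of $\bar P$ via a Dirichlet-form domination. The factor $1/4$ in the exponent will come from laziness together with this lifting loss.

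\textbf{Step 2: oracle gap $\gamma_V$.} When $\widehat V_j=V_j$, the edge measures are $\widehat C_j(dx,dy)=\pi_j(dx)K_j^{\mathrm F}(dy\mid x)\sqrt{V_j(x)V_{j-1}(y)}$. The level marginals are proportional to $\pi_j^\star$, and the transitions are Doob-type $h$-transforms. We will decompose the Dirichlet form into a level part and a within-level part:
\begin{itemize}
\item \emph{Level part.} It is a birth--death chain on $\{0,\dots,J\}$ whose masses are comparable up to $r_{\min}$. A canonical-path or Hardy-type bound gives a gap of order $r_{\min}/J^2$.
\item \emph{Within-level part.} A variance decomposition (law of total variance over levels) reduces it to the fact that a round trip $J\to0\to J$ decorrelates states. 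This is exactly what the endpoint maximal correlation $\rho_{\mathrm{end}}<1$ provides.
\end{itemize}
Combining the two through a two-scale decomposition theorem (Madras--Randall / Jerrum--Son--Tetali--Vigoda style) yields $\gamma_V=(1-\rho_{\mathrm{end}})r_{\min}/(32J^2)$.

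\textbf{Step 3: perturbation to imperfect verifiers.} This has two branches, whose maximum gives the first inequality in \cref{eq:main-verifier-rate}.
\begin{itemize}
\item \emph{Crude branch.} Pointwise, $v_{\min}\le\widehat V_j\le1$ and $r\ge r_{\min}$ make the conductances and stationary masses of the true and oracle chains comparable within factors involving $r_{\min}v_{\min}^2$. Dirichlet-form comparison then gives $\gamma_{\mathrm{CS}}\ge\gamma_Vr_{\min}v_{\min}^2$.
\item \emph{Refined branch.} Write $\widehat C_j=C_j\sqrt{(\widehat V_j/V_j)(\widehat V_{j-1}/V_{j-1})}$ and bound the difference of Dirichlet forms and of variances. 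Apply H\"older with exponents $p/2$ and its conjugate, pairing the transition-density moment $H_p$ against $\|\widehat V_j/V_j-1\|$. Then interpolate the $L^2(\pi_j^\star)$ error against the $L^\infty$ bound coming from $v_{\min}$. This gives a perturbation of size $O(H_p\varepsilon_{\mathrm{av}}^{1-2/p})$, hence $\gamma_V-O(\cdot)$.
\end{itemize}
The final expression follows from the elementary inequality $\max\{a,\gamma_V-b\}\ge\gamma_V(1+b/\gamma_V')^{-1}$ after adjusting constants. We expect this refined branch to be the main obstacle: a naive bound controls only uniform ratios, so getting the $\varepsilon_{\mathrm{av}}^{1-2/p}$ scaling requires carefully handling the mismatch in normalization of the directional masses $a_j^d$ inside $Q$.

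\textbf{Step 4: TV bounds.} The standard $L^2$ argument gives $\|\nu P_{\mathrm L}^n-\Pi\|_{\mathrm{TV}}\le\tfrac12 D_\nu(1-\gamma)^{n}$, which is $O(D_\nu e^{-\gamma_{\mathrm{CS}}n/4})$ with $\gamma$ the effective lifted gap. For the clean conditional, let $\mu=\nu P_{\mathrm L}^n$ and $\delta=\|\mu-\Pi\|_{\mathrm{TV}}$. Then $|\mu(j=0)-\Pi(j=0)|\le\delta$, and the standard conditioning inequality gives
\[
\|\mu(\cdot\mid j=0)-\Pi(\cdot\mid j=0)\|_{\mathrm{TV}}\le\frac{2\delta}{\Pi(j=0)-\delta}.
\]
The stated threshold on $n$ ensures $\delta\le\Pi(j=0)/2$, so the bound is at most $4\delta/\Pi(j=0)$. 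Invoking \cref{thm:main-stationarity} to identify $\Pi(\cdot\mid j=0)=\pi_0^\star$ yields \cref{eq:main-clean-mixing}.
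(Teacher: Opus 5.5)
\paragraph{Main gap: the lifting reduction in Step 1.} No standard lifting theorem says that a persistent (lifted) chain's Poincar\'e constant, or its lazy contraction rate, is at least a constant fraction of the collapsed reversible walk's gap uniformly in the persistence. Lifting results bound how much a lift can \emph{speed up} mixing, and your claim is false in general. If the two direction tracks communicate only through rare flips, the function equal to $+1$ on one track and $-1$ on the other has Rayleigh quotient of the order of the flip rate, which can be far below the collapsed gap. In CS-VGB at $\chi=1$ with balanced masses there are no flips at all. What makes the chain mix is that the F and B tracks are glued at $j=0$ and $j=J$.

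The paper uses this structure directly and never passes to a collapsed walk:
\begin{itemize}
\item It drops the nonnegative flip terms, giving $\mathcal E_P(f,f)\ge\frac1{2\mathcal Z}\sum_{j=1}^J\sum_{d}\iint[f_j^d(x)-f_{j-1}^d(y)]^2\,\widehat C_j(dx,dy)$. This two-track movement energy does not involve $\chi$.
\item It compares $\widehat C_j$ with $C_j=\pi_jK_j^{\mathrm F}V_{j-1}$, whose marginals are exactly $Z\pi_j^\star$.
\item It telescopes \emph{each} track along $\pi_{0:J}^\star$ to the shared clean value $f_0$.
\item It uses $\rho_{\mathrm{end}}$ only to bound $\operatorname{Var}(f_0(X_0))\le\mathbb E[(f_J(X_J)-f_0(X_0))^2]/(1-\rho_{\mathrm{end}})$.
\end{itemize}
This yields the constant $32J^2$ for the lifted kernel itself. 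Because $\gamma_V$ enters the exponent with an explicit constant, a lifting comparison with an unspecified constant-factor loss would not prove the stated inequality even if it held. Likewise, the $1/4$ in the exponent does not come from a lifting loss. It comes from the non-reversible lazy estimate $\|P_{\mathrm L}^*g\|_{2,\Pi}^2\le(1-\gamma/2)\|g\|_{2,\Pi}^2$ for mean-zero $g$, combined with $\|\nu P_{\mathrm L}^n-\Pi\|_{\mathrm{TV}}\le\frac12\|d(\nu P_{\mathrm L}^n)/d\Pi-1\|_{2,\Pi}$. A Poincar\'e inequality for a non-reversible kernel does not give your $(1-\gamma)^n$.

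\paragraph{Step 2 and the remaining steps.} Step 2 also fails as described. A Madras--Randall or Jerrum--Son--Tetali--Vigoda decomposition over levels needs the restriction chains on each level to have a positive gap. CS-VGB has no within-level moves: every move changes $j$, and flips and stays never change $x$. So those restrictions are degenerate, and the exact constant you assert is not derived. Moreover, with $\widehat V_j=V_j$ the level marginals are not proportional to $\pi_j^\star$, since $K_j^{\mathrm F}\sqrt{V_{j-1}}\le\sqrt{V_j}$ by Jensen, usually strictly. The paper instead sandwiches $\sqrt{r_{\min}}\,C_j\le C_j^V\le r_{\min}^{-1/2}C_j$, and this is where the factor $r_{\min}$ in $\gamma_V$ comes from.

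For the same reason, your crude branch loses a factor. Comparing with the oracle chain, whose density ratio $\sqrt{\xi_j(x)\xi_{j-1}(y)}$ with $\xi_j=\widehat V_j/V_j$ lies only in $[v_{\min},r_{\min}^{-1}]$, gives just $\gamma_Vr_{\min}^2v_{\min}^2$. The stated $\gamma_Vr_{\min}v_{\min}^2$ comes from comparing directly with $C_j$ and using the reference bound $(1-\rho_{\mathrm{end}})/(32J^2)$.

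Your refined branch has the right H\"older and interpolation ingredients. The piece that handles the different stationary laws and the directional-mass normalization is the unitary map $Uf=\sqrt{d\Pi/d\Pi^V}\,f$.
\begin{itemize}
\item On edge $j$, $UMU^{-1}$ has kernel $m^V(z,z')\sqrt{\xi_j(x)\xi_{j-1}(y)}/\sqrt{s(z)s(z')}$, where $s=d\mu_j/d\mu_j^V$.
\item A Hilbert--Schmidt bound gives $\|UMU^{-1}-M^V\|_{2\to2}\le\Delta_{\mathrm{av}}$.
\item One then perturbs the bottom of the spectrum of the lifted operator $cI-M$. The $L^\infty$ input is $|\widehat V_j/V_j-1|\le r_{\min}^{-1}$, from $V_j\ge r_{\min}$, not from $v_{\min}$.
\end{itemize}

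Finally, in Step 4 the stated threshold on $n$ only guarantees $\delta<\Pi(j=0)$, not $\delta\le\Pi(j=0)/2$. Either split cases using $\mathrm{TV}\le1$, or use the conditioning bound with $\max\{\Pi(j=0),(\nu P_{\mathrm L}^n)(j=0)\}$ in the denominator, as the paper does.
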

\vspace{-5pt}
\begin{remark}[Sufficient conditions for convergence]
\label{rem:main-density-moment-conditions}

For \cref{thm:main-convergence}, exactly paired Gaussian transitions with bounded clean support ensure $\rho_{\mathrm{end}}<1$ and $H_p<\infty$ for some $p>2$ (\cref{lem:c-bounded-gaussian-endpoint-correlation,lem:c-gaussian-higher-moments}). The moment condition also holds under the exponential-tail assumption in \cref{lem:c-gaussian-higher-moments}. Under exact pairing, both linear-interpolant renoising at $t_{\kappa_J}=1$ (\cref{eq:cs-vgb-fm-checkpoint-backward}) and VP renoising with an endpoint-exact schedule (\cref{tab:vp-sde-schedule}) yield an independent noise reset, hence $\rho_{\mathrm{end}}=0$.
\end{remark}

\vspace{-5pt}
\Cref{eq:main-verifier-rate} shows that CS-VGB converges exponentially fast to the correct stationary clean target even with inaccurate process verifiers, although the rate depends on verifier accuracy. As the average-state verifier error $\varepsilon_{\mathrm{av}}$ decreases, the guaranteed rate approaches the exact-verifier bound. We remark that, unlike the uniform multiplicative error bound used in MDM-VGB's mixing speed analysis~\citep{jeon2026mdmvgb}, our bound measures error in $L^2(\pi_j^\star)$, following the average-case perspective also studied for AR-VGB~\citep{rohatgi2026vgb}. 
Importantly, the lower bound rate $\gamma_Vr_{\min}v_{\min}^2$ gives a \textbf{dimension-free} rate guarantee when $1-\rho_{\mathrm{end}},r_{\min},v_{\min}$ remain uniformly bounded away from zero as dimension $d$ increases. 

\vspace{-5pt}
\begin{remark}[Clean stationary mass; \cref{prop:c-sqrt-clean-mass}]
If $J^{-1}\sum_{j=1}^J\bbE_{\pi_j}|\widehat V_j-V_j|\leq\varepsilon Z$, then $\Pi(j=0)\geq(4J-2+4J\varepsilon)^{-1}$. For bounded $\varepsilon$, the clean level therefore retains stationary mass $\Omega(J^{-1})$, paralleling the inverse-sequence-length leaf mass in MDM-VGB~\citep{jeon2026mdmvgb}.
\end{remark}
\vspace{-5pt}
\begin{remark}[Transition mismatch; \cref{thm:c-practical-bias,cor:c-average-rho-lazy-convergence}]
\label{rem:main-transition-mismatch}
Unlike process-verifier error, omitting the edge correction $\rho$ can bias the stationary clean target. Let $\varepsilon_{\rho,\mathrm{av}}:=\max_{1\leq j\leq J}\|\rho_j-1\|_{L^2(J_j^{\mathrm B})}$ quantify this error. Under the assumptions of \cref{thm:c-practical-bias}, let $\Delta_{\rho,\mathrm{av}}=O(\varepsilon_{\rho,\mathrm{av}}^{(p-2)/(p-1)})$ as in \cref{eq:c-average-rho-operator-budget}, for admissible $p>2$. If $\Delta_{\rho,\mathrm{av}}<\gamma_{\mathrm{CS}}$ and $\Delta_{\rho,\mathrm{av}}/(\gamma_{\mathrm{CS}}-\Delta_{\rho,\mathrm{av}})<\Pi(j=0)$, the uncorrected CS-VGB kernel admits a stationary law $\bar\Pi$ with
\begin{equation*}
\|\bar\Pi(dx\mid j=0)-\pi_0^\star\|_{\mathrm{TV}}
\leq\frac{\Delta_{\rho,\mathrm{av}}}{\Pi(j=0)(\gamma_{\mathrm{CS}}-\Delta_{\rho,\mathrm{av}})},
\end{equation*}
The lazified chain still converges exponentially under \cref{cor:c-average-rho-lazy-convergence}, but additional updates alone cannot remove a nonzero stationary bias. Unlike process-verifier error, transition mismatch can therefore affect the limiting clean distribution, with the bias vanishing as $\varepsilon_{\rho,\mathrm{av}}\to0$.
\end{remark}
\vspace{-7pt}
\paragraph{Directional persistence and the first clean visit.}
\label{sec:main-hitting}
While convergence concerns distributional accuracy, finite-budget generation also requires reaching a clean sample within a limited number of updates. To quantify this, let $\tau_0:=\inf\{n\geq0:j_n=0\}$ be the \emph{first-hitting time} for the original kernel. The following bound gives $\Theta(J)$ expected updates to the first clean checkpoint under an average directional-imbalance condition, paralleling the $\Theta(n)$ bound for momentum MDM-VGB under uniform near-balance, where $n$ is sequence length~\citep{jeon2026mdmvgb}. We remark that this dependence on $J$ is optimal, since at least $J$ moves are necessary.
\noindent\begin{minipage}{\linewidth}
\begin{proposition}[First-hitting time; \cref{prop:c-average-momentum-hitting}]
\label{prop:main-hitting}
Assume the exact-kernel conditions of \cref{thm:main-stationarity}, $J\geq2$, $0<v_{\min}\leq\widehat V_j\leq1$, and the noise reset $K_{J-1}^{\mathrm B,\rho}(\cdot\mid y)=\pi_J$ almost surely. Along $Y_J\sim\pi_J$, $Y_{j-1}\sim Q_j^{\mathrm F}(\cdot\mid Y_j)$, suppose
\begin{equation}
\max_{1\leq j<J}\bbE\left[\left(\log\frac{a_j^{\mathrm B}(Y_j)}{a_j^{\mathrm F}(Y_j)}\right)_+\right]
=O(J^{-1}),
\qquad 1-\chi=O(J^{-1}).
\label{eq:main-hitting-assumptions}
\end{equation}
Then $\bbE_{(J,x,\mathrm F)}[\tau_0]=\Theta(J)$ for $\pi_J$-almost every $x$.
\end{proposition}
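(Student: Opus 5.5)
The lower bound $\bbE[\tau_0]\geq J$ is immediate: each update changes $j$ by at most one, and the chain starts at level $J$. The work is in the upper bound $O(J)$. I would analyze the lifted chain started at $(J,x,\mathrm F)$ through its \emph{forward runs}, meaning maximal stretches in direction $\mathrm F$. During a forward run at an interior level $j$, the action is a move with probability $p_j^{\mathrm F}$, a stay with probability $\chi\ell_j$, and a flip with probability $p_j^{\mathrm B}-\chi\ell_j$. When $p_j^{\mathrm F}\geq p_j^{\mathrm B}$ we have $\ell_j=p_j^{\mathrm B}$, so the flip probability is $(1-\chi)p_j^{\mathrm B}=O(J^{-1})$. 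When $p_j^{\mathrm F}<p_j^{\mathrm B}$ the flip probability is at most $(1-\chi)p_j^{\mathrm F}+(p_j^{\mathrm B}-p_j^{\mathrm F})$. The number of stays before a move or flip is geometric, so the expected number of updates spent at a level, conditional on leaving it, is $O(1)$.

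\textbf{Step 1: the first forward run.} Each move in direction $\mathrm F$ samples from $Q_j^{\mathrm F}$, and stays and flips do not change $x$. Hence, conditionally on no flip having occurred, the positions visited are exactly the chain $Y_J\sim\pi_J$, $Y_{j-1}\sim Q_j^{\mathrm F}(\cdot\mid Y_j)$ of the hypothesis. This holds for $\pi_J$-a.e.\ starting $x$ after averaging, and uniformly if needed by adjusting the first step. At level $j$, conditional on leaving it, the probability of flipping rather than moving is
\[
f_j=\frac{\text{flip}}{\text{flip}+\text{move}}\leq(1-\chi)+\Bigl(1-\tfrac{a_j^{\mathrm F}}{a_j^{\mathrm B}}\Bigr)_+\leq(1-\chi)+\Bigl(\log\tfrac{a_j^{\mathrm B}}{a_j^{\mathrm F}}\Bigr)_+ ,
\]
using $1-e^{-u}\leq u$. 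Taking expectations along the path and applying a union bound gives
\[
\bbP(\text{flip before reaching }0)\leq\sum_{j=1}^{J-1}\bbE f_j(Y_j)\leq J\cdot O(J^{-1})+J\cdot O(J^{-1})=c .
\]
I need $c<1$. If the implicit constants do not give this directly, I will instead show that a success probability bounded below by $e^{-c}$ holds through a product bound $\prod(1-f_j)$. Jensen's inequality on the log terms, combined with $\log(1-f)\geq -f/(1-f)$, controls this product. The expected time of the run is $O(J)$.

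\textbf{Step 2: regeneration after failure.} If a flip occurs, the chain renoises in direction $\mathrm B$. The issue is bounding how long it takes to return to the state $(J,\cdot,\mathrm F)$ with law $\pi_J$. The noise reset $K_{J-1}^{\mathrm B,\rho}(\cdot\mid y)=\pi_J$ means every arrival at level $J$ has position exactly $\pi_J$, independent of the past. At level $J$ the boundary move is forced inward, which restarts a fresh copy of Step 1. The $\mathrm B$-run length is bounded by a symmetric argument: from level $j$ it takes at most $J-j$ further upward moves. For it to reverse instead, there must be flips back toward $\mathrm F$, which I count as new forward runs started from a non-stationary position.

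To avoid that complication, I would use a cruder domination. Every segment of the chain between two consecutive visits to level $J$ either hits $0$ or returns. Each return costs $O(J)$ expected updates, because persistence makes each run cover distance with $O(1)$ expected steps per level. Since $v_{\min}\leq\widehat V\leq1$, the masses $p_j^d$ are bounded below by a constant depending on $v_{\min}$, so stays at each level cost $O(1)$ time. Wald's identity over i.i.d.\ cycles with success probability at least $1-c$ then gives $\bbE\tau_0\leq O(J)/(1-c)=O(J)$.

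\textbf{Main obstacle.} The hardest step is the Step 2 claim that an excursion which flips at some level $j$ returns to level $J$, or hits $0$, in $O(J)$ expected time. Interior back-and-forth flips in direction $\mathrm B$ occur along positions not distributed as the $Q^{\mathrm F}$-path, so the averaged hypothesis does not directly apply to them. My first attempt is to use the stationary lifted flow-balance from \cref{thm:main-stationarity}. A Kac-type argument would bound the expected return time to the set $\{j=J\}$ by $1/\Pi(j=J)$ in stationary units, and I would combine this with the $\Omega(J^{-1})$ level mass bound of the clean-stationary-mass remark to get $O(J)$ per cycle.
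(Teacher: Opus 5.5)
Your architecture matches the paper's: a lower bound on the probability that an excursion from level $J$ reaches $0$ before returning, a Kac-type bound on the expected return time to $\{j=J\}$, and geometric repetition of excursions. Your lower bound $\bbE[\tau_0]\geq J$ is also fine. However, both load-bearing steps are incomplete as written.

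In Step 1, write $u_j:=a_j^{\mathrm B}(Y_j)/a_j^{\mathrm F}(Y_j)$. Your union bound only gives $c\leq (J-1)(1-\chi)+\sum_{j}\bbE[(\log u_j)_+]=O(1)$, which need not be below one. Your fallback $\log(1-f)\geq -f/(1-f)$ does not repair this: for $u>1$ one has exactly $f/(1-f)=u-\chi$, and $\bbE[(\log u)_+]$ does not control that. At $j=1$ in particular, $a_1^{\mathrm F}=K_1^{\mathrm F}r$ has no lower bound under the stated hypotheses. Use instead the exact identity $1-f_j=1/(1+u_j-\chi\min\{1,u_j\})\geq e^{-(\log u_j)_+}/(2-\chi)$. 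Multiply along the path, apply Jensen to $\bbE\exp\{-\sum_j(\log u_j)_+\}$, and use $(J-1)\log(2-\chi)\leq(J-1)(1-\chi)=O(1)$. This gives a success probability bounded below by a constant.

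In Step 2, the ``crude domination'' has no support. Nothing in the hypotheses controls B-to-F flips along renoising runs, so an excursion can contain arbitrarily many alternating runs. The Kac route you mention last is the right fix, but it needs two ingredients you do not supply.

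First, the reset combined with the paired identity gives $\pi_J(dx)K_J^{\mathrm F}(dy\mid x)=\pi_J(dx)\pi_{J-1}(dy)$. Hence $Q_J^{\mathrm F}(\cdot\mid x)\propto\sqrt{\widehat V_{J-1}}\,\pi_{J-1}$ is independent of $x$, and every departure from $J$ has the same law $\zeta$. Your claim that arrivals at $J$ are $\pi_J$-distributed is false; they are $\propto\sqrt{\widehat V_J}\,\pi_J$. It is the state-independence of the departure that does the work:
\begin{itemize}
\item it makes the averaged hypothesis apply from every $x$ and after every return;
\item it makes the cycles i.i.d.\ for your Wald step;
\item it lets the Kac bound hold pointwise. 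With $D=\{j<J\}$, $\Pi_D:=\Pi(\cdot\cap D)$ and $P_D(z,\cdot):=P(z,\cdot\cap D)$, iterating $\Pi_D=\Pi(j=J)\zeta+\Pi_DP_D$ yields $\bbE_{(J,x,\mathrm F)}[\tau_J^+]\leq 1/\Pi(j=J)$ for every such $x$, not merely on average under $\Pi(\cdot\mid j=J)$.
\end{itemize}

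Second, you need a lower bound on $\Pi(j=J)$, not the clean-mass remark, which concerns $\Pi(j=0)$. With $M_j:=\widehat C_j(\calX,\calX)$, one has $M_1=Z$ and $v_{\min}\leq M_j\leq1$ for $j\geq2$. Since $\calZ=3Z+4\sum_{j=2}^{J-1}M_j+3M_J\leq 3Z+4J-5$, this gives $\Pi(j=J)=M_J/\calZ\geq v_{\min}/(3Z+4J-5)$.

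Together these give $\bbE_{(J,x,\mathrm F)}[\tau_0]\leq(3Z+4J-5)/(v_{\min}\,\bbP(\tau_0<\tau_J^+))=O(J)$.
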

\end{minipage}
\vspace{-6pt}
\subsection{Combining Sequential and Parallel Search}
\vspace{-2pt}
To use parallel compute alongside recurrent exploration, we propose \textbf{Multi-chain Synchronized CS-VGB} which shares checkpoints and actions across $N$ particles with local candidate selection, preserving the product clean target under the stated assumptions (\cref{app:subsec:synchronized-cs-vgb,thm:sync-finite-pool-stationarity}). For practical one-step posterior-map search, particles share the highest-reward endpoint and its parent, then select the highest-weight local candidate. Detailed updates appear in \cref{eq:search-best-endpoint-sharing,eq:search-local-argmax}. Although this heuristic does not inherit the stationary-target guarantee, we propose it as a practical way to extend CS-VGB's value-guided recurrent bidirectional sampling to finite-budget search.

\vspace{-5pt}
\newpage
\section{Experiments}
\vspace{-5pt}
\label{sec:main-experiments}
We empirically evaluate (1) whether CS-VGB recovers the reward-tilted target despite process-verifier error, (2) whether recurrent, value-guided denoising and renoising improve search performance at comparable computational cost, and (3) how individual design choices contribute to these gains.
\vspace{-5pt}
\subsection{Sampling Accuracy under Imperfect Verifiers}
\label{sec:exp-sampling-fidelity}
\vspace{-5pt}

We first ask whether repeated denoising and renoising recover $\pi_0^\star$ despite process-verifier error. We use a two-dimensional Gaussian mixture with an analytic reward-tilted target and exactly paired transitions. We then vary only the process-verifier error while keeping the reference process and target fixed. To model verifier error that diminishes toward the clean endpoint, motivated by reward estimates from one-step denoising, we set $\log\widehat V_j(x)=\log V_j(x)+\lambda t\,\eta(x,t)-\tfrac12(\lambda t)^2s_\eta^2$ at $t=t_{\kappa_j}$, where $\eta(x,t)$ is a fixed smooth random Fourier error field and $s_\eta^2$ is its variance over random phases (\cref{app:subsec:toy-verifier-error}). In this setup, the scale $\lambda$ controls the distortion strength, and $\widehat V_0=r$ remains exact.

\par
\begin{figure}[H]
\vspace{-5pt}
\centering
\includegraphics[width=\linewidth]{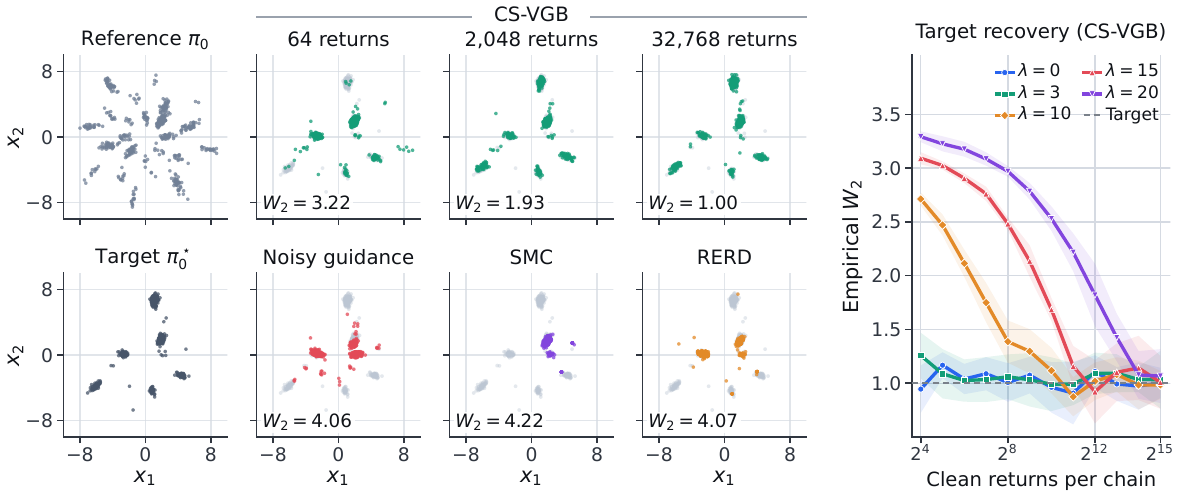}
\captionsetup{skip=5pt}
\caption{\textbf{Target recovery of CS-VGB under process-verifier error.} Left: reference and target samples. Center: CS-VGB at successive clean returns (top) and baseline outputs (bottom), all at $\lambda=20$, each showing 512 samples overlaid on target samples in gray. Right: empirical $W_2$ across 10 random seeds for $\lambda\in\{0,3,10,15,20\}$ (mean $\pm1$ SD). Each return count is per chain. The dashed line averages $W_2$ over 100 independent 512-sample target sets against the fixed target reference.}
\label{fig:main-target-recovery}
\vspace{-10pt}
\end{figure}

\begin{wrapfigure}{r}{0.40\linewidth}
\centering
\includegraphics[width=\linewidth]{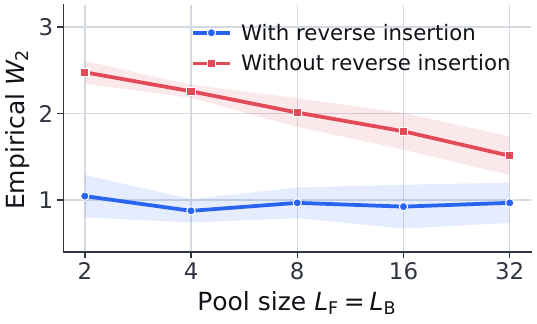}
\captionsetup{skip=4pt}
\caption{\textbf{Small-pool recovery with reverse insertion.} Empirical $W_2$ after $2^{11}$ clean returns per chain at $\lambda=10$.}
\label{fig:main-reverse-insertion}
\vspace{-5pt}
\end{wrapfigure}
\paragraph{Recovery under verifier error.}
CS-VGB recovers comparable target accuracy across verifier-error levels (\cref{fig:main-target-recovery}). At $\lambda=20$, empirical $W_2$ falls from $2.78$ at 512 clean returns to $1.07$ at 32,768, close to $1.00$ with the exact verifier. In contrast, finite-particle SMC and noisy guidance retain distorted mode weights under strong verifier error, as does RERD~\citep{uehara2025refinement} despite repeated renoising and denoising. This comparison highlights the importance of the target-preserving update rule of CS-VGB, rather than renoising and denoising alone. While CS-VGB reaches comparable target accuracy across error levels, larger $\lambda$ requires more clean returns, consistent with \cref{thm:main-convergence}: verifier error can slow convergence without changing the clean target.

\vspace{-0.8\baselineskip}
\paragraph{Why reverse insertion matters.}
This recovery is not obtained simply by repeatedly drawing fresh candidate pools. Removing reverse insertion increases $W_2$ from $0.88$ to $2.26$ at $L_F=L_B=4$, with 2,048 clean returns per chain (\cref{fig:main-reverse-insertion}). With insertion, error remains near one across pool sizes; without it, larger pools reduce the error but leave a gap even at $L_F=L_B=32$. Thus, the ablation supports reverse insertion as a mechanism for accurate finite-pool sampling, rather than relying on large pools to approximate the exact masses. 
\par

\subsection{Compute-Matched Search Performance}
\label{sec:exp-search-performance}
The sampling results illustrate how CS-VGB uses imperfect verifiers to guide exploration while preserving the clean target defined by the exact terminal reward. For practical reward-guided generation, a common goal is to find high-reward samples within a limited computational budget. This motivates a complementary question: \emph{can CS-VGB's recurrent, value-guided exploration also improve finite-budget search?} We evaluate this across image generation with posterior maps and multi-robot planning with diffusion models, using the practical search implementations described in \cref{app:subsec:practical-implementation-details}.

To assess search performance, we compare task-specific reward across computational budgets, measured by the number of function evaluations (NFE). We count each generator forward pass as one NFE and each guidance vector--Jacobian product (VJP) through the generator as two additional NFEs. The suffix \texttt{-G} denotes gradient-guided variants; guided CS-VGB uses a single chain. Further task descriptions, experimental setups, and hyperparameters are provided in \cref{app:sec:experiment-settings}; additional ablations in \cref{app:subsec:search-efficiency-ablations} examine computational savings from lazy pool evaluation and the effect of directional persistence $\chi$ on checkpoint traversal.
\vspace{-5pt}
\begin{figure}[H]
\centering
\includegraphics[width=\linewidth]{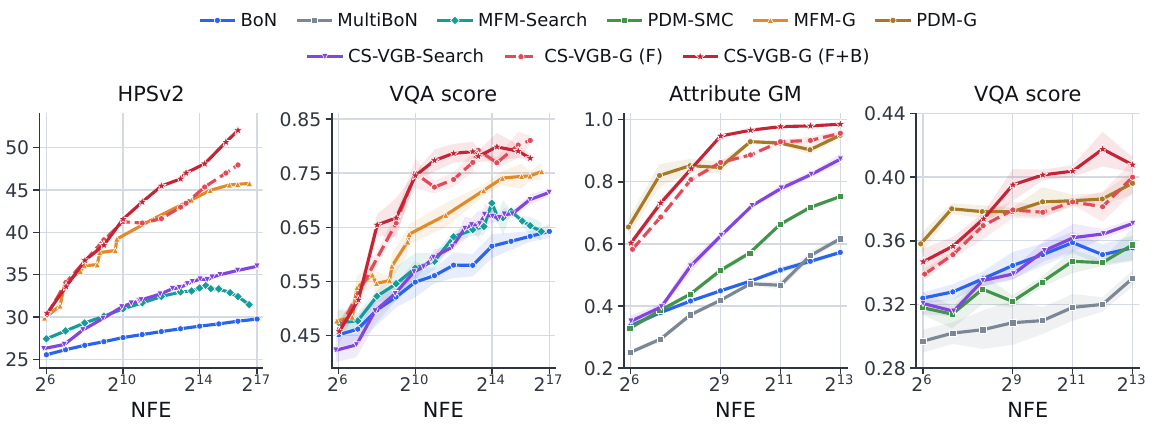}
\par\vspace{2pt}
\makebox[0.5\linewidth][c]{\small (a) ImageNet with MFM}%
\makebox[0.5\linewidth][c]{\small (b) CelebA with PDM}
\par
\caption{\textbf{Reward-guided image generation with posterior maps.} Optimized reward (ImageNet: HPSv2, CelebA: Attribute GM) and independent VQA scores versus actual NFE. Means over three seeds, each with 100 prompts or conditions; bands show $\pm1$ SD where available. F and F+B denote gradient guidance in denoising only and both directions, respectively.}
\label{fig:main-image-search}
\label{fig:main-imagenet-mfm}
\label{fig:main-celeba-pdm}
\vspace{-10pt}
\end{figure}
\vspace{-3pt}
\paragraph{Prompt-guided generation with posterior maps.}
Using Meta Flow Maps (MFM)~\citep{potaptchik2026meta} on ImageNet~\citep{main:deng2009imagenet}, we generate images for 100 class--prompt pairs that we construct by pairing each ImageNet class label with an additional visual condition (\cref{app:subsec:imagenet-settings}). We optimize HPSv2~\citep{exp:wu2023hpsv2} and use VQAScore~\citep{exp:lin2024vqascore} as a held-out metric of prompt alignment. We compare CS-VGB with best-of-$N$ (BoN), MFM-Search, and MFM-G (\cref{fig:main-imagenet-mfm}). CS-VGB-G reaches HPSv2 $45.46$ at 4,104 NFE, exceeding MFM-G's $44.94$ at 23,959 NFE with $5.8\times$ fewer NFEs. It then continues improving beyond MFM-G's plateau, reaching $52.01$ versus $45.66$ at approximately $2^{16}$ NFE, with higher held-out VQAScore ($0.778$ versus $0.745$). Without gradients, CS-VGB-Search also surpasses MFM-Search at larger budgets. Thus, recurrent search both reduces the computation needed for comparable reward and makes productive use of additional computation after one-way guidance saturates. Qualitative comparisons are provided in \cref{tab:imagenet-qualitative}.
\vspace{-3pt}
\paragraph{Facial-attribute control with posterior maps.}
With Posterior Diamond Maps (PDM)~\citep{holderrieth2026diamond} on CelebA $64\times64$~\citep{liu2015faceattributes}, we target 100 conditions specifying ten facial attributes each. We optimize the geometric mean (GM) of the target-attribute probabilities predicted by FaRL~\citep{exp:zheng2022farl} and use VQAScore as a held-out metric of alignment with the specified attributes. Baselines are BoN, MultiBoN~\citep{lee2025cyclic,sabour2025flowmaps}, PDM-SMC, and PDM gradient guidance (\cref{fig:main-celeba-pdm}). CS-VGB-G reaches attribute GM $0.948$ at 514.62 NFE, exceeding PDM-G's $0.929$ at 1,023 NFE while using $1.99\times$ fewer NFEs. Without gradients, CS-VGB-Search reaches GM $0.723$ at 1,066 NFE, compared with PDM-SMC's $0.717$ at 4,068 NFE---a roughly $3.8\times$ reduction in NFE. The gains with and without gradients suggest that recurrent search can reach strongly attribute-aligned outputs more efficiently, rather than benefiting only from gradient guidance. Qualitative comparisons are provided in \cref{tab:celeba-qualitative}.
\begin{wraptable}{r}{0.40\linewidth}
\vspace{-5pt}
\centering\small
\setlength{\tabcolsep}{3pt}
\caption{\textbf{Restart versus recurrence.} ImageNet: HPSv2; CelebA: attribute GM. Three-seed means over 100 conditions; target NFE 4,096.}
\label{tab:main-recurrence}
\begin{tabular}{@{}lrr@{}}
\toprule
Task / variant & Restart & Recurrent\\
\midrule
ImageNet / -G & $39.20$ & $\mathbf{45.57}$\\
ImageNet / Search & $32.23$ & $\mathbf{32.69}$\\
CelebA / -G & $0.86$ & $\mathbf{0.98}$\\
CelebA / Search & $0.80$ & $\mathbf{0.82}$\\
\bottomrule
\vspace{-15pt}
\end{tabular}
\end{wraptable}
\vspace{-15pt}
\paragraph{Why does recurrent bidirectional search help?}
We now examine whether the gains from recurrent search depend on continuing from previously reached clean states, rather than repeatedly restarting from independent noise. Unlike a restart, renoising retains the current clean state as the starting point for exploring alternative continuations, allowing subsequent computation to build on what the search has already found. Consistent with this intuition, recurrence improves both ImageNet HPSv2 and CelebA attribute GM, with larger gains under gradient guidance (\cref{tab:main-recurrence}). This pattern suggests that reusing a promising state is especially useful when guidance can steer the resulting exploration toward higher reward, motivating a closer look at how the backward proposals are generated.
\vspace{-5pt}
\paragraph{Why does backward guidance help?}
The next question is therefore whether guidance should act only during denoising or also shape the noisier states from which denoising resumes. The edge-conductance kernels in \cref{eq:main-weighted-transitions} already favor destinations with larger $\sqrt{\widehat V}$ at interior checkpoints, and the candidate weights in \cref{eq:main-candidate-weights} implement this preference through selection. However, selection can only exploit the candidates available: forward-only guidance (F) retains value-based backward selection but does not steer backward candidate generation. Bidirectional guidance (F+B) addresses this distinction by guiding the backward proposals themselves toward high-value regions. In \cref{fig:main-image-search}, F+B reaches ImageNet HPSv2 $52.01$ versus $47.94$ with F at approximately $2^{16}$ NFE, and improves both attribute reward and VQAScore on CelebA across the evaluated budgets. These results suggest that backward guidance improves recurrent search by steering candidate generation toward high-value regions before value-based selection.

\begin{wrapfigure}{r}{0.40\linewidth}
\vspace{-15pt}
\centering
\includegraphics[width=\linewidth]{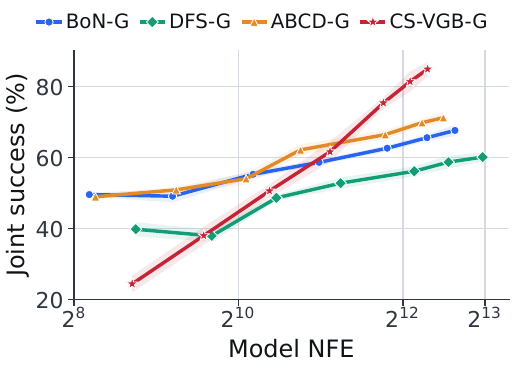}
\captionsetup{skip=4pt}
\caption{\textbf{Multi-robot planning.} Joint success over four tasks with 64 instances each (mean $\pm1$ SD, five seeds) versus actual model NFE.}
\label{fig:main-planning}
\vspace{-8pt}
\end{wrapfigure}
\vspace{-5pt}
\paragraph{Multi-robot planning with diffusion models.}
Using pretrained trajectory diffusion models from MMD~\citep{exp:shaoul2025mmd}, we seek collision-free joint trajectories in four environments: Highways, Circle, and two Conveyor variants. Highways and Circle involve 16 robots, while the Conveyor tasks involve four; instances vary robot start and goal positions and obstacle geometry. We compare against BoN-G and the recurrent diffusion-search baselines ABCD-G~\citep{lee2025cyclic} and DFS-G~\citep{zhang2025classical}. At the largest NFE cap, CS-VGB-G achieves $84.8\%$ joint success, compared with $71.2\%$ for ABCD-G, $67.6\%$ for BoN-G, and $60.1\%$ for DFS-G, using fewer model evaluations on average (\cref{fig:main-planning}). Although CS-VGB-G underperforms ABCD-G and DFS-G at low NFE, it surpasses both at higher budgets. This crossover is consistent with process-verifier evaluation overhead limiting low-budget search, while larger budgets allow value-guided refinement to offset that cost. Environment details and task-wise results are provided in \cref{app:subsec:planning-settings}; \cref{fig:planning-success-trajectories} shows qualitative examples of successful CS-VGB-G trajectories.
\vspace{-15pt}
\section{Conclusion, Limitations, and Future Work}
\vspace{-5pt}
\label{sec:main-conclusion}
We introduced CS-VGB for recurrent, value-guided sampling and search with pretrained generators in continuous domain. Shared edge conductances and terminal anchoring preserve the clean reward-tilted target despite verifier error under stated assumptions, with convergence, first-hitting, and finite-pool guarantees. Experiments demonstrate target recovery and search gains beyond observed denoising-only plateaus in image generation, alongside improved planning success at larger budgets. As limitations, verifier error can slow mixing, evaluation costs can offset low-budget gains, and sampling guarantees do not automatically extend to search heuristics. A natural future direction is to explore alternatives to geometric-mean conductance weighting, including variationally learned conductances, to improve mixing or computational efficiency while retaining terminal anchoring and the conditions required for target preservation.

\newpage

\subsection*{AI use statement}
\label{sec:ai-use-statement}
In this work, we used generative AI tools for editing and polishing the writing, debugging code, and creating TikZ figures. We also used these tools to clarify mathematical background relevant to our derivations and to cross-check proofs. We did not use generative AI tools for developing the core research ideas or drawing scientific conclusions. All AI-assisted text, code, figures, and mathematical arguments were reviewed and verified by the authors. We take responsibility for the final content of this work.

\subsection*{Reproducibility statement}
\label{sec:reproducibility-statement}
We provide the implementation details needed to reproduce our results in the main paper and appendix, including pretrained model information, inference configurations, evaluation protocols, and hyperparameters. Additional details on datasets, preprocessing, and experimental settings are provided in \cref{app:sec:experiment-settings}. The theoretical framework, assumptions, derivations, and complete proofs are presented in \crefrange{app:sec:full-cs-vgb-framework}{app:subsec:synchronized-cs-vgb}. Source code and scripts for reproducing the main experiments are available at \url{https://github.com/KraitGit/CS_VGB}.
\par

\subsubsection*{Acknowledgments}
KJ and MT are grateful for partial supports by NSF Grant DMS-2513699, DOE Grant SC0026274, and Richard Duke Fellowship. MT also thanks Simons Institute of the Theory of Computing. The authors also thank Thuy-Duong `June' Vuong for inspiration.
\par
\ifshowbibliography
    \bibliographystyle{iclr2027_conference}
    \bibliography{references}
\fi

\clearpage
\appendix
\numberwithin{equation}{section}
\numberwithin{figure}{section}
\numberwithin{table}{section}
\setcounter{algocf}{0}
\setcounter{algorithm}{0}
\renewcommand{\algorithmhrefprefix}{appendix}
\renewcommand{\theHequation}{appendix.\thesection.\arabic{equation}}
\renewcommand{\theHfigure}{appendix.\thesection.\arabic{figure}}
\renewcommand{\theHtable}{appendix.\thesection.\arabic{table}}
\raggedbottom
\mdfsetup{nobreak=true}
\AtBeginEnvironment{proof}{%
  \setlength{\abovedisplayskip}{6pt plus1pt minus3pt}%
  \setlength{\belowdisplayskip}{6pt plus1pt minus3pt}%
}
\definecolor{red}{HTML}{8B3038}
\setlength{\textfloatsep}{12pt plus 2pt minus 2pt}
\setlength{\floatsep}{10pt plus 2pt minus 2pt}
\setlength{\intextsep}{10pt plus 2pt minus 2pt}
\renewcommand{\topfraction}{.9}
\renewcommand{\bottomfraction}{.8}
\renewcommand{\textfraction}{.1}
\renewcommand{\floatpagefraction}{.8}
\section{Notation and Reference Transitions}
\label{app:sec:key-notation-baselines}

\begin{table}[!htbp]
\caption{Table of Key Notations}
\label{tab:cs-vgb-key-notation}
\label{tab:cs-vgb-key-notation-chain}
\centering
\small
\setlength{\aboverulesep}{3pt}
\setlength{\belowrulesep}{3pt}
\setlength{\extrarowheight}{0pt}
\setlength{\tabcolsep}{4pt}
\begin{tabular}{@{}c@{\hspace{12pt}}l@{\hspace{12pt}}l@{}}
\toprule
\textbf{Symbol} 
    & \textbf{Definition}
    & \textbf{Mathematical description}\\
\midrule

$t_k$ 
    & Time grid
    & $ 0=t_0<\cdots<t_T=1$\\
\addlinespace[4pt]

${\kappa_j}$
    & Checkpoint indices
    & ${0=\kappa_0<\cdots<\kappa_J\leq T,\quad J\geq2}$\\
\addlinespace[4pt]

${ \pi_j}$ 
    & Reference marginal
    & ${ \pi_j=\pi_{t_{\kappa_j}},\quad \pi_{j-1}=\pi_jK_j^{\mathrm F}}$\\
\addlinespace[4pt]

${ \pi_j^\star}$ 
    & Reward-tilted marginal
    & ${ \pi_j^\star(dx)=V_j(x)\pi_j(dx)/Z}$\\
\addlinespace[4pt]

$r$ 
    & Terminal reward
    & $r:\mathcal X\to[0,\infty)$\\
\addlinespace[4pt]

${ V_j}$ 
    & Ideal process verifier
    & ${ V_j(x):=\mathbb E_{X_0\sim(K_j^{\mathrm F}\cdots K_1^{\mathrm F})(\cdot\mid x)}[r(X_0)]}$\\
\addlinespace[4pt]

${ \widehat V_j}$ 
    & Process verifier
    & ${ \widehat V_j:=\widehat V_{t_{\kappa_j}},\quad\widehat V_0=r}$\\
\addlinespace[4pt]

$ Z,\mathcal Z$ 
    & Target and chain normalizers
    & ${ \left\{\begin{aligned}[c]
        Z&=\int r\,d\pi_0, \\
        \mathcal Z&=\mu_0(\mathcal X)+2\sum_{j=1}^{J-1}\mu_j(\mathcal X)+\mu_J(\mathcal X)
\end{aligned}\right.}$\\
\midrule
${k_k^{\mathrm F},k_{k-1}^{\mathrm B}}$
    & Fine-grid reference kernels
    & ${t_k\to t_{k-1},\quad t_{k-1}\to t_k}$\\
\addlinespace[4pt]
${ K_j^{\mathrm F}}$ 
    & Checkpoint denoising kernel
    & ${ K_j^{\mathrm F}(dy\mid x),\quad j\to j-1}$\\
\addlinespace[4pt]

${ K_{j-1}^{\mathrm B}}$ 
    & Checkpoint renoising kernel
    & ${ K_{j-1}^{\mathrm B}(dx\mid y),\quad j-1\to j}$\\
\addlinespace[4pt]

${ G_j^{\mathrm F},G_j^{\mathrm B}}$ 
    & Candidate proposal kernels
    & ${ Y_i\sim G_j^{\mathrm F}(\cdot\mid x),\quad Z_i\sim G_j^{\mathrm B}(\cdot\mid x)}$\\
\addlinespace[4pt]

${ J_j^{\mathrm F},J_j^{\mathrm B}}$ 
    & Reference edge laws
    & ${ \left\{\begin{aligned}[c]
        J_j^{\mathrm F}(dx,dy)&=\pi_j(dx)K_j^{\mathrm F}(dy\mid x),\\
J_j^{\mathrm B}(dx,dy)&=\pi_{j-1}(dy)K_{j-1}^{\mathrm B}(dx\mid y)
\end{aligned}\right.}$\\
\addlinespace[4pt]

${ \rho_j}$ 
    & Edge correction ratio
    & ${ \rho_j=dJ_j^{\mathrm F}/dJ_j^{\mathrm B}}$\\
\addlinespace[4pt]

${ K_{j-1}^{\mathrm B,\rho}}$ 
    & Corrected renoising kernel
    & ${ \rho_j(x,y)K_{j-1}^{\mathrm B}(dx\mid y)}$\\
\addlinespace[4pt]

${ \widehat C_j}$ 
    & \begin{tabular}[c]{@{}l@{}}Implemented\\edge-conductance measure\end{tabular}
    & ${ \left\{\begin{aligned}[c]
        \widehat C_j(dx,dy)&=J_j^{\mathrm F}(dx,dy){\sqrt{\widehat V_j(x)}}{\sqrt{\widehat V_{j-1}(y)}}, \\
        \widehat C_1(dx,dy)&=J_1^{\mathrm F}(dx,dy)r(y)
\end{aligned}\right.}$\\
\addlinespace[4pt]

${ C_j}$ 
    & \begin{tabular}[c]{@{}l@{}}Comparison\\edge-conductance measure\end{tabular}
    & ${ C_j(dx,dy)=J_j^{\mathrm F}(dx,dy)V_{j-1}(y)}$\\
\midrule
${ F_j,B_j^\rho}$ 
    & Directional weight integrals
    & ${ F_j=K_j^{\mathrm F}{\sqrt{\widehat V_{j-1}}},\quad B_j^\rho=K_j^{\mathrm B,\rho}{\sqrt{\widehat V_{j+1}}}}$\\
\addlinespace[4pt]

${ Q_j^{\mathrm F}}$ 
    & Verifier-weighted denoising
    & ${ K_j^{\mathrm F}(dy\mid x){\sqrt{\widehat V_{j-1}(y)}}/F_j(x)}$\\
\addlinespace[4pt]

${ Q_j^{\mathrm B,\rho}}$ 
    & Verifier-weighted renoising
    & ${ K_j^{\mathrm B,\rho}(dy\mid x){\sqrt{\widehat V_{j+1}(y)}}/B_j^\rho(x)}$\\
\addlinespace[4pt]

${ a_j^{\mathrm F},a_j^{\mathrm B}}$ 
    & Directional incident masses
    & ${ \left\{\begin{aligned}[c]
        a_j^{\mathrm F}&={\sqrt{\widehat V_j}}F_j\ (j\geq2),\quad a_1^{\mathrm F}=F_1, \\
        a_j^{\mathrm B}&={\sqrt{\widehat V_j}}B_j^\rho\ (1\leq j<J)
\end{aligned}\right.}$\\
\addlinespace[4pt]

${ \mu_j}$ 
    & Total incident measure
    & ${ \left\{\begin{aligned}[c]
        \mu_j&=\widehat C_j(\cdot,\mathcal X)+\widehat C_{j+1}(\mathcal X,\cdot), \\
        \mu_0&=r\pi_0,\quad \mu_J=\widehat C_J(\cdot,\mathcal X)
\end{aligned}\right.}$\\
\addlinespace[4pt]

$ \Pi$ 
    & Stationary law of CS-VGB
    & ${ \Pi(j,dx,d)=\mu_j(dx)/\mathcal Z}$\\
\addlinespace[4pt]

$P$ 
    & Exact CS-VGB kernel
    & $\Pi P=\Pi$\\
\addlinespace[4pt]

$\chi$ 
    & Directional persistence parameter
    & $\chi\in[0,1]$\\
\addlinespace[4pt]

$ L_F,L_B$ 
    & Candidate pool sizes
    & $ L_F,L_B\in\mathbb N$\\
\bottomrule
\end{tabular}
\par\vspace{5pt}
\begin{minipage}{\linewidth}
The square-root formulas for $\widehat C_j,F_j,Q_j^{\mathrm F},a_j^{\mathrm F}$ apply to $2\leq j\leq J$, and those for $B_j^\rho,Q_j^{\mathrm B,\rho},a_j^{\mathrm B}$ to $1\leq j<J$.
At the clean edge, $F_1=K_1^{\mathrm F}r$, $Q_1^{\mathrm F}(dy\mid x)=K_1^{\mathrm F}(dy\mid x)r(y)/F_1(x)$, and $Q_0^{\mathrm B,\rho}=K_0^{\mathrm B,\rho}$; see \cref{eq:cs-vgb-new-forward-kernel,eq:cs-vgb-clean-renoising,eq:cs-vgb-directional-masses}.
\end{minipage}
\end{table}

\begin{figure}[!ht]
\centering
\definecolor{roadInk}{HTML}{222B33}
\definecolor{roadBlue}{HTML}{365F83}
\definecolor{roadGreen}{HTML}{42745E}
\definecolor{roadPurple}{HTML}{76506F}
\definecolor{roadFill}{HTML}{F2F2F2}

\begin{tikzpicture}[font=\normalsize, >=Latex]

\node[anchor=north west, rounded corners=4pt, draw=black, fill=roadFill, line width=1.05pt, minimum width=211pt, minimum height=1.35cm, inner sep=0pt]
    (targetPanel) at (2.40,0) {};
\node[anchor=north west, rounded corners=4pt, draw=black, fill=roadFill, line width=1.05pt, minimum width=174pt, minimum height=1.75cm, inner sep=0pt]
    (ratePanel) at (0,-1.85) {};
\node[anchor=north west, rounded corners=4pt, draw=black, fill=roadFill, line width=1.05pt, minimum width=184pt, minimum height=1.75cm, inner sep=0pt]
    (verifierPanel) at (6.95,-1.85) {};
\node[anchor=north west, rounded corners=4pt, draw=black, fill=roadFill, line width=1.05pt, minimum width=175pt, minimum height=1.65cm, inner sep=0pt]
    (correctionPanel) at (0,-4.10) {};
\node[anchor=north west, rounded corners=4pt, draw=black, fill=roadFill, line width=1.05pt, minimum width=184pt, minimum height=1.65cm, inner sep=0pt]
    (poolPanel) at (6.95,-4.10) {};

\draw[-{Latex[length=2mm,width=1.6mm]}, draw=black, line width=1pt]
    (3.175,-1.35) -- (3.175,-1.85);
\draw[-{Latex[length=2mm,width=1.6mm]}, black, line width=1pt]
    (ratePanel.east) -- (verifierPanel.west);
\draw[-{Latex[length=2mm,width=1.6mm]}, draw=black, line width=1pt]
    (3.175,-3.60) -- (3.175,-4.10);
\draw[-{Latex[length=2mm,width=1.6mm]}, black, rounded corners=4pt, line width=1pt]
    (targetPanel.east) -- (13.9,-0.69)
    -- (13.9,-4.94) -- (poolPanel.east);

\node[anchor=north west, text=roadBlue, align=left,
      font=\normalsize\bfseries, inner sep=0pt]
    (targetQuestion) at (2.65,-0.22)
    {Q: What distribution does CS-VGB preserve?};
\node[anchor=north west, text=roadInk, align=left, inner sep=0pt]
    (targetAnswer) at (2.65,-0.78)
    {(\ref{thm:c-exact-stationarity}): the reward-tilted clean target.};

\node[anchor=north west, text=roadBlue, align=left,
      font=\normalsize\bfseries, inner sep=0pt]
    (rateQuestion) at (0.25,-2.07)
    {Q: How fast does CS-VGB converge?};
\node[anchor=north west, text=roadInk, align=left,
      text width=5.85cm, inner sep=0pt]
    (rateAnswer) at (0.25,-2.55)
    {(\ref{thm:c-average-moment-poincare}, \ref{cor:c-cs-vgb-convergence}): exponential convergence.\\[3pt]
     (\ref{prop:c-explicit-lifted-poincare}): quantitative rate lower bound.};

\node[anchor=north west, text=roadBlue, align=left,
      font=\normalsize\bfseries, inner sep=0pt]
    (verifierQuestion) at (7.20,-2.07)
    {Q: What does verifier error change?};
\node[anchor=north west, text=roadInk, align=left,
      text width=5.85cm, inner sep=0pt]
    (verifierAnswer) at (7.20,-2.55)
    {(\ref{thm:c-average-verifier-gap}): correct target preserved;
     potentially weaker mixing-rate guarantee.};

\node[anchor=north west, text=roadBlue, align=left,
      font=\normalsize\bfseries, inner sep=0pt]
    (correctionQuestion) at (0.25,-4.32)
    {Q: What if we omit edge correction?};
\node[anchor=north west, text=roadInk, align=left,
      text width=5.85cm, inner sep=0pt]
    (correctionAnswer) at (0.25,-4.88)
    {(\ref{thm:c-practical-bias}): a stationary law exists, but may be biased.};

\node[anchor=north west, text=roadBlue, align=left,
      font=\normalsize\bfseries, inner sep=0pt]
    (poolQuestion) at (7.20,-4.32)
    {Q: Can finite $L_f,L_b$ preserve the target?};
\node[anchor=north west, text=roadInk, align=left,
      text width=5.85cm, inner sep=0pt]
    (poolAnswer) at (7.20,-4.88)
    {(\ref{thm:c-joint-pool-invariance}): reverse insertion preserves the full stationary law.};

\end{tikzpicture}
\caption{Roadmap of the theoretical analysis of CS-VGB.}
\label{fig:appendix-proof-roadmap}
\end{figure}

\subsection{Reference Transitions across Continuous Generative Models}
\label{app:subsec:reference-model-classes}
In this subsection, we review the reference processes underlying the checkpoint kernels in \cref{app:subsec:model-checkpoint-transitions}. We frist fix the same time convention for Diffusion models, flow matching, and one-step posterior map: $t=0$ is the data endpoint, $X_0\sim\pi_{\mathrm{data}}$, and $t=1$ is the noise endpoint, $X_1\sim\mathcal N(0,I)$. Generation runs from $1$ to $0$, consistent with the time convention in \cref{app:subsec:cs-vgb-setup}.

\paragraph{Diffusion models.}
Score-based diffusion~\citep{song2021score} transports clean data toward noise through the forward SDE
\begin{equation*}
dX_t=f(X_t,t)dt+g(t)dW_t,
\qquad X_0\sim\pi_{\mathrm{data}},
\end{equation*}
where $W_t$ is standard Brownian motion, $f(x,t)$ is the drift, and $g(t)$ is a scalar diffusion coefficient depending only on time. We write $\pi_t^{\mathrm{ref}}$ for the marginal law of $X_t$ and $\pi_t^{\mathrm{ref}}(x)$ for its density when it exists. These ideal marginals are distinct from the checkpoint marginals $\pi_j$ induced by the implemented reference kernels in \cref{app:subsec:cs-vgb-setup}. The corresponding reverse-time process is given by~\citep{anderson1982reverse}
\begin{equation*}
dX_t=\left[f(X_t,t)-g(t)^2\nabla_x\log\pi_t^{\mathrm{ref}}(X_t)\right]dt+g(t)d\overline W_t,
\end{equation*}
where $\overline W_t$ is Brownian motion in decreasing time. For the VP (variance-preserving) SDE~\citep{song2021score}, the noising marginal can be sampled as
\begin{equation*}
X_t=\alpha_tX_0+\sigma_t\epsilon,
\qquad \epsilon\sim\mathcal N(0,I),
\end{equation*}
where $\alpha_t$ and $\sigma_t$ are the signal and noise scales. \Cref{tab:vp-sde-schedule} gives the VP coefficients and a schedule with an exact standard Gaussian prior at $t=1$.
\begin{center}
\captionof{table}{VP SDE form and an endpoint-exact noise schedule.}
\label{tab:vp-sde-schedule}
\small
\setlength{\tabcolsep}{5pt}
\renewcommand{\arraystretch}{1.3}
\begin{tabular}{@{}p{0.50\linewidth}p{0.40\linewidth}@{}}
\toprule
\textbf{General VP form} & \textbf{Endpoint-exact schedule} \\
\midrule
\makecell[l]{$f(x,t)=-\frac{1}{2}\beta(t)x$, \quad $g(t)=\sqrt{\beta(t)}$\\[4pt]$\alpha_t=\exp\left(-\frac{1}{2}\int_0^t\beta(u)du\right)$\\[4pt]$\sigma_t^2=1-\alpha_t^2$}
& \makecell[l]{$\beta(t)=\frac{2}{1-t}$\\[4pt]$\alpha_t=1-t$\\[4pt]$\sigma_t^2=1-(1-t)^2$} \\
\bottomrule
\end{tabular}
\end{center}
Here $\beta(t)>0$ is the noise rate. An exact Gaussian prior requires $\alpha_1=0$ and hence $\int_0^1\beta(t)dt=\infty$; otherwise some clean signal remains. For the schedule in the table, $\pi_t^{\mathrm{ref}}\to\mathcal N(0,I)$ as $t\uparrow1$, so the prior at $t=1$ is defined by this limit.

A score network $s_\theta(x,t)$ approximates $\nabla_x\log\pi_t^{\mathrm{ref}}(x)$ through denoising score matching~\citep{song2021score}. Substituting it into the reverse process gives
\begin{equation}
dX_t=\left[f(X_t,t)-g(t)^2s_\theta(X_t,t)\right]dt+g(t)d\overline W_t,
\label{eq:score-reverse-sde}
\end{equation}
which is integrated from $t=1$ to $t=0$ using an SDE integrator such as Euler--Maruyama.

For $0\leq t<s\leq1$, let $q_{s\mid t}$ denote the forward-SDE transition from the cleaner state $X_t$ to the noisier state $X_s$. For the VP SDE in \cref{tab:vp-sde-schedule},
\begin{equation*}
q_{s\mid t}(dx_s\mid x_t)=\mathcal N(dx_s;\alpha_{s\mid t}x_t,\sigma_{s\mid t}^2I),
\qquad \alpha_{s\mid t}=\frac{\alpha_s}{\alpha_t},
\qquad \sigma_{s\mid t}^2=\sigma_s^2-\alpha_{s\mid t}^2\sigma_t^2.
\end{equation*}
Its \emph{Bayes reverse} $q_{t\mid s}$ is the conditional law of the cleaner state given the noisier state and is determined by
\begin{equation*}
\pi_t^{\mathrm{ref}}(dx_t)q_{s\mid t}(dx_s\mid x_t)
=\pi_s^{\mathrm{ref}}(dx_s)q_{t\mid s}(dx_t\mid x_s).
\end{equation*}
With an exact score and exact integration of \cref{eq:score-reverse-sde}, starting from $X_s=x_s$ and running to $t<s$ gives $X_t\sim q_{t\mid s}(\cdot\mid x_s)$. Score approximation and numerical discretization generally change this conditional distribution.

\paragraph{Flow matching.}
Flow matching~\citep{lipman2023flowmatching} learns an instantaneous velocity field for a probability path. For the linear interpolant,
\begin{equation*}
X_t=(1-t)X_0+t\epsilon,
\qquad X_0\sim\pi_{\mathrm{data}},\quad \epsilon\sim\mathcal N(0,I).
\end{equation*}
with independent $X_0$ and $\epsilon$, the sample-conditional velocity is $\epsilon-X_0$. Averaging over the endpoints given $X_t=x$ gives the marginal velocity
\begin{equation*}
v_t(x):=\bbE[\partial_tX_t\mid X_t=x]=\bbE[\epsilon-X_0\mid X_t=x],
\end{equation*}
A velocity network $v_\theta(t,x)$ approximates this conditional expectation. With the exact velocity, generation follows the ODE
\begin{equation}
dX_t=v_t(X_t)dt,
\label{eq:flow-matching-ode}
\end{equation}
from $t=1$ to $t=0$. Its transitions are deterministic, but the same marginals $\pi_t^{\mathrm{ref}}$ can also be realized by the following forward and reverse SDEs with diffusion rate $\beta(t)>0$~\citep{albergo2023stochastic}:
\begin{align}
dX_t&=\left[v_t(X_t)+\frac{1}{2}\beta(t)\nabla_x\log\pi_t^{\mathrm{ref}}(X_t)\right]dt+\sqrt{\beta(t)}dW_t,
\tag{\text{FM-Noising SDE}}
\label{eq:flow-matching-forward-sde}\\
dX_t&=\left[v_t(X_t)-\frac{1}{2}\beta(t)\nabla_x\log\pi_t^{\mathrm{ref}}(X_t)\right]dt+\sqrt{\beta(t)}d\overline W_t.
\tag{\text{FM-Denoising SDE}}
\label{eq:flow-matching-reverse-sde}
\end{align}
The first runs in increasing time and the second in decreasing time, with $\overline W_t$ denoting backward-time Brownian motion.

Differentiating the Gaussian conditional density of $X_t\mid X_0\sim\mathcal N((1-t)X_0,t^2I)$ and averaging over $X_0\mid X_t=x$ expresses the score through the conditional mean of $\epsilon$. Using $\epsilon=X_t+(1-t)(\epsilon-X_0)$ then gives, for $t>0$,
{
\begin{equation*}
\nabla_x\log\pi_t^{\mathrm{ref}}(x)=-\frac1t\bbE[\epsilon\mid X_t=x]
=-\frac{x+(1-t)v_t(x)}{t}.
\end{equation*}
}
Substituting this relation into \cref{eq:flow-matching-reverse-sde} gives the velocity-only form
\begin{equation*}
dX_t=\left[\frac{\beta(t)}{2t}X_t+\left(1+\frac{(1-t)\beta(t)}{2t}\right)v_t(X_t)\right]dt+\sqrt{\beta(t)}d\overline W_t.
\end{equation*}
Replacing $v_t$ by $v_\theta$ gives a practical reverse SDE without a separate score model.

\paragraph{One-step posterior sampling.}
{
For the linear interpolant above, a one-step posterior sampler transforms an independent Gaussian input into a clean sample conditional on $X_t=x$. In the ideal case,
\begin{equation}
Z={\Phi_{t,0}^{\theta}}(x,\zeta),\qquad \zeta\sim\mathcal N(0,I),\qquad
\operatorname{Law}(Z\mid x)=\operatorname{Law}(X_0\mid X_t=x).
\label{eq:background-pdm-sampling}
\end{equation}
Meta Flow Maps (MFM) and Posterior Diamond Maps (PDM) learn such posterior samplers~\citep{potaptchik2026meta,holderrieth2026diamond}. Under perfect training at the population level, these methods recover the identity in \cref{eq:background-pdm-sampling}. Under this identity, independent Gaussian inputs give independent posterior draws, so for integrable reward $r$,
\begin{equation}
\frac1N\sum_{i=1}^N r(Z_i)
\xrightarrow[N\to\infty]{\mathrm{a.s.}}
\bbE[r(X_0)\mid X_t=x],\qquad
Z_i\overset{\mathrm{iid}}{\sim}\operatorname{Law}(X_0\mid X_t=x).
\label{eq:background-pdm-reward}
\end{equation}
Thus posterior samples estimate the expected reward rather than the reward of a single deterministic prediction. We combine such a sample with a Gaussian bridge to construct denoising transitions in \cref{eq:cs-vgb-flowmap-one-nfe-step,eq:cs-vgb-flowmap-fine-forward}.
}

\section{Full CS-VGB Framework}
\label{app:sec:full-cs-vgb-framework}

{Many inference-time alignment methods use intermediate reward estimates to guide generation from noise to data, then stop at a clean sample. A common approach evaluates the reward at a clean prediction, using a denoiser or a flow map~\citep{li2025svdd,sabour2025flowmaps}. We call this intermediate reward estimate a \emph{process verifier}. At checkpoint $j$, its ideal value is}
\begin{equation*}
V_j(x):=\bbE_{X_0\sim(K_j^{\mathrm F}\cdots K_1^{\mathrm F})(\cdot\mid x)}[r(X_0)],
\end{equation*}
where $K_j^{\mathrm F}\cdots K_1^{\mathrm F}$ composes the reference denoising transitions from checkpoint $j$ to $0$. The reward at a single clean prediction need not equal this expectation, so errors can affect subsequent choices and the final distribution. CS-VGB instead repeatedly renoises and denoises the current sample, using additional inference compute to revisit earlier choices rather than only generate independent samples.

{Our sampling framework aims to preserve the \emph{reward-tilted terminal distribution}, rather than merely maximize reward. The process verifier guides exploration, while the exact reward determines the target at the clean boundary. Under the conditions of \cref{thm:c-exact-stationarity}, the edge-corrected construction preserves the reward-tilted terminal distribution even with an imperfect process verifier: its accuracy affects transitions and mixing, but not this target distribution. We define the target in \cref{app:subsec:cs-vgb-setup} and the repeated denoising and renoising updates in \cref{app:subsec:coarse-checkpoint-cs-vgb}.}

{
\subsection{Problem Setup and Target Distribution}
\label{app:subsec:cs-vgb-setup}

\paragraph{Reference process and target.}
Let $\mathcal X\subseteq\mathbb R^d$ be the sample space.
We distinguish the \emph{fine denoising time grid} from the \emph{checkpoints} at which CS-VGB chooses an action:
\begin{equation}
0=t_0<\cdots<t_T=1,\qquad
0=\kappa_0<\cdots<\kappa_J\leq T,\qquad 2\leq J\leq T.
\label{eq:cs-vgb-two-grids}
\end{equation}
Here $k$ indexes fine-grid steps and $j$ indexes checkpoints. In the CS-VGB construction and analysis, write $X_j:=X_{t_{\kappa_j}}$. From checkpoint $j$, the kernels $K_j^{\mathrm F}$ and $K_j^{\mathrm B}$ move to checkpoints $j-1$ and $j+1$, respectively. When these moves follow the fine grid, they compose the corresponding steps as
\begin{equation}
K_j^{\mathrm F}
:=k_{\kappa_j}^{\mathrm F}k_{\kappa_j-1}^{\mathrm F}\cdots
k_{\kappa_{j-1}+1}^{\mathrm F},\qquad
K_j^{\mathrm B}:=k_{\kappa_j}^{\mathrm B}k_{\kappa_j+1}^{\mathrm B}\cdots k_{\kappa_{j+1}-1}^{\mathrm B}.
\label{eq:cs-vgb-checkpoint-composition}
\end{equation}
Here $k_k^{\mathrm F}$ is the denoising kernel from $t_k$ to $t_{k-1}$, and $k_k^{\mathrm B}$ is the renoising kernel from $t_k$ to $t_{k+1}$. Their model-specific forms are given in \cref{app:subsec:model-checkpoint-transitions}. The F composition applies for $1\leq j\leq J$ and the B composition for $0\leq j<J$.
\begin{figure}[tbp]
\centering
\begin{tikzpicture}[font=\small,>=Latex,line width=.8pt]
\node (forwardTitle) at (2.5,3.45) {\textbf{F pool: $L_f$ candidates}};
\node (backwardTitle) at (9.5,3.45) {\textbf{B pool: $L_b$ candidates}};
\node[circle,draw=black,fill=black!7,minimum size=8mm,inner sep=1pt] (currentSample) at (6,1.9) {$x$};
\node[circle,draw=black,fill=white,minimum size=8mm,inner sep=1pt] (forwardFirst) at (0,2.9) {$Y_1$};
\node[circle,draw=black,fill=black!12,double,minimum size=8mm,inner sep=1pt] (forwardSelected) at (0,1.9) {$Y_{I_{\mathrm F}}$};
\node[circle,draw=black,fill=white,minimum size=8mm,inner sep=1pt] (forwardLast) at (0,.25) {$Y_{L_f}$};
\node (forwardDots) at (0,1.12) {$\vdots$};
\node[circle,draw=black,fill=white,minimum size=8mm,inner sep=1pt] (backwardFirst) at (12,2.9) {$Z_1$};
\node[circle,draw=black,fill=white,minimum size=8mm,inner sep=1pt] (backwardMiddle) at (12,1.9) {$Z_i$};
\node[circle,draw=black,fill=white,minimum size=8mm,inner sep=1pt] (backwardLast) at (12,.25) {$Z_{L_b}$};
\node (backwardDots) at (12,1.12) {$\vdots$};
\draw[->,draw=black!55] (currentSample.160) .. controls (5.18,2.25) and (4.72,2.43) .. (4.3,2.52);
\draw[->,draw=black!55] (4.18,2.55) .. controls (3.4,2.73) and (2.48,2.84) .. (1.7,2.87);
\draw[->,draw=black!55] (1.58,2.875) .. controls (1.16,2.895) and (.8,2.9) .. (forwardFirst.east);
\draw[->,draw=black,line width=1.3pt] (currentSample.west) -- (4.3,1.9);
\draw[->,draw=black,line width=1.3pt] (4.18,1.9) -- (1.7,1.9);
\draw[->,draw=black,line width=1.3pt] (1.58,1.9) -- (forwardSelected.east);
\draw[->,draw=black!55] (currentSample.200) .. controls (5.18,1.38) and (4.72,1.06) .. (4.3,.91);
\draw[->,draw=black!55] (4.18,.87) .. controls (3.4,.59) and (2.48,.36) .. (1.7,.3);
\draw[->,draw=black!55] (1.58,.29) .. controls (1.16,.26) and (.8,.25) .. (forwardLast.east);
\draw[->,draw=black!55] (currentSample.20) .. controls (6.82,2.25) and (7.28,2.43) .. (7.7,2.52);
\draw[->,draw=black!55] (7.82,2.55) .. controls (8.6,2.73) and (9.52,2.84) .. (10.3,2.87);
\draw[->,draw=black!55] (10.42,2.875) .. controls (10.84,2.895) and (11.2,2.9) .. (backwardFirst.west);
\draw[->,draw=black!55] (currentSample.east) -- (7.7,1.9);
\draw[->,draw=black!55] (7.82,1.9) -- (10.3,1.9);
\draw[->,draw=black!55] (10.42,1.9) -- (backwardMiddle.west);
\draw[->,draw=black!55] (currentSample.340) .. controls (6.82,1.38) and (7.28,1.06) .. (7.7,.91);
\draw[->,draw=black!55] (7.82,.87) .. controls (8.6,.59) and (9.52,.36) .. (10.3,.3);
\draw[->,draw=black!55] (10.42,.29) .. controls (10.84,.26) and (11.2,.25) .. (backwardLast.west);
\node (selectedLabel) at (2.65,2.26) {selected F move};
\draw[draw=black,line width=.55pt] (.65,1.68) -- (.65,1.54) -- (5.15,1.54) -- (5.15,1.68);
\node[fill=white,inner sep=3pt] (forwardKernelLabel) at (2.9,1.54) {$K_j^{\mathrm F}$};
\draw[draw=black,line width=.55pt] (6.85,1.68) -- (6.85,1.54) -- (11.35,1.54) -- (11.35,1.68);
\node[fill=white,inner sep=3pt] (backwardKernelLabel) at (9.1,1.54) {$K_j^{\mathrm B}$};
\draw[draw=black,line width=.5pt] (4.43,.78) -- (4.43,.64) -- (5.42,.64) -- (5.42,.78);
\node[inner sep=1pt] (forwardFirstStepLabel) at (4.92,.35) {$k_{\kappa_j}^{\mathrm F}$};
\draw[draw=black,line width=.5pt] (.58,.03) -- (.58,-.1) -- (1.52,-.1) -- (1.52,.03);
\node[inner sep=1pt] (forwardLastStepLabel) at (1.05,-.35) {$k_{\kappa_{j-1}+1}^{\mathrm F}$};
\draw[draw=black,line width=.5pt] (6.58,.78) -- (6.58,.64) -- (7.57,.64) -- (7.57,.78);
\node[inner sep=1pt] (backwardFirstStepLabel) at (7.08,.35) {$k_{\kappa_j}^{\mathrm B}$};
\draw[draw=black,line width=.5pt] (10.48,.03) -- (10.48,-.1) -- (11.42,-.1) -- (11.42,.03);
\node[inner sep=1pt] (backwardLastStepLabel) at (10.95,-.35) {$k_{\kappa_{j+1}-1}^{\mathrm B}$};
\node (currentLabel) at (6,2.7) {checkpoint $j$};
\node (forwardCheckpoint) at (0,3.7) {$j-1$};
\node (backwardCheckpoint) at (12,3.7) {$j+1$};
\begin{scope}[yshift=-6mm]
\draw[->,draw=black,line width=1pt] (-.5,-.25) -- (12.65,-.25);
\draw[draw=black,line width=1.2pt] (0,-.4) -- (0,-.1);
\draw[draw=black] (1.7,-.35) -- (1.7,-.15);
\draw[draw=black] (4.3,-.35) -- (4.3,-.15);
\draw[draw=black,line width=1.2pt] (6,-.4) -- (6,-.1);
\draw[draw=black] (7.7,-.35) -- (7.7,-.15);
\draw[draw=black] (10.3,-.35) -- (10.3,-.15);
\draw[draw=black,line width=1.2pt] (12,-.4) -- (12,-.1);
\node[fill=white,inner sep=2pt] (leftGridDots) at (3,-.25) {$\cdots$};
\node[fill=white,inner sep=2pt] (rightGridDots) at (9,-.25) {$\cdots$};
\node[anchor=north] (leftTime) at (0,-.48) {$t_{\kappa_{j-1}}$};
\node[anchor=north] (leftFineFirst) at (1.7,-.48) {$t_{\kappa_{j-1}+1}$};
\node[anchor=north] (leftFineLast) at (4.3,-.48) {$t_{\kappa_j-1}$};
\node[anchor=north] (currentTime) at (6,-.48) {$t_{\kappa_j}$};
\node[anchor=north] (rightFineFirst) at (7.7,-.48) {$t_{\kappa_j+1}$};
\node[anchor=north] (rightFineLast) at (10.3,-.48) {$t_{\kappa_{j+1}-1}$};
\node[anchor=north] (rightTime) at (12,-.48) {$t_{\kappa_{j+1}}$};
\end{scope}
\end{tikzpicture}
\caption{Fine steps and checkpoint moves. Candidate paths connect neighboring checkpoints; the highlighted F move selects one of $L_F$ candidates. The B pool contains $L_B$ candidates in the opposite direction.}
\label{fig:cs-vgb-checkpoint-candidates}
\end{figure}
The composition integrates out all intermediate states, with no direction decision or candidate selection at the intermediate fine-grid points. Starting from the Gaussian prior $\pi_{t_T}:=\mathcal N(0,I)$ at $t_T=1$, denoising to $t_{\kappa_J}$ gives the initial law $\pi_J$. Continuing through the checkpoints then defines the remaining marginals, so for $j=J,\ldots,1$,
\begin{equation}
\pi_J:=\pi_{t_T}k_T^{\mathrm F}\cdots k_{\kappa_J+1}^{\mathrm F},\qquad
\pi_{j-1}(dy):=\int_{\mathcal X}\pi_j(dx)K_j^{\mathrm F}(dy\mid x).
\label{eq:cs-vgb-implemented-marginals}
\end{equation}
The empty kernel product is the identity, so $\pi_J=\mathcal N(0,I)$ when $\kappa_J=T$. For a nonnegative reward $r$, the reward-tilted terminal distribution is
\begin{equation*}
\pi_0^\star(dx):=\frac{r(x)\pi_0(dx)}{Z},\qquad
{Z:=\int_{\mathcal X}r(x)\pi_0(dx)\in(0,\infty).}
\end{equation*}

\paragraph{Process verifier.}
The \emph{ideal process verifier} is the expected reward after denoising to $t_0=0$:
\begin{equation*}
V_j(x):=V_{t_{\kappa_j}}(x)
=\bbE_{X_0\sim(K_j^{\mathrm F}\cdots K_1^{\mathrm F})(\cdot\mid x)}[r(X_0)],
\qquad V_0=r,\qquad V_j=K_j^{\mathrm F}V_{j-1}.
\end{equation*}
The same kernels define the \emph{reward-tilted time marginals},
\begin{equation*}
\pi_j^\star(dx):=\frac{V_j(x)\pi_j(dx)}{Z},\qquad
\int V_j\,d\pi_j=\int V_{j-1}\,d\pi_{j-1}=\cdots=\int r\,d\pi_0=Z.
\end{equation*}
CS-VGB uses a nonnegative, finite approximation ${\widehat V_j:=\widehat V_{t_{\kappa_j}}}$, obtained from a denoised prediction or a learned verifier. At the clean boundary it uses the exact reward:
\begin{assumption}[Terminal anchoring]
\label{ass:terminal-anchoring}
At ${j=0}$, CS-VGB evaluates $r$ exactly and sets $\widehat V_0:=r$.
\end{assumption}
When using a learned verifier, one way to train it is to regress on terminal rewards along reference denoising paths:
\begin{equation*}
\mathcal L_{\mathrm{PV}}(\phi)
:=\bbE\!\left[(\widehat V_\phi(t_{\kappa_j},X_j)-r(X_0))^2\right].
\end{equation*}
Here $j$ is sampled over the supported checkpoints, and $(X_J,\ldots,X_0)$ follows $K_J^{\mathrm F},\ldots,K_1^{\mathrm F}$ from $\pi_J$. In the ideal training, its population minimizer satisfies ${\widehat V_\phi(t_{\kappa_j},x)=\bbE[r(X_0)\mid X_j=x]=V_j(x)}$ for ${\pi_j}$-almost every $x$. The construction below does not require this approximation to be exact.

\subsection{CS-VGB at Selected Checkpoints}
\label{app:subsec:coarse-checkpoint-cs-vgb}

The state ${(j,x,d)}$ records the checkpoint index, sample, and direction. Direction $\mathrm F$ denoises to ${j-1}$, and direction $\mathrm B$ renoises to ${j+1}$. At ${j=0}$ only $\mathrm B$ is available, and at ${j=J}$ only $\mathrm F$ is available. We first define the exact update, then its candidate-pool implementation.

\paragraph{Corrected renoising.}
Let ${K_{j-1}^{\mathrm B}(dx\mid y)}$ be the reference renoising kernel from ${j-1}$ to ${j}$. The reference kernels ${K_j^{\mathrm F}}$ and ${K_{j-1}^{\mathrm B}}$ need not be exact reverses, meaning that the joint edge laws in \cref{eq:cs-vgb-edge-laws} need not satisfy $J_j^{\mathrm F}=J_j^{\mathrm B}$. These laws are defined by
\begin{equation}
J_j^{\mathrm F}(dx,dy):=\pi_j(dx)K_j^{\mathrm F}(dy\mid x),
\qquad
J_j^{\mathrm B}(dx,dy):=\pi_{j-1}(dy)K_{j-1}^{\mathrm B}(dx\mid y).
\label{eq:cs-vgb-edge-laws}
\end{equation}
Assume ${J_j^{\mathrm F}\ll J_j^{\mathrm B}}$. The \emph{edge ratio} ${\rho_j:=dJ_j^{\mathrm F}/dJ_j^{\mathrm B}}$ corrects their mismatch:
\begin{equation}
K_{j-1}^{\mathrm B,\rho}(dx\mid y)
:=\rho_j(x,y)K_{j-1}^{\mathrm B}(dx\mid y).
\label{eq:cs-vgb-corrected-renoising}
\end{equation}
We remark that $K_{j-1}^{\mathrm B,\rho}$ has total mass one for $\pi_{j-1}$-almost every $y$. Specifically, for every measurable $A\subseteq\mathcal X$,
\begin{align}
\int_A\pi_{j-1}(dy)\int_{\mathcal X}K_{j-1}^{\mathrm B,\rho}(dx\mid y)
&=\int_{\mathcal X\times A}\rho_j\,dJ_j^{\mathrm B}
=J_j^{\mathrm F}(\mathcal X,A)=\pi_{j-1}(A).
\label{eq:cs-vgb-corrected-kernel-normalization}
\end{align}
\Cref{eq:cs-vgb-corrected-kernel-normalization} verifies this normalization. Moreover, the correction in \cref{eq:cs-vgb-corrected-renoising} makes the two joint edge laws equal, giving
\begin{equation}
\pi_j(dx)K_j^{\mathrm F}(dy\mid x)
=\pi_{j-1}(dy)K_{j-1}^{\mathrm B,\rho}(dx\mid y).
\label{eq:cs-vgb-corrected-edge-identity}
\end{equation}

\begin{remark}[Exact pairing across model classes]
\label{rem:exact-pairing-model-classes}
In this remark, we clarify when $\rho_j=1$ holds for diffusion models, flow matching, and one-step posterior maps.
For VP diffusion, the noising process and learned denoising process are
\begin{align*}
\text{Noising:}\quad dX_t&=-\tfrac12\beta(t)X_t\,dt+\sqrt{\beta(t)}\,dW_t,\\
\text{Denoising:}\quad dX_t&=\bigl[-\tfrac12\beta(t)X_t-\beta(t)s_\theta(X_t,t)\bigr]dt+\sqrt{\beta(t)}\,d\overline W_t.
\end{align*}
For linear flow matching, converting the ODE to the SDEs in \cref{eq:flow-matching-forward-sde,eq:flow-matching-reverse-sde} with $\beta(t)=2t/(1-t)$ gives
\begin{align*}
\text{Noising:}\quad dX_t&=-\frac{X_t}{1-t}\,dt+\sqrt{\frac{2t}{1-t}}\,dW_t,\\
\text{Denoising:}\quad dX_t&=\left[\frac{X_t}{1-t}+2v_\theta(t,X_t)\right]dt+\sqrt{\frac{2t}{1-t}}\,d\overline W_t.
\end{align*}
In both cases, noising runs in increasing time and denoising in decreasing time. Under ideal training, $s_\theta(x,t)=\nabla_x\log\pi_t^{\mathrm{ref}}(x)$ or $v_\theta(t,x)=v_t(x)$, respectively, so exact integration of the denoising and renoising SDEs yields the paired-transition identity below for $\pi_j=\pi_{t_{\kappa_j}}^{\mathrm{ref}}$:
\begin{equation*}
\pi_j(dx)K_j^{\mathrm F}(dy\mid x)
=\pi_{j-1}(dy)K_{j-1}^{\mathrm B}(dx\mid y),\qquad \rho_j=1.
\end{equation*}
For a one-step posterior map, such as MFM or PDM~\citep{potaptchik2026meta,holderrieth2026diamond}, an exact clean posterior draw combined with the Gaussian bridge in \cref{prop:cs-vgb-posterior-bridge-pairing} yields the conditional law $\operatorname{Law}(X_{j-1}\mid X_j=x)$ of the chosen noising process and hence gives $\rho_j=1$. The implemented kernels are given in \cref{app:subsec:model-checkpoint-transitions}.
\end{remark}

\paragraph{Edge-conductance measures and sampling transitions.}
Starting from the common edge law in \cref{eq:cs-vgb-corrected-edge-identity}, we weight adjacent state pairs by the process verifiers on interior edges and only the terminal reward on the clean edge. This defines an \emph{edge-conductance measure} $\widehat C_j(dx,dy)$, extending the edge weights of a weighted-network random walk to continuous states~\citep{doyle1984random,bezuglyi2018markov}. As illustrated in \cref{fig:cs-vgb-joint-conductance}, normalizing this measure over $y$ with $x$ fixed gives the denoising transition, while normalizing over $x$ with $y$ fixed gives the renoising transition, and the corresponding total weights define the directional masses.
{Thus, for $j=2,\ldots,J$,}
\begin{align}
\widehat C_j(dx,dy)
&:=\pi_j(dx)K_j^{\mathrm F}(dy\mid x){\sqrt{\widehat V_j(x)\widehat V_{j-1}(y)}}=\pi_{j-1}(dy)K_{j-1}^{\mathrm B,\rho}(dx\mid y){\sqrt{\widehat V_{j-1}(y)\widehat V_j(x)}}.
\label{eq:cs-vgb-interior-conductance}
\end{align}
We remark that this geometric-mean weighting is related to square-root balancing in locally balanced MCMC~\citep{zanella2020informed} and square-root approximations of transition rates in Markov state models~\citep{lie2013square}.

On the clean edge, we use only the terminal reward:
\begin{equation}
\widehat C_1(dx,dy)
:=\pi_1(dx)K_1^{\mathrm F}(dy\mid x)r(y)
=\pi_0(dy)K_0^{\mathrm B,\rho}(dx\mid y)r(y).
\label{eq:cs-vgb-clean-conductance}
\end{equation}
In particular, $\widehat C_1(\mathcal X,dy)=r(y)\pi_0(dy)=Z\pi_0^\star(dy)$. This choice fixes the target tilted distribution at the clean boundary without requiring ${\widehat V_j=V_j}$ at intermediate grid points.

To obtain a sampling transition, hold the current endpoint fixed and normalize the weights over the other endpoint. For {${2\leq j\leq J}$}, this gives the denoising transition
\begin{equation}
F_j(x):=\int_{\mathcal X}K_j^{\mathrm F}(dy\mid x){\sqrt{\widehat V_{j-1}(y)}},
\qquad
Q_j^{\mathrm F}(dy\mid x)
:=\frac{K_j^{\mathrm F}(dy\mid x){\sqrt{\widehat V_{j-1}(y)}}}{F_j(x)}.
\label{eq:cs-vgb-new-forward-kernel}
\end{equation}
{On the clean edge, the corresponding formulas use the reward itself:
\begin{equation*}
F_1(x):=\int_{\mathcal X}K_1^{\mathrm F}(dy\mid x)r(y),\qquad
Q_1^{\mathrm F}(dy\mid x):=\frac{K_1^{\mathrm F}(dy\mid x)r(y)}{F_1(x)}.
\end{equation*}}
On the same edge, the corrected renoising transition goes from $y$ at checkpoint index ${j-1}$ to $x$ at checkpoint index ${j}$. For ${2\leq j\leq J}$, define
\begin{equation}
B_{j-1}^\rho(y):=\int_{\mathcal X}K_{j-1}^{\mathrm B,\rho}(dx\mid y){\sqrt{\widehat V_j(x)}},
\qquad
Q_{j-1}^{\mathrm B,\rho}(dx\mid y)
:=\frac{K_{j-1}^{\mathrm B,\rho}(dx\mid y){\sqrt{\widehat V_j(x)}}}{B_{j-1}^\rho(y)}.
\label{eq:cs-vgb-new-backward-kernel}
\end{equation}
Here ${F_j(x)}$ and ${B_{j-1}^\rho(y)}$ are the corresponding total masses; we require them to be positive and finite wherever an update is performed. The clean-edge definition instead gives
\begin{equation}
Q_0^{\mathrm B,\rho}(dx\mid y):=K_0^{\mathrm B,\rho}(dx\mid y).
\label{eq:cs-vgb-clean-renoising}
\end{equation}
Thus the first renoising move from a clean sample has no process-verifier term, but follows the same construction of conditioning the edge measure on the current endpoint. \Cref{fig:cs-vgb-joint-conductance} illustrates how this construction gives the denoising and renoising transitions on an interior edge.
\begin{figure}[h]
\centering
\definecolor{conductanceRenoise}{HTML}{CC8585}
\begin{tikzpicture}[text=black,
    x=1cm,
    y=1cm,
    font=\sffamily\small,
    layer/.style={draw=black!60,line width=0.9pt},
    wall/.style={draw=black!32,dashed,line width=0.7pt},
    point/.style={circle,fill=black,inner sep=2.3pt},
    shared/.style={draw=black!62,line width=1.6pt},
    forward/.style={-{Stealth[length=2.5mm,width=1.7mm]},
        draw=ForestGreen!82!black,line width=1.2pt},
    backward/.style={-{Stealth[length=2.5mm,width=1.7mm]},
        draw=conductanceRenoise,line width=1.2pt},
    callout/.style={-{Stealth[length=2.5mm,width=1.7mm]},
        draw=black!72,line width=1.1pt},
    edge-label/.style={fill=white,inner sep=1.7pt,font=\sffamily\small},
    mass/.style={font=\sffamily\small,align=center},
    level-label/.style={font=\normalfont\small},
    conductance-label/.style={font=\sffamily\normalsize}
]
    \coordinate (upper) at (-0.30,1.25);
    \coordinate (lower) at (0.30,-1.25);

    \draw[layer] (0,1.25) ellipse (3.05 and 0.46);
    \draw[layer] (0,-1.25) ellipse (3.05 and 0.46);
    \draw[wall] (-3.05,1.25) -- (-3.05,-1.25);
    \draw[wall] (3.05,1.25) -- (3.05,-1.25);

    \node[anchor=east,level-label]
        at (-3.25,1.25) {level $j$};
    \node[anchor=east,level-label]
        at (-3.25,-1.25) {level $j-1$};

    \node[mass,anchor=south] at (0,1.84)
        {$\widehat C_j(dx,\mathcal X)=\pi_j(dx)\sqrt{\widehat V_j(x)}F_j(x)$};
    \node[mass,anchor=north] at (0,-1.84)
        {$\widehat C_j(\mathcal X,dy)=\pi_{j-1}(dy)\sqrt{\widehat V_{j-1}(y)}B_{j-1}^{\rho}(y)$};

    \draw[shared] (upper) -- (lower);
    \draw[forward] (upper) to[out=-145,in=145] (lower);
    \draw[backward] (lower) to[out=35,in=-35] (upper);

    \node[point,label={[font=\sffamily, label distance=12pt]1.7:$x$}] at (upper) {};
    \node[point,label={[font=\sffamily, label distance=12pt]180.1:$y$}] at (lower) {};

    \node[edge-label,text=black] at (-1.72,0)
        {$\boldsymbol{Q_j^{\mathrm F}(dy\mid x)}$};
    \node[edge-label,text=black] at (1.88,0)
        {$\boldsymbol{Q_{j-1}^{\mathrm B,\rho}(dx\mid y)}$};
    \node[edge-label,conductance-label,anchor=west] (conductance) at (1.78,-0.66)
        {$\widehat C_j(dx,dy)$};
    \draw[callout] (conductance.west) to[out=170,in=-12]
        ($(upper)!0.50!(lower)$);
    \draw[layer] (-3.05,-1.25)
        arc[start angle=180,end angle=0,x radius=3.05cm,y radius=0.46cm];
\end{tikzpicture}
\caption{
Reward-tilted edge-conductance measure between adjacent checkpoints {(${j\geq2}$)}.
The joint measure ${\widehat C_j(dx,dy)}$ assigns mass to state pairs
$ (x,y)$ at levels ${ j}$ and ${ j-1}$.
Its marginals ${\widehat C_j(dx,\mathcal X)}$ and ${\widehat C_j(\mathcal X,dy)}$
give the incident masses at $x$ and $y$, while conditioning the
same joint measure in the two directions yields the transition kernels
${ Q_j^{\mathrm F}(dy\mid x)}$ and
${ Q_{j-1}^{\mathrm B,\rho}(dx\mid y)}$.
}
\label{fig:cs-vgb-joint-conductance}
\end{figure}

\paragraph{Direction probabilities and directional persistence.}
At an interior checkpoint ${1\leq j<J}$, integrating the adjacent edge-conductance measures gives
\begin{equation*}
\widehat C_j(dx,\mathcal X)=\pi_j(dx)a_j^{\mathrm F}(x),\qquad
\widehat C_{j+1}(\mathcal X,dx)=\pi_j(dx)a_j^{\mathrm B}(x),
\end{equation*}
where
\begin{equation}
a_j^{\mathrm F}(x)
:=\begin{cases}
F_1(x),&j=1,\\
{\sqrt{\widehat V_j(x)}}F_j(x),&j\geq2,
\end{cases}
\qquad
a_j^{\mathrm B}(x):={\sqrt{\widehat V_j(x)}}B_j^\rho(x).
\label{eq:cs-vgb-directional-masses}
\end{equation}
The kernels $Q_j^{\mathrm F}$ and $Q_j^{\mathrm B,\rho}$ specify where to move once a direction is chosen. To choose between them, we use the relative masses of the two adjacent edges, assigning more probability to the direction with larger mass. This gives the probabilities below; their smaller value, ${\ell_j(x)}$, will be used to control direction changes:
\begin{equation}
p_j^{\mathrm F}(x):=\frac{a_j^{\mathrm F}(x)}{a_j^{\mathrm F}(x)+a_j^{\mathrm B}(x)},
\qquad
p_j^{\mathrm B}(x):=\frac{a_j^{\mathrm B}(x)}{a_j^{\mathrm F}(x)+a_j^{\mathrm B}(x)},
\qquad
\ell_j(x):=\min\{p_j^{\mathrm F}(x),p_j^{\mathrm B}(x)\}.
\label{eq:cs-vgb-direction-probabilities}
\end{equation}
These probabilities reflect the local preference for denoising or renoising, but do not account for the previous direction. Choosing a new direction from them at every update can therefore cause frequent reversals, especially when the two masses are comparable. Such reversals repeatedly undo changes in depth and can lead to diffusive progress across checkpoints. Since a practical run has a finite compute budget, the time to reach the clean boundary ${j=0}$ matters alongside convergence to stationarity. For example, using the same momentum structure adopted below, MDM-VGB reduces the expected first-leaf hitting time from $\Theta(n^2)$ to $O(n)$ when the forward/backward probability imbalance is $O(1/n)$ and $1-\chi=O(1/n)$, where $n$ is the number of positions to reveal in masked diffusion models~\citep[Theorem~4.5]{jeon2026mdmvgb}.
{ In our framework, \Cref{prop:c-average-momentum-hitting} bounds the first hitting time under an average directional-imbalance condition and an exact reset at the top checkpoint.}

{Motivated by this behavior, CS-VGB retains the current direction $d$ and uses \emph{crossing-flow cancellation}~\citep{hayes2010liftings}, as in VGB-Momentum~\citep{rohatgi2026vgb,jeon2026mdmvgb}. The update in \cref{tab:cs-vgb-momentum-update} removes the common probability ${\chi\ell_j(x)}$ from both opposing flips and adds it to the respective stays, leaving the move probabilities unchanged. Increasing the \emph{directional persistence parameter} $\chi\in[0,1]$ strengthens this cancellation and suppresses more direction changes, while preserving the stationary law established in \cref{thm:c-exact-stationarity}.}
\begin{table}[H]
\centering
\caption{Momentum update at an interior checkpoint.}
\label{tab:cs-vgb-momentum-update}
{
\small
\setlength{\tabcolsep}{4.5pt}
\renewcommand{\arraystretch}{1.18}
\begin{tabular}{@{}ccll@{}}
\toprule
Current direction & Action & Probability & Next state \\
\midrule
$ \mathrm F$
& Move
& ${ p_j^{\mathrm F}(x)}$
& ${ (j-1,Y,\mathrm F)}$, ${ Y\sim Q_j^{\mathrm F}(\cdot\mid x)}$ \\
$ \mathrm F$
& Flip
& ${ p_j^{\mathrm B}(x)-\chi\ell_j(x)}$
& ${ (j,x,\mathrm B)}$ \\
$ \mathrm F$
& Stay
& ${ \chi\ell_j(x)}$
& ${ (j,x,\mathrm F)}$ \\
\midrule
$ \mathrm B$
& Move
& ${ p_j^{\mathrm B}(x)}$
& ${ (j+1,Y,\mathrm B)}$, ${ Y\sim Q_j^{\mathrm B,\rho}(\cdot\mid x)}$ \\
$ \mathrm B$
& Flip
& ${ p_j^{\mathrm F}(x)-\chi\ell_j(x)}$
& ${ (j,x,\mathrm F)}$ \\
$ \mathrm B$
& Stay
& ${ \chi\ell_j(x)}$
& ${ (j,x,\mathrm B)}$ \\
\bottomrule
\end{tabular}
}
\end{table}
A move changes the checkpoint index and sample, a flip changes only the direction, and a stay changes neither. The default $\chi=1$ gives
\begin{equation}
\bbP(\mathrm F\to\mathrm B)
=\frac{(a_j^{\mathrm B}-a_j^{\mathrm F})_+}{a_j^{\mathrm F}+a_j^{\mathrm B}},
\qquad
\bbP(\mathrm B\to\mathrm F)
=\frac{(a_j^{\mathrm F}-a_j^{\mathrm B})_+}{a_j^{\mathrm F}+a_j^{\mathrm B}}.
\label{eq:cs-vgb-maximal-persistence}
\end{equation}
At the boundaries, exit with probability one using $Q_0^{\mathrm B,\rho}$ at ${j=0}$ and ${Q_J^{\mathrm F}}$ at ${j=J}$. On arrival at ${j=0}$, set $d=\mathrm B$; on arrival at ${j=J}$, set $d=\mathrm F$. These rules define the \emph{exact one-step kernel} $P$ when the masses and conditional draws are evaluated exactly. Under the conditions of \cref{thm:c-exact-stationarity}, $P$ preserves a stationary law $\Pi$ whose distribution at the clean boundary is $\pi_0^\star$.

\begin{figure}[!tbp]
\centering
\resizebox{\linewidth}{!}{\definecolor{figTwoGreenFill}{HTML}{EAF6EC}
\definecolor{figTwoGreenBorder}{HTML}{2E8B57}
\definecolor{figTwoRedFill}{HTML}{F8EEEE}
\definecolor{figTwoRedBorder}{HTML}{CC8585}
\definecolor{figTwoGuide}{HTML}{B8B8B8}
\definecolor{figTwoPanel}{HTML}{777777}
\definecolor{figTwoTextGray}{HTML}{666666}
\definecolor{figTwoBrick}{HTML}{8B3F45}

\begin{tikzpicture}[text=black,
    x=1cm,
    y=1cm,
    font=\normalfont\small,
    stateCircle/.style={circle,minimum size=5.4mm,inner sep=0pt},
    candidateCircle/.style={circle,minimum size=4.5mm,inner sep=0pt},
    chosenCircle/.style={circle,double,minimum size=5.0mm,inner sep=0pt},
    arrowLine/.style={-{Stealth[length=2.8mm,width=1.9mm]},line width=1.35pt}
]

\draw[draw=figTwoPanel,line width=0.75pt,rounded corners=2pt]
    (-6.49,-0.42) rectangle (-0.45,4.62);
\draw[draw=figTwoPanel,line width=0.75pt,rounded corners=2pt]
    (-0.1,-0.42) rectangle (6.45,4.62);

\node[font=\normalfont\bfseries] (onePassTitle) at (-3.40,4.94)
    {(a) One-pass candidate selection};
\node[font=\normalfont\bfseries] (persistentTitle) at (3.16,4.94)
    {(b) CS-VGB};

\node[anchor=east] (onePassNoiseTime) at (-5.43,4.02) {$t=1$};
\node[anchor=east] (onePassCleanTime) at (-5.43,0.26) {$t=0$};
\draw[-{Stealth[length=2.4mm,width=1.6mm]},draw=black,line width=0.9pt]
    (-5.92,3.76) -- (-5.92,0.53);
\node[rotate=90,font=\normalfont\footnotesize] (onePassDenoiseLabel) at (-6.15,2.16)
    {time};

\node[anchor=east] (persistentNoiseTime) at (0.97,4.02) {$t=1$};
\node[anchor=east] (persistentCleanTime) at (0.97,0.26) {$t=0$};
\draw[-{Stealth[length=2.4mm,width=1.6mm]}, black, line width=0.9pt]
    (0.5,3.8) -- (0.5,0.54);
\node[rotate=90,font=\normalfont\footnotesize] (persistentDenoiseLabel) at (0.3,2.17)
    {time};

\draw[draw=figTwoGuide,line width=0.65pt,dashed] (-5.55,4.02) -- (-0.72,4.02);
\draw[draw=figTwoGuide,line width=0.65pt,dashed] (-5.55,2.84) -- (-0.72,2.84);
\draw[draw=figTwoGuide,line width=0.65pt,dashed] (-5.55,1.66) -- (-0.72,1.66);
\draw[draw=figTwoGuide,line width=0.65pt,dashed] (-5.55,0.48) -- (-0.72,0.48);

\node[stateCircle,draw=black,line width=1.0pt,fill=white] (onePassNoise) at (-3.72,4.02) {};
\node[candidateCircle,draw=figTwoGreenBorder,line width=0.9pt,fill=white] (onePassCandidateTwoLeft) at (-4.18,2.84) {};
\node[chosenCircle,draw=figTwoGreenBorder,line width=0.95pt,fill=figTwoGreenFill] (onePassChosenTwo) at (-3.52,2.84) {};
\node[candidateCircle,draw=figTwoGreenBorder,line width=0.9pt,fill=white] (onePassCandidateTwoRight) at (-2.96,2.87) {};

\node[candidateCircle,draw=figTwoGreenBorder,line width=0.9pt,fill=white] (onePassCandidateOneLeft) at (-3.95,1.69) {};
\node[chosenCircle,draw=figTwoGreenBorder,line width=0.95pt,fill=figTwoGreenFill] (onePassChosenOne) at (-3.29,1.66) {};
\node[candidateCircle,draw=figTwoGreenBorder,line width=0.9pt,fill=white] (onePassCandidateOneRight) at (-2.63,1.70) {};

\node[candidateCircle,draw=figTwoGreenBorder,line width=0.9pt,fill=white] (onePassCandidateCleanLeft) at (-3.68,0.48) {};
\node[candidateCircle,draw=figTwoGreenBorder,line width=0.9pt,fill=white] (onePassCandidateCleanMiddle) at (-3.05,0.48) {};
\node[chosenCircle,draw=figTwoGreenBorder,line width=0.95pt,fill=figTwoGreenFill] (onePassChosenClean) at (-2.4,0.48) {};

\draw[draw=figTwoGuide,line width=0.65pt,dashed] (onePassNoise) -- (onePassCandidateTwoLeft);
\draw[draw=figTwoGuide,line width=0.65pt,dashed] (onePassNoise) -- (onePassChosenTwo);
\draw[draw=figTwoGuide,line width=0.65pt,dashed] (onePassNoise) -- (onePassCandidateTwoRight);
\draw[draw=figTwoGuide,line width=0.65pt,dashed] (onePassChosenTwo) -- (onePassCandidateOneLeft);
\draw[draw=figTwoGuide,line width=0.65pt,dashed] (onePassChosenTwo) -- (onePassChosenOne);
\draw[draw=figTwoGuide,line width=0.65pt,dashed] (onePassChosenTwo) -- (onePassCandidateOneRight);
\draw[draw=figTwoGuide,line width=0.65pt,dashed] (onePassChosenOne) -- (onePassCandidateCleanLeft);
\draw[draw=figTwoGuide,line width=0.65pt,dashed] (onePassChosenOne) -- (onePassCandidateCleanMiddle);
\draw[draw=figTwoGuide,line width=0.65pt,dashed] (onePassChosenOne) -- (onePassChosenClean);

\draw[arrowLine,draw=figTwoGreenBorder] (onePassNoise) -- (onePassChosenTwo);
\draw[arrowLine,draw=figTwoGreenBorder] (onePassChosenTwo) -- (onePassChosenOne);
\draw[arrowLine,draw=figTwoGreenBorder] (onePassChosenOne) -- (onePassChosenClean);
\node[text=black,font=\scriptsize,anchor=west] (onePassPoolLabel) at (-5.25,3.36)
    {$\{Y_i,w_i^{\mathrm F}\}$};

\draw[draw=figTwoGuide,line width=0.65pt,dashed] (1.02,4.02) -- (6.18,4.02);
\draw[draw=figTwoGuide,line width=0.65pt,dashed] (1.02,2.84) -- (6.18,2.84);
\draw[draw=figTwoGuide,line width=0.65pt,dashed] (1.02,1.66) -- (6.18,1.66);
\draw[draw=figTwoGuide,line width=0.65pt,dashed] (1.02,0.48) -- (6.18,0.48);
\node[anchor=east,font=\footnotesize,text=black] (persistentLevelJ) at (1.3,4.2) {${\color{black}J}$};
\node[anchor=east,font=\footnotesize,text=black] (persistentLevelMid) at (1.3,3.08) {${\color{black}j}$};
\node[anchor=east,font=\footnotesize,text=black] (persistentLevelLower) at (1.72,1.86) {${\color{black}j-1}$};
\node[anchor=east,font=\footnotesize,text=black] (persistentLevelZero) at (1.3,0.68) {$0$};

\node[stateCircle,draw=black,line width=1.0pt,fill=white] (persistentNoise) at (1.78,4.02) {};
\node[stateCircle,draw=black,line width=1.0pt,fill=white] (persistentCheckpointJ) at (2.13,2.84) {};
\node[chosenCircle,draw=figTwoGreenBorder,line width=0.95pt,fill=figTwoGreenFill] (persistentForwardLower) at (3.33,1.67) {};
\node[chosenCircle,draw=figTwoRedBorder,line width=0.95pt,fill=figTwoRedFill] (persistentBackwardUpper) at (4.45,2.84) {};
\node[chosenCircle,draw=figTwoGreenBorder,line width=0.95pt,fill=figTwoGreenFill] (persistentForwardFinal) at (5.02,1.66) {};
\node[stateCircle,draw=black,line width=1.15pt,fill=black] (persistentClean) at (5.30,0.48) {};
\node[chosenCircle,draw=figTwoRedBorder,line width=0.95pt,fill=figTwoRedFill] (persistentRepeatState) at (5.95,1.66) {};

\draw[arrowLine,draw=figTwoGreenBorder] (persistentNoise) -- (persistentCheckpointJ);
\draw[arrowLine,draw=figTwoGreenBorder] (persistentCheckpointJ) -- (persistentForwardLower);
\draw[arrowLine,draw=figTwoRedBorder] (persistentForwardLower) -- (persistentBackwardUpper);
\draw[arrowLine,draw=figTwoGreenBorder] (persistentBackwardUpper) -- (persistentForwardFinal);
\draw[arrowLine,draw=figTwoGreenBorder] (persistentForwardFinal) -- (persistentClean);
\node[fill=white,inner sep=1pt,font=\scriptsize] (forwardLabelOne) at (2.73,2.52) {$\mathrm F$};
\node[fill=white,inner sep=1pt,font=\scriptsize] (backwardLabel) at (4.01,2.05) {$\mathrm B$};
\node[fill=white,inner sep=1pt,font=\scriptsize] (forwardLabelTwo) at (4.88,2.4) {$\mathrm F$};

\node[candidateCircle,draw=figTwoGreenBorder,line width=0.9pt,fill=white] (persistentForwardCandidateLeft) at (2.20,1.66) {};
\node[candidateCircle,draw=figTwoGreenBorder,line width=0.9pt,fill=white] (persistentForwardCandidateMiddle) at (2.78,1.66) {};
\draw[draw=figTwoGuide,line width=0.65pt,dashed] (persistentCheckpointJ) -- (persistentForwardCandidateLeft);
\draw[draw=figTwoGuide,line width=0.65pt,dashed] (persistentCheckpointJ) -- (persistentForwardCandidateMiddle);
\node[text=black,font=\scriptsize,anchor=east] (persistentForwardPoolLabel) at (3.33,1.2)
    {$\{Y_i,w_i^{\mathrm F}\}$};

\node[candidateCircle,draw=figTwoRedBorder,line width=0.9pt,fill=white] (persistentBackwardCandidateLeft) at (3.35,2.87) {};
\node[candidateCircle,draw=figTwoRedBorder,line width=0.9pt,fill=white] (persistentBackwardCandidateMiddle) at (3.89,2.87) {};
\draw[draw=figTwoGuide,line width=0.65pt,dashed] (persistentForwardLower) -- (persistentBackwardCandidateLeft);
\draw[draw=figTwoGuide,line width=0.65pt,dashed] (persistentForwardLower) -- (persistentBackwardCandidateMiddle);
\node[text=black,font=\scriptsize] (persistentBackwardPoolLabel) at (3.96,3.34)
    {$\{Z_i,w_i^{\mathrm B,\rho}\}$};

\draw[arrowLine,draw=figTwoRedBorder]
    (persistentClean.east) .. controls (5.83,0.60) and (6.13,1.08) .. (6.2,1.4);
\draw[arrowLine, draw=figTwoGreenBorder, arrows=-Stealth]
    (5.8,1.44) .. controls (5.67,1.31) and (5.49,0.93) .. (5.4,0.75);

\draw[arrowLine,draw=figTwoGreenBorder] (1.22,-0.13) -- (1.86,-0.13);
\node[anchor=west,font=\scriptsize] (forwardLegend) at (1.94,-0.13) {$\mathrm F$: denoise};
\draw[arrowLine,draw=figTwoRedBorder] (3.70,-0.13) -- (4.34,-0.13);
\node[anchor=west,font=\scriptsize] (backwardLegend) at (4.42,-0.13) {$\mathrm B$: renoise};

\end{tikzpicture}}
\caption{Algorithm comparison of CS-VGB. (a) One-pass candidate selection, following SVDD~\citep{li2025svdd}, selects one candidate at each denoising step and stops at the first clean output. (b) CS-VGB may instead renoise and repeat the search, while the exact terminal reward anchors every visit to $t=0$.}
\label{fig:cs-vgb-persistent-search}
\vspace{-10pt}
\end{figure}

}

\mdfsetup{fontcolor=black}
\subsection{Finite-Candidate Implementation and Practical Refinements}
\label{app:subsec:cs-vgb-calibrated-lazy}

The preceding subsection assumes exact directional masses $a_j^{\mathrm F}(x),a_j^{\mathrm B}(x)$ and exact draws from the conditional transitions $Q_j^{\mathrm F}(\cdot\mid x),Q_j^{\mathrm B,\rho}(\cdot\mid x)$. In practice, we approximate these quantities using $L_F$ denoising and $L_B$ renoising candidates at each checkpoint. To avoid the stationary bias that this approximation can introduce, we retain the pools and use \emph{reverse insertion} after each move. Building on the retained pools, \emph{lazy pool evaluation} evaluates candidates in the direction opposite to the current one only as needed, reusing previously computed values to reduce the number of function evaluations (NFE) while remaining compatible with momentum. Beyond this per-update cost, practical performance also depends on the clean samples collected within a finite update budget. Since excessive backtracking can limit clean visits, we introduce \emph{depth-dependent weights} on edge-conductance measures to favor denoising. Together, these refinements preserve the stationary clean target under \cref{thm:c-joint-pool-invariance}.

\paragraph{Candidate weights.}
The state retains the \emph{candidate pools} $\mathbf u=(Y_{1:L_F},Z_{1:L_B})$ alongside $(j,x,d)$, with fixed $L_F,L_B\geq2$. Fresh candidates are drawn independently from $G_j^{\mathrm F}(\cdot\mid x)$ and $G_j^{\mathrm B}(\cdot\mid x)$, with weights
\begin{align}
w_{j,i}^{\mathrm F}
&:=\frac{dK_j^{\mathrm F}(\cdot\mid x)}{dG_j^{\mathrm F}(\cdot\mid x)}(Y_i)
\begin{cases}r(Y_i),&j=1,\\ \sqrt{\widehat V_{j-1}(Y_i)},&j\geq2,\end{cases}\notag\\
w_{j,i}^{\mathrm B,\rho}
&:=\rho_{j+1}(Z_i,x)
\frac{dK_j^{\mathrm B}(\cdot\mid x)}{dG_j^{\mathrm B}(\cdot\mid x)}(Z_i)
\begin{cases}1,&j=0,\\ \sqrt{\widehat V_{j+1}(Z_i)},&j\geq1.\end{cases}
\label{eq:cs-vgb-new-candidate-weights}
\end{align}
At $j=0$, only the B pool is retained; at $j=J$, only the F pool is retained.

\paragraph{Depth weighting and direction probabilities.}
For \emph{depth weighting}, fix $\alpha>0$ and adjust the relative frequency of denoising and renoising through
\begin{equation}
\widehat C_j^\alpha:=\alpha^{j-1}\widehat C_j,\qquad j=1,\ldots,J.
\label{eq:cs-vgb-calibrated-conductance}
\end{equation}
This leaves the conditional transitions unchanged, but reweights the directional masses. Choosing $\alpha<1$ favors F and $\alpha>1$ favors B. Including the current-state factors gives the \emph{candidate contributions}
\begin{equation}
e_{j,i}^{\mathrm F}
:=\alpha^{j-1}w_{j,i}^{\mathrm F}
\begin{cases}1,&j=1,\\ \sqrt{\widehat V_j(x)},&j\geq2,\end{cases}
\qquad
e_{j,i}^{\mathrm B}
:=\alpha^jw_{j,i}^{\mathrm B,\rho}
\begin{cases}r(x),&j=0,\\ \sqrt{\widehat V_j(x)},&j\geq1.\end{cases}
\label{eq:cs-vgb-depth-weighted-candidate-contributions}
\end{equation}
Their pool averages determine the direction probabilities:
\begin{equation}
\widehat a_j^{d}:=\frac1{L_d}\sum_{i=1}^{L_d}e_{j,i}^{d},
\qquad
\widehat p_j^{d}:=
\frac{\widehat a_j^{d}}{\widehat a_j^{\mathrm F}+\widehat a_j^{\mathrm B}},
\qquad
\widehat\ell_j:=\min\{\widehat p_j^{\mathrm F},\widehat p_j^{\mathrm B}\},
\label{eq:cs-vgb-practical-direction-probabilities}
\end{equation}
Unavailable masses are zero. Evaluating both pools and substituting these probabilities into \cref{tab:cs-vgb-momentum-update} determines the move, flip, and stay probabilities. 

\paragraph{Reverse insertion.}
With \emph{reverse insertion}, a move selects a candidate in direction $d$ with probability $e_{j,i}^d/\sum_{h=1}^{L_d} e_{j,h}^d$, then inserts the previous $x$ uniformly into the opposite pool at the destination:
\begin{equation}
\begin{aligned}
\mathrm F:\quad &(j,x)\longrightarrow(j-1,Y_I),
&&K\sim\operatorname{Unif}([L_B]),\quad Z'_K=x,\\
\mathrm B:\quad &(j,x)\longrightarrow(j+1,Z_I),
&&K\sim\operatorname{Unif}([L_F]),\quad Y'_K=x.
\end{aligned}
\label{eq:c-pool-reverse-insertion}
\end{equation}
All other destination entries are drawn independently from their proposals (\cref{fig:cs-vgb-checkpoint-update}). When combined with lazy pool evaluation, the inserted candidate is also designated as the opposite \emph{mark}.

\begin{figure}[!htb]
\centering

\definecolor{poolBlue}{HTML}{365F7D}
\definecolor{poolPaleBlue}{HTML}{EAF2F7}
\definecolor{poolPaleGray}{HTML}{F3F3F3}
\begin{tikzpicture}[
 x=1cm,y=1cm,font=\normalfont\small,text=black,
 slot/.style={draw=black,line width=.8pt,minimum width=1.05cm,minimum height=.62cm,inner sep=3pt},
 flowArrow/.style={-{Stealth[length=2.1mm,width=1.5mm]},line width=1pt}
]
\node[circle,draw=poolBlue,fill=poolPaleBlue,line width=1.2pt,minimum size=.72cm]
 (previousState) at (0,0) {$x$};
\node[slot,fill=poolPaleGray] (sourceCandidateOne) at (2.6,1) {$Y_1$};
\node (sourceEllipsis) at (2.6,.48) {$\vdots$};
\node[slot,fill=white,line width=1.25pt] (selectedCandidate) at (2.6,-.2) {$Y_I$};
\node[slot,fill=poolPaleGray] (sourceCandidateLast) at (2.6,-1.2) {$Y_{L_F}$};
\draw[flowArrow,draw=black!45] (previousState.east) -- (sourceCandidateOne.west);
\draw[flowArrow,draw=black] (previousState.east) -- (selectedCandidate.west);
\draw[flowArrow,draw=black!45] (previousState.east) -- (sourceCandidateLast.west);
\node[circle,draw=poolBlue,fill=poolPaleBlue,line width=1.2pt,minimum size=.72cm]
 (destinationState) at (10,0) {$x'$};
\node[slot,fill=poolPaleGray] (destinationCandidateOne) at (7.25,1) {$Z'_1$};
\node (destinationEllipsis) at (7.25,.48) {$\vdots$};
\node[slot,fill=poolPaleBlue,draw=poolBlue,line width=1.25pt]
 (insertedSource) at (7.25,-.2) {$Z'_K=x$};
\node[slot,fill=poolPaleGray] (destinationCandidateLast) at (7.25,-1.2) {$Z'_{L_B}$};
\draw[flowArrow,draw=black!45] (destinationState.west) -- (destinationCandidateOne.east);
\draw[flowArrow,draw=black!45] (destinationState.west) -- (insertedSource.east);
\draw[flowArrow,draw=black!45] (destinationState.west) -- (destinationCandidateLast.east);
\node[font=\normalfont\small] (forwardProposalLabel) at (1,0.9) {$G_j^{\mathrm F}$};
\node[font=\normalfont\small] (backwardProposalLabel) at (8.9,-0.9) {$G_{j-1}^{\mathrm B}$};
\draw[flowArrow,draw=black] (selectedCandidate.east)
 -- (4,-.2) -- (4,2.85) -- (10,2.85) -- (destinationState.north);
\node[font=\normalfont\small] (moveHeading) at (6.8,3.2)
 {\textbf{Move} $\propto e_{j,I}^{\mathrm F}$};
\draw[flowArrow,draw=poolBlue] (previousState.south)
 -- (0,-2.05) -- (5.65,-2.05) -- (5.65,-.2) -- (insertedSource.west);
\node[fill=white,inner sep=3pt,font=\normalfont\footnotesize,text=poolBlue]
 (insertionLabel) at (2.9,-2.05) {Insert: $K\sim\operatorname{Unif}([L_B])$};
\node[font=\normalfont\small\bfseries] (selectHeading) at (2.6,1.65) {F pool at $x$};
\node[font=\normalfont\small\bfseries] (backwardHeading) at (7.25,1.65) {B pool at $Y_I$};
\node[font=\normalfont\footnotesize] (sourceStateLabel) at (0,0.7) {at $j$};
\node[font=\normalfont\footnotesize] (newStateLabel) at (10,-.7) {at $j-1$};
\end{tikzpicture}
\caption{Reverse insertion for an F move. Select $Y_I$ and insert the previous $x$ into a uniform B slot. All remaining entries are freshly drawn.}
\label{fig:cs-vgb-checkpoint-update}
\end{figure}

\paragraph{Lazy pool evaluation.}
With \emph{lazy pool evaluation}, an F update uses the full F pool and its B \emph{mark} $Z_K$; a B update uses the full B pool and its F \emph{mark} $Y_I$.
In direction F, draw $I\sim\operatorname{Unif}([L_F])$ with the B mark $K$ fixed; in direction B, draw $K\sim\operatorname{Unif}([L_B])$ with the F mark $I$ fixed. Conditional on these indices, the probabilities of $(\mathsf{move},\mathsf{stay},\mathsf{flip})$ are
\begin{equation}
\begin{aligned}
\mathrm F:\quad&
\frac{(\widehat a_j^{\mathrm F},\,\chi\min\{e_{j,I}^{\mathrm F},e_{j,K}^{\mathrm B}\},\,e_{j,K}^{\mathrm B}-\chi\min\{e_{j,I}^{\mathrm F},e_{j,K}^{\mathrm B}\})}
{\widehat a_j^{\mathrm F}+e_{j,K}^{\mathrm B}},\\
\mathrm B:\quad&
\frac{(\widehat a_j^{\mathrm B},\,\chi\min\{e_{j,K}^{\mathrm B},e_{j,I}^{\mathrm F}\},\,e_{j,I}^{\mathrm F}-\chi\min\{e_{j,K}^{\mathrm B},e_{j,I}^{\mathrm F}\})}
{\widehat a_j^{\mathrm B}+e_{j,I}^{\mathrm F}}.
\end{aligned}
\label{eq:cs-vgb-marked-momentum-gate}
\end{equation}
On an F-to-B flip, mark $Y_I$ in the retained F pool and complete the B pool by drawing only missing entries from $G_j^{\mathrm B}(\cdot\mid x)$, keeping $Z_K$ and all previously generated candidates and values. The B-to-F case is symmetric. On a move, discard the uniform index, independently select a candidate by weight, and apply reverse insertion (\cref{eq:c-pool-reverse-insertion}), marking the inserted state at an interior destination.
A stay changes nothing. \Cref{fig:cs-vgb-lazy-cases} illustrates the move and flip cases.

\begin{figure}[!htb]
\centering

\definecolor{lazyBlue}{HTML}{365F7D}
\definecolor{lazyPale}{HTML}{EAF2F7}
\definecolor{lazyGray}{HTML}{F3F3F3}
\begin{tikzpicture}[x=1cm,y=1cm,font=\normalfont\small,text=black,
 pool/.style={draw=black,line width=.7pt,minimum width=2.7cm,minimum height=.62cm,inner sep=4pt},
 state/.style={circle,draw=lazyBlue,fill=lazyPale,line width=1pt,minimum size=.65cm,inner sep=2pt},
 arrow/.style={-{Stealth[length=2mm]},line width=.9pt}]
\node[anchor=west,font=\normalfont\small\bfseries] (moveTitle) at (-0.45,3.25) {(a)  F $\to$ F  move};
\node[state] (moveOld) at (0,1.6) {$x$};
\node[anchor=east,font=\normalfont\footnotesize] (moveOldLevel) at (0.33,2.22) {$j,\mathrm F$};
\node[pool,fill=lazyGray] (moveOldF) at (2.15,2.15) {$Y_1,\ldots,Y_I,\ldots,Y_{L_F}$};
\node[pool,draw=lazyBlue,fill=lazyPale] (moveOldB) at (2.15,1.1) {$Z_K$ \quad\emph{mark}};
\node[font=\normalfont\footnotesize] (moveOldFLabel) at (2.15,2.7) {F pool: full};
\node[font=\normalfont\footnotesize] (moveOldBLabel) at (2.15,0.55) {B pool: others deferred};
\draw[arrow] (moveOld.east) -- (moveOldF.west);
\draw[arrow] (moveOldF.east) -- (5.5,2.15) -- (5.5,1.6) -- (6.15,1.6);
\node[fill=white,font=\normalfont\footnotesize,text=red] (moveSelectLabel) at (5.1,2.5) {redraw $Y_I\propto e_{j,I}^{\mathrm F}$};
\node[state] (moveNew) at (6.5,1.6) {$x'$};
\node[font=\normalfont\footnotesize] (moveNewLevel) at (6.5,.95) {$j-1,\mathrm F$};
\node[pool,fill=lazyGray] (moveNewF) at (9,2.15) {$Y'_1,\ldots,Y'_{L_F}$};
\node[pool,draw=lazyBlue,fill=lazyPale] (moveNewB) at (9,1.1) {$Z'_{K'}=x$ \quad\emph{mark}};
\node[font=\normalfont\footnotesize] (moveNewFLabel) at (9,2.7) {F pool: fresh};
\node[font=\normalfont\footnotesize] (moveNewBLabel) at (9.05,0.55) {B pool: others deferred};
\draw[arrow] (moveNew.east) -- (moveNewF.west);
\draw[arrow,draw=lazyBlue] (moveOld.south) -- (0,.1) -- (7.35,.1) -- (7.35,1.1) -- (moveNewB.west);
\node[fill=white,text=lazyBlue,font=\normalfont\footnotesize,inner sep=3pt] (insertLabel) at (4,0.1)
 {reverse insert: $K'\sim\operatorname{Unif}([L_B])$};
\draw[black!20] (-0.45,-0.35) -- (10.4,-0.35);
\node[anchor=west,font=\normalfont\small\bfseries] (flipTitle) at (-0.45,-0.8) {(b) F $\to$ B flip};
\node[state] (flipOld) at (0,-2.4) {$x$};
\node[font=\normalfont\footnotesize] (flipOldLevel) at (0,-3.05) {$j,\mathrm F$};
\node[pool,fill=lazyGray] (flipOldF) at (2.15,-1.85) {$Y_1,\ldots,Y_{L_F}$};
\node[pool,draw=lazyBlue,fill=lazyPale] (flipOldB) at (2.15,-2.9) {$Z_K$ \quad\emph{mark}};
\node[font=\normalfont\footnotesize] (flipOldFLabel) at (2.15,-1.3) {F pool: full};
\node[font=\normalfont\footnotesize] (flipOldBLabel) at (2.2,-3.45) {B pool: others deferred};
\draw[arrow] (flipOld.east) -- (flipOldF.west);
\node[state] (flipNew) at (6.5,-2.4) {$x$};
\node[font=\normalfont\footnotesize] (flipNewLevel) at (6.5,-3.02) {$j,\mathrm B$};
\node[pool,draw=lazyBlue,fill=lazyPale] (flipNewF) at (9,-1.85) {$Y_1,\ldots,\underbrace{Y_I}_{\text{\emph{mark}}},\ldots,Y_{L_F}$};
\node[pool,fill=lazyGray] (flipNewB) at (9,-2.9) {$Z_1,\ldots,Z_K,\ldots,Z_{L_B}$};
\node[font=\normalfont\footnotesize] (flipNewFLabel) at (9,-1.15) {F pool: retained};
\node[font=\normalfont\footnotesize] (flipNewBLabel) at (9,-3.5) {B pool: full; keep $Z_K$};
\draw[arrow] (flipNew.east) -- (flipNewB.west);
\draw[arrow,draw=lazyBlue] (flipOldF.east) -- (flipNewF.west);
\node[fill=white,font=\normalfont\footnotesize,text=red] (newMarkLabel) at (5.4,-1.55) {mark preselected $I$};
\draw[arrow] (flipOldB.east) -- (5.3,-2.9) -- (5.3,-3.85) -- (11,-3.87) -- (11,-2.9) -- (flipNewB.east);
\node[fill=white,font=\normalfont\footnotesize,inner sep=3pt] (fillLabel) at (7.27,-4.23) {draw missing B entries};
\end{tikzpicture}
\caption{Lazy pool evaluation from direction F, shown when only the B mark has been generated. First draw $I\sim\operatorname{Unif}(\{1,\ldots,L_F\})$, then choose move, stay, or flip using the probabilities in \cref{eq:cs-vgb-marked-momentum-gate}, conditioned on $I$ and the index $K$ of the existing B mark. (a) An F move discards that index and independently selects $Y_I$ by weight, creates the destination pools, and reverse-inserts $x$ as the new B mark. Panel (a) assumes an interior destination ($j\ge2$); for $j=1$, arrival at checkpoint $0$ resets the direction to B and retains only the B pool, with no separate mark. (b) An F-to-B flip leaves $(j,x)$ unchanged, fills missing B entries, and marks the preselected $Y_I$ in the retained F pool. Already generated candidates and values are reused. Stay changes nothing and reuses the same pools and mark.}
\label{fig:cs-vgb-lazy-cases}
\end{figure}

\begin{algorithm2e}[!htbp]
\DontPrintSemicolon
\SetAlgoLined
\small
\KwIn{starting law $\pi_J$; checkpoint count $J$; proposal kernels $G_j^d$; pool sizes $L_F,L_B\ge2$; update budget $U$; depth weight $\alpha>0$; directional persistence $\chi\in[0,1]$; lazy evaluation on/off}
$x\sim\pi_J$; $(j,d)\leftarrow(J,\mathrm F)$; $\mathcal S\leftarrow\varnothing$\;
\For{$h=0,\ldots,U-1$}{
    \lIf{$j=0$}{Append $x$ to $\mathcal S$}
    Complete the current-direction pool if lazy, or all available pools otherwise; evaluate only new candidates\;
    \uIf{$j\in\{0,J\}$}{$\mathsf a\leftarrow\mathsf{move}$\;}
    \uElseIf{lazy}{
        $I\sim\operatorname{Unif}([L_F])$ if $d=\mathrm F$; $K\sim\operatorname{Unif}([L_B])$ if $d=\mathrm B$\;
        Sample $\mathsf a$ via \cref{eq:cs-vgb-marked-momentum-gate} given $(I,K)$; opposite mark fixed\;
    }
    \Else{
        Sample $\mathsf a$ via \cref{tab:cs-vgb-momentum-update} with $(\widehat p_j^d,\widehat\ell_j)$ from \cref{eq:cs-vgb-practical-direction-probabilities}\;
    }
    Apply $\mathsf a$:\;
    
    \expandafter\patchcmd\csname algocf@Vsline\endcsname
      {\strut\par\nointerlineskip}{\par\nointerlineskip}{}{}
    \csname algocf@group\endcsname{\begingroup
\normalfont\small\color{darkblue}
\setlength{\dashlinedash}{2pt}
\setlength{\dashlinegap}{2pt}
\setlength{\arrayrulewidth}{.5pt}
\setlength{\tabcolsep}{6pt}
\setlength{\fboxsep}{4pt}
\colorbox{SkyBlue!12}{%
\begin{tabular}[t]{@{}p{\dimexpr.5\linewidth-49pt\relax}:p{\dimexpr.5\linewidth-49pt\relax}@{\hspace{2pt}}}
\textbf{Stay}
 & \textbf{Flip}\\[3pt]
No change.
 & \textbf{if lazy}: mark $Y_I$ (F) or $Z_K$ (B)\\
 & using the preselected index;\\[3pt]
 & \textcolor{darkblue}{fill missing opposite-pool entries.}\\[2pt]
 & $d:\mathrm F\leftrightarrow\mathrm B$; retain pools.\\[5pt]
\hdashline
\rule{0pt}{15pt}\textbf{F move}
 & \rule{0pt}{15pt}\textbf{B move}\\[5pt]
Independently reselect $I \in[L_F]$ using
 & Independently reselect $K \in [L_B]$ using\\[3pt]
$\bbP(I=i)=e_{j,i}^{\mathrm F}/\sum_h e_{j,h}^{\mathrm F}$
 & $\bbP(K=k)=e_{j,k}^{\mathrm B}/\sum_h e_{j,h}^{\mathrm B}$\\[6pt]
$(j',x')\leftarrow(j-1,Y_I)$
 & $(j',x')\leftarrow(j+1,Z_K)$\\[3pt]
$K'\sim\operatorname{Unif}([L_B])$
 & $I'\sim\operatorname{Unif}([L_F])$\\[3pt]
$Z'_{K'}\leftarrow x$
 & $Y'_{I'}\leftarrow x$\\[3pt]
\textbf{if lazy and $j'>0$}: mark $Z'_{K'}$
 & \textbf{if lazy and $j'<J$}: mark $Y'_{I'}$\\[3pt]
\end{tabular}
}
\endgroup
\;
    }
    \expandafter\patchcmd\csname algocf@Vsline\endcsname
      {\par\nointerlineskip}{\strut\par\nointerlineskip}{}{}
    
    \If{$\mathsf a=\mathsf{move}$}{
        $(j,x)\leftarrow(j',x')$; reset pools, retaining only the inserted candidate; evaluate its weight\;
        \lIf{$j\in\{0,J\}$}{$d\leftarrow\mathrm B$ at $0$, $\mathrm F$ at $J$; inward pool only, no mark}
    }
}
\lIf{$j=0$}{Append $x$ to $\mathcal S$}
\Return{$\mathcal S$}\;
\caption{Finite-candidate single-chain CS-VGB.
}
\label{alg:cs-vgb-practical-outer-loop}
\label{alg:cs-vgb-interior-checkpoint-update}
\end{algorithm2e}

\paragraph{Omitting the edge correction.}
When ${\rho_j}$ is unavailable, an implementation may set ${\rho_j=1}$. We measure its effect through the average reference-edge error in \cref{eq:c-edge-error}. Under the reward and verifier bounds in \cref{eq:c-quantitative-weight-bounds} and the transition-density condition in \cref{eq:c-edge-moving-density-moment}, \cref{thm:c-practical-bias} bounds the resulting stationary bias.

\subsection{Model-Specific Checkpoint Transitions}
\label{app:subsec:model-checkpoint-transitions}

We now describe how to construct the reference kernels $K_j^{\mathrm F}$ and $K_j^{\mathrm B}$ for diffusion models, flow matching, and one-step posterior samplers. For diffusion and flow matching, denoising composes fine-grid steps as in \cref{eq:cs-vgb-checkpoint-composition}, while renoising uses Gaussian transitions across each checkpoint interval. A one-step posterior sampler instead provides a direct denoising transition through a Gaussian bridge.

The reference steps below also serve as unguided proposals, $g_k^{\mathrm F}=k_k^{\mathrm F}$. For a Gaussian reference step $k_k^{\mathrm F}(\cdot\mid x)=\mathcal N(m_k(x),\Sigma_k)$, a positive differentiable verifier gives the mean-shift proposal
\begin{equation}
g_k^{\mathrm F}(dy\mid x)
=\mathcal N\!\left(dy;m_k(x)+s_k\Sigma_k\nabla\log\widehat V_{t_{k-1}}(m_k(x)),\Sigma_k\right),
\label{eq:cs-vgb-practical-gaussian-guidance}
\end{equation}
where $s_k\geq0$ is the guidance strength. The reference kernel remains unchanged; the importance weights account for the proposal change.

\paragraph{{Diffusion models.}}
Consider the VP SDE with noise-rate schedule $\beta(t)>0$, for which $f(x,t)=-\beta(t)x/2$ and $g(t)=\sqrt{\beta(t)}$. Since the coefficients may diverge at $t=1$, we can choose $\kappa_J<T$ and restrict the checkpoint updates to $[0,t_{\kappa_J}]$, below the fully noisy endpoint. The initial law $\pi_J$ is then the reference marginal at $t_{\kappa_J}$ defined in \cref{eq:cs-vgb-implemented-marginals}. The learned reverse drift in \cref{eq:score-reverse-sde} is $-\beta(t)[x/2+s_\theta(x,t)]$. Let $\Delta_k:=t_k-t_{k-1}$. For $1\leq k\leq\kappa_J$, one Euler--Maruyama step from $t_k$ to $t_{k-1}$ gives the fine-grid reference denoising kernel
\begin{equation}
k_k^{\mathrm F}(dx_{k-1}\mid x_k)
:=\mathcal N\left(dx_{k-1};
x_k+\Delta_k\beta(t_k)\left[\frac{x_k}{2}+s_\theta(x_k,t_k)\right],
\Delta_k\beta(t_k)I\right).
\label{eq:cs-vgb-diffusion-fine-forward}
\end{equation}
Recalling the conditional VP scales $\alpha_{t\mid s}:=\exp(-\frac{1}{2}\int_s^t\beta(u)du)$ and $\sigma_{t\mid s}^2:=1-\alpha_{t\mid s}^2$ for $0\leq s<t\leq1$, the reference renoising kernel over the same time interval is
\begin{equation}
k_{k-1}^{\mathrm B}(dx_k\mid x_{k-1})
:=\mathcal N\left(dx_k;
\alpha_{t_k\mid t_{k-1}}x_{k-1},
\sigma_{t_k\mid t_{k-1}}^2I\right).
\label{eq:cs-vgb-diffusion-fine-backward}
\end{equation}
Composing the fine-grid noising steps gives the checkpoint renoising kernel
\begin{equation}
K_{j-1}^{\mathrm B}(dx\mid y)
=\mathcal N\left(dx;\alpha_{t_{\kappa_j}\mid t_{\kappa_{j-1}}}y,
\sigma_{t_{\kappa_j}\mid t_{\kappa_{j-1}}}^2I\right).
\label{eq:cs-vgb-diffusion-checkpoint-backward}
\end{equation}
We remark that composing Gaussian denoising steps generally gives a non-Gaussian checkpoint kernel $K_j^{\mathrm F}$. The edge correction $\rho_j$ accounts for any mismatch between the denoising and renoising edge laws.

\paragraph{{Flow matching.}}
The flow-matching ODE gives a deterministic transition between grid points, whose matched reverse is the exact inverse solution map. When this inverse is not available as an exact finite-step transition, the two edge laws can have different supports and their density ratio can be singular. We therefore use the stochastic SDE pair in \cref{eq:flow-matching-forward-sde,eq:flow-matching-reverse-sde}. For the linear interpolant $X_t=(1-t)X_0+t\epsilon$, choosing $\beta(t)=2t/(1-t)$ gives the renoising SDE (increasing $t$) and the denoising SDE (decreasing $t$), respectively,
\begin{equation}
\resizebox{0.88\linewidth}{!}{$ \displaystyle
{dX_t=-\frac{X_t}{1-t}dt+\sqrt{\frac{2t}{1-t}}dW_t,
\qquad
dX_t=\left[\frac{X_t}{1-t}+2v_t(X_t)\right]dt+\sqrt{\frac{2t}{1-t}}d\overline W_t.}
$}
\label{eq:cs-vgb-fm-checkpoint-sdes}
\end{equation}
Replacing $v_t$ by the pretrained velocity $v_\theta$, one Euler--Maruyama discretization step of the denoising SDE from $t_k$ to $t_{k-1}$ gives the reference denoising kernel
\begin{equation}
k_k^{\mathrm F}(dx_{k-1}\mid x_k)
:=\mathcal N\left(dx_{k-1};
x_k-\Delta_k\left[\frac{x_k}{1-t_k}+2v_\theta(t_k,x_k)\right],
\frac{2t_k\Delta_k}{1-t_k}I\right).
\label{eq:cs-vgb-fm-fine-forward}
\end{equation}
The increasing-time SDE is linear, and its exact one-step noising kernel is
\begin{equation}
k_{k-1}^{\mathrm B}(dx_k\mid x_{k-1})
:=\mathcal N\left(dx_k;
\frac{1-t_k}{1-t_{k-1}}x_{k-1},
\left[t_k^2-\left(\frac{1-t_k}{1-t_{k-1}}\right)^2t_{k-1}^2\right]I\right).
\label{eq:cs-vgb-fm-fine-backward}
\end{equation}
A checkpoint F move composes the fine steps in \cref{eq:cs-vgb-fm-fine-forward}; the B reference kernel over the whole interval is
\begin{equation}
K_{j-1}^{\mathrm B}(dx\mid y)
=\mathcal N\left(dx;\frac{1-t_{\kappa_j}}{1-t_{\kappa_{j-1}}}y,
\left[t_{\kappa_j}^2-\frac{(1-t_{\kappa_j})^2}{(1-t_{\kappa_{j-1}})^2}t_{\kappa_{j-1}}^2\right]I\right).
\label{eq:cs-vgb-fm-checkpoint-backward}
\end{equation}
Thus an F proposal can require several model evaluations, while this B reference draw does not require numerical substeps.

\paragraph{One-step posterior sampling.}
{
We use the Gaussian reference renoising kernel $K_{j-1}^{\mathrm B}$ in \cref{eq:cs-vgb-fm-checkpoint-backward} and construct its corresponding denoising kernel $K_j^{\mathrm F}$ using a one-step posterior sampler. To denoise from ${t_{\kappa_j}}$ to ${t_{\kappa_{j-1}}}$, we first draw a clean endpoint $Z$ conditional on ${X_j=x}$ using MFM or PDM~\citep{potaptchik2026meta,holderrieth2026diamond}. We then sample ${X_{j-1}}$ through a Gaussian bridge conditioned on both endpoints (\cref{fig:cs-vgb-posterior-bridge}), without evaluating intermediate fine-grid states. For ${0\leq t_{\kappa_{j-1}}<t_{\kappa_j}\leq1}$, define
\begin{equation}
a_j:=\frac{1-t_{\kappa_j}}{1-t_{\kappa_{j-1}}},\qquad
\sigma_j^2:=t_{\kappa_j}^2-a_j^2t_{\kappa_{j-1}}^2,\qquad
m_j(x,z):=\frac{a_jt_{\kappa_{j-1}}^2}{t_{\kappa_j}^2}x
+\frac{(1-t_{\kappa_{j-1}})\sigma_j^2}{t_{\kappa_j}^2}z.
\label{eq:cs-vgb-flowmap-noise-variances}
\end{equation}

\begin{figure}[!htb]
\centering

\definecolor{bridgeBlue}{HTML}{365F7D}
\definecolor{bridgePaleBlue}{HTML}{EAF2F7}
\begin{tikzpicture}[
 x=1cm,y=1cm,font=\normalfont\small,text=black,
 endpoint/.style={draw=black,line width=.9pt,minimum width=1.1cm,
 minimum height=.7cm,inner sep=4pt},
 flowArrow/.style={-{Stealth[length=2.1mm,width=1.5mm]},line width=1pt}
]
\node[endpoint,fill=bridgePaleBlue,draw=bridgeBlue]
 (currentNoisyState) at (1,0) {$x$};
\node[endpoint,fill=white] (bridgeCandidate) at (4.3,0) {$Y$};
\node[endpoint,fill=bridgePaleBlue,draw=bridgeBlue]
 (cleanPosteriorDraw) at (11,0) {$Z$};
\node[font=\normalfont\footnotesize] (sourceTime)
 at (1,.65) {$t=t_{\kappa_j}$};
\node[font=\normalfont\footnotesize] (destinationTime)
 at (4.3,.65) {$t=t_{\kappa_{j-1}}$};
\node[font=\normalfont\footnotesize] (cleanTime)
 at (11,.65) {$t=0$};
\draw[flowArrow,draw=black] (currentNoisyState.north east)
 .. controls (3,1.9) and (9,1.9) .. (cleanPosteriorDraw.north west);
\node[font=\normalfont\small] (posteriorMapLabel)
 at (6,1.95) {$Z=\Phi_{t_{\kappa_j},0}^{\theta}(x,\zeta)$};
\node[draw=black,fill=white,line width=.9pt,rounded corners=2pt,
 minimum width=7cm,minimum height=1.25cm,inner sep=0pt]
 (bridgeBox) at (6,-1.55) {};
\draw[flowArrow,draw=bridgeBlue] (currentNoisyState.south)
 -- (1,-1.55) -- (bridgeBox.west);
\draw[flowArrow,draw=bridgeBlue] (cleanPosteriorDraw.south)
 -- (11,-1.55) -- (bridgeBox.east);
\draw[flowArrow,draw=bridgeBlue] (4.3,-.925) -- (bridgeCandidate.south);
\node[font=\normalfont\small\bfseries]
 (bridgeHeading) at (6,-1.18) {Gaussian bridge};
\node[font=\normalfont\small] (bridgeFormula)
 at (6,-1.75) {$Y=m_j(x,Z)+\dfrac{t_{\kappa_{j-1}}\sigma_j}{t_{\kappa_j}}\xi,\qquad \xi\sim\mathcal N(0,I)$};
\end{tikzpicture}
\caption{Posterior sampling followed by a Gaussian bridge conditioned on both endpoints. For an exact posterior sampler, ${Y\sim K_j^{\mathrm F}(\cdot\mid x)}$ (\cref{prop:cs-vgb-posterior-bridge-pairing}).}
\label{fig:cs-vgb-posterior-bridge}
\end{figure}

\begin{proposition_ap}[Exact pairing from posterior sampling]
\label{prop:cs-vgb-posterior-bridge-pairing}
Let $(X_0,\ldots,X_J)$ be the Markov chain initialized from $X_0\sim\pi_0$ with transitions $X_j\mid X_{0:j-1}\sim K_{j-1}^{\mathrm B}(\cdot\mid X_{j-1})$, $j=1,\ldots,J$, given in \cref{eq:cs-vgb-fm-checkpoint-backward}, and let $\pi_j$ denote its marginals. Using an exact sampler of $\operatorname{Law}(X_0\mid X_j=x)$, draw
\begin{equation}
Z=\Phi_{t_{\kappa_j},0}^{\theta}(x,\zeta),\qquad
Y=m_j(x,Z)+\frac{t_{\kappa_{j-1}}\sigma_j}{t_{\kappa_j}}\xi,\qquad \zeta,\xi\overset{\mathrm{iid}}{\sim}\mathcal N(0,I).
\label{eq:cs-vgb-flowmap-one-nfe-step}
\end{equation}
At $j=1$, $Y=Z$. The kernel $K_j^{\mathrm F}(\cdot\mid x):=\operatorname{Law}(Y\mid x)$ satisfies
\begin{equation}
\pi_j(dx)K_j^{\mathrm F}(dy\mid x)
=\pi_{j-1}(dy)K_{j-1}^{\mathrm B}(dx\mid y),\qquad \rho_j=1.
\label{eq:cs-vgb-flowmap-exact-pairing}
\end{equation}

\end{proposition_ap}

\begin{proof}
The reference process satisfies ${X_j=a_jX_{j-1}+\sigma_j\epsilon_j}$, with independent Gaussian noise ${\epsilon_j}$, and ${X_j\mid X_0=z\sim\mathcal N((1-t_{\kappa_j})z,t_{\kappa_j}^2I)}$. Bayes' rule therefore gives the posterior sampled by $Z$,
\begin{equation}
\operatorname{Law}(X_0\in dz\mid X_j=x)
=\frac{\varphi_{t_{\kappa_j}^2}(x-(1-t_{\kappa_j})z)\pi_0(dz)}{\pi_j(x)},
\label{eq:cs-vgb-flowmap-posterior}
\end{equation}
where $\varphi_v$ is the density of $\mathcal N(0,vI)$ and ${\pi_j(x)}$ is the density of the noisy marginal ${\pi_j}$. Given $X_0=z$, the pair ${(X_{j-1},X_j)}$ is jointly Gaussian with covariance ${a_jt_{\kappa_{j-1}}^2I}$. Conditioning further on ${X_j=x}$ gives
\begin{align*}
\bbE[X_{j-1}\mid X_j=x,X_0=z]
&=(1-t_{\kappa_{j-1}})z+\frac{a_jt_{\kappa_{j-1}}^2}{t_{\kappa_j}^2}\bigl[x-(1-t_{\kappa_j})z\bigr]=m_j(x,z),\\
\operatorname{Var}(X_{j-1}\mid X_j=x,X_0=z)
&=\left(t_{\kappa_{j-1}}^2-\frac{a_j^2t_{\kappa_{j-1}}^4}{t_{\kappa_j}^2}\right)I
=\frac{t_{\kappa_{j-1}}^2\sigma_j^2}{t_{\kappa_j}^2}I.
\end{align*}
Hence the Gaussian draw in \cref{eq:cs-vgb-flowmap-one-nfe-step} has precisely the conditional law
\begin{equation}
\operatorname{Law}(X_{j-1}\mid X_j=x,X_0=z)=\mathcal N\left(m_j(x,z),\frac{t_{\kappa_{j-1}}^2\sigma_j^2}{t_{\kappa_j}^2}I\right).
\label{eq:cs-vgb-flowmap-center}
\end{equation}
Averaging over $Z$ removes the conditioning on the clean endpoint,
\begin{align}
K_j^{\mathrm F}(dy\mid x)
&=\int\mathcal N\left(dy;m_j(x,z),\frac{t_{\kappa_{j-1}}^2\sigma_j^2}{t_{\kappa_j}^2}I\right)\,\operatorname{Law}(X_0\in dz\mid X_j=x)\notag\\
&=\operatorname{Law}(X_{j-1}\in dy\mid X_j=x).
\label{eq:cs-vgb-flowmap-fine-forward}
\end{align}
Multiplying by ${\pi_j(dx)}$ now gives the required pairing,
\begin{equation*}
\pi_j(dx)K_j^{\mathrm F}(dy\mid x)
=\operatorname{Law}(X_j\in dx,X_{j-1}\in dy)
=\pi_{j-1}(dy)K_{j-1}^{\mathrm B}(dx\mid y).
\end{equation*}
Finally, $t_0=0$ gives $m_1(x,Z)=Z$ and zero bridge variance, hence $Y=Z$.
\end{proof}

For $j\geq2$, one possible guided proposal keeps the posterior draw $Z$
unchanged and shifts the Gaussian bridge mean toward higher verifier values:
\begin{equation}
Y=m_j(x,Z)+s_j\Sigma_j\nabla\log\widehat V_{j-1}(m_j(x,Z))
+\Sigma_j^{1/2}\xi,
\qquad
\Sigma_j:=\frac{t_{\kappa_{j-1}}^2\sigma_j^2}{t_{\kappa_j}^2}I,
\label{eq:posterior-bridge-covariance-guidance}
\end{equation}
where $s_j\geq0$ is the guidance strength, $\xi\sim\mathcal N(0,I)$ independently of $Z$,
and $\widehat V_{j-1}$ is positive and differentiable.
At $j=1$, $\Sigma_1=0$ and $Y=Z$, so this bridge guidance has no effect.
The ImageNet and CelebA experiments instead use the normalized mean shifts in \cref{eq:search-guided-forward,eq:search-guided-backward}, with gradients evaluated at the current state.

\section{CS-VGB Theoretical Guarantees}
\label{app:sec:cs-vgb-theoretical-guarantees}

We first show that the exact checkpoint update in \cref{app:subsec:coarse-checkpoint-cs-vgb} has the desired stationary clean distribution and that reverse insertion preserves this target with finite candidate pools. We then study how the exact chain approaches stationarity and reaches the clean boundary, including the effect of process-verifier error on convergence. Finally, we quantify the bias and convergence error introduced by omitting the edge correction $\rho_j$.

We retain the notation of \cref{app:subsec:cs-vgb-setup,app:subsec:coarse-checkpoint-cs-vgb}: ${j}$ indexes time ${t_{\kappa_j}}$, ${\pi_j}$ is its model marginal, and $\widehat V_0=r$. Throughout, the state space is $\mathcal X\subseteq\bbR^d$, and ${J\geq2}$. The process verifiers are nonnegative and finite. In particular, the corrected kernels are taken in versions satisfying \cref{eq:cs-vgb-corrected-edge-identity}.

\subsection{Stationarity Guarantee for CS-VGB}
\label{app:subsec:cs-vgb-exact-stationarity}

Let $P$ denote the CS-VGB one-step kernel with the exact edge-correction ratios ${\rho_j(x,y)}$ from \cref{eq:cs-vgb-edge-laws,eq:cs-vgb-corrected-renoising}, exact masses ${F_j,B_j^\rho}$, and exact draws from ${Q_j^{\mathrm F},Q_j^{\mathrm B,\rho}}$. It uses \cref{tab:cs-vgb-momentum-update} with $\chi\in[0,1]$ and forced exits at the boundaries.
{
The state of this kernel records the checkpoint index, sample, and direction. Since the direction is fixed at the two boundaries, $P$ acts on
\begin{equation}
\mathsf S
:=\bigl(\{0\}\times\mathcal X\times\{\mathrm B\}\bigr)
\cup\!\bigcup_{j=1}^{J-1}\!\bigl(\{j\}\times\mathcal X\times\{\mathrm F,\mathrm B\}\bigr)
\cup\bigl(\{J\}\times\mathcal X\times\{\mathrm F\}\bigr).
\label{eq:c-mixing-state-space}
\end{equation}
}
To describe the stationary law of the transitions in \cref{eq:cs-vgb-new-forward-kernel,eq:cs-vgb-new-backward-kernel}, we define the \emph{incident measure} $\mu_j$ at checkpoint $j$ by summing the marginals of its adjacent edge-conductance measures from \cref{eq:cs-vgb-interior-conductance,eq:cs-vgb-clean-conductance}. At each boundary only one edge contributes, so for $1\leq j<J$,
{
\begin{equation}
\mu_0(dy):=\widehat C_1(\mathcal X,dy), \qquad \mu_J(dx):=\widehat C_J(dx,\mathcal X), \qquad \mu_j(dx):=\pi_j(dx)\sbra{a_j^{\mathrm F}(x)+a_j^{\mathrm B}(x)}.
\label{eq:c-incident-measures}
\end{equation}
}
Assuming the edge-conductance measures have positive finite total mass, we define the \emph{normalizing constant} $\calZ$ by summing these measures over all checkpoint-direction states, with two directions at each interior checkpoint and one at each boundary,
\begin{equation}
\calZ:=\mu_0(\mathcal X)+2\sum_{j=1}^{J-1}\mu_j(\mathcal X)+\mu_J(\mathcal X)
\in(0,\infty).
\label{eq:c-lift-normalizer}
\end{equation}

\begin{theorem_ap}[Stationary law of exact CS-VGB]
\label{thm:c-exact-stationarity}
Under terminal anchoring (\cref{ass:terminal-anchoring}) and the corrected edge identity in \cref{eq:cs-vgb-corrected-edge-identity}, the CS-VGB kernel $P$ has stationary distribution $\Pi$, given by the following expressions with ${1\leq j<J}$ and $d\in\mbra{\mathrm F,\mathrm B}$ for the interior checkpoints,
{
\begin{equation}
\Pi(0,dx,\mathrm B):=\frac{\mu_0(dx)}\calZ, \quad
\Pi(J,dx,\mathrm F):=\frac{\mu_J(dx)}{\calZ}, \quad
\Pi(j,dx,d):=\frac{\mu_j(dx)}{\calZ}.
\label{eq:c-stationary-law}
\end{equation}
}
At the clean boundary ${j=0}$,
{
\begin{equation}
\Pi(0,\mathcal X,\mathrm B)=\frac{\mu_0(\mathcal X)}{\calZ},\qquad
\frac{\Pi(0,dx,\mathrm B)}{\Pi(0,\mathcal X,\mathrm B)}=\frac{r(x)\pi_0(dx)}{Z}=\pi_0^\star(dx).
\label{eq:c-clean-conditional}
\end{equation}
}
\end{theorem_ap}
\Cref{fig:cs-vgb-measure-sets} summarizes how the same edge-conductance measures determine the sampling transitions and stationary law. The factor $ 2$ in $ \calZ$ counts the two interior direction states. At the clean boundary, conditioning cancels $ \calZ$ and gives $ r\pi_0/Z=\pi_0^\star$.
\begin{figure}[t]
\centering
\begingroup
\begin{tikzpicture}[text=black,>=Latex,font=\normalfont\small,
 draw=black,line width=1pt,every node/.style={inner xsep=7pt,inner ysep=6pt}]
\node[draw=black,fill=black!3,rounded corners=3pt,align=center]
 (edge) at (6.27,0.56)
 {\textbf{Edge measure between endpoints $dx,dy$}\\[5pt]
 $\widehat C_j(dx,dy)=\pi_j(dx)K_j^{\mathrm F}(dy\mid x)
 \sqrt{\widehat V_j(x)}\sqrt{\widehat V_{j-1}(y)},\quad j\geq2$\\[4pt]
 $\widehat C_1(dx,dy)=\pi_1(dx)K_1^{\mathrm F}(dy\mid x)r(y)$};
\node[draw=black,fill=black!3,rounded corners=3pt,align=center] (conditionals) at (3.15,-2.1)
 {\textbf{Normalize at either endpoint}\\[6pt]
 $\widehat C_j(dx,dy)=\widehat C_j(dx,\mathcal X)Q_j^{\mathrm F}(dy\mid x)$\\[6pt]
 $\widehat C_j(dx,dy)=\widehat C_j(\mathcal X,dy)Q_{j-1}^{\mathrm B,\rho}(dx\mid y)$};
\node[draw=black,fill=black!3,rounded corners=3pt,align=center]
 (marginals) at (9.85,-2.1)
 {\textbf{Add adjacent edge masses}\\[6pt]
 $\mu_j(dx)=\widehat C_j(dx,\mathcal X)+\widehat C_{j+1}(\mathcal X,dx)$\\[6pt]
 $\mu_j(dx)=\pi_j(dx)\bigl[a_j^{\mathrm F}(x)+a_j^{\mathrm B}(x)\bigr]$};
\node[draw=black,fill=black!3,rounded corners=3pt,align=center]
 (stationary) at (6.5,-4.9)
 {\textbf{Normalize over time $t_{\kappa_j}$ and direction $d$}\\[5pt]
 $\displaystyle\Pi(j,dx,d)=\frac{\mu_j(dx)}{\mathcal Z},
 \qquad\mathcal Z=\mu_0(\mathcal X)+2\sum_{j=1}^{J-1}\mu_j(\mathcal X)+\mu_J(\mathcal X)$};
\draw[-{Latex[length=2.5mm,width=1.8mm]},line width=1.1pt]
 (3.77,-0.46) -- (conditionals.north);
\draw[-{Latex[length=2.5mm,width=1.8mm]},line width=1.1pt]
 (9,-0.46) -- (marginals.north);
\draw[-{Latex[length=2.5mm,width=1.8mm]},line width=1.1pt]
 (marginals.south) -- (9,-3.95);
\end{tikzpicture}
\endgroup
\caption{Edge-conductance measures define both the CS-VGB update and its stationary law. For ${1\leq j<J}$, $\mu_j$ sums the marginals of the two adjacent edge-conductance measures; at the boundaries, $\mu_0=\widehat C_1(\mathcal X,\cdot)$ and ${\mu_J=\widehat C_J(\cdot,\mathcal X)}$. The resulting law and clean target are established in \cref{thm:c-exact-stationarity}.}
\label{fig:cs-vgb-measure-sets}
\label{fig:cs-vgb-conductance-structure}
\end{figure}
\begin{proof}
For ${2\leq j<J}$, the definitions of ${a_j^{\mathrm F},a_j^{\mathrm B}}$ and ${p_j^{\mathrm F},p_j^{\mathrm B}}$ give
\begin{equation}
\begin{aligned}
\mu_j(dx)p_j^{\mathrm F}(x)
&=\pi_j(dx)a_j^{\mathrm F}(x)
=\pi_j(dx){\sqrt{\widehat V_j(x)}}F_j(x)
=\int_{\mathcal X}\widehat C_j(dx,dy)
=\widehat C_j(dx,\mathcal X),\\
\mu_j(dy)p_j^{\mathrm B}(y)
&=\pi_j(dy)a_j^{\mathrm B}(y)
=\pi_j(dy){\sqrt{\widehat V_j(y)}}B_j^\rho(y)
=\int_{\mathcal X}\widehat C_{j+1}(dx,dy)
=\widehat C_{j+1}(\mathcal X,dy).
\end{aligned}
\label{eq:c-incident-direction}
\end{equation}
For ${j=1}$, the same identities follow by using $a_1^{\mathrm F}=F_1$ and the clean-edge definition of $\widehat C_1$ in the forward relation.
For an interior edge, the two moving fluxes, before division by ${\calZ}$, are therefore
\begin{equation}
\mu_j(dx)p_j^{\mathrm F}(x)Q_j^{\mathrm F}(dy\mid x)
=\widehat C_j(dx,dy)
=\mu_{j-1}(dy)p_{j-1}^{\mathrm B}(y)Q_{j-1}^{\mathrm B,\rho}(dx\mid y),
\qquad 2\leq j<J.
\label{eq:c-moving-flux}
\end{equation}
At the clean boundary, the right-hand side is instead $\mu_0(dy)Q_0^{\mathrm B,\rho}(dx\mid y)=\widehat C_1(dx,dy)$. At the noise boundary, the left-hand side is ${\mu_J(dx)Q_J^{\mathrm F}(dy\mid x)=\widehat C_J(dx,dy)}$.

Fix an interior checkpoint ${j}$ and recall from \cref{eq:cs-vgb-direction-probabilities} that ${\ell_j(x):=\min\mbra{p_j^{\mathrm F}(x),p_j^{\mathrm B}(x)}}$. Moving arrivals in direction $\mathrm F$ have measure ${\widehat C_{j+1}(\mathcal X,dx)=\mu_j(dx)p_j^{\mathrm B}(x)}$. The other arrivals are flips from $\mathrm B$ and stays in $\mathrm F$. Their sum is
\begin{equation}
\mu_j(dx)p_j^{\mathrm B}(x)
+\mu_j(dx)\sbra{p_j^{\mathrm F}(x)-\chi\ell_j(x)}
+\mu_j(dx)\chi\ell_j(x)=\mu_j(dx).
\label{eq:c-forward-balance}
\end{equation}
Similarly, the incoming measure in direction $\mathrm B$ is
\begin{equation}
\mu_j(dx)p_j^{\mathrm F}(x)
+\mu_j(dx)\sbra{p_j^{\mathrm B}(x)-\chi\ell_j(x)}
+\mu_j(dx)\chi\ell_j(x)=\mu_j(dx).
\label{eq:c-backward-balance}
\end{equation}
Equations~\eqref{eq:c-forward-balance} and~\eqref{eq:c-backward-balance} show that the unnormalized incoming measure at each interior lifted state ${(j,dx,d)}$ is ${\mu_j(dx)}$. The boundary relations give the corresponding incoming measures $\mu_0(dy)$ and ${\mu_J(dx)}$. Dividing each incoming measure by $\calZ$ therefore recovers the components in \cref{eq:c-stationary-law}, which proves $\Pi P=\Pi$. Finally, terminal anchoring and normalization of $K_0^{\mathrm B,\rho}$ give
{
\begin{equation*}
\begin{aligned}
\mu_0(dy)=r(y)\pi_0(dy)\int_{\mathcal X}K_0^{\mathrm B,\rho}(dx\mid y)
=r(y)\pi_0(dy), \qquad \mu_0(\mathcal X)&=\int_{\mathcal X}r(y)\pi_0(dy)=Z.
\end{aligned}
\end{equation*}
}
Hence $\Pi(0,dy,\mathrm B)/\Pi(0,\mathcal X,\mathrm B)=r(y)\pi_0(dy)/Z=\pi_0^\star(dy)$.
\end{proof}
\par\removelastskip\vspace{4pt}
Thus, changing the interior verifier changes the invariant law over intermediate states, but not its clean conditional. {We next bound the stationary probability of being at the clean checkpoint in terms of the average verifier error under the reference marginals.}
{
\begin{proposition_ap}[Clean stationary mass under average verifier error]
\label{prop:c-sqrt-clean-mass}

Under the assumptions of \cref{thm:c-exact-stationarity}, let $0<Z:=\int_{\mathcal X}r(x)\pi_0(dx)<\infty$ and $J\geq2$. Suppose the nonnegative process verifiers satisfy, for some $\varepsilon\geq0$,
\begin{equation}
\frac1J\sum_{j=1}^{J}\bbE_{\pi_j}\!\left[|\widehat V_j-V_j|\right]\leq\varepsilon Z.
\label{eq:c-sqrt-average-verifier-error}
\end{equation}
Then
\begin{equation}
\Pi(j=0):=\Pi(0,\mathcal X,\mathrm B)=\frac{Z}{\calZ}\geq\frac{1}{4J-2+4J\varepsilon}.
\label{eq:c-sqrt-clean-mass-bound}
\end{equation}
For exact verifiers, this recovers $\Pi(j=0)\geq1/(4J-2)$, with equality for any positive constant reward.
\end{proposition_ap}
\begin{proof}
Write $M_j:=\widehat C_j(\mathcal X,\mathcal X)$. Terminal anchoring gives $M_1=Z$. For $j\geq2$, taking expectations under $\pi_j(dx)K_j^{\mathrm F}(dy\mid x)$ yields
\begin{align*}
M_j
&=\bbE\sqrt{\widehat V_j(X_j)\widehat V_{j-1}(X_{j-1})}
\overset{(\circ)}{\leq}\frac12\left(\bbE_{\pi_j}\widehat V_j+\bbE_{\pi_{j-1}}\widehat V_{j-1}\right)\\
&\overset{(\triangle)}{\leq}Z+\frac12\left(\bbE_{\pi_j}|\widehat V_j-V_j|+\bbE_{\pi_{j-1}}|\widehat V_{j-1}-V_{j-1}|\right),
\end{align*}
where $(\circ)$ uses $\sqrt{ab}\leq(a+b)/2$, and $(\triangle)$ uses $\bbE_{\pi_j}V_j=Z$. By \cref{eq:c-incident-measures,eq:c-lift-normalizer}, each verifier error enters the normalizer with coefficient at most $4$, so
\begin{align*}
\calZ
&=3Z+4\sum_{j=2}^{J-1}M_j+3M_J \leq(4J-2)Z+4\sum_{j=1}^{J}\bbE_{\pi_j}|\widehat V_j-V_j|
\leq(4J-2+4J\varepsilon)Z.
\end{align*}
Since $\Pi(j=0)=Z/\calZ$, this proves the bound. If $\widehat V_j=V_j$ and $r\equiv c>0$, then $V_j\equiv c$ and $M_j=Z=c$ for every edge, giving equality in the exact-verifier bound.
\end{proof}
}
{For bounded average verifier error $\varepsilon=O(1)$, the clean checkpoint retains stationary probability $\Omega(J^{-1})$, ruling out exponential dilution of its mass as the number of checkpoints grows. Therefore, this implies clean samples are not exponentially rare under the stationary law.}

\mdfsetup{fontcolor=black}
\subsection{Exact Finite-Candidate Updates}
\label{app:subsec:cs-vgb-finite-candidate-exactness}
\label{app:subsec:cs-vgb-calibrated-lazy-guarantees}

In this subsection, we verify that reverse insertion and lazy pool evaluation preserve the clean target despite using finite pools. We keep the weights $e_{j,i}^{\mathrm F},e_{j,k}^{\mathrm B}$ and averages $\widehat a_j^{\mathrm F},\widehat a_j^{\mathrm B}$ from \cref{app:subsec:cs-vgb-calibrated-lazy}. For $\mathbf u=(Y_{1:L_F},Z_{1:L_B})$, the law of fresh pools is
\begin{equation*}
\mathsf G_j(d\mathbf u\mid x)
:=\prod_{i=1}^{L_F}G_j^{\mathrm F}(dY_i\mid x)
\prod_{k=1}^{L_B}G_j^{\mathrm B}(dZ_k\mid x),
\end{equation*}
omitting the unavailable pool at each boundary. Let $\mathcal Z_\alpha\in(0,\infty)$ be the normalizer in \cref{eq:c-lift-normalizer} with $\widehat C_j$ replaced by $\alpha^{j-1}\widehat C_j$. We assume the updates below are well-defined.

For the CS-VGB kernel in \cref{app:subsec:cs-vgb-calibrated-lazy} with reverse insertion, lazy pool evaluation, depth weighting, and exact edge corrections $\rho_j$, define $\Pi_{\mathrm{mark}}^\alpha$ at $1\leq j<J$ by
\begin{equation*}
\begin{aligned}
\Pi_{\mathrm{mark}}^\alpha(j,dx,d\mathbf u,\mathrm F,k)
&:=\frac{\pi_j(dx)\mathsf G_j(d\mathbf u\mid x)}{\mathcal Z_\alpha}
\frac{\widehat a_j^{\mathrm F}+e_{j,k}^{\mathrm B}}{L_B},\\
\Pi_{\mathrm{mark}}^\alpha(j,dx,d\mathbf u,\mathrm B,i)
&:=\frac{\pi_j(dx)\mathsf G_j(d\mathbf u\mid x)}{\mathcal Z_\alpha}
\frac{\widehat a_j^{\mathrm B}+e_{j,i}^{\mathrm F}}{L_F}.
\end{aligned}
\end{equation*}
At the boundaries, use \cref{eq:c-unified-pool-laws} without a mark.

\Needspace{5\baselineskip}
\begin{theorem_ap}[Stationary clean target with finite candidates]
\label{thm:c-joint-pool-invariance}
For $L_F,L_B\geq2$, the above kernel has stationary law $\Pi_{\mathrm{mark}}^\alpha$, with marginal
\begin{equation}
\Pi_{\mathrm{pool}}^\alpha(j,dx,d\mathbf u,d)
=\frac{\pi_j(dx)\mathsf G_j(d\mathbf u\mid x)}{\mathcal Z_\alpha}
(\widehat a_j^{\mathrm F}+\widehat a_j^{\mathrm B}),
\label{eq:c-unified-pool-laws}
\end{equation}
after summing over the mark. Without lazy pool evaluation (i.e., using only reverse insertion and depth weighting), the CS-VGB kernel instead has stationary law $\Pi_{\mathrm{pool}}^\alpha$. Both satisfy
\begin{equation*}
\Pi_{\mathrm{pool}}^\alpha(dx\mid j=0)
=\Pi_{\mathrm{mark}}^\alpha(dx\mid j=0)=\pi_0^\star(dx).
\end{equation*}
For $\alpha=1$, both stationary laws have marginal $\Pi$ on $(j,x,d)$, as defined in \cref{thm:c-exact-stationarity}.
\end{theorem_ap}

\begin{proof}
We first establish an identity for the incoming move measures used below.
Let $\mathsf T_{j,i}^{\mathrm F}$ and $\mathsf T_{j,k}^{\mathrm B}$ fix one candidate and draw all remaining entries independently:
\begin{equation*}
\begin{aligned}
\mathsf T_{j,i}^{\mathrm F}(d\mathbf u\mid x;y)
&:=\delta_y(dY_i)\prod_{h\ne i}G_j^{\mathrm F}(dY_h\mid x)
\prod_{\ell=1}^{L_B}G_j^{\mathrm B}(dZ_\ell\mid x),\\
\mathsf T_{j,k}^{\mathrm B}(d\mathbf u\mid x;y)
&:=\delta_y(dZ_k)\prod_{h=1}^{L_F}G_j^{\mathrm F}(dY_h\mid x)
\prod_{\ell\ne k}G_j^{\mathrm B}(dZ_\ell\mid x).
\end{aligned}
\end{equation*}
Unavailable pools are omitted. For fixed $i,k$, we compute the destination measure after moving to $Y_i$ and inserting the previous state $x$ into B slot $k$. The exact weights in \cref{eq:cs-vgb-new-candidate-weights} give
\begin{equation}
\begin{aligned}
&\int \pi_j(dx)\mathsf G_j(d\mathbf u\mid x)e_{j,i}^{\mathrm F}
\delta_{Y_i}(dy)\mathsf T_{j-1,k}^{\mathrm B}(d\mathbf u'\mid y;x)\\
&\qquad=\alpha^{j-1}\int\widehat C_j(dx,dy)
\mathsf T_{j-1,k}^{\mathrm B}(d\mathbf u'\mid y;x)
=\pi_{j-1}(dy)\mathsf G_{j-1}(d\mathbf u'\mid y)e_{j-1,k}^{\mathrm B}.
\end{aligned}
\label{eq:c-indexed-edge-criterion}
\end{equation}
On the left, $\pi_j(dx)\mathsf G_j(d\mathbf u\mid x)$ is the source reference measure, weighted by $e_{j,i}^{\mathrm F}$; $\delta_{Y_i}(dy)$ sets $y=Y_i$, and $\mathsf T_{j-1,k}^{\mathrm B}$ inserts $x$ into slot $k$ and generates the remaining destination entries. Integrating out $(x,\mathbf u)$ leaves a measure on $(y,\mathbf u')$. The first equality integrates the source pools $\mathbf u$; the second uses the exact B weight at $Z'_k=x$. The B move is symmetric.

We first verify \cref{eq:c-unified-pool-laws} without lazy pool evaluation by summing the measures entering an interior F state $(j,x,\mathbf u,\mathrm F)$. At the source of an incoming F move,
\begin{equation*}
\Pi_{\mathrm{pool}}^\alpha(j+1,dx',d\mathbf u',\mathrm F)
\widehat p_{j+1}^{\mathrm F}
=\frac{\pi_{j+1}(dx')\mathsf G_{j+1}(d\mathbf u'\mid x')}{\mathcal Z_\alpha}
\widehat a_{j+1}^{\mathrm F}.
\end{equation*}
Selecting candidate $i$ with probability $e_{j+1,i}^{\mathrm F}/(L_F\widehat a_{j+1}^{\mathrm F})$ and insertion slot $k$ uniformly, then applying \cref{eq:c-indexed-edge-criterion}, gives
\begin{align*}
&\int\Pi_{\mathrm{pool}}^\alpha(j+1,dx',d\mathbf u',\mathrm F)
\bbP(\mathsf{move},dx,d\mathbf u\mid j+1,x',\mathbf u',\mathrm F)\\
&\qquad=\frac{\pi_j(dx)\mathsf G_j(d\mathbf u\mid x)}{\mathcal Z_\alpha}
\sum_{i=1}^{L_F}\sum_{k=1}^{L_B}\frac{e_{j,k}^{\mathrm B}}{L_FL_B}
=\frac{\pi_j(dx)\mathsf G_j(d\mathbf u\mid x)}{\mathcal Z_\alpha}\widehat a_j^{\mathrm B}.
\end{align*}
A B-to-F flip and an F stay leave $(j,x,\mathbf u)$ unchanged, so their incoming measures are
\begin{equation*}
\begin{aligned}
&\Pi_{\mathrm{pool}}^\alpha(j,dx,d\mathbf u,\mathrm B)
(\widehat p_j^{\mathrm F}-\chi\widehat\ell_j)
=\frac{\pi_j(dx)\mathsf G_j(d\mathbf u\mid x)}{\mathcal Z_\alpha}
\left(\widehat a_j^{\mathrm F}-\chi\min\{\widehat a_j^{\mathrm F},\widehat a_j^{\mathrm B}\}\right),\\
&\Pi_{\mathrm{pool}}^\alpha(j,dx,d\mathbf u,\mathrm F)\chi\widehat\ell_j
=\frac{\pi_j(dx)\mathsf G_j(d\mathbf u\mid x)}{\mathcal Z_\alpha}
\chi\min\{\widehat a_j^{\mathrm F},\widehat a_j^{\mathrm B}\}.
\end{aligned}
\end{equation*}
Thus their sum is
\begin{equation*}
\frac{\pi_j(dx)\mathsf G_j(d\mathbf u\mid x)}{\mathcal Z_\alpha}
\left[
\widehat a_j^{\mathrm B}
+\widehat a_j^{\mathrm F}-\chi\min\{\widehat a_j^{\mathrm F},\widehat a_j^{\mathrm B}\}
+\chi\min\{\widehat a_j^{\mathrm F},\widehat a_j^{\mathrm B}\}
\right]
=\Pi_{\mathrm{pool}}^\alpha(j,dx,d\mathbf u,\mathrm F).
\end{equation*}
Interchanging F and B proves stationarity of the B copy.

With lazy pool evaluation, fix the destination state $(j,x,\mathbf u,\mathrm F,k)$. We first compute the total outgoing F-move measure from checkpoint $j+1<J$, summing over its possible B marks $\ell$:
\begin{equation*}
\sum_{\ell=1}^{L_B}
\Pi_{\mathrm{mark}}^\alpha(j+1,dx',d\mathbf u',\mathrm F,\ell)
\frac{\widehat a_{j+1}^{\mathrm F}}{\widehat a_{j+1}^{\mathrm F}+e_{j+1,\ell}^{\mathrm B}}
=\frac{\pi_{j+1}(dx')\mathsf G_{j+1}(d\mathbf u'\mid x')}{\mathcal Z_\alpha}
\widehat a_{j+1}^{\mathrm F}.
\end{equation*}
The same selection and insertion rule, keeping destination mark $k$ fixed, gives by \cref{eq:c-indexed-edge-criterion}
\begin{align*}
&\sum_{\ell=1}^{L_B}\int
\Pi_{\mathrm{mark}}^\alpha(j+1,dx',d\mathbf u',\mathrm F,\ell)
\bbP(\mathsf{move},dx,d\mathbf u,K=k\mid j+1,x',\mathbf u',\mathrm F,\ell)\\
&\qquad=\frac{\pi_j(dx)\mathsf G_j(d\mathbf u\mid x)}{\mathcal Z_\alpha}
\sum_{i=1}^{L_F}\frac{e_{j,k}^{\mathrm B}}{L_FL_B}
=\frac{\pi_j(dx)\mathsf G_j(d\mathbf u\mid x)}{\mathcal Z_\alpha}
\frac{e_{j,k}^{\mathrm B}}{L_B}.
\end{align*}
For a B-to-F flip, the uniform draw of $K=k$ and the conditional flip probability in \cref{eq:cs-vgb-marked-momentum-gate} give
\begin{align*}
&\sum_{i=1}^{L_F}
\Pi_{\mathrm{mark}}^\alpha(j,dx,d\mathbf u,\mathrm B,i)
\bbP(\mathrm B\to\mathrm F,K=k\mid j,x,\mathbf u,I=i)\\
&\qquad=
\frac{\pi_j(dx)\mathsf G_j(d\mathbf u\mid x)}{\mathcal Z_\alpha}
\sum_{i=1}^{L_F}
\frac{\widehat a_j^{\mathrm B}+e_{j,i}^{\mathrm F}}{L_F}
\frac1{L_B}
\frac{e_{j,i}^{\mathrm F}-\chi\min\{e_{j,k}^{\mathrm B},e_{j,i}^{\mathrm F}\}}
{\widehat a_j^{\mathrm B}+e_{j,i}^{\mathrm F}}\\
&\qquad=
\frac{\pi_j(dx)\mathsf G_j(d\mathbf u\mid x)}{\mathcal Z_\alpha}
\frac1{L_FL_B}\sum_{i=1}^{L_F}
\left[e_{j,i}^{\mathrm F}-\chi\min\{e_{j,i}^{\mathrm F},e_{j,k}^{\mathrm B}\}\right].
\end{align*}
An F stay retains mark $k$; averaging over its temporary uniform index gives
\begin{equation*}
\Pi_{\mathrm{mark}}^\alpha(j,dx,d\mathbf u,\mathrm F,k)
\bbP(\mathsf{stay}\mid j,x,\mathbf u,\mathrm F,k)
=
\frac{\pi_j(dx)\mathsf G_j(d\mathbf u\mid x)}{\mathcal Z_\alpha}
\frac1{L_FL_B}\sum_{i=1}^{L_F}
\chi\min\{e_{j,i}^{\mathrm F},e_{j,k}^{\mathrm B}\}.
\end{equation*}
The minimum terms cancel, so adding the incoming move measure gives
\begin{equation}
\frac{\pi_j(dx)\mathsf G_j(d\mathbf u\mid x)}{\mathcal Z_\alpha}
\left[
\frac{e_{j,k}^{\mathrm B}}{L_B}
+\frac1{L_FL_B}\sum_{i=1}^{L_F}e_{j,i}^{\mathrm F}
\right]
=
\Pi_{\mathrm{mark}}^\alpha(j,dx,d\mathbf u,\mathrm F,k).
\label{eq:c-marked-local-balance}
\end{equation}
The B case is symmetric. At a boundary, the forced move and the same insertion calculation give \cref{eq:c-unified-pool-laws} without a mark. Summing over the mark gives
\begin{equation*}
\sum_{k=1}^{L_B}\Pi_{\mathrm{mark}}^\alpha(j,dx,d\mathbf u,\mathrm F,k)
=\frac{\pi_j(dx)\mathsf G_j(d\mathbf u\mid x)}{\mathcal Z_\alpha}
\left(\widehat a_j^{\mathrm F}+\frac1{L_B}\sum_{k=1}^{L_B}e_{j,k}^{\mathrm B}\right)
=\Pi_{\mathrm{pool}}^\alpha(j,dx,d\mathbf u,\mathrm F).
\end{equation*}
The B case is symmetric. Finally, the exact weights satisfy
\begin{equation*}
\begin{aligned}
\pi_j(dx)\int\mathsf G_j(d\mathbf u\mid x)\widehat a_j^{\mathrm F}
&=\alpha^{j-1}\widehat C_j(dx,\mathcal X), \qquad \pi_j(dx)\int\mathsf G_j(d\mathbf u\mid x)\widehat a_j^{\mathrm B} =\alpha^j\widehat C_{j+1}(\mathcal X,dx),
\end{aligned}
\end{equation*}
for each available direction. Thus integrating out the pools recovers the depth-weighted law with normalizer $\mathcal Z_\alpha$. Since the clean edge has weight $\alpha^0=1$,
\begin{equation*}
\Pi_{\mathrm{pool}}^\alpha(0,dx,\mathrm B)
=\frac{\widehat C_1(\mathcal X,dx)}{\mathcal Z_\alpha}
=\frac{r(x)\pi_0(dx)}{\mathcal Z_\alpha},
\qquad
\Pi_{\mathrm{pool}}^\alpha(dx\mid j=0)
=\frac{r(x)\pi_0(dx)}Z=\pi_0^\star(dx).
\end{equation*}
The marked law has the same clean conditional. For $\alpha=1$, integrating out the pools gives $\Pi$ from \cref{thm:c-exact-stationarity}.
\end{proof}

\subsection{Convergence Guarantee for CS-VGB}
\label{app:subsec:cs-vgb-convergence}

Having identified the stationary law, we now ask how quickly the chain approaches it. We relate movement between states to the decay of distributional error and derive rate bounds for the exact kernel $P$, then quantify how process-verifier error affects these bounds. The first hitting time of the clean boundary is treated separately in \cref{app:subsec:cs-vgb-momentum-hitting}.

\paragraph{From a Poincar\'e inequality to convergence.}
For probability measures $\nu\ll\nu'$ on $\mathsf S$, define their \emph{total variation distance} by
\begin{equation}
\|\nu-\nu'\|_{\mathrm{TV}}
:=\sup_{E\subseteq\mathsf S}|\nu(E)-\nu'(E)|
=\frac12\int_{\mathsf S}\left|\frac{d\nu}{d\nu'}-1\right|d\nu'.
\label{eq:c-total-variation}
\end{equation}
For $f\in L^2(\Pi)$, let $\Pi(f)$, $\|f\|_{2,\Pi}$, and $\operatorname{Var}_{\Pi}(f)$ denote its mean, $L^2(\Pi)$ norm, and variance under $\Pi$, respectively. The \emph{Dirichlet form} $\mathcal E_P(f,f)$ measures its mean squared change over one stationary transition:
\begin{equation}
\begin{aligned}
\Pi(f)&:=\int f\,d\Pi,
&\|f\|_{2,\Pi}^2&:=\Pi(f^2),\\
\operatorname{Var}_{\Pi}(f)&:=\|f-\Pi(f)\|_{2,\Pi}^2,
&\mathcal E_P(f,f)&:=\frac12\iint[f(z')-f(z)]^2\,\Pi(dz)P(z,dz').
\end{aligned}
\label{eq:c-outer-dirichlet-form}
\end{equation}
We say that $P$ satisfies a \emph{Poincar\'e inequality} with constant $\gamma>0$ if
\begin{equation}
\mathcal E_P(f,f)\geq\gamma\operatorname{Var}_{\Pi}(f),
\qquad f\in L^2(\Pi).
\label{eq:c-outer-poincare}
\end{equation}
This condition requires the one-step variation to control the stationary variance uniformly over all $f\in L^2(\Pi)$. For the theoretical convergence analysis, we introduce a \emph{lazy kernel} to remove possible periodicity and turn this inequality into a contraction bound:
\begin{equation}
P_{\mathrm L,\eta}:=\frac{\eta I+P}{1+\eta},\qquad \eta>0,
\qquad P_{\mathrm L}:=P_{\mathrm L,1}.
\label{eq:c-mixing-lazy-kernel}
\end{equation}
Here $I$ is the identity kernel, so the chain stays at its current state with probability $\eta/(1+\eta)$ and otherwise applies $P$. This additional self-loop preserves $\Pi$ and is separate from the momentum update's stay action. The mixing bounds below apply to the $1/2$-lazy kernel $P_{\mathrm L}$; this analytical modification does not require adding lazification to the practical algorithm.

\begin{corollary_ap}[Exponential convergence of the outer MCMC]
\label{cor:c-cs-vgb-convergence}
Suppose \cref{eq:c-outer-poincare} holds, and let $\nu$ be an initial distribution satisfying $\nu\ll\Pi$ and $D_\nu:=\|d\nu/d\Pi-1\|_{2,\Pi}<\infty$. For every integer $n\geq0$,
\begin{equation}
\left\|\frac{d(\nu P_{\mathrm L}^n)}{d\Pi}-1\right\|_{2,\Pi}^2
\leq(1-\gamma/2)^nD_\nu^2,\qquad
\|\nu P_{\mathrm L}^n-\Pi\|_{\mathrm{TV}}
\leq\frac{D_\nu}{2}(1-\gamma/2)^{n/2}
\leq\frac{D_\nu}{2}e^{-\gamma n/4}.
\label{eq:c-cs-vgb-convergence}
\end{equation}
Whenever $D_\nu(1-\gamma/2)^{n/2}/2<\Pi(0,\mathcal X,\mathrm B)$, the clean conditional is well-defined and satisfies
\begin{equation}
\left\|
\frac{(\nu P_{\mathrm L}^n)(0,dx,\mathrm B)}
     {(\nu P_{\mathrm L}^n)(0,\mathcal X,\mathrm B)}
-\pi_0^\star(dx)
\right\|_{\mathrm{TV}}
\leq\frac{D_\nu(1-\gamma/2)^{n/2}}{2\Pi(0,\mathcal X,\mathrm B)}
\leq\frac{D_\nu e^{-\gamma n/4}}{2\Pi(0,\mathcal X,\mathrm B)}.
\label{eq:c-cs-vgb-clean-convergence}
\end{equation}
Thus clean-conditional error at most $\delta\in(0,1)$ is guaranteed by
\begin{equation}
n\geq\left\lceil\frac4\gamma
\log\left(\max\left\{1,\frac{D_\nu}{2\Pi(0,\mathcal X,\mathrm B)\delta}\right\}\right)\right\rceil.
\label{eq:c-clean-mixing-budget}
\end{equation}
\end{corollary_ap}

\begin{proof}
Write $Pf(z):=\int f(z')P(z,dz')$, and let $P^*$ be the adjoint of $P$ with respect to the $L^2(\Pi)$ inner product $\langle f,g\rangle_\Pi:=\Pi(fg)$, so that $\langle f,Pg\rangle_\Pi=\langle P^*f,g\rangle_\Pi$. Stationarity and Jensen's inequality give
\begin{align}
\mathcal E_P(f,f)&=\|f\|_{2,\Pi}^2-\langle f,Pf\rangle_\Pi
=\langle f,(I-P)f\rangle_\Pi, \qquad\|Pf\|_{2,\Pi}^2\leq\Pi(P(f^2))=\|f\|_{2,\Pi}^2.
\label{eq:c-markov-l2-contraction}
\end{align}
The adjoint $P^*$ corresponds to the time-reversed kernel under $\Pi$, characterized by $\Pi(dz)P(z,dz')=\Pi(dz')P^*(z',dz)$. It therefore also preserves $\Pi$, and the same Jensen argument gives $\|P^*g\|_{2,\Pi}\leq\|g\|_{2,\Pi}$. Using $P_{\mathrm L}^*=(I+P^*)/2$ and $\langle g,P^*g\rangle_\Pi=\langle g,Pg\rangle_\Pi$, the lazy-chain argument~\citep{fill1991eigenvalue} gives, for $\Pi(g)=0$,
\begin{align}
\|P_{\mathrm L}^*g\|_{2,\Pi}^2
&=\frac14\bigl(\|g\|_{2,\Pi}^2+2\langle g,P^*g\rangle_\Pi+\|P^*g\|_{2,\Pi}^2\bigr)\leq\|g\|_{2,\Pi}^2-\frac12\mathcal E_P(g,g)
\overset{(\circ)}{\leq}(1-\gamma/2)\|g\|_{2,\Pi}^2.
\label{eq:c-adjoint-contraction}
\end{align}
Here $(\circ)$ follows from the Poincar\'e inequality in \cref{eq:c-outer-poincare}, since $\Pi(g)=0$ gives $\operatorname{Var}_\Pi(g)=\|g\|_{2,\Pi}^2$.
By duality, $\int f\,d(\nu P_{\mathrm L})=\langle P_{\mathrm L}^*(d\nu/d\Pi),f\rangle_\Pi$. Since $P_{\mathrm L}^*\mathbf1=\mathbf1$, iteration yields
\begin{equation}
\frac{d(\nu P_{\mathrm L}^n)}{d\Pi}-1
=(P_{\mathrm L}^*)^n\left(\frac{d\nu}{d\Pi}-1\right).
\label{eq:c-density-evolution}
\end{equation}
Applying \cref{eq:c-adjoint-contraction} and Cauchy--Schwarz gives
\begin{equation}
\|\nu P_{\mathrm L}^n-\Pi\|_{\mathrm{TV}}
\leq\frac12\left\|\frac{d(\nu P_{\mathrm L}^n)}{d\Pi}-1\right\|_{2,\Pi}
\leq\frac{D_\nu}{2}(1-\gamma/2)^{n/2}
\leq\frac{D_\nu}{2}e^{-\gamma n/4}.
\label{eq:c-tv-from-l2}
\end{equation}
For the clean bound, put $a:=\Pi(0,\mathcal X,\mathrm B)$ and $b:=(\nu P_{\mathrm L}^n)(0,\mathcal X,\mathrm B)$. The stated condition gives $|a-b|<a$, hence $b>0$. For any two laws with these positive boundary masses,
\begin{equation}
\left\|
\frac{(\nu P_{\mathrm L}^n)(0,dx,\mathrm B)}b
-\frac{\Pi(0,dx,\mathrm B)}a
\right\|_{\mathrm{TV}}
\leq\frac{\|\nu P_{\mathrm L}^n-\Pi\|_{\mathrm{TV}}}{\max\{a,b\}}.
\label{eq:c-clean-conditioning-transfer}
\end{equation}
Indeed, when $b\geq a$ and the conditional difference on $E\subseteq\mathcal X$ is nonnegative,
\begin{align*}
b\left(\frac{(\nu P_{\mathrm L}^n)(0,E,\mathrm B)}b-\frac{\Pi(0,E,\mathrm B)}a\right)
&=(\nu P_{\mathrm L}^n)(0,E,\mathrm B)-\Pi(0,E,\mathrm B)
-\frac{b-a}{a}\Pi(0,E,\mathrm B)\\
&\leq\|\nu P_{\mathrm L}^n-\Pi\|_{\mathrm{TV}}.
\end{align*}
Use $E^c$ for a negative difference and exchange the two laws when $a>b$. Combining \cref{eq:c-clean-conditioning-transfer} with \cref{eq:c-clean-conditional} proves \cref{eq:c-cs-vgb-clean-convergence}; \cref{eq:c-clean-mixing-budget} follows by solving the exponential bound.
\end{proof}

{
\paragraph{A positive CS-VGB Poincar\'e constant.}
\Cref{cor:c-cs-vgb-convergence} expresses the convergence rate in terms of the Poincar\'e constant. We now show when this constant is positive for CS-VGB and how to bound it using the model transitions and process-verifier weights.

We compare the edge-conductance measures ${\widehat C_j}$ used by CS-VGB with the reward-tilted reference edges ${C_j(dx,dy):=\pi_j(dx)K_j^{\mathrm F}(dy\mid x)V_{j-1}(y)}$, $j=1,\ldots,J$. Both use the same denoising kernels ${K_j^{\mathrm F}}$. Recall that the exact continuation values ${V_j:=V_{t_{\kappa_j}}}$ satisfy
\begin{equation}
V_0=r,\qquad V_j=K_j^{\mathrm F}V_{j-1},
\qquad j=1,\ldots,J.
\label{eq:c-mixing-verifier-recursion}
\end{equation}
Weighting a reference trajectory by its terminal reward and normalizing gives the path law $\pi_{0:J}^\star$ and its marginal $\overline C_j$ for the pair $(X_j,X_{j-1})$ at neighboring checkpoints:
\begin{align}
\pi_{0:J}^\star(dx_{0:J})
&:=\frac{r(x_0)}{Z}\pi_J(dx_J)
   \prod_{j=1}^J K_j^{\mathrm F}(dx_{j-1}\mid x_j),
\label{eq:c-mixing-reference-path}\\
\overline C_j(dx,dy)
&:=\frac{C_j(dx,dy)}{Z}
 =\frac{\pi_j(dx)K_j^{\mathrm F}(dy\mid x)V_{j-1}(y)}{Z}.
\label{eq:c-balanced-comparison-edges}
\end{align}
To see why ${\overline C_j}$ is the joint law of ${(X_j,X_{j-1})}$, integrate the path law in \cref{eq:c-mixing-reference-path} over all other states. The part from noise to checkpoint index ${j}$ and the reward-weighted part from checkpoint index ${j-1}$ to clean give, respectively,
\begin{align}
\int_{\mathcal X^{J-j}}\mkern-5mu\pi_J(dx_J)\mkern-3mu\prod_{\ell=j+1}^J\mkern-5mu K_\ell^{\mathrm F}(dx_{\ell-1}\mid x_\ell)&=\pi_j(dx_j),\mkern-2mu
\int_{\mathcal X^{j-1}}\mkern-5mu r(x_0)\mkern-3mu\prod_{\ell=1}^{j-1}\mkern-5mu K_\ell^{\mathrm F}(dx_{\ell-1}\mid x_\ell)=V_{j-1}(x_{j-1}).
\label{eq:c-path-marginalization-steps}
\end{align}
At ${j=J}$ the first expression is simply ${\pi_J(dx_J)}$; at ${j=1}$ the second is $r(x_0)=V_0(x_0)$. The remaining transition is ${K_j^{\mathrm F}(dy\mid x)}$, so the joint marginal is ${\pi_j(dx)K_j^{\mathrm F}(dy\mid x)V_{j-1}(y)/Z=\overline C_j(dx,dy)}$.

To use $C_j$ as a reference for the convergence analysis, we verify that its two marginals are the reward-tilted measures $Z\pi_j^\star$ and $Z\pi_{j-1}^\star$. Integrating out either endpoint and using ${V_j=K_j^{\mathrm F}V_{j-1}}$ and ${\pi_jK_j^{\mathrm F}=\pi_{j-1}}$ gives
\begin{align}
C_j(dx,\mathcal X)&=\pi_j(dx)\int_{\mathcal X}K_j^{\mathrm F}(dy\mid x)V_{j-1}(y)
=\pi_j(dx)V_j(x)=Z\pi_j^\star(dx),\notag\\
C_j(\mathcal X,dy)&=V_{j-1}(y)\int_{\mathcal X}\pi_j(dx)K_j^{\mathrm F}(dy\mid x)
=V_{j-1}(y)\pi_{j-1}(dy)=Z\pi_{j-1}^\star(dy).
\label{eq:c-ideal-edge-marginals}
\end{align}
To measure how the verifier changes these edge weights, assume ${\widehat C_j\ll C_j}$ and define the \emph{edge-weight ratio}
\begin{equation}
R_1:=1,\qquad R_j(x,y):=\frac{d\widehat C_j}{dC_j}(x,y),
\qquad j=2,\ldots,J.
\label{eq:c-average-edge-equivalence}
\end{equation}
Terminal anchoring gives $R_1=1$. For ${j\geq2}$, recall that
\begin{align*}
\widehat C_j(dx,dy)&=\pi_j(dx)K_j^{\mathrm F}(dy\mid x){\sqrt{\widehat V_j(x)\widehat V_{j-1}(y)}}, \qquad C_j(dx,dy) =\pi_j(dx)K_j^{\mathrm F}(dy\mid x)V_{j-1}(y).
\end{align*}
Both measures have the same factor ${\pi_j(dx)K_j^{\mathrm F}(dy\mid x)}$, so their density ratio is the ratio of the remaining weights. Writing the verifier ratio as ${\xi_j(x):=\widehat V_j(x)/V_j(x)}$ wherever ${V_j(x)>0}$ gives
\begin{equation}
{R_j(x,y)=\frac{\sqrt{\widehat V_j(x)}\sqrt{\widehat V_{j-1}(y)}}{V_{j-1}(y)}
=\sqrt{\frac{V_j(x)}{V_{j-1}(y)}}\sqrt{\xi_j(x)\xi_{j-1}(y)},\qquad j\geq2.}
\label{eq:c-local-edge-distortion}
\end{equation}

\paragraph{From edge weights to the stationary measure.}
To compare the stationary law with the reward-tilted reference marginals, we translate the edge ratios $R_j$ into weights on each checkpoint. We first verify that these ratios give a finite normalizing constant. For this purpose, assume
\begin{equation}
0\leq R_j<\infty\quad\overline C_j\text{-almost surely},\qquad
\max_{1\leq j\leq J}\mathbb E_{\overline C_j}[R_j]<\infty.
\label{eq:c-average-edge-first-moment}
\end{equation}
Recall from \cref{eq:c-incident-measures} that ${\mu_j(dx):=\pi_j(dx)\sbra{a_j^{\mathrm F}(x)+a_j^{\mathrm B}(x)}}$ for ${1\leq j<J}$: it adds the incident masses in the two directions. By \cref{eq:c-incident-direction},
\begin{equation*}
\pi_j(dx)a_j^{\mathrm F}(x)=\widehat C_j(dx,\mathcal X),\qquad
\pi_j(dy)a_j^{\mathrm B}(y)=\widehat C_{j+1}(\mathcal X,dy).
\end{equation*}
The first term collects edges toward checkpoint index ${j-1}$, and the second collects edges toward ${j+1}$. Adding them, with only one incident edge at each boundary, gives
\begin{equation}
\mu_0=\widehat C_1(\mathcal X,\cdot),\qquad
\mu_J=\widehat C_J(\cdot,\mathcal X),\qquad
\mu_j=\widehat C_j(\cdot,\mathcal X)+\widehat C_{j+1}(\mathcal X,\cdot),\quad 1\leq j<J.
\label{eq:c-mixing-incident-measures}
\end{equation}
Recall that $Z=\int_{\mathcal X}r(x)\pi_0(dx)$ normalizes the reward-tilted target, while $\calZ$ normalizes the full stationary measure $\Pi$. Since ${\widehat C_j=ZR_j\overline C_j}$, each edge has total mass
\begin{equation*}
\widehat C_j(\mathcal X,\mathcal X)=Z\int R_j\,d\overline C_j
=Z\mathbb E_{\overline C_j}[R_j]
\leq Z\max_{1\leq \ell\leq J}\mathbb E_{\overline C_\ell}[R_\ell].
\end{equation*}
Each boundary has one incident edge; each interior checkpoint has two, counted once for each direction. There are therefore ${2+4(J-1)=4J-2}$ edge contributions, giving
\begin{equation}
0<\calZ=\mu_0(\mathcal X)+2\sum_{j=1}^{J-1}\mu_j(\mathcal X)+\mu_J(\mathcal X)
\leq(4J-2)Z\max_{1\leq j\leq J}\mathbb E_{\overline C_j}[R_j]<\infty.
\label{eq:c-average-normalizer-bound}
\end{equation}
This proves that $\Pi$ has a finite normalizing constant. We now express the stationary measure $\Pi$ as a weighted version of the checkpoint index targets ${\pi_j^\star}$. This allows us to analyze $\Pi$ using the reward-tilted reference path. To do so, define the density ${w_j}$ of ${\mu_j}$ relative to ${Z\pi_j^\star}$:
\begin{equation}
w_j:=\frac{d\mu_j}{Z\,d\pi_j^\star}.
\label{eq:c-average-incident-density}
\end{equation}
To compute ${w_j}$, recall that ${\overline C_j}$ is the joint law of ${(X_j,X_{j-1})}$ and has marginals ${\pi_j^\star}$ and ${\pi_{j-1}^\star}$. Thus, writing ${R_j=R_j(X_j,X_{j-1})}$ and taking conditional expectations under ${\pi_{0:J}^\star}$,
\begin{align*}
\widehat C_j(dx,\mathcal X)&=Z\int_{\mathcal X}R_j(x,y)\overline C_j(dx,dy)
=Z\pi_j^\star(dx)\bbE_{\overline C_j}[R_j\mid X_j=x],\\
\widehat C_{j+1}(\mathcal X,dy)&=Z\int_{\mathcal X}R_{j+1}(x,y)\overline C_{j+1}(dx,dy)
=Z\pi_j^\star(dy)\bbE_{\overline C_{j+1}}[R_{j+1}\mid X_j=y].
\end{align*}
At an interior checkpoint ${1\leq j<J}$, ${\mu_j}$ is the sum of these two marginals. Dividing by ${Z\pi_j^\star}$ adds their conditional expectations. At the boundaries there is only one term, and $R_1=1$, giving
\begin{equation}
w_0=1,\quad w_J(X_J)=\bbE_{\overline C_J}[R_J\mid X_J],\quad w_j(X_j)=\bbE_{\overline C_j}[R_j\mid X_j]+\bbE_{\overline C_{j+1}}[R_{j+1}\mid X_j].
\label{eq:c-average-incident-identities}
\end{equation}
Substituting ${\mu_j=Zw_j\pi_j^\star}$ into ${\Pi(j,dx,d)=\mu_j(dx)/\calZ}$ gives
\[
{\Pi(j,dx,d)=\frac{Zw_j(x)}{\calZ}\pi_j^\star(dx).}
\]
This expresses $\Pi$ as a weighted version of the reward-tilted marginals $\pi_j^\star$, with weights $w_j$. We use this representation below to relate the variance under $\Pi$ to the reward-tilted reference path in the Poincar\'e analysis.

\paragraph{A condition ensuring that states are connected.}
Finite total mass ensures that $\Pi$ is a probability measure, but does not ensure convergence to it. If the chain cannot leave a region $A$ with $0<\Pi(A)<1$, a chain started in $A$ cannot approach $\Pi$. We therefore require the denoising and renoising moves to cross every such region. For a measurable set $A\subseteq\mathsf S$, let ${A_j^d:=\{x:(j,x,d)\in A\}}$, using $A_0^{\mathrm F}=A_0^{\mathrm B}$ and ${A_J^{\mathrm B}=A_J^{\mathrm F}}$ at the boundaries. The condition is
\begin{equation}
0<\Pi(A)<1\quad\Longrightarrow\quad
\sum_{j=1}^J\sum_{d\in\{\mathrm F,\mathrm B\}}
\iint [\mathbf1_{A_j^d}(x)-\mathbf1_{A_{j-1}^d}(y)]^2\,\widehat C_j(dx,dy)>0.
\label{eq:c-movement-connectedness}
\end{equation}
The integral counts the edge mass crossing between $A$ and its complement. The following lemma gives a sufficient condition in terms of the reference path.

\begin{lemma_ap}[Connectedness from the reference endpoints]
\label{lem:c-endpoint-connectedness}
Suppose
\begin{equation}
0<R_j<\infty\qquad\overline C_j\text{-almost surely},\qquad j=1,\ldots,J,
\label{eq:c-positive-edge-distortion}
\end{equation}
and, for every $u\in L^2(\pi_0^\star)$ and ${v\in L^2(\pi_J^\star)}$,
\begin{equation}
u(X_0)=v(X_J)\quad\pi_{0:J}^\star\text{-almost surely}
\quad\Longrightarrow\quad\operatorname{Var}_{\pi_0^\star}(u)=0.
\label{eq:c-no-common-endpoint-function}
\end{equation}
Then \cref{eq:c-movement-connectedness} holds for every measurable $A\subseteq\mathsf S$ with $0<\Pi(A)<1$.
\end{lemma_ap}

\begin{proof}
Suppose the sum in \cref{eq:c-movement-connectedness} vanishes. Each nonnegative term then vanishes, so ${\widehat C_j=ZR_j\overline C_j}$ gives
\begin{equation*}
0=\int\sbra{\mathbf1_{A_j^d}(x)-\mathbf1_{A_{j-1}^d}(y)}^2\widehat C_j(dx,dy)
=Z\int\sbra{\mathbf1_{A_j^d}(x)-\mathbf1_{A_{j-1}^d}(y)}^2R_j(x,y)\overline C_j(dx,dy).
\end{equation*}
Since ${R_j>0}$ ${\overline C_j}$-almost surely by \cref{eq:c-positive-edge-distortion}, the two indicators agree ${\overline C_j}$-almost surely. Moreover, ${\overline C_j}$ is the law of ${(X_j,X_{j-1})}$ under ${\pi_{0:J}^\star}$, hence
\begin{equation*}
\pi_{0:J}^\star\sbra{\mathbf1_{A_j^d}(X_j)\neq\mathbf1_{A_{j-1}^d}(X_{j-1})}
=\int\mathbf1_{\{\mathbf1_{A_j^d}(x)\neq\mathbf1_{A_{j-1}^d}(y)\}}\overline C_j(dx,dy)=0.
\end{equation*}
Therefore,
\begin{equation*}
\mathbf1_{A_j^d}(X_j)=\mathbf1_{A_{j-1}^d}(X_{j-1})
\quad\pi_{0:J}^\star\text{-almost surely}
\end{equation*}
for every ${j}$ and $d$. Following these equalities along the path shows that ${\mathbf1_{A_0}(X_0)=\mathbf1_{A_J}(X_J)}$ almost surely. By \cref{eq:c-no-common-endpoint-function}, the variance of $\mathbf1_{A_0}(X_0)$ is zero. This indicator is therefore either zero almost surely or one almost surely. Equality along each edge gives that same value at every checkpoint index and in both directions. Since ${\Pi(j,dx,d)=Zw_j(x)\pi_j^\star(dx)/\calZ}$, a set of zero ${\pi_j^\star}$-measure also has zero stationary mass at that checkpoint index. Thus, in the first case $\Pi(A)=0$, and in the second case $\Pi(\mathsf S\setminus A)=0$, so $\Pi(A)=1$. Consequently, \cref{eq:c-movement-connectedness} holds for every measurable $A\subseteq\mathsf S$ with $0<\Pi(A)<1$.
\end{proof}

The next lemma gives a sufficient condition for \cref{eq:c-no-common-endpoint-function} in terms of the maximal correlation between the clean and noise endpoints.
\begin{lemma_ap}[An endpoint correlation condition]
\label{lem:c-endpoint-correlation-condition}
Suppose that, for all $u\in L^2(\pi_0^\star)$ and ${v\in L^2(\pi_J^\star)}$,
\begin{equation}
\left|\operatorname{Cov}_{\pi_{0:J}^\star}
\bigl(u(X_0),v(X_J)\bigr)\right|
\leq\rho_{\mathrm{end}}
\sqrt{\operatorname{Var}_{\pi_0^\star}(u)\operatorname{Var}_{\pi_J^\star}(v)},
\qquad 0\leq\rho_{\mathrm{end}}<1.
\label{eq:c-endpoint-correlation}
\end{equation}
Then \cref{eq:c-no-common-endpoint-function} holds. In particular, \cref{eq:c-endpoint-correlation} holds with $\rho_{\mathrm{end}}=0$ if the clean and noise endpoints are independent under the reference path ${\pi_J(dx_J)\prod_{j=1}^J K_j^{\mathrm F}(dx_{j-1}\mid x_j)}$.
\end{lemma_ap}
\begin{proof}
Let ${u(X_0)=v(X_J)}$ ${\pi_{0:J}^\star}$-almost surely. The two random variables then have the same mean and variance, and
\begin{equation*}
\operatorname{Cov}_{\pi_{0:J}^\star}\bigl(u(X_0),v(X_J)\bigr)
=\operatorname{Var}_{\pi_0^\star}(u)=\operatorname{Var}_{\pi_J^\star}(v).
\end{equation*}
Substituting these identities into \cref{eq:c-endpoint-correlation} gives
\begin{equation*}
\operatorname{Var}_{\pi_0^\star}(u)\leq\rho_{\mathrm{end}}\operatorname{Var}_{\pi_0^\star}(u),\qquad
(1-\rho_{\mathrm{end}})\operatorname{Var}_{\pi_0^\star}(u)\leq0.
\end{equation*}
Since $\rho_{\mathrm{end}}<1$ and variance is nonnegative, $\operatorname{Var}_{\pi_0^\star}(u)=0$, proving \cref{eq:c-no-common-endpoint-function}.

For the last claim, independence under this reference path gives the endpoint law ${\pi_0(dx_0)\pi_J(dx_J)}$. Weighting by $r(x_0)/Z$ therefore gives
\begin{align*}
\pi_{0:J}^\star(X_0\in dx_0,X_J\in dx_J)
&=\frac{r(x_0)}{Z}\int_{\mathcal X^{J-1}}\pi_J(dx_J)\prod_{j=1}^J K_j^{\mathrm F}(dx_{j-1}\mid x_j)\\
&=\frac{r(x_0)}{Z}\pi_0(dx_0)\pi_J(dx_J)
=\pi_0^\star(dx_0)\pi_J(dx_J).
\end{align*}
Thus the endpoints remain independent under ${\pi_{0:J}^\star}$, so their covariance is zero for every such $u,v$.
\end{proof}

Condition \cref{eq:c-endpoint-correlation} bounds the correlation between all square-integrable functions of the clean state $X_0$ and the top-checkpoint state $X_J$ strictly below one under $\pi_{0:J}^\star$. In a noising interpretation, this rules out preserving any nonconstant square-integrable function of the clean state without error at the top checkpoint.

Finally, we require the squared edge density to have a finite integral after accounting for the mass at both endpoints. For ${j=1,\ldots,J}$, assume
\begin{equation}
\widehat C_j\ll\mu_j\otimes\mu_{j-1},\qquad
h_j:=\frac{d\widehat C_j}{d(\mu_j\otimes\mu_{j-1})},\qquad
\mathcal I_j:=\iint h_j(x,y)^2\,\mu_j(dx)\mu_{j-1}(dy)<\infty.
\label{eq:c-normalized-edge-integrability}
\end{equation}
In the proof below, write ${f_j^d(x):=f(j,x,d)}$, with the boundary convention
\begin{equation*}
f_0^{\mathrm F}=f_0^{\mathrm B}=f_0:=f(0,\cdot,\mathrm B),\qquad
f_J^{\mathrm F}=f_J^{\mathrm B}=f_J:=f(J,\cdot,\mathrm F).
\end{equation*}
The next lemma is a consequence of the Fredholm alternative~\citep{brezis2011functional}; we give a direct proof for completeness.
\begin{lemma_ap}[From compactness to a positive Poincar\'e constant]
\label{lem:c-compact-movement-gap}
Let $M$ be a bounded, self-adjoint, compact operator on $L^2(\Pi)$, and let $c$ satisfy $0<c_0\leq c\leq c_1<\infty$ $\Pi$-almost surely. Define $Lf:=cf-Mf$. Suppose
\begin{equation*}
\langle f,Lf\rangle_\Pi\geq0\quad\text{for every }f\in L^2(\Pi),
\qquad \ker L=\operatorname{span}\{\mathbf1\}.
\end{equation*}
Then $\langle f,Lf\rangle_\Pi\geq\gamma\operatorname{Var}_\Pi(f)$ for some $\gamma>0$ and every $f\in L^2(\Pi)$.
\end{lemma_ap}
\begin{proof}
Since $L\mathbf1=0$, it suffices to consider mean-zero unit vectors $f$, i.e., $\Pi(f)=0$ and $\|f\|_{2,\Pi}=1$. If the conclusion failed, there would be $f_n$ with
\begin{equation*}
\Pi(f_n)=0,\qquad \|f_n\|_{2,\Pi}=1,\qquad
\langle f_n,Lf_n\rangle_\Pi\longrightarrow0.
\end{equation*}
Positivity of $\langle h+tf,L(h+tf)\rangle_\Pi$ for every real $t$ gives Cauchy--Schwarz for the form $\langle h,Lf\rangle_\Pi$, i.e., $|\langle h,Lf\rangle_\Pi|^2\leq\langle h,Lh\rangle_\Pi\langle f,Lf\rangle_\Pi$. Taking $h=Lf$ yields
\begin{equation*}
\|Lf\|_{2,\Pi}^4
\leq\langle Lf,L(Lf)\rangle_\Pi\langle f,Lf\rangle_\Pi
\leq\|L\|\|Lf\|_{2,\Pi}^2\langle f,Lf\rangle_\Pi,
\qquad \|Lf_n\|_{2,\Pi}\longrightarrow0.
\end{equation*}
Now, we recall that compactness gives a subsequence with ${Mf_{n_h}\to v}$. Since $c^{-1}\leq c_0^{-1}$,
\begin{equation*}
f_{n_h}=c^{-1}(Mf_{n_h}+Lf_{n_h})
\longrightarrow f:=c^{-1}v
\quad\text{in }L^2(\Pi).
\end{equation*}
By continuity, $\Pi(f)=0$, $\|f\|_{2,\Pi}=1$, and $Lf=0$. The first two identities imply that $f$ is nonconstant, since a mean-zero constant has norm zero. Together with $Lf=0$, this contradicts $\ker L=\operatorname{span}\{\mathbf1\}$.
\end{proof}

\Needspace{5\baselineskip}
We now apply this lemma to the exact CS-VGB kernel.
\begin{theorem_ap}[A positive Poincar\'e constant for CS-VGB]
\label{thm:c-average-moment-poincare}
Let $P$ be the exact CS-VGB kernel with $0<\calZ<\infty$. Suppose \cref{eq:c-movement-connectedness,eq:c-normalized-edge-integrability} hold. Then there is a $\gamma_{\mathrm{CS}}\in(0,1]$, independent of $\chi$, such that
\begin{equation}
\mathcal E_P(f,f)\geq\gamma_{\mathrm{CS}}\operatorname{Var}_\Pi(f),
\qquad f\in L^2(\Pi),\qquad\chi\in[0,1].
\label{eq:c-average-positive-poincare}
\end{equation}
Consequently, the full-state and clean-conditional convergence bounds in \cref{cor:c-cs-vgb-convergence} hold with $\gamma=\gamma_{\mathrm{CS}}$.
\end{theorem_ap}

\begin{proof}

We apply \cref{lem:c-compact-movement-gap} to the movement part of $P$. Let $P_{\mathrm{move}}$ retain only checkpoint moves with their original probabilities, and let $P_{\mathrm{move}}^*$ be its adjoint in $L^2(\Pi)$. Define
\begin{equation}
M:=\frac{P_{\mathrm{move}}+P_{\mathrm{move}}^*}{2},\qquad
{c:=M\mathbf1,\qquad}
\mathcal L_{\mathrm{move}}f:=cf-Mf.
\label{eq:c-average-movement-operator}
\end{equation}
For $1\leq j<J$, the outgoing movement mass is
\begin{equation*}
(P_{\mathrm{move}}\mathbf1)(j,x,d)
=P_{\mathrm{move}}((j,x,d),\mathsf S)=p_j^d(x).
\end{equation*}
By the adjoint identity $\langle P_{\mathrm{move}}^*\mathbf1,f\rangle_\Pi=\langle\mathbf1,P_{\mathrm{move}}f\rangle_\Pi$, the incoming mass at an interior state $z=(j,x,d)$, $1\leq j<J$, can be written as
\begin{equation*}
(P_{\mathrm{move}}^*\mathbf1)(z)\Pi(dz)
=(\Pi P_{\mathrm{move}})(dz)
=\int\Pi(dz')P_{\mathrm{move}}(z',dz)
=\bigl(1-p_j^d(x)\bigr)\Pi(dz).
\end{equation*}
Indeed, arrivals in directions $\mathrm F$ and $\mathrm B$ come from $j+1$ and $j-1$, respectively. With $p_J^{\mathrm F}=p_0^{\mathrm B}=1$ for forced boundary moves, \cref{eq:c-moving-flux,eq:c-incident-direction} gives
\begin{align*}
(P_{\mathrm{move}}^*\mathbf1)(j,x,\mathrm F)\Pi(j,dx,\mathrm F)
&=\int_{\mathcal X}
\underbrace{\Pi(j+1,dy,\mathrm F)}_{\mu_{j+1}(dy)/\calZ}
\underbrace{p_{j+1}^{\mathrm F}(y)}_{\text{move probability}}
\underbrace{Q_{j+1}^{\mathrm F}(dx\mid y)}_{\text{destination law}}\\
&=\frac1{\calZ}\int_{\mathcal X}\mu_{j+1}(dy)p_{j+1}^{\mathrm F}(y)Q_{j+1}^{\mathrm F}(dx\mid y)\\
&=\frac{\widehat C_{j+1}(\mathcal X,dx)}{\calZ}
=p_j^{\mathrm B}(x)\frac{\mu_j(dx)}{\calZ},\\
(P_{\mathrm{move}}^*\mathbf1)(j,x,\mathrm B)\Pi(j,dx,\mathrm B)
&=\int_{\mathcal X}
\underbrace{\Pi(j-1,dy,\mathrm B)}_{\mu_{j-1}(dy)/\calZ}
\underbrace{p_{j-1}^{\mathrm B}(y)}_{\text{move probability}}
\underbrace{Q_{j-1}^{\mathrm B,\rho}(dx\mid y)}_{\text{destination law}}\\
&=\frac1{\calZ}\int_{\mathcal X}\mu_{j-1}(dy)p_{j-1}^{\mathrm B}(y)Q_{j-1}^{\mathrm B,\rho}(dx\mid y)\\
&=\frac{\widehat C_j(dx,\mathcal X)}{\calZ}
=p_j^{\mathrm F}(x)\frac{\mu_j(dx)}{\calZ}.
\end{align*}
Since $\Pi(j,dx,d)=\mu_j(dx)/\calZ$, these identities give, almost surely on each stationary component,
\begin{equation*}
(P_{\mathrm{move}}\mathbf1)(j,x,d)=p_j^d(x),\qquad
(P_{\mathrm{move}}^*\mathbf1)(j,x,d)=1-p_j^d(x).
\end{equation*}
At the boundaries both quantities equal one, with the direction reset inward after arrival. Therefore
\begin{equation}
c(j,x,d)=
\begin{cases}
1,&j\in\{0,J\},\\
1/2,&1\leq j<J.
\end{cases}
\label{eq:c-average-movement-degree}
\end{equation}
The kernel $M$ is nonnegative and symmetric with respect to $\Pi$, with $M\mathbf1=c$. Thus
\begin{equation*}
|Mf(z)|^2=\left|\int f(z')M(z,dz')\right|^2
\overset{(\circ)}{\leq}\left(\int M(z,dz')\right)\left(\int f(z')^2M(z,dz')\right)
=c(z)M(f^2)(z),
\end{equation*}
where $(\circ)$ is Cauchy--Schwarz. Integrating with respect to $\Pi(dz)$ gives
\begin{equation*}
\|Mf\|_{2,\Pi}^2\leq\Pi\bigl(cM(f^2)\bigr)
\overset{(\triangle)}{\leq}\Pi\bigl(M(f^2)\bigr)
\overset{(\square)}{=}\Pi(cf^2)
\overset{(\triangle)}{\leq}\Pi(f^2)=\|f\|_{2,\Pi}^2,
\end{equation*}
where $(\triangle)$ uses $c\leq1$, and $(\square)$ follows from the symmetry of $M$ and $M\mathbf1=c$.
Thus $M$ is bounded and self-adjoint. To compute its movement energy, use $M\mathbf1=c$:
\begin{equation*}
\iint f(z)^2\,\Pi(dz)M(z,dz')
=\int f(z)^2c(z)\Pi(dz)=\Pi(cf^2).
\end{equation*}
Symmetry and the definition of $Mf$ give the other two terms:
\begin{align*}
\iint f(z')^2\,\Pi(dz)M(z,dz')
&=\iint f(z)^2\,\Pi(dz)M(z,dz')=\Pi(cf^2),\\
\iint f(z)f(z')\,\Pi(dz)M(z,dz')
&=\int f(z)Mf(z)\Pi(dz)=\langle f,Mf\rangle_\Pi.
\end{align*}
On each edge, \cref{eq:c-moving-flux} gives
\begin{align*}
\Pi(j,dx,\mathrm F)p_j^{\mathrm F}(x)Q_j^{\mathrm F}(dy\mid x)
&=\frac{\mu_j(dx)p_j^{\mathrm F}(x)Q_j^{\mathrm F}(dy\mid x)}{\calZ}
=\frac{\widehat C_j(dx,dy)}{\calZ},\\
\Pi(j-1,dy,\mathrm B)p_{j-1}^{\mathrm B}(y)Q_{j-1}^{\mathrm B,\rho}(dx\mid y)
&=\frac{\mu_{j-1}(dy)p_{j-1}^{\mathrm B}(y)Q_{j-1}^{\mathrm B,\rho}(dx\mid y)}{\calZ}
=\frac{\widehat C_j(dx,dy)}{\calZ}.
\end{align*}
Since $[f(z)-f(z')]^2$ is symmetric, its integrals against the stationary edge measures of $P_{\mathrm{move}}$ and $P_{\mathrm{move}}^*$ agree. Hence
\begin{align}
\langle f,\mathcal L_{\mathrm{move}}f\rangle_\Pi
&=\Pi(cf^2)-\langle f,Mf\rangle_\Pi
=\frac12\iint\sbra{f(z)^2+f(z')^2-2f(z)f(z')}\,\Pi(dz)M(z,dz')\notag\\
&=\frac12\iint\sbra{f(z)-f(z')}^2\,\Pi(dz)M(z,dz')\notag\\
&=\frac12\iint\sbra{f(z)-f(z')}^2\,\Pi(dz)P_{\mathrm{move}}(z,dz')\notag\\
&=\frac{1}{2\calZ}\sum_{j=1}^J\sum_{d\in\{\mathrm F,\mathrm B\}}
\iint\sbra{f_j^d(x)-f_{j-1}^d(y)}^2\,\widehat C_j(dx,dy)\notag\\
&\overset{(\circ)}{\leq}\mathcal E_P(f,f),
\label{eq:c-movement-energy-lower-bound}
\end{align}
where $(\circ)$ drops the nonnegative flip contributions from \cref{eq:c-outer-dirichlet-form}; stays contribute zero.

To establish compactness, write $M(z,dz')=m(z,z')\Pi(dz')$. For $z=(j,x,d)$ and $z'=(j-1,y,d)$, averaging $P_{\mathrm{move}}$ and its adjoint gives
\begin{equation}
\Pi(dz)M(z,dz')\mkern-6mu=\mkern-6mu\frac{\widehat C_j(dx,dy)}{2\calZ}\mkern-6mu=\mkern-6mu m(z,z')\Pi(dz)\Pi(dz')\mkern-6mu=\mkern-6mu m\bigl((j,x,d),(j-1,y,d)\bigr)\frac{\mu_j(dx)\mu_{j-1}(dy)}{\calZ^2}.
\label{eq:c-symmetric-movement-edge-law}
\end{equation}
Using the definition of $h_j$ in \cref{eq:c-normalized-edge-integrability}, we obtain
\begin{align*}
m\bigl((j,x,d),(j-1,y,d)\bigr)\frac{\mu_j(dx)\mu_{j-1}(dy)}{\calZ^2}
&=\frac{\widehat C_j(dx,dy)}{2\calZ} =\frac{h_j(x,y)}{2\calZ}\mu_j(dx)\mu_{j-1}(dy).
\end{align*}
Comparing densities with respect to $\mu_j\otimes\mu_{j-1}$, and using the symmetry of $M$, gives almost everywhere
\begin{equation*}
m\bigl((j,x,d),(j-1,y,d)\bigr)=\frac{\calZ}{2}h_j(x,y),
\qquad m(z,z')=m(z',z),
\end{equation*}
with the boundary directions identified as before. Using $\mathcal I_j$ defined in \cref{eq:c-normalized-edge-integrability}, each direction and orientation contributes
\begin{equation*}
\iint\left(\frac{\calZ}{2}h_j(x,y)\right)^2
\frac{\mu_j(dx)}{\calZ}\frac{\mu_{j-1}(dy)}{\calZ}
=\frac14\mathcal I_j.
\end{equation*}
Summing over the two direction labels and both orientations of each edge in $M$ gives
\begin{equation}
\iint m(z,z')^2\,\Pi(dz)\Pi(dz')
=\sum_{j=1}^J4\left(\frac{\mathcal I_j}{4}\right)=\sum_{j=1}^J\mathcal I_j<\infty.
\label{eq:c-movement-kernel-square-integrability}
\end{equation}
Finite sums of products are dense in $L^2(\Pi\otimes\Pi)$~\citep{tao2010epsilon}, so we can choose finite-rank approximations
\begin{align*}
m_N(z,z')&=\sum_{\ell=1}^{r_N}a_{N,\ell}(z)b_{N,\ell}(z'),\qquad
\|m-m_N\|_{L^2(\Pi\otimes\Pi)}\longrightarrow0,\\
M_Nf(z)&:=\int m_N(z,z')f(z')\Pi(dz')=\sum_{\ell=1}^{r_N}a_{N,\ell}(z)\int b_{N,\ell}(z')f(z')\Pi(dz').
\end{align*}
The range of $M_N$ lies in $\operatorname{span}\{a_{N,1},\ldots,a_{N,r_N}\}$, so $M_N$ has finite rank. For every $f\in L^2(\Pi)$, Cauchy--Schwarz gives
\begin{align*}
\|(M-M_N)f\|_{2,\Pi}^2
&\mkern-5mu=\mkern-5mu\int\mkern-5mu\left|\int\mkern-5mu\sbra{m(z,z')\mkern-3mu-\mkern-3mu m_N(z,z')}\mkern-3mu f(z')\mkern-1mu\Pi(dz')\right|^2\mkern-5mu\Pi(dz)\mkern-5mu\leq\mkern-5mu\|m\mkern-3mu-\mkern-3mu m_N\|_{L^2(\Pi\otimes\Pi)}^2\mkern-3mu\|f\|_{2,\Pi}^2.
\end{align*}
Hence $\|M-M_N\|\to0$. For any sequence with $\sup_n\|f_n\|_{2,\Pi}\leq B$, finite dimensionality gives a convergent subsequence of $(M_Nf_n)$ for each $N$~\citep{kreyszig1978functional}. A diagonal subsequence makes $M_Nf_{n_h}$ converge for every fixed $N$, and
\begin{equation*}
\|Mf_{n_h}-Mf_{n_\ell}\|_{2,\Pi}
\leq2B\|M-M_N\|
+\|M_Nf_{n_h}-M_Nf_{n_\ell}\|_{2,\Pi}.
\end{equation*}
Choosing $N$ large and then $h,\ell$ large makes both terms arbitrarily small. Thus $(Mf_{n_h})$ is Cauchy in $L^2(\Pi)$, which is complete, and hence converges, proving compactness.

It remains to show that $\mathcal L_{\mathrm{move}}f=0$ holds only for constant functions. Suppose $\mathcal L_{\mathrm{move}}f=0$. Then $\langle f,\mathcal L_{\mathrm{move}}f\rangle_\Pi=0$, so \cref{eq:c-movement-energy-lower-bound} implies
\begin{equation}
f_j^d(x)=f_{j-1}^d(y)\qquad\widehat C_j\text{-almost everywhere}
\label{eq:c-average-zero-energy}
\end{equation}
for every $j,d$. If $f$ is not constant $\Pi$-almost surely, there is an $a\in\mathbb R$ such that $A:=\{z:f(z)>a\}$ satisfies $0<\Pi(A)<1$. But \cref{eq:c-average-zero-energy} then gives
\begin{equation*}
\sum_{j=1}^J\sum_{d\in\{\mathrm F,\mathrm B\}}
\iint\sbra{\mathbf1_{A_j^d}(x)-\mathbf1_{A_{j-1}^d}(y)}^2\widehat C_j(dx,dy)=0,
\end{equation*}
contradicting \cref{eq:c-movement-connectedness}. Since $\mathcal L_{\mathrm{move}}\mathbf1=c-M\mathbf1=0$, we conclude
\begin{equation*}
\ker\mathcal L_{\mathrm{move}}=\operatorname{span}\{\mathbf1\}.
\end{equation*}
Now \cref{lem:c-compact-movement-gap} applies with $1/2\leq c\leq1$. Hence there exists $\gamma_{\mathrm{CS}}>0$, reduced to at most one if necessary, such that
\begin{equation*}
\mathcal E_P(f,f)\geq\langle f,\mathcal L_{\mathrm{move}}f\rangle_\Pi
\geq\gamma_{\mathrm{CS}}\operatorname{Var}_\Pi(f).
\end{equation*}
Neither $P_{\mathrm{move}}$ nor $\Pi$ depends on $\chi$, so the same $\gamma_{\mathrm{CS}}$ works for every $\chi\in[0,1]$. Applying \cref{cor:c-cs-vgb-convergence} completes the proof.

\end{proof}

\paragraph{Sufficient conditions for a positive Poincar\'e constant.}
\Cref{thm:c-average-moment-poincare} guarantees convergence under the connectedness and integrability conditions in \cref{eq:c-movement-connectedness,eq:c-normalized-edge-integrability}. To see when these conditions hold for the kernels used by CS-VGB, we consider Gaussian renoising paired with its exact denoising reversal. The following result verifies both conditions using an exponential-moment bound on the clean distribution and terminal reward, together with bounds on the verifier values.

\begin{corollary_ap}[Ideal paired Gaussian transitions]
\label{cor:c-ideal-paired-gaussian-mixing}
Take $\mathcal X=\mathbb R^d$ and a fixed grid with ${J\geq2}$. Suppose the reference kernels and their marginals satisfy
\begin{equation}
\pi_j(dx)K_j^{\mathrm F}(dy\mid x)=\pi_{j-1}(dy)K_{j-1}^{\mathrm B}(dx\mid y),\qquad
K_{j-1}^{\mathrm B}(dx\mid y)=\mathcal N(dx;a_jy,\sigma_j^2I),
\label{eq:c-ideal-gaussian-pairing}
\end{equation}
where ${a_j\in\mathbb R}$ and ${0<\sigma_j<\infty}$ for ${j=1,\ldots,J}$. Write ${A_j:=\prod_{\ell=1}^j a_\ell}$ for the cumulative signal scale. Assume $r\geq0$ and $Z=\int r\,d\pi_0\in(0,\infty)$, and suppose that, for some $c>0$,
\begin{equation}
\mathbb E_{\pi_0}\sbra{(1+r(X_0))e^{c\|X_0\|^2}}<\infty,\qquad c\geq\max_{1\leq j\leq J}\frac{2A_j^2}{\sigma_j^2}.
\label{eq:c-ideal-gaussian-tail-moment}
\end{equation}
This permits unbounded clean support and controls the tails of both $\pi_0$ and $\pi_0^\star$. For some $v_{\min}\in(0,1]$, also assume
\begin{equation}
v_{\min}\leq\widehat V_j(x)\leq1\quad\pi_j\text{-almost surely},\qquad j=1,\ldots,J,\qquad \widehat V_0=r.
\label{eq:c-ideal-gaussian-verifier-range}
\end{equation}
Then the conditions in \cref{eq:c-movement-connectedness,eq:c-normalized-edge-integrability} hold. Therefore, there exists $\gamma_{\mathrm{CS}}\in(0,1]$, independent of the directional persistence parameter $\chi$, such that
\begin{equation}
\mathcal E_P(f,f)\geq\gamma_{\mathrm{CS}}\operatorname{Var}_\Pi(f),\qquad
\|\nu P_{\mathrm L}^n-\Pi\|_{\mathrm{TV}}\leq\frac{D_\nu}{2}e^{-\gamma_{\mathrm{CS}}n/4},
\label{eq:c-ideal-gaussian-exponential-mixing}
\end{equation}
for $f\in L^2(\Pi)$, $\chi\in[0,1]$, and every initial law $\nu\ll\Pi$ with $D_\nu:=\|d\nu/d\Pi-1\|_{2,\Pi}<\infty$. In particular, the initialization ${\nu_J=\delta_J\otimes\pi_J\otimes\delta_{\mathrm F}}$ satisfies
\begin{equation}
D_{\nu_J}^2\leq\frac{\calZ}{{v_{\min}}}-1<\infty.
\label{eq:c-ideal-gaussian-noise-start}
\end{equation}

\end{corollary_ap}

\begin{proof}
We verify \cref{eq:c-normalized-edge-integrability,eq:c-movement-connectedness}. By exact pairing, the reference path can be generated from $X_0\sim\pi_0$ by
\begin{equation*}
X_j=a_jX_{j-1}+\sigma_j\varepsilon_j.
\end{equation*}
Here the ${\varepsilon_j\sim\mathcal N(0,I)}$ are independent of one another and of $X_0$. Writing the cumulative signal scale ${A_j}$ and noise scale ${\tau_j}$ as
\begin{equation}
A_0=1,\qquad \tau_0^2=0,\qquad
A_j=a_jA_{j-1},\qquad \tau_j^2=a_j^2\tau_{j-1}^2+\sigma_j^2.
\label{eq:c-ideal-gaussian-cumulative-scales}
\end{equation}
gives ${X_j\mid X_0=u\sim\mathcal N(A_ju,\tau_j^2I)}$. Let $\varphi_v$ denote the density of $\mathcal N(0,vI)$. For ${j\geq1}$, define the marginal density of ${\pi_j}$ by
\begin{equation}
\pi_j(x):=\int\varphi_{\tau_j^2}(x-A_ju)\pi_0(du).
\label{eq:c-ideal-gaussian-mixture-density}
\end{equation}
To bound ${\mathcal I_j}$, we use ${K_{j-1}^{\mathrm B}(x\mid y):=\varphi_{\sigma_j^2}(x-a_jy)}$ for the density of ${K_{j-1}^{\mathrm B}(dx\mid y)}$. We show that the following integral ${H_j}$ is finite by the Gaussian calculation that follows:
\begin{equation}
H_j:=\int\pi_{j-1}(dy)\int\frac{K_{j-1}^{\mathrm B}(x\mid y)^2}{\pi_j(x)}\,dx
\leq\frac1{b_{j,v}}\int\pi_{j-1}(dy)\int\frac{\varphi_{\sigma_j^2}(x-a_jy)^2}{\varphi_v(x)}\,dx.
\label{eq:c-ideal-gaussian-reference-integral}
\end{equation}
The inequality uses the lower bound ${\pi_j(x)\geq b_{j,v}\varphi_v(x)}$ established in \cref{eq:c-ideal-gaussian-lower-envelope}.
Choose a finite $B$ such that $\pi_0(\{u:\|u\|\leq B\})>0$; such a ball exists because $\pi_0$ is a probability measure. For ${0<v<\tau_j^2}$ and $\|u\|\leq B$, expanding the square gives
\begin{align*}
\|x-A_ju\|^2
&=\|x\|^2-2A_j\langle x,u\rangle+A_j^2\|u\|^2 \overset{(\circ)}{\leq}\|x\|^2+\frac{\tau_j^2-v}{v}\|x\|^2
+\frac{vA_j^2}{\tau_j^2-v}\|u\|^2+A_j^2\|u\|^2\\
&=\frac{\tau_j^2}{v}\|x\|^2+\frac{\tau_j^2A_j^2}{\tau_j^2-v}\|u\|^2
\leq\frac{\tau_j^2}{v}\|x\|^2+\frac{\tau_j^2A_j^2B^2}{\tau_j^2-v}.
\end{align*}
Here $(\circ)$ uses ${2|A_j\langle x,u\rangle|\leq\epsilon\|x\|^2+\epsilon^{-1}A_j^2\|u\|^2}$ with ${\epsilon=(\tau_j^2-v)/v>0}$.
Restricting the mixture integral in \cref{eq:c-ideal-gaussian-mixture-density} to this ball gives
\begin{align}
\pi_j(x)&\geq\int_{\|u\|\leq B}\varphi_{\tau_j^2}(x-A_ju)\pi_0(du)\geq b_{j,v}\varphi_v(x),\notag\\
b_{j,v}&:=\pi_0(\{u:\|u\|\leq B\})\left(\frac{v}{\tau_j^2}\right)^{d/2}
\exp\left(-\frac{A_j^2B^2}{2(\tau_j^2-v)}\right)>0.
\label{eq:c-ideal-gaussian-lower-envelope}
\end{align}
We now evaluate the inner Gaussian integral in \cref{eq:c-ideal-gaussian-reference-integral}. For ${v>\sigma_j^2/2}$,
\begin{align*}
\int\frac{\varphi_{\sigma_j^2}(x-a_jy)^2}{\varphi_v(x)}\,dx
&=\frac{(2\pi v)^{d/2}}{(2\pi\sigma_j^2)^d}\int\exp\left(-\frac{\|x-a_jy\|^2}{\sigma_j^2}+\frac{\|x\|^2}{2v}\right)dx\\
&=\left(\frac{v}{\sigma_j\sqrt{2v-\sigma_j^2}}\right)^d
\exp\left(\frac{a_j^2\|y\|^2}{2v-\sigma_j^2}\right).
\end{align*}
Choose
\begin{equation}
v_j:=\tau_j^2-\frac{\sigma_j^2}{4},\qquad
\eta_j:=\frac{a_j^2}{2v_j-\sigma_j^2}.
\label{eq:c-ideal-gaussian-envelope-choice}
\end{equation}
The variance recursion implies
\begin{equation}
\frac{\sigma_j^2}{2}<v_j<\tau_j^2,\qquad
2v_j-\sigma_j^2=2a_j^2\tau_{j-1}^2+\frac{\sigma_j^2}{2},\qquad
2\eta_j\tau_{j-1}^2=\frac{2a_j^2\tau_{j-1}^2}{2a_j^2\tau_{j-1}^2+\sigma_j^2/2}<1.
\label{eq:c-ideal-gaussian-positive-denominators}
\end{equation}
Consequently,
\begin{align}
H_j
&\leq\frac{1}{b_{j,v_j}}
\left(\frac{v_j}{\sigma_j\sqrt{2v_j-\sigma_j^2}}\right)^d
\mathbb E_{\pi_{j-1}}[e^{\eta_j\|X_{j-1}\|^2}].
\label{eq:c-ideal-gaussian-integral-bound}
\end{align}
Recall that the moment-generating function of the noncentral chi-squared distribution gives, for $Y\sim\mathcal N(m,s^2I)$ and $2\eta s^2<1$,
\begin{equation*}
\mathbb E[e^{\eta\|Y\|^2}]=(1-2\eta s^2)^{-d/2}\exp\left(\frac{\eta\|m\|^2}{1-2\eta s^2}\right).
\end{equation*}
Applying this formula to ${X_{j-1}\mid X_0=u\sim\mathcal N(A_{j-1}u,\tau_{j-1}^2I)}$ and averaging over $X_0$ gives
\begin{align}
\mathbb E_{\pi_{j-1}}[e^{\eta_j\|X_{j-1}\|^2}]
&=\int\mathbb E\left[e^{\eta_j\|X_{j-1}\|^2}\mid X_0=u\right]\pi_0(du)\notag\\
&=(1-2\eta_j\tau_{j-1}^2)^{-d/2}
\mathbb E_{\pi_0}\left[
\exp\left(\frac{\eta_jA_{j-1}^2\|X_0\|^2}
{1-2\eta_j\tau_{j-1}^2}\right)\right]\notag\\
&\overset{(\circ)}{\leq}(1-2\eta_j\tau_{j-1}^2)^{-d/2}
\mathbb E_{\pi_0}[e^{c\|X_0\|^2}]\overset{(\triangle)}{<}\infty.
\label{eq:c-ideal-gaussian-exponential-moment}
\end{align}
Here $(\circ)$ follows from \cref{eq:c-ideal-gaussian-envelope-choice} and the variance recursion, which give
\begin{equation*}
\frac{\eta_jA_{j-1}^2}{1-2\eta_j\tau_{j-1}^2}
=\frac{a_j^2A_{j-1}^2}{2v_j-\sigma_j^2-2a_j^2\tau_{j-1}^2}
=\frac{2A_j^2}{\sigma_j^2}\leq c,
\end{equation*}
and $(\triangle)$ uses $2\eta_j\tau_{j-1}^2<1$ and \cref{eq:c-ideal-gaussian-tail-moment}. This also holds for ${j=1}$, with $\tau_0=0$, and proves ${H_j<\infty}$. For the reward-tilted clean law,
\begin{equation*}
\mathbb E_{\pi_0^\star}[e^{c\|X_0\|^2}]=\frac1Z\mathbb E_{\pi_0}[r(X_0)e^{c\|X_0\|^2}]<\infty.
\end{equation*}
Repeating the calculation with $\pi_0^\star$ as the initial law gives $\pi_j^\star(x)\geq b_{j,v}^\star\varphi_v(x)$ for some $b_{j,v}^\star>0$. Together with the exponential moment above, this proves the same integrability bound with $\pi_{j-1}$ and $\pi_j$ replaced by $\pi_{j-1}^\star$ and $\pi_j^\star$, respectively.

Recalling the renoising joint edge law ${J_j^{\mathrm B}}$ from \cref{eq:cs-vgb-edge-laws}, exact pairing and the verifier bounds give, for ${j\geq2}$,
\begin{equation}
{v_{\min}}J_j^{\mathrm B}\leq\widehat C_j\leq J_j^{\mathrm B},\qquad
J_j^{\mathrm B}(dx,dy)=\pi_{j-1}(dy)K_{j-1}^{\mathrm B}(x\mid y)dx.
\label{eq:c-ideal-gaussian-edge-comparison}
\end{equation}
Since $\widehat C_1(\mathcal X,\mathcal X)=Z$ and ${\widehat C_j(\mathcal X,\mathcal X)\leq1}$ for ${j\geq2}$,
\begin{equation*}
0<\calZ=3Z+4\sum_{j=2}^{J-1}\widehat C_j(\mathcal X,\mathcal X)+3\widehat C_J(\mathcal X,\mathcal X)\leq(4J-2)\max\{1,Z\}<\infty.
\end{equation*}
Each incident measure contains the corresponding marginal of ${\widehat C_j}$, so \cref{eq:c-ideal-gaussian-edge-comparison} gives
\begin{align*}
\mu_j(dx)&\geq\widehat C_j(dx,\mathcal X)\geq {v_{\min}}J_j^{\mathrm B}(dx,\mathcal X)={v_{\min}}\pi_j(dx),\\
\mu_{j-1}(dy)&\geq\widehat C_j(\mathcal X,dy)\geq {v_{\min}}J_j^{\mathrm B}(\mathcal X,dy)={v_{\min}}\pi_{j-1}(dy).
\end{align*}
For ${j\geq2}$, we use ${\widehat C_j(x,y)}$ and ${\mu_j(x)}$ for the corresponding Lebesgue densities. By definition,
\begin{equation*}
h_j(x,y)\mkern-3mu=\mkern-3mu\frac{\widehat C_j(x,y)}{\mu_j(x)\mu_{j-1}(y)},\mkern8mu
\mathcal I_j\mkern-3mu=\mkern-3mu\iint h_j(x,y)^2\mu_j(x)\mu_{j-1}(y)\,dx\,dy
\mkern-3mu=\mkern-3mu\iint\frac{\widehat C_j(x,y)^2}{\mu_j(x)\mu_{j-1}(y)}\,dx\,dy.
\end{equation*}
For ${j\geq2}$, using ${\widehat C_j(x,y)=\pi_{j-1}(y)K_{j-1}^{\mathrm B}(x\mid y){\sqrt{\widehat V_j(x)}}{\sqrt{\widehat V_{j-1}(y)}}\leq\pi_{j-1}(y)K_{j-1}^{\mathrm B}(x\mid y)}$ and the two marginal bounds gives
\begin{equation}
\mathcal I_j
\leq\mkern-5mu\iint\frac{\pi_{j-1}(y)^2K_{j-1}^{\mathrm B}(x\mid y)^2}{{v_{\min}^2}\pi_j(x)\pi_{j-1}(y)}\,dx\,dy
\mkern-3mu=\mkern-3mu {v_{\min}^{-2}}\int\pi_{j-1}(dy)\int\frac{K_{j-1}^{\mathrm B}(x\mid y)^2}{\pi_j(x)}\,dx
\mkern-3mu=\mkern-3mu {v_{\min}^{-2}}H_j<\infty.
\label{eq:c-ideal-gaussian-interior-integrability}
\end{equation}
For the clean edge, let $\pi_1^\star(x):=\int K_0^{\mathrm B}(x\mid y)\pi_0^\star(dy)$.
Then
\begin{equation*}
\widehat C_1(dx,dy)=Z\pi_0^\star(dy)K_0^{\mathrm B}(x\mid y)dx,\qquad
\mu_0=Z\pi_0^\star,\qquad \mu_1(dx)\geq Z\pi_1^\star(x)dx.
\end{equation*}
Substituting these identities in the same normalized integral yields
\begin{equation}
\mathcal I_1\leq
\int\pi_0^\star(dy)\int\frac{K_0^{\mathrm B}(x\mid y)^2}{\pi_1^\star(x)}\,dx<\infty.
\label{eq:c-ideal-gaussian-clean-integrability}
\end{equation}
The last inequality follows from the same Gaussian integral bound with clean law $\pi_0^\star$. Thus \cref{eq:c-normalized-edge-integrability} holds for every edge.

To verify \cref{eq:c-movement-connectedness}, note that reward tilting replaces $\pi_0$ by $\pi_0^\star$ without changing the noising kernels. Hence
\begin{equation}
\pi_{0:J}^\star(X_0\in du,X_J\in dx)
=\pi_0^\star(du)\varphi_{\tau_J^2}(x-{A_Ju})dx.
\label{eq:c-ideal-gaussian-endpoint-joint}
\end{equation}
The Gaussian density $\varphi_{\tau_J^2}(x-A_Ju)$ is positive for every $u,x$, and hence so is the marginal density $\pi_J^\star(x)$. Thus this joint law and ${\pi_0^\star\otimes\pi_J^\star}$ have the same null sets. If ${u_0(X_0)=u_J(X_J)}$ under the joint law, the equality also holds under the product law. For square-integrable ${u_0,u_J}$, integration under that product gives
\begin{align*}
0=\mathbb E_{\pi_0^\star\otimes\pi_J^\star}\sbra{(u_0(X_0)-u_J(X_J))^2}&=\mathbb E_{\pi_0^\star}[u_0^2]+\mathbb E_{\pi_J^\star}[u_J^2]-2\mathbb E_{\pi_0^\star}[u_0]\mathbb E_{\pi_J^\star}[u_J]\\
&=\operatorname{Var}_{\pi_0^\star}(u_0)+\operatorname{Var}_{\pi_J^\star}(u_J)+\bigl(\mathbb E_{\pi_0^\star}[u_0]-\mathbb E_{\pi_J^\star}[u_J]\bigr)^2.
\end{align*}
Since all three terms are nonnegative, this implies $\operatorname{Var}_{\pi_0^\star}(u_0)=0$, so \cref{eq:c-no-common-endpoint-function} holds.

Moreover, ${V_j(x)=Z\pi_j^\star(x)/\pi_j(x)\in(0,\infty)}$ for ${j\geq1}$, where ${\pi_j^\star(x)}$ is the Gaussian-mixture density obtained from $\pi_0^\star$. The verifier bounds therefore imply \cref{eq:c-positive-edge-distortion}, and \cref{lem:c-endpoint-connectedness} gives \cref{eq:c-movement-connectedness}.

Both required conditions now hold, so \cref{thm:c-average-moment-poincare} gives \cref{eq:c-ideal-gaussian-exponential-mixing}. Finally, recalling the incident measures and stationary law from \cref{eq:c-incident-measures,eq:c-stationary-law}, at the noise boundary we have
\begin{equation*}
\Pi(J,dx,\mathrm F)=\frac{\mu_J(dx)}{\calZ}=\frac{\widehat C_J(dx,\mathcal X)}{\calZ}=\frac{{\sqrt{\widehat V_J(x)}}F_J(x)}{\calZ}\pi_J(dx).
\end{equation*}
Since ${J\geq2}$, both factors in the numerator are at least {$\sqrt{v_{\min}}$}. The density ${d\nu_J/d\Pi}$ equals ${\calZ/({\sqrt{\widehat V_J}}F_J)}$ at ${(J,x,\mathrm F)}$ and zero elsewhere. Since ${\int(d\nu_J/d\Pi)\,d\Pi=1}$,
\begin{align}
D_{\nu_J}^2
&=\int\left(\frac{d\nu_J}{d\Pi}-1\right)^2d\Pi
=\int\left(\frac{d\nu_J}{d\Pi}\right)^2d\Pi-1\notag\\
&=\int\mkern-5mu\left(\frac{\calZ}{{\sqrt{\widehat V_J(x)}}F_J(x)}\right)^2\frac{{\sqrt{\widehat V_J(x)}}F_J(x)}{\calZ}\pi_J(dx)-1
\notag\\
&=\calZ\int\frac{\pi_J(dx)}{{\sqrt{\widehat V_J(x)}}F_J(x)}-1
\mkern-4mu\leq\mkern-4mu\frac{\calZ}{{v_{\min}}}-1<\infty.
\label{eq:c-ideal-gaussian-noise-start-calculation}
\end{align}

\end{proof}

We remark that \cref{cor:c-ideal-paired-gaussian-mixing} establishes connectedness under Gaussian pairing. Here we show that bounded clean support also ensures the endpoint correlation bound in \cref{eq:c-endpoint-correlation}, which is used in the explicit rate bound below.
\begin{lemma_ap}[Endpoint correlation with bounded clean support]
\label{lem:c-bounded-gaussian-endpoint-correlation}

Assume the Gaussian pairing in \cref{eq:c-ideal-gaussian-pairing}, $\|X_0\|\le R<\infty$ almost surely, and $0<Z=\mathbb E_{\pi_0}r(X_0)<\infty$ for $r\ge0$. Then the clean and noise endpoints have maximal correlation $\rho_{\mathrm{end}}<1$ under $\pi_{0:J}^\star$ (\cref{eq:c-endpoint-correlation}).
\end{lemma_ap}
\begin{proof}
Exact pairing gives $X_J=A_JX_0+\tau_J\xi$, where $\xi\sim\mathcal N(0,I)$ is independent of $X_0$, $A_J=\prod_{j=1}^J a_j$, and $\tau_J^2=\sum_{k=1}^J\sigma_k^2 \sbra{\prod_{\ell=k+1}^J a_\ell^2}>0$. Weighting by $r(X_0)/Z$ changes only the clean law, so this conditional Gaussian law also holds under $\pi_{0:J}^\star$. For every $\|x\|\le R$, its density satisfies
\begin{equation*}
\varphi_{\tau_J^2}(z-A_Jx)\ge h(z):=(2\pi\tau_J^2)^{-d/2}\exp\!\left[-\frac{(\|z\|+|A_J|R)^2}{2\tau_J^2}\right].
\end{equation*}
Since $(\|z\|+|A_J|R)^2\ge\|z\|^2$, we have $0<h(z)\le\varphi_{\tau_J^2}(z)$, so $\epsilon:=\int h(z)\,dz\in(0,1]$. By centering and rescaling, we may assume without loss of generality that $U=u(X_0)$ and $V=v(X_J)$ have mean zero and variance one under $\pi_{0:J}^\star$; zero-variance cases satisfy the correlation bound trivially. The density bound gives
\begin{align*}
2-2\mathbb E[UV]
&\mkern-3mu=\mkern-3mu\mathbb E[(U-V)^2]
\mkern-3mu\ge\mkern-3mu\int h(z)\int (u(x)-v(z))^2\,\pi_0^\star(dx)\,dz
\mkern-3mu\overset{(\circ)}{=}\mkern-3mu\int h(z)(1+v(z)^2)\,dz\mkern-3mu\ge\mkern-3mu\epsilon,
\end{align*}
where $(\circ)$ uses $\int u\,d\pi_0^\star=0$ and $\int u^2\,d\pi_0^\star=1$. Applying the same bound to $-V$ yields $|\mathbb E[UV]|\le1-\epsilon/2$. Taking the supremum over these normalized functions gives $\rho_{\mathrm{end}}\le1-\epsilon/2<1$.
\end{proof}

\paragraph{An explicit bound under bounded reward and verifier weights.}
\Cref{thm:c-average-moment-poincare} establishes a positive Poincar\'e constant under its connectivity and integrability assumptions. We now derive an explicit constant by additionally bounding the reward and implemented verifier weights above and away from zero. These bounds concern the ranges of the weights, not the accuracy of ${ \widehat V_j/V_j}$.

\begin{corollary_ap}[An explicit Poincar\'e bound under bounded weights]
\label{prop:c-explicit-lifted-poincare}
Let $P$ be the exact CS-VGB one-step kernel in \cref{thm:c-average-moment-poincare}, with ${J\geq2}$, terminal anchoring $\widehat V_0=r$, and the continuation recursion in \cref{eq:c-mixing-verifier-recursion}. Suppose that, for constants $r_{\min},v_{\min}\in(0,1]$ and ${j=1,\ldots,J}$,
\begin{equation}
r_{\min}\leq r(x)\leq1\quad\pi_0\text{-almost surely},\qquad
v_{\min}\leq\widehat V_j(x)\leq1\quad\pi_j\text{-almost surely}.
\label{eq:c-quantitative-weight-bounds}
\end{equation}
Assume also that the reward-tilted reference path satisfies
\begin{equation*}
\left|\operatorname{Cov}_{\pi_{0:J}^\star}
\bigl(u(X_0),v(X_J)\bigr)\right|
\leq\rho_{\mathrm{end}}
\sqrt{\operatorname{Var}_{\pi_0^\star}(u)
\operatorname{Var}_{\pi_J^\star}(v)},
\qquad 0\leq\rho_{\mathrm{end}}<1,
\end{equation*}
for all $u\in L^2(\pi_0^\star)$ and ${v\in L^2(\pi_J^\star)}$, as in \cref{eq:c-endpoint-correlation}. Under these assumptions, we can choose the same Poincar\'e constant for every directional persistence parameter $\chi\in[0,1]$ as follows:
\begin{equation}
\mathcal E_P(f,f)\geq\gamma_{\mathrm{CS}}\operatorname{Var}_\Pi(f),
\qquad
\gamma_{\mathrm{CS}}
:=\frac{(1-\rho_{\mathrm{end}})r_{\min}^2{v_{\min}^2}}{32J^2}>0,
\qquad f\in L^2(\Pi).
\label{eq:c-explicit-lifted-rate}
\end{equation}
Consequently, for every $\nu\ll\Pi$ with
$ D_\nu:=\|d\nu/d\Pi-1\|_{2,\Pi}<\infty$,
\begin{equation}
\|\nu P_{\mathrm L}^n-\Pi\|_{\mathrm{TV}}
\leq\frac{D_\nu}{2}
\exp\left[-\frac{(1-\rho_{\mathrm{end}})r_{\min}^2{v_{\min}^2}}{128J^2}n\right].
\label{eq:c-average-explicit-exponential-rate}
\end{equation}
The corresponding clean-conditional bound follows from \cref{eq:c-cs-vgb-clean-convergence}.
\end{corollary_ap}

\begin{proof}
Recall that ${\pi_j^\star(dx)=V_j(x)\pi_j(dx)/Z}$ is the reward-tilted marginal at checkpoint index ${j}$. We use these marginals to define a comparison law $\Pi_\star$ on the full state ${(j,x,d)}$. Write ${f_j^d(x):=f(j,x,d)}$, identifying the boundary copies as
\begin{equation*}
f_0^{\mathrm F}=f_0^{\mathrm B}=f_0:=f(0,\cdot,\mathrm B),\qquad
f_J^{\mathrm F}=f_J^{\mathrm B}=f_J:=f(J,\cdot,\mathrm F).
\end{equation*}
Recall that ${\Pi(j,dx,d)=\mu_j(dx)/\calZ}$, where ${\mu_j}$ sums the incident marginals of ${\widehat C_j}$. Similarly, we define the reference incident measures using ${C_j}$, whose marginals are ${Z\pi_j^\star}$ and ${Z\pi_{j-1}^\star}$:
\begin{align*}
\mu_{\star,0}(dx)&:=C_1(\mathcal X,dx)=Z\pi_0^\star(dx),\qquad
\mu_{\star,J}(dx):=C_J(dx,\mathcal X)=Z\pi_J^\star(dx),\\
\mu_{\star,j}(dx)&:=C_j(dx,\mathcal X)+C_{j+1}(\mathcal X,dx)
=2Z\pi_j^\star(dx),\qquad 1\leq j<J.
\end{align*}
Thus ${\mu_{\star,j}=Z\omega_j\pi_j^\star}$, where ${\omega_0=\omega_J=1}$ and ${\omega_j=2}$ in the interior. Counting both interior directions gives
\begin{equation*}
\calZ_\star:=\mu_{\star,0}(\mathcal X)+2\sum_{j=1}^{J-1}\mu_{\star,j}(\mathcal X)+\mu_{\star,J}(\mathcal X)
=Z+4(J-1)Z+Z=(4J-2)Z.
\end{equation*}
Normalizing these incident measures, with one direction at each boundary, yields
\begin{equation}
\Pi_\star(j,dx,d):=\frac{\mu_{\star,j}(dx)}{\calZ_\star}
=\frac{Z\omega_j\pi_j^\star(dx)}{(4J-2)Z}
=\frac{\omega_j}{4J-2}\pi_j^\star(dx).
\label{eq:c-balanced-movement-reference}
\end{equation}
Recall from \cref{eq:c-movement-energy-lower-bound,eq:c-outer-dirichlet-form} that the time-step contribution to $\mathcal E_P$ sums squared changes across ${\widehat C_j}$ with factor $1/(2\calZ)$. Similarly, we define $\mathcal E_\star$ using ${C_j}$ and $\calZ_\star$. With ${\overline C_j=C_j/Z}$, write
\begin{equation}
S(f):=\sum_{j=1}^J\sum_{d\in\{\mathrm F,\mathrm B\}}
\mathbb E_{\overline C_j}\left[(f_j^d(X_j)-f_{j-1}^d(X_{j-1}))^2\right],
\label{eq:c-path-energy}
\end{equation}
so that
\begin{equation}
\mathcal E_\star(f,f):=\frac1{2\calZ_\star}\sum_{j=1}^J\sum_{d\in\{\mathrm F,\mathrm B\}}
\iint[f_j^d(x)-f_{j-1}^d(y)]^2C_j(dx,dy)
=\frac{Z}{2\calZ_\star}S(f)=\frac{S(f)}{2(4J-2)}.
\label{eq:c-reference-edge-energy}
\end{equation}
First compare the measures. The recursion ${V_j=K_j^{\mathrm F}V_{j-1}}$ with $V_0=r$ gives ${r_{\min}\leq V_j\leq1}$. For ${j\geq2}$,
\begin{equation*}
{v_{\min}}\leq
\frac{d\widehat C_j}{dC_j}(x,y)
=\frac{{\sqrt{\widehat V_j(x)}}{\sqrt{\widehat V_{j-1}(y)}}}{V_{j-1}(y)}
\leq r_{\min}^{-1}.
\end{equation*}
Set $a:={v_{\min}}$ and $b:=r_{\min}^{-1}$. Since $\widehat C_1=C_1$ and $a\leq1\leq b$,
\begin{equation}
aC_j\leq\widehat C_j\leq bC_j,\qquad 1\leq j\leq J.
\label{eq:c-bounded-weight-edge-comparison}
\end{equation}
The marginals of ${C_j}$ are ${Z\pi_j^\star}$ and ${Z\pi_{j-1}^\star}$. For ${1\leq j<J}$, summing the two incident marginals gives
\begin{equation*}
\mu_j(dx)\mkern-2mu=\mkern-2mu\widehat C_j(dx,\mathcal X)+\widehat C_{j+1}(\mathcal X,dx)
\mkern-2mu\geq\mkern-2mu a\bigl[C_j(dx,\mathcal X)+C_{j+1}(\mathcal X,dx)\bigr]
\mkern-2mu=\mkern-2mu2aZ\pi_j^\star(dx).
\end{equation*}
The upper bound follows in the same way with $b$. Each boundary has only one incident edge. Thus, for ${0\leq j\leq J}$, these bounds and their sums over time-direction states yield
\begin{equation}
aZ\omega_j\pi_j^\star\mkern-2mu\leq\mkern-2mu\mu_j\mkern-2mu\leq\mkern-2mu bZ\omega_j\pi_j^\star,\qquad
{a\calZ_\star\mkern-2mu\leq\mkern-2mu\calZ\mkern-2mu\leq\mkern-2mu b\calZ_\star,\qquad}
\frac ab\Pi_\star\mkern-2mu\leq\mkern-2mu\Pi\mkern-2mu\leq\mkern-2mu\frac ba\Pi_\star.
\label{eq:c-bounded-weight-stationary-comparison}
\end{equation}
Thus every $f\in L^2(\Pi)$ also belongs to $L^2(\Pi_\star)$, and $\operatorname{Var}_\Pi(f)\leq(b/a)\operatorname{Var}_{\Pi_\star}(f)$.

We now bound the reference variance. All following expectations, variances, and covariances are under ${\pi_{0:J}^\star}$. The endpoint correlation bound gives
\begin{align*}
\mathbb E[(f_J(X_J)-f_0(X_0))^2]
&=\operatorname{Var}(f_0(X_0))+\operatorname{Var}(f_J(X_J))\mkern-3mu-\mkern-3mu2\operatorname{Cov}(f_0(X_0),f_J(X_J))\\
&\quad+(\mathbb E[f_J(X_J)]-\mathbb E[f_0(X_0)])^2\\
&\geq\operatorname{Var}(f_0(X_0))+\operatorname{Var}(f_J(X_J))
-\mkern-3mu 2\rho_{\mathrm{end}}\sqrt{\operatorname{Var}(f_0(X_0))\operatorname{Var}(f_J(X_J))}\\
&\geq(1-\rho_{\mathrm{end}})[\operatorname{Var}(f_0(X_0))+\operatorname{Var}(f_J(X_J))]
\mkern-3mu\geq\mkern-3mu(1-\rho_{\mathrm{end}})\operatorname{Var}(f_0(X_0)).
\end{align*}
The last step uses $2\sqrt{uv}\leq u+v$. For either direction, telescoping along the path and applying Cauchy--Schwarz yields
\begin{align*}
\mathbb E[(f_j^d(X_j)-f_0(X_0))^2]
&=\mathbb E\left[\left(\sum_{\ell=1}^j[f_\ell^d(X_\ell)-f_{\ell-1}^d(X_{\ell-1})]\right)^2\right]\\
&\leq j\sum_{\ell=1}^j\mathbb E[(f_\ell^d(X_\ell)-f_{\ell-1}^d(X_{\ell-1}))^2]\leq JS(f).
\end{align*}
At ${j=J}$, this gives ${\operatorname{Var}(f_0(X_0))\leq\mathbb E[(f_J(X_J)-f_0(X_0))^2]/(1-\rho_{\mathrm{end}})\leq JS(f)/(1-\rho_{\mathrm{end}})}$. Consequently, for every available ${(j,d)}$,
\begin{align}
\mathbb E[(f_j^d(X_j)-\mathbb E[f_0(X_0)])^2]
&\leq2\mathbb E[(f_j^d(X_j)-f_0(X_0))^2]+2\operatorname{Var}(f_0(X_0))\notag\\
&\leq2JS(f)+\frac{2JS(f)}{1-\rho_{\mathrm{end}}}
\leq\frac{4JS(f)}{1-\rho_{\mathrm{end}}}.
\label{eq:c-slice-variance-control}
\end{align}
Observing that
\begin{equation*}
\Pi_\star[(f-\mathbb E[f_0(X_0)])^2]=\frac1{4J-2}\sum_{j=0}^J\omega_j\sum_d\mathbb E_{\pi_j^\star}\!\left[(f_j^d(X_j)-\mathbb E[f_0(X_0)])^2\right],
\end{equation*}
where $d$ ranges over both directions at interior checkpoints and only the prescribed direction at each boundary, we apply \cref{eq:c-slice-variance-control} to each term. The weights sum to ${1+4(J-1)+1=4J-2}$, so
\begin{equation}
\operatorname{Var}_{\Pi_\star}(f)\leq\Pi_\star[(f-\mathbb E[f_0(X_0)])^2]
\leq\frac{4JS(f)}{1-\rho_{\mathrm{end}}}
=\frac{8J(4J-2)}{1-\rho_{\mathrm{end}}}\mathcal E_\star(f,f)
\leq\frac{32J^2}{1-\rho_{\mathrm{end}}}\mathcal E_\star(f,f).
\label{eq:c-balanced-reference-gap}
\end{equation}
Finally, retain only the moves between adjacent checkpoints in $\mathcal E_P$. By \cref{eq:c-bounded-weight-edge-comparison,eq:c-bounded-weight-stationary-comparison},
\begin{align}
\mathcal E_P(f,f)
&\overset{(\circ)}{\geq}\frac1{2\calZ}\sum_{j=1}^J\sum_{d\in\{\mathrm F,\mathrm B\}}
\iint[f_j^d(x)-f_{j-1}^d(y)]^2\widehat C_j(dx,dy)\notag\\
&\geq\frac{aZ}{2\calZ}S(f)
=\frac{a\calZ_\star}{\calZ}\mathcal E_\star(f,f)
\geq\frac ab\mathcal E_\star(f,f).
\label{eq:c-bounded-weight-energy-comparison}
\end{align}
Here $(\circ)$ follows from \cref{eq:c-movement-energy-lower-bound} by dropping nonnegative flip terms; stays contribute zero. Combining this with \cref{eq:c-balanced-reference-gap} and the variance bound above gives
\begin{equation*}
\mathcal E_P(f,f)\geq
\frac ab\frac{1-\rho_{\mathrm{end}}}{32J^2}\operatorname{Var}_{\Pi_\star}(f)
\geq\left(\frac ab\right)^2\frac{1-\rho_{\mathrm{end}}}{32J^2}\operatorname{Var}_\Pi(f)
=\frac{(1-\rho_{\mathrm{end}})r_{\min}^2{v_{\min}^2}}{32J^2}\operatorname{Var}_\Pi(f).
\end{equation*}
The bound is independent of $\chi$. Since $0<\gamma_{\mathrm{CS}}\leq1$, \cref{cor:c-cs-vgb-convergence} gives \cref{eq:c-average-explicit-exponential-rate}.
\end{proof}

If the clean and noise endpoints are independent under the unweighted reference path, weighting by $r(X_0)/Z$ preserves their independence, so $\rho_{\mathrm{end}}=0$. In this case, \cref{eq:c-explicit-lifted-rate} gives ${\gamma_{\mathrm{CS}}=r_{\min}^2{v_{\min}^2}/(32J^2)}$.

The convergence bound in \cref{eq:c-average-explicit-exponential-rate} also applies to the standard noise initialization ${\nu_J:=\delta_J\otimes\pi_J\otimes\delta_{\mathrm F}}$. Indeed, \cref{eq:c-quantitative-weight-bounds} gives {${\sqrt{\widehat V_J}\geq\sqrt{v_{\min}}}$ and ${F_J=K_J^{\mathrm F}\sqrt{\widehat V_{J-1}}\geq\sqrt{v_{\min}}}$} for ${J\geq2}$, so the same calculation as in \cref{eq:c-ideal-gaussian-noise-start-calculation} yields
\begin{equation}
D_{\nu_J}^2
=\int\left(\frac{d\nu_J}{d\Pi}-1\right)^2d\Pi
=\calZ\int_{\mathcal X}
 \frac{\pi_J(dx)}{{\sqrt{\widehat V_J(x)}}F_J(x)}-1
\leq\frac{\calZ}{{v_{\min}}}-1<\infty.
\label{eq:c-noise-start-l2-bound}
\end{equation}

\paragraph{Effect of process-verifier error.}
\label{app:subsec:cs-vgb-verifier-error}

The preceding convergence bounds do not explicitly quantify the error in $\widehat V_j$ relative to the true process verifier $V_j$. We now incorporate this error into a lower bound on the Poincar\'e constant under an average verifier-error assumption.

Let $P^V$ be the exact CS-VGB kernel with ${\widehat V_j}$ replaced by the true process verifier ${V_j}$ in \cref{eq:c-mixing-verifier-recursion}, and let $\Pi^V$ be its stationary law. Both chains use the same reference transitions and terminal anchoring $\widehat V_0=V_0=r$.
The corresponding exact-verifier edge-conductance measures are $C_1^V:=C_1$ and, for ${2\leq j\leq J}$,
\begin{equation}
{C_j^V(dx,dy):=\pi_j(dx)K_j^{\mathrm F}(dy\mid x)\sqrt{V_j(x)V_{j-1}(y)}
=\sqrt{\frac{V_j(x)}{V_{j-1}(y)}}C_j(dx,dy).}
\label{eq:c-exact-verifier-conductances}
\end{equation}
To quantify the discrepancy from this exact-verifier construction, we measure the relative error in ${\widehat V_j}$ in mean square under the reward-tilted marginal ${\pi_j^\star}$:
\begin{equation}
\varepsilon_{\mathrm{av}}:=\max_{1\leq j\leq J}
\left(\mathbb E_{\pi_j^\star}\left[\left(\frac{\widehat V_j(X_j)}{V_j(X_j)}-1\right)^2\right]\right)^{1/2}.
\label{eq:c-average-verifier-error}
\end{equation}
To relate this average error to the convergence rate, we also impose an integrability condition on the exact-verifier transitions. As in \cref{eq:c-average-movement-operator}, let $P_{\mathrm{move}}^V$ retain only moves between adjacent checkpoints and define
\begin{equation}
M^V:=\frac{P_{\mathrm{move}}^V+(P_{\mathrm{move}}^V)^*}{2},\qquad
M^V(z,dz')=m^V(z,z')\Pi^V(dz'),
\label{eq:c-oracle-movement-density}
\end{equation}
where the adjoint is taken in $L^2(\Pi^V)$. For each finite $p>2$, define
\begin{equation}
H_p:=\left(\iint m^V(z,z')^p\,\Pi^V(dz)\Pi^V(dz')\right)^{1/p}.
\label{eq:c-oracle-density-moment}
\end{equation}
\Cref{lem:c-gaussian-higher-moments} gives a sufficient condition for $H_p<\infty$ under paired Gaussian transitions. We assume this moment is finite for at least one such $p$ and minimize the error bound over all exponents satisfying this condition:
\begin{equation}
\Delta_{\mathrm{av}}:=\frac{6}{{r_{\min}v_{\min}^2}}
\inf_{\substack{2<p<\infty\\H_p<\infty}}
\left\{\frac{H_p}{r_{\min}^{1/p}}\varepsilon_{\mathrm{av}}^{\,1-2/p}\right\}.
\label{eq:c-optimized-average-verifier-error}
\end{equation}

\begin{theorem_ap}[Average verifier error and the Poincar\'e constant]
\label{thm:c-average-verifier-gap}
Let $P$ be the exact CS-VGB one-step kernel with ${J\geq2}$. Assume the reward and verifier bounds in \cref{eq:c-quantitative-weight-bounds}, the endpoint correlation bound in \cref{eq:c-endpoint-correlation}, and $H_p<\infty$ in \cref{eq:c-oracle-density-moment} for at least one finite $p>2$. For every directional persistence parameter $\chi\in[0,1]$ and every $f\in L^2(\Pi)$, we have
\begin{equation*}
\mathcal E_P(f,f)\geq\gamma_{\mathrm{CS}}\operatorname{Var}_\Pi(f),\qquad
\gamma_V:=\frac{(1-\rho_{\mathrm{end}})r_{\min}}{32J^2}.
\end{equation*}
Here $\gamma_V$ is the exact-verifier lower bound, and $\gamma_{\mathrm{CS}}$ can be chosen independently of $\chi$ to satisfy
\begin{equation}
\gamma_{\mathrm{CS}}
\geq\max\left\{\gamma_Vr_{\min}{v_{\min}^2},\ \gamma_V-\Delta_{\mathrm{av}}\right\}
\geq\gamma_V\left(1+\frac{\Delta_{\mathrm{av}}}{\gamma_Vr_{\min}{v_{\min}^2}}\right)^{-1}
>0.
\label{eq:c-average-verifier-gap}
\end{equation}
The convergence and clean-conditional bounds in \cref{cor:c-cs-vgb-convergence} apply with $\gamma=\gamma_{\mathrm{CS}}$; the clean stationary law remains $\pi_0^\star$.
\end{theorem_ap}

\begin{proof}
{
Use a superscript $V$ also for the exact-verifier incident measures and normalizer. The reward bounds and the recursion for ${V_j}$ give ${r_{\min}\leq V_j\leq1}$, so
\begin{equation*}
\sqrt{r_{\min}}\,C_j\leq C_j^V\leq r_{\min}^{-1/2}C_j,\qquad 1\leq j\leq J.
\end{equation*}
Summing incident marginals and normalizing therefore gives
\begin{equation*}
\sqrt{r_{\min}}\,\calZ_\star\leq\calZ^V\leq r_{\min}^{-1/2}\calZ_\star,
\qquad
\Pi^V\leq\frac{\calZ_\star}{\sqrt{r_{\min}}\,\calZ^V}\Pi_\star.
\end{equation*}
Since the mean minimizes the expected squared deviation,
\begin{equation*}
\operatorname{Var}_{\Pi^V}(f)
\leq\frac{\calZ_\star}{\sqrt{r_{\min}}\,\calZ^V}\operatorname{Var}_{\Pi_\star}(f).
\end{equation*}
Let $\mathcal L_{\mathrm{move}}^V=cI-M^V$, where $c=M^V\mathbf1$ equals $1/2$ in the interior and $1$ at the boundaries. Applying the lower edge-conductance measure bound to the movement energy gives
\begin{align*}
\langle f,\mathcal L_{\mathrm{move}}^Vf\rangle_{\Pi^V}
&=\frac1{2\calZ^V}\sum_{j=1}^J\sum_{d\in\{\mathrm F,\mathrm B\}}
\iint[f_j^d(x)-f_{j-1}^d(y)]^2\,C_j^V(dx,dy)\\
&\geq\frac{\sqrt{r_{\min}}\,\calZ_\star}{\calZ^V}\mathcal E_\star(f,f)
\geq\frac{(1-\rho_{\mathrm{end}})r_{\min}}{32J^2}\operatorname{Var}_{\Pi^V}(f),
\end{align*}
where the last step uses \cref{eq:c-balanced-reference-gap} and the preceding variance comparison. Thus
\begin{equation}
\langle f,\mathcal L_{\mathrm{move}}^Vf\rangle_{\Pi^V}
\geq\gamma_V\operatorname{Var}_{\Pi^V}(f),
\qquad
\Pi^V\leq r_{\min}^{-1}\Pi_\star.
\label{eq:c-exact-verifier-movement-gap}
\end{equation}
}

Write ${\Delta(j,x,d):=\widehat V_j(x)/V_j(x)-1}$, with $\Delta=0$ at ${j=0}$, and fix $p>2$ with $H_p<\infty$. Since $\Pi_\star$ averages the time marginals ${\pi_j^\star}$, \cref{eq:c-exact-verifier-movement-gap,eq:c-average-verifier-error} gives
\begin{equation*}
\|\Delta\|_{2,\Pi^V}^2
\leq r_{\min}^{-1}\|\Delta\|_{2,\Pi_\star}^2
\leq r_{\min}^{-1}\max_{1\leq j\leq J}
\mathbb E_{\pi_j^\star}\!\left[\left(\frac{\widehat V_j}{V_j}-1\right)^2\right]
=r_{\min}^{-1}\varepsilon_{\mathrm{av}}^2.
\end{equation*}
Also, ${0<\widehat V_j\leq1}$ and ${V_j\geq r_{\min}}$ imply
\begin{equation*}
0<\frac{\widehat V_j}{V_j}\leq r_{\min}^{-1}
\quad\Longrightarrow\quad
|\Delta|\leq\max\{1,r_{\min}^{-1}-1\}\leq r_{\min}^{-1}.
\end{equation*}
Thus, for $q:=2p/(p-2)$,
\begin{equation*}
\|\Delta\|_{2,\Pi^V}\leq r_{\min}^{-1/2}\varepsilon_{\mathrm{av}},\quad
\|\Delta\|_\infty\leq r_{\min}^{-1}
\ \Longrightarrow\
\|\Delta\|_{q,\Pi^V}\leq
\|\Delta\|_\infty^{1-2/q}\|\Delta\|_{2,\Pi^V}^{2/q}
\leq r_{\min}^{-(1-1/q)}\varepsilon_{\mathrm{av}}^{\,2/q}.
\end{equation*}
For each edge, define its ratio relative to the exact-verifier edge-conductance measure by ${J_1:=1}$ and
\begin{equation*}
{J_j(x,y):=\frac{d\widehat C_j}{dC_j^V}(x,y)
=\sqrt{\frac{\widehat V_j(x)\widehat V_{j-1}(y)}{V_j(x)V_{j-1}(y)}}
=\sqrt{\xi_j(x)\xi_{j-1}(y)},\qquad j\geq2,\qquad \xi_j:=\widehat V_j/V_j.}
\end{equation*}
For lifted states $z,z'$ on edge ${j}$, write ${J_j(z,z')}$ for this ratio evaluated at the noisier sample $x$ and cleaner sample $y$, in either orientation. The notation ${\iint_{j\leftrightarrow j-1}}$ means integrating over ${z=(j,x,d)}$ and ${z'=(j-1,y,d)}$ and also over the reversed pair, for $d\in\{\mathrm F,\mathrm B\}$. At the boundaries, both direction labels refer to the single state $(0,x,\mathrm B)$ or ${(J,x,\mathrm F)}$. The verifier bounds give
{${v_{\min}\leq J_j\leq r_{\min}^{-1}}$. For ${j\geq2}$, the identity $|\sqrt{\xi}-1|=|\xi-1|/(\sqrt{\xi}+1)\leq|\xi-1|$ gives
\begin{equation*}
|J_j(z,z')-1|
\leq\sqrt{\xi_j(x)}\,|\sqrt{\xi_{j-1}(y)}-1|
+|\sqrt{\xi_j(x)}-1|
\leq r_{\min}^{-1/2}\bigl(|\Delta(z)|+|\Delta(z')|\bigr).
\end{equation*}
}
The same inequality holds on the clean edge since ${J_1=1}$.

Define the incident-mass ratio ${s(j,x,d):=d\mu_j/d\mu_j^V(x)}$. For an interior state $z=(\ell,x,d)$, ${1\leq \ell<J}$, the two incident edges give
\begin{align*}
\mu_\ell^V(dx)&=C_\ell^V(dx,\mathcal X)+C_{\ell+1}^V(\mathcal X,dx),\\
\mu_\ell(dx)&=\int J_\ell(x,y)C_\ell^V(dx,dy)+\int J_{\ell+1}(y,x)C_{\ell+1}^V(dy,dx).
\end{align*}
Applying the symmetric edge identity in \cref{eq:c-symmetric-movement-edge-law}, established in the proof of \cref{thm:c-average-moment-poincare}, to the exact-verifier chain gives
\begin{equation*}
\mu_\ell^V(dx)M^V\bigl(z,(\ell-1,dy,d)\bigr)\mkern-2mu=\mkern-2mu\tfrac12 C_\ell^V(dx,dy),\quad
\mu_\ell^V(dx)M^V\bigl(z,(\ell+1,dy,d)\bigr)\mkern-2mu=\mkern-2mu\tfrac12 C_{\ell+1}^V(dy,dx).
\end{equation*}
Multiplying by the corresponding ${J_j}$ and adding therefore yields
\begin{equation*}
\mu_\ell^V(dx)\sum_{j=1}^J\int_{j\leftrightarrow j-1}J_j(z,z')M^V(z,dz')
=\tfrac12\mu_\ell(dx)=\tfrac12s(z)\mu_\ell^V(dx).
\end{equation*}
Comparing densities with respect to $\mu_\ell^V$ gives the sum as $c(z)s(z)$, since $c(z)=1/2$ in the interior. At a boundary, the two direction branches of its sole incident edge together give the same identity with $c(z)=1$. Dividing by $c(z)$ and normalizing the incident measures gives
\begin{equation}
s(z)=\frac1{c(z)}\sum_{j=1}^J\int_{j\leftrightarrow j-1}J_j(z,z')M^V(z,dz'),\qquad
\frac{d\Pi}{d\Pi^V}=\frac{\calZ^V}{\calZ}s.
\label{eq:c-verifier-incident-ratio}
\end{equation}
{In particular, since $s(z)$ is a weighted average of the edge ratios ${J_j}$ satisfying ${v_{\min}\leq J_j\leq r_{\min}^{-1}}$, we have $v_{\min}\leq s\leq r_{\min}^{-1}$. Since $M^V(z,\cdot)/c(z)$ is a probability kernel,}
\begin{align*}
\|s-1\|_{q,\Pi^V}^q
&\overset{(\circ)}{\leq}2\sum_{j=1}^J\iint_{j\leftrightarrow j-1}|J_j(z,z')-1|^q\,\Pi^V(dz)M^V(z,dz')\\
&\overset{(\triangle)}{\leq}2^{q+1}{r_{\min}^{-q/2}}\int c(z)|\Delta(z)|^q\Pi^V(dz)
\leq2^{q+1}{r_{\min}^{-q/2}}\|\Delta\|_{q,\Pi^V}^q.
\end{align*}
Here $(\circ)$ uses Jensen's inequality and $1/c(z)\leq2$; $(\triangle)$ uses the bound on ${|J_j-1|}$, $(a+b)^q\leq2^{q-1}(a^q+b^q)$ for $a,b\geq0$, and the symmetry of $M^V$ with respect to $\Pi^V$; the last step uses $c\leq1$. Taking $q$th roots gives $\|s-1\|_{q,\Pi^V}\leq{4r_{\min}^{-1/2}}\|\Delta\|_{q,\Pi^V}$.

To compare the operators despite their different stationary laws, use the norm-preserving map $Uf:=\sqrt{d\Pi/d\Pi^V}\,f$. On edge ${j}$, the identities ${\widehat C_j=J_jC_j^V}$ and $\mu_\ell=s\mu_\ell^V$ give ${M(z,dz')=[J_j(z,z')/s(z)]M^V(z,dz')}$ by \cref{eq:c-symmetric-movement-edge-law}. Substituting $d\Pi/d\Pi^V=(\calZ^V/\calZ)s$ from \cref{eq:c-verifier-incident-ratio}, the constant $\calZ^V/\calZ$ cancels between $U$ and $U^{-1}$:
\begin{align*}
&(UMU^{-1}f)(z)
=\sqrt{\frac{d\Pi}{d\Pi^V}(z)}
\int\frac{f(z')}{\sqrt{\frac{d\Pi}{d\Pi^V}(z')}}M(z,dz')
=\int\frac{\sqrt{s(z)}}{\sqrt{s(z')}}f(z')M(z,dz')\\
&=\sum_{j=1}^J\int_{j\leftrightarrow j-1}
\frac{J_j(z,z')f(z')}{\sqrt{s(z)s(z')}}M^V(z,dz')
=\sum_{j=1}^J\int_{j\leftrightarrow j-1}
\frac{J_j(z,z')m^V(z,z')f(z')}{\sqrt{s(z)s(z')}}\Pi^V(dz').
\end{align*}
Thus the density on edge ${j}$ is ${m^V(z,z')J_j(z,z')/\sqrt{s(z)s(z')}}$. For $F(a,b):=(ab)^{-1/2}$, both partial derivatives have absolute value at most $1/(2{v_{\min}^2})$ when $a,b\geq {v_{\min}}$. Since $s(z),s(z')\geq {v_{\min}}$ and $v_{\min}\leq1$,
\begin{align*}
\left|\frac{J_j(z,z')}{\sqrt{s(z)s(z')}}-1\right|
&\leq\frac{|J_j(z,z')-1|}{\sqrt{s(z)s(z')}}
+\left|\frac1{\sqrt{s(z)s(z')}}-1\right|\\
&\overset{(\circ)}{\leq}{v_{\min}^{-1}}|J_j(z,z')-1|
+\frac{|s(z)-1|+|s(z')-1|}{2{v_{\min}^2}}.
\end{align*}
Here $(\circ)$ applies the mean value theorem to $F$ between $(s(z),s(z'))$ and $(1,1)$.
The edge domains are disjoint: each state pair $(z,z')$ belongs to at most one checkpoint index edge. Thus, summing a nonnegative integrand over these domains gives at most its integral over all state pairs. In particular,
\begin{equation*}
\sum_{j=1}^J\iint_{j\leftrightarrow j-1}|\Delta(z)|^q\,\Pi^V(dz)\Pi^V(dz')
\leq\iint|\Delta(z)|^q\,\Pi^V(dz)\Pi^V(dz')
=\|\Delta\|_{q,\Pi^V}^q.
\end{equation*}
Using the same bound for $\Delta(z')$ and $s-1$, the preceding estimates give
\begin{equation*}
\left(\sum_{j=1}^J\iint_{j\leftrightarrow j-1}
\left|\frac{J_j(z,z')}{\sqrt{s(z)s(z')}}-1\right|^q
\Pi^V(dz)\Pi^V(dz')\right)^{1/q}
\overset{(\circ)}{\leq}\frac{6}{{\sqrt{r_{\min}}}{v_{\min}^2}}\|\Delta\|_{q,\Pi^V},
\end{equation*}
where $(\circ)$ uses the triangle inequality, ${|J_j-1|\leq {r_{\min}^{-1/2}}(|\Delta(z)|+|\Delta(z')|)}$, and $\|s-1\|_{q,\Pi^V}\leq4{r_{\min}^{-1/2}}\|\Delta\|_{q,\Pi^V}$, giving a coefficient $2/({\sqrt{r_{\min}}}{v_{\min}})+4/({\sqrt{r_{\min}}}{v_{\min}^2})\leq6/({\sqrt{r_{\min}}}{v_{\min}^2})$ since $v_{\min}\leq1$.
Since $1/2=1/p+1/q$, recalling the operator norm $\|A\|_{2\to2}:=\sup_{\|f\|_{2,\Pi^V}=1}\|Af\|_{2,\Pi^V}$ on $L^2(\Pi^V)$, we obtain
\begin{align}
\|UMU^{-1}-M^V\|_{2\to2}
&\overset{(\circ)}{\leq}
\left(\sum_{j=1}^J\iint_{j\leftrightarrow j-1}m^V(z,z')^2
\left|\frac{J_j(z,z')}{\sqrt{s(z)s(z')}}-1\right|^2
\Pi^V(dz)\Pi^V(dz')\right)^{1/2}\notag\\
&\overset{(\triangle)}{\leq}\frac{6H_p}{{\sqrt{r_{\min}}}{v_{\min}^2}}\|\Delta\|_{q,\Pi^V}
\leq\frac{6H_p}{{r_{\min}^{1+1/p}}{v_{\min}^2}}\varepsilon_{\mathrm{av}}^{\,1-2/p}.
\label{eq:c-average-verifier-operator-bound}
\end{align}
Here $(\circ)$ is Cauchy--Schwarz for integral kernels and $(\triangle)$ applies H\"older's inequality over the disjoint edge domains. Taking the infimum over admissible $p$ gives $\|UMU^{-1}-M^V\|_{2\to2}\leq\Delta_{\mathrm{av}}$.

Finally, set $L:=U\mathcal L_{\mathrm{move}}U^{-1}$ and $\psi:=U\mathbf1$. Then $L$ is nonnegative and self-adjoint, $L\psi=0$, and $\|\psi\|_{2,\Pi^V}=1$. Multiplication by $c$ commutes with $U$, so
\begin{equation*}
L=U(cI-M)U^{-1}=cI-UMU^{-1},\qquad \mathcal L_{\mathrm{move}}^V=cI-M^V.
\end{equation*}
Consequently,
\begin{equation*}
L-\mathcal L_{\mathrm{move}}^V=M^V-UMU^{-1},\qquad
\|L-\mathcal L_{\mathrm{move}}^V\|_{2\to2}=\|UMU^{-1}-M^V\|_{2\to2}\leq\Delta_{\mathrm{av}}.
\end{equation*}
For a function $g\in L^2(\Pi^V)$ such that $g\perp\psi$ and $\|g\|_{2,\Pi^V}=1$, use the orthogonality of $g$ and $\psi$ to choose scalars $\alpha,\beta$ so that $h=\alpha\psi+\beta g$ satisfies $\Pi^V(h)=0$ and $\|h\|_{2,\Pi^V}=1$. Then $\alpha^2+\beta^2=1$, and
\begin{equation*}
\gamma_V\overset{(\circ)}{\leq}\langle h,\mathcal L_{\mathrm{move}}^Vh\rangle_{\Pi^V}\overset{(\triangle)}{\leq}\langle h,Lh\rangle_{\Pi^V}+\Delta_{\mathrm{av}}\overset{(\square)}{=}\beta^2\langle g,Lg\rangle_{\Pi^V}+\Delta_{\mathrm{av}}\leq\langle g,Lg\rangle_{\Pi^V}+\Delta_{\mathrm{av}},
\end{equation*}
where $(\circ)$ uses \cref{eq:c-exact-verifier-movement-gap} and $\operatorname{Var}_{\Pi^V}(h)=1$; $(\triangle)$ uses $|\langle h,(\mathcal L_{\mathrm{move}}^V-L)h\rangle_{\Pi^V}|\leq\|\mathcal L_{\mathrm{move}}^V-L\|_{2\to2}\|h\|_{2,\Pi^V}^2\leq\Delta_{\mathrm{av}}$; and $(\square)$ uses $L\psi=0$ and self-adjointness, so $\langle\psi,Lg\rangle_{\Pi^V}=\langle L\psi,g\rangle_{\Pi^V}=0$. The last inequality uses $\beta^2\leq1$ and $\langle g,Lg\rangle_{\Pi^V}\geq0$.

For nonconstant $f$, take $g=U(f-\Pi(f))/\sqrt{\operatorname{Var}_\Pi(f)}$. Since $U$ preserves inner products, $g\perp\psi$ and $\|g\|_{2,\Pi^V}=1$. Moreover, $L=U\mathcal L_{\mathrm{move}}U^{-1}$ and the self-adjoint operator $\mathcal L_{\mathrm{move}}$ satisfies $\mathcal L_{\mathrm{move}}\mathbf1=0$, so
\begin{equation*}
\langle g,Lg\rangle_{\Pi^V}
=\frac{\langle f-\Pi(f),\mathcal L_{\mathrm{move}}(f-\Pi(f))\rangle_\Pi}{\operatorname{Var}_\Pi(f)}
=\frac{\langle f,\mathcal L_{\mathrm{move}}f\rangle_\Pi}{\operatorname{Var}_\Pi(f)}.
\end{equation*}
Multiplying the preceding bound by $\operatorname{Var}_\Pi(f)$ and using \cref{eq:c-movement-energy-lower-bound} gives
\begin{equation*}
\mathcal E_P(f,f)\geq\langle f,\mathcal L_{\mathrm{move}}f\rangle_\Pi
\geq(\gamma_V-\Delta_{\mathrm{av}})\operatorname{Var}_\Pi(f).
\end{equation*}
For constant $f$, both sides vanish. Combining this with the error-independent bound $\gamma_Vr_{\min}{v_{\min}^2}$ from \cref{eq:c-explicit-lifted-rate} gives the maximum in \cref{eq:c-average-verifier-gap}, uniformly in $\chi$. The multiplicative bound follows from $\max\{a,1-x\}\geq a/(a+x)$ for $0<a\leq1$ and $x\geq0$, with $a=r_{\min}{v_{\min}^2}$ and $x=\Delta_{\mathrm{av}}/\gamma_V$.
\end{proof}

As $\varepsilon_{\mathrm{av}}\to0$, the lower bound in \cref{eq:c-average-verifier-gap} recovers $\gamma_V$; for larger errors, it retains the positive bound from \cref{prop:c-explicit-lifted-poincare}.

Lastly, we remark that the moment assumption $H_p<\infty$ in \cref{eq:c-oracle-density-moment} holds for paired Gaussian transitions under a slightly stronger clean-tail condition than in \cref{cor:c-ideal-paired-gaussian-mixing}, as stated below.

\begin{lemma_ap}[Higher-order moments for paired Gaussian transitions]
\label{lem:c-gaussian-higher-moments}
Assume the Gaussian pairing in \cref{eq:c-ideal-gaussian-pairing} and the reward and verifier bounds in \cref{eq:c-quantitative-weight-bounds}. If
\begin{equation}
\mathbb E_{\pi_0}e^{c\|X_0\|^2}<\infty
\quad\text{for some}\quad c>\max_{1\leq j\leq J}\frac{2A_j^2}{\sigma_j^2},
\label{eq:c-gaussian-strict-tail-moment}
\end{equation}
where $A_j$ and $\sigma_j$ are the cumulative signal scale and renoising noise scale from \cref{cor:c-ideal-paired-gaussian-mixing}, respectively,
then $H_p<\infty$ in \cref{eq:c-oracle-density-moment} for some finite $p>2$. In particular, bounded support of $\pi_0$ ensures this tail condition.
\end{lemma_ap}

\begin{proof}
We repeat the Gaussian calculation in the proof of \cref{cor:c-ideal-paired-gaussian-mixing} for the $p$th moment of $\frac{dJ_j^{\mathrm F}}{d(\pi_j\otimes\pi_{j-1})}(x,y)=\frac{\varphi_{\sigma_j^2}(x-a_jy)}{\pi_j(x)}$. Using $\pi_j\geq b_{j,v_j}\varphi_{v_j}$ and $v_j=\tau_j^2-\sigma_j^2/4$ from \cref{eq:c-ideal-gaussian-lower-envelope,eq:c-ideal-gaussian-envelope-choice}, the integrations in \cref{eq:c-ideal-gaussian-integral-bound,eq:c-ideal-gaussian-exponential-moment} bound this moment by a finite multiple of $\mathbb E_{\pi_0}e^{c_j(p)\|X_0\|^2}$, where
\begin{equation*}
\eta_j(p):=\frac{p(p-1)a_j^2}{2[pv_j-(p-1)\sigma_j^2]},\qquad
c_j(p):=\frac{\eta_j(p)A_{j-1}^2}{1-2\eta_j(p)\tau_{j-1}^2},
\end{equation*}
provided both denominators are positive. At $p=2$, we have $\eta_j(2)=\eta_j$, so both denominators are strictly positive by \cref{eq:c-ideal-gaussian-positive-denominators}, and $c_j(2)=2A_j^2/\sigma_j^2<c$ by \cref{eq:c-gaussian-strict-tail-moment}. Continuity with respect to $p$ and the finiteness of $J$ therefore give a common $p>2$ with
\begin{equation*}
\max_{1\leq j\leq J}\left\|\frac{dJ_j^{\mathrm F}}{d(\pi_j\otimes\pi_{j-1})}\right\|_{L^p(\pi_j\otimes\pi_{j-1})}<\infty.
\end{equation*}
Since $J_j^{\mathrm B}=J_j^{\mathrm F}$ and $r_{\min}\leq V_j\leq1$ by \cref{eq:c-mixing-verifier-recursion}, applying \cref{lem:c-edge-density-moment-condition} to the exact-verifier kernel $P^V$ and its stationary law $\Pi^V$ gives $p_{\mathrm{move}}^V\in L^p(\Pi^V\otimes\Pi^V)$, where $p_{\mathrm{move}}^V$ is the density of $P_{\mathrm{move}}^V$ relative to $\Pi^V$. By \cref{eq:c-oracle-movement-density,eq:c-oracle-density-moment},
\begin{equation*}
m^V(z,z')=\frac{p_{\mathrm{move}}^V(z,z')+p_{\mathrm{move}}^V(z',z)}2,
\qquad
H_p^p\leq\iint p_{\mathrm{move}}^V(z,z')^p\,\Pi^V(dz)\Pi^V(dz')<\infty,
\end{equation*}
where the inequality uses $((a+b)/2)^p\leq(a^p+b^p)/2$ for $a,b\geq0$ and exchanges $z,z'$ in the second integral.
\end{proof}

{
\subsection{Momentum and First Hitting Time of the Clean Boundary}
\label{app:subsec:cs-vgb-momentum-hitting}
In practice, CS-VGB runs under a finite computational budget and collects samples at the clean checkpoint. Visits to this checkpoint are therefore a practical concern distinct from mixing time. We analyze the \emph{first hitting time} $\tau_0:=\inf\{n\geq0:j_n=0\}$, where $j_n$ is the checkpoint index after $n$ updates, under an average directional-imbalance condition.

Let $Y_J\sim\pi_J$ and $Y_{j-1}\mid Y_j\sim Q_j^{\mathrm F}(\cdot\mid Y_j)$ for $j=J,\ldots,1$. The expectations below are taken along this denoising path. Write $(u)_+:=\max\{u,0\}$.

\begin{proposition_ap}[First hitting time under average directional imbalance]
\label{prop:c-average-momentum-hitting}
Assume the conditions of \cref{thm:c-exact-stationarity}, with $J\geq2$, and use the exact CS-VGB kernel $P$. Suppose
\begin{equation}
K_{J-1}^{\mathrm B,\rho}(dx\mid y)=\pi_J(dx)
\quad\text{for }\pi_{J-1}\text{-almost every }y.
\label{eq:c-average-hitting-noise-reset}
\end{equation}
Assume that, for some $C_{\mathrm{HT}}\geq0$,
\begin{equation}
\max_{1\leq j<J}\bbE\left[
\left(\log\frac{a_j^{\mathrm B}(Y_j)}{a_j^{\mathrm F}(Y_j)}\right)_+
\right]\leq\frac{C_{\mathrm{HT}}}{J-1},
\qquad
1-\chi\leq\frac1{J-1},\quad \chi\in[0,1].
\label{eq:c-average-hitting-imbalance}
\end{equation}
If $v_{\min}\leq\widehat V_j\leq1$ $\pi_j$-almost surely for $1\leq j\leq J$, with $v_{\min}>0$, then, for $\pi_J$-almost every $x$,
\begin{equation}
J\leq\bbE_{(J,x,\mathrm F)}[\tau_0]
\leq e^{C_{\mathrm{HT}}+1}\frac{3Z+4J-5}{v_{\min}}.
\label{eq:c-average-hitting-bounded-verifier}
\end{equation}
For fixed $C_{\mathrm{HT}}$, $Z>0$, and $v_{\min}>0$, this gives $\bbE_{(J,x,\mathrm F)}[\tau_0]=\Theta(J)$.
\end{proposition_ap}

\begin{proof}
Write $M_j:=\widehat C_j(\mathcal X,\mathcal X)$. We first bound the ratio $\calZ/M_J$. Terminal anchoring and \cref{eq:c-incident-measures,eq:c-lift-normalizer} give
\begin{equation*}
M_1=Z,\qquad
\calZ=3Z+4\sum_{j=2}^{J-1}M_j+3M_J.
\end{equation*}
The verifier bounds give $v_{\min}\leq\sqrt{\widehat V_j\widehat V_{j-1}}\leq1$, hence $v_{\min}\leq M_j\leq1$ for $j\geq2$ and
\begin{equation*}
\frac{\calZ}{M_J}\leq\frac{3Z+4J-5}{v_{\min}}.
\end{equation*}
In particular, $M_J\geq v_{\min}>0$. We next bound the time needed to reach checkpoint $0$ by considering repeated attempts from checkpoint $J$.

The corrected edge identity and \cref{eq:c-average-hitting-noise-reset} give $\pi_J(dx)K_J^{\mathrm F}(dy\mid x)=\pi_J(dx)\pi_{J-1}(dy)$, hence $K_J^{\mathrm F}(dy\mid x)=\pi_{J-1}(dy)$ for $\pi_J$-almost every $x$. Substituting this and \cref{eq:c-average-hitting-noise-reset} into the definitions of the normalized transitions gives
\begin{equation}
Q_J^{\mathrm F}(dy\mid x)
=\frac{\sqrt{\widehat V_{J-1}(y)}\,\pi_{J-1}(dy)}
{\int\sqrt{\widehat V_{J-1}}\,d\pi_{J-1}},\qquad
Q_{J-1}^{\mathrm B,\rho}(dx\mid y)
=\frac{\sqrt{\widehat V_J(x)}\,\pi_J(dx)}
{\int\sqrt{\widehat V_J}\,d\pi_J}.
\label{eq:c-average-hitting-top-departure}
\end{equation}
The F transition does not depend on the current sample $x$. Thus the first update from $J$ always produces the same distribution of states $(J-1,Y,\mathrm F)$; denote it by $\zeta$. Since the returning B transition has a density with respect to $\pi_J$, the next F update follows the same distribution $\zeta$ after every return to $J$, independently of the returned sample.

We next bound the mean time between visits to $J$. Starting from checkpoint $J$, let $\tau_J^+:=\inf\{n\geq1:j_n=J\}$ be the first return time to $J$. Write $D:=\{(j,x,d)\in\mathsf S:j<J\}$ for the states below $J$. We use $\Pi_D(A):=\Pi(A\cap D)$ and $P_D(z,A):=P(z,A\cap D)$ for $z\in D$. Thus $P_D$ keeps the original transition probabilities but excludes any move reaching $J$.
For any measurable $A\subseteq D$, stationarity splits the probability of entering $A$ into arrivals from $J$ and arrivals from $D$:
\begin{equation*}
\Pi_D(A)=\int_{\{j=J\}}P(z,A)\Pi(dz)+\int_D P(z,A)\Pi(dz)
=\Pi(j=J)\zeta(A)+(\Pi_DP_D)(A).
\end{equation*}
Applying this identity repeatedly gives
\begin{equation*}
\Pi_D=\Pi(j=J)\zeta+\Pi_DP_D
=\Pi(j=J)\sum_{m=0}^{n-1}\zeta P_D^m+\Pi_DP_D^n.
\end{equation*}
Taking total masses and dropping the nonnegative remainder $(\Pi_DP_D^n)(D)$ gives
\begin{equation*}
\Pi(D)\geq\Pi(j=J)\sum_{m=0}^{n-1}(\zeta P_D^m)(D),
\qquad
\sum_{m\geq0}(\zeta P_D^m)(D)\leq\frac{\Pi(D)}{\Pi(j=J)},
\end{equation*}
After the first update from $J$, the state has distribution $\zeta$. For $m\geq1$, remaining in $D$ for the next $m$ updates has probability
\begin{align*}
(\zeta P_D^m)(D)
&=\int_D\zeta(dz_0)\int_D P(z_0,dz_1)\cdots\int_D P(z_{m-1},dz_m)
=\bbP_{(J,x,\mathrm F)}(\tau_J^+>m+1).
\end{align*}
Every intermediate state is integrated over $D$, so a path that reaches $J$ and then leaves it is not counted. Using $\bbE[T]=\sum_{n\geq0}\bbP(T>n)$ for a nonnegative integer-valued time $T$, we obtain
\begin{equation}
\bbE_{(J,x,\mathrm F)}[\tau_J^+]
=1+\sum_{m\geq0}(\zeta P_D^m)(D)
\leq1+\frac{\Pi(D)}{\Pi(j=J)}
=\frac1{\Pi(j=J)}=\frac{\calZ}{M_J}<\infty.
\label{eq:c-average-hitting-noise-return}
\end{equation}
Having bounded the return time, we now bound the probability $\bbP_{(J,x,\mathrm F)}(\tau_0<\tau_J^+)$ of visiting $0$ before returning to $J$. We obtain a lower bound by counting only paths that move in direction F without a flip; stays are allowed.
At an interior state $(j,x,\mathrm F)$, each stay leaves the state unchanged, so
\begin{equation*}
\bbP_{(j,x,\mathrm F)}(\text{move before flip})
=\sum_{n=0}^{\infty}(\chi\ell_j(x))^n p_j^{\mathrm F}(x)
=\frac{p_j^{\mathrm F}(x)}{1-\chi\ell_j(x)},
\end{equation*}
For $u=a_j^{\mathrm B}(x)/a_j^{\mathrm F}(x)$, the directional probabilities satisfy
\begin{equation*}
p_j^{\mathrm F}(x)=\frac1{1+u},\qquad
\ell_j(x)=\frac{\min\{1,u\}}{1+u}.
\end{equation*}
Substituting these expressions gives
\begin{equation*}
\frac{p_j^{\mathrm F}(x)}{1-\chi\ell_j(x)}
=\frac1{1+u-\chi\min\{1,u\}}
\geq\frac{e^{-(\log u)_+}}{2-\chi},
\end{equation*}
because $1+u-\chi\min\{1,u\}\leq(2-\chi)\max\{1,u\}=(2-\chi)e^{(\log u)_+}$.
Along the denoising path generated by $Y_J\sim\pi_J$ and $Y_{j-1}\mid Y_j\sim Q_j^{\mathrm F}(\cdot\mid Y_j)$ for $j=J,\ldots,1$, multiply these move-before-flip probabilities and average over the samples. This counts paths reaching $0$ without a flip and therefore gives
\begin{align*}
\bbP_{(J,x,\mathrm F)}(\tau_0<\tau_J^+)
&\geq\bbE\left[\prod_{j=1}^{J-1}
\frac{p_j^{\mathrm F}(Y_j)}{1-\chi\ell_j(Y_j)}\right]\\
&\geq(2-\chi)^{-(J-1)}
\bbE\left[\exp\left\{-\sum_{j=1}^{J-1}
\left(\log\frac{a_j^{\mathrm B}(Y_j)}{a_j^{\mathrm F}(Y_j)}\right)_+\right\}\right]
\\
&\overset{(\circ)}{\geq}(2-\chi)^{-(J-1)}
\exp\left[-\sum_{j=1}^{J-1}\bbE\left[
\left(\log\frac{a_j^{\mathrm B}(Y_j)}{a_j^{\mathrm F}(Y_j)}\right)_+
\right]\right]
\overset{(\triangle)}{\geq}e^{-(C_{\mathrm{HT}}+1)},
\end{align*}
where $(\circ)$ is Jensen's inequality, and $(\triangle)$ uses \cref{eq:c-average-hitting-imbalance}: the sum is at most $C_{\mathrm{HT}}$, and $(J-1)\log(2-\chi)\leq(J-1)(1-\chi)\leq1$.

We combine these two bounds by repeating the attempt after each return to $J$. Let $L_i$ be the number of updates from the start of attempt $i$ to the next visit to $J$, and let $N$ be the first attempt that visits $0$. Let $\mathcal F_{i-1}$ be the sigma-algebra generated by the chain up to the start of attempt $i$. By the strong Markov property, conditional on $\mathcal F_{i-1}$, the next attempt succeeds with the same probability $\bbP_{(J,x,\mathrm F)}(\tau_0<\tau_J^+)$, so $\bbP(N\geq i)=[1-\bbP_{(J,x,\mathrm F)}(\tau_0<\tau_J^+)]^{i-1}$. The same property and \cref{eq:c-average-hitting-noise-return} give
\begin{equation*}
\bbE[L_i\mid\mathcal F_{i-1}]\leq\frac{\calZ}{M_J}\qquad\text{almost surely}.
\end{equation*}
The event $\{N\geq i\}$ is determined by the previous attempts and is therefore $\mathcal F_{i-1}$-measurable. Therefore, the tower property yields
\begin{align*}
\bbE[\mathbf1_{\{N\geq i\}}L_i]
&=\bbE\left[\bbE[\mathbf1_{\{N\geq i\}}L_i\mid\mathcal F_{i-1}]\right]
=\bbE\left[\mathbf1_{\{N\geq i\}}\bbE[L_i\mid\mathcal F_{i-1}]\right]\\
&\leq\bbE\left[\mathbf1_{\{N\geq i\}}\frac{\calZ}{M_J}\right]
=\bbP(N\geq i)\frac{\calZ}{M_J}.
\end{align*}
Finally, the first visit to $0$ occurs before the successful attempt ends, giving
\begin{align}
\bbE_{(J,x,\mathrm F)}[\tau_0]
&\leq\sum_{i\geq1}\bbE[\mathbf1_{\{N\geq i\}}L_i]
\leq\frac{\calZ}{M_J}\sum_{i\geq1}\bigl[1-\bbP_{(J,x,\mathrm F)}(\tau_0<\tau_J^+)\bigr]^{i-1}\notag\\
&=\frac{\calZ}{M_J\,\bbP_{(J,x,\mathrm F)}(\tau_0<\tau_J^+)}
\leq e^{C_{\mathrm{HT}}+1}\frac{\calZ}{M_J}.
\label{eq:c-average-momentum-hitting-bound}
\end{align}
Substituting the bound on $\calZ/M_J$ and using $\tau_0\geq J$ proves the claim.
\end{proof}

\paragraph{First hitting time under exact balance.}
If ${p_j^{\mathrm F}=p_j^{\mathrm B}=1/2}$ at every reachable interior state, then
\begin{equation}
\bbE_{(J,x,\mathrm F)}[\tau_0]
=(2J-1)\frac{1+(1-\chi)J}{2-\chi}.
\label{eq:c-balanced-hitting-formula}
\end{equation}
At $\chi=1$, the mean is $2J-1$, compared with $J^2$ when the direction is resampled at each update (\cref{fig:cs-vgb-balanced-hitting}).
To obtain this formula, define $f_j:=\bbE_{(j,x,\mathrm F)}[\tau_0]$ and $b_j:=\bbE_{(j,x,\mathrm B)}[\tau_0]$, which are independent of $x$ under exact balance. Conditioning on the first update gives, for $1\leq j<J$,
\begin{equation*}
\begin{aligned}
f_j&=1+\frac12f_{j-1}+\frac{1-\chi}{2}b_j+\frac\chi2f_j, \qquad b_j&=1+\frac12b_{j+1}+\frac{1-\chi}{2}f_j+\frac\chi2b_j.
\end{aligned}
\end{equation*}
Solving these recurrences with $f_0=0$ and $b_J=f_J=1+f_{J-1}$ yields \cref{eq:c-balanced-hitting-formula}.

\begin{figure}[!htb]
\centering

\definecolor{hittingBlue}{HTML}{365F7D}
\definecolor{hittingGray}{HTML}{777777}
\definecolor{hittingGrid}{HTML}{DDDDDD}
\begin{tikzpicture}[x=1cm,y=.86cm,font=\normalfont\footnotesize,text=black]
\draw[hittingGrid,line width=.35pt] (0,1) -- (7.8,1);
\draw[black,line width=.6pt] (-.08,1) -- (0,1);
\node[anchor=east] at (-.12,1) {$10$};
\draw[hittingGrid,line width=.35pt] (0,2) -- (7.8,2);
\draw[black,line width=.6pt] (-.08,2) -- (0,2);
\node[anchor=east] at (-.12,2) {$10^{2}$};
\draw[hittingGrid,line width=.35pt] (0,3) -- (7.8,3);
\draw[black,line width=.6pt] (-.08,3) -- (0,3);
\node[anchor=east] at (-.12,3) {$10^{3}$};
\draw[hittingGrid,line width=.35pt] (0,4) -- (7.8,4);
\draw[black,line width=.6pt] (-.08,4) -- (0,4);
\node[anchor=east] at (-.12,4) {$10^{4}$};
\draw[black,line width=.7pt] (0,4.5) -- (0,.38) -- (7.95,.38);
\draw[black,line width=.6pt] (0,.38) -- (0,.30);
\node[anchor=north] at (0,.24) {$2$};
\draw[black,line width=.6pt] (1.3,.38) -- (1.3,.30);
\node[anchor=north] at (1.3,.24) {$4$};
\draw[black,line width=.6pt] (2.6,.38) -- (2.6,.30);
\node[anchor=north] at (2.6,.24) {$8$};
\draw[black,line width=.6pt] (3.9000000000000004,.38) -- (3.9000000000000004,.30);
\node[anchor=north] at (3.9000000000000004,.24) {$16$};
\draw[black,line width=.6pt] (5.2,.38) -- (5.2,.30);
\node[anchor=north] at (5.2,.24) {$32$};
\draw[black,line width=.6pt] (6.5,.38) -- (6.5,.30);
\node[anchor=north] at (6.5,.24) {$64$};
\draw[black,line width=.6pt] (7.800000000000001,.38) -- (7.800000000000001,.30);
\node[anchor=north] at (7.800000000000001,.24) {$128$};
\draw[hittingGray,line width=1pt,dash pattern=on 3pt off 2pt] plot coordinates {
(0.00000,0.60206)
(0.76045,0.95424)
(1.30000,1.20412)
(2.06045,1.55630)
(2.60000,1.80618)
(3.36045,2.15836)
(3.90000,2.40824)
(4.66045,2.76042)
(5.20000,3.01030)
(5.96045,3.36248)
(6.50000,3.61236)
(7.26045,3.96454)
(7.80000,4.21442)
};
\draw[black,line width=1pt] plot coordinates {
(0.00000,0.51491)
(0.76045,0.77152)
(1.30000,0.94983)
(2.06045,1.20412)
(2.60000,1.38997)
(3.36045,1.66276)
(3.90000,1.86494)
(4.66045,2.16218)
(5.20000,2.38120)
(5.96045,2.69976)
(6.50000,2.93164)
(7.26045,3.26495)
(7.80000,3.50503)
};
\draw[hittingBlue,line width=1.5pt] plot coordinates {
(0.00000,0.47712)
(0.76045,0.69897)
(1.30000,0.84510)
(2.06045,1.04139)
(2.60000,1.17609)
(3.36045,1.36173)
(3.90000,1.49136)
(4.66045,1.67210)
(5.20000,1.79934)
(5.96045,1.97772)
(6.50000,2.10380)
(7.26045,2.28103)
(7.80000,2.40654)
};
\node[anchor=west,align=left] at (8.05,4.21442) {Resample direction\\$J^2$};
\node[anchor=west,align=left] at (8.05,3.50503) {CS-VGB, $\chi=0.9$};
\node[anchor=west,align=left,text=hittingBlue] at (8.05,2.40654) {CS-VGB, $\chi=1$\\${\color{black}2J-1}$};
\node[rotate=90,font=\normalfont\small] at (-1.04,2.52) {$\mathbb E[\tau_0]$ (updates)};
\node[font=\normalfont\small] at (3.9,-.48) {Checkpoint intervals $J$};
\end{tikzpicture}
\caption{Exact expected first hitting times from $j=J$ under balanced masses (\cref{eq:c-balanced-hitting-formula}). Resample direction moves directly to $j-1$ or $j+1$ with probabilities $p_j^{\mathrm F}$ or $p_j^{\mathrm B}$, respectively; here both equal $1/2$ at interior checkpoints, and the move from $J$ is forced inward. Its mean is $J^2$, compared with $2J-1$ for CS-VGB at $\chi=1$.}
\label{fig:cs-vgb-balanced-hitting}
\end{figure}

\subsection{Stationarity and Convergence under Edge-Correction Error}
\label{app:subsec:cs-vgb-practical-errors}

We now study CS-VGB with the edge correction omitted, establishing conditions for a stationary distribution, bounding its deviation from the exact stationary law, and analyzing convergence to this potentially biased limit.
\paragraph{Average edge-correction error.}
Let $P$ be the exact CS-VGB kernel and $\bar P$ the generally biased kernel obtained by replacing the edge correction $\rho_j=dJ_j^{\mathrm F}/dJ_j^{\mathrm B}$ with $1$, keeping $\widehat V$ and $\chi$ fixed. Assume the reward and verifier bounds in \cref{eq:c-quantitative-weight-bounds}, and measure the correction error by
\begin{equation}
\varepsilon_{\rho,\mathrm{av}}
:=\max_{1\leq j\leq J}
\left(\bbE_{J_j^{\mathrm B}}[(\rho_j-1)^2]\right)^{1/2}<\infty.
\label{eq:c-edge-error}
\end{equation}
To turn this average edge error into an operator bound, let $P_{\mathrm{move}}$ and $\bar P_{\mathrm{move}}$ retain only checkpoint
moves, with their original probabilities. Assume they have densities
with respect to the exact stationary law $\Pi$:
\begin{equation}
P_{\mathrm{move}}(z,dz')=p_{\mathrm{move}}(z,z')\Pi(dz'),\qquad
\bar P_{\mathrm{move}}(z,dz')=\bar p_{\mathrm{move}}(z,z')\Pi(dz'),
\label{eq:c-edge-moving-densities}
\end{equation}
for $\Pi$-almost every $z$. For at least one finite $p>2$, suppose
\begin{equation}
H_{\rho,p}:=\left(\iint[p_{\mathrm{move}}(z,z')+\bar p_{\mathrm{move}}(z,z')]^p
\,\Pi(dz)\Pi(dz')\right)^{1/p}<\infty.
\label{eq:c-edge-moving-density-moment}
\end{equation}
Combining this density moment with the average correction error $\varepsilon_{\rho,\mathrm{av}}$, define
\begin{equation}
\Delta_{\rho,\mathrm{av}}
:=\inf_{\substack{2<p<\infty\\H_{\rho,p}<\infty}}
H_{\rho,p}^{\,p/[2(p-1)]}
\left(\frac{\varepsilon_{\rho,\mathrm{av}}}{\sqrt{\min\{r_{\min},v_{\min}\}}}\right)^{(p-2)/(p-1)}.
\label{eq:c-average-rho-operator-budget}
\end{equation}
where $r_{\min}$ and $v_{\min}$ are the positive lower bounds on the reward and process verifier, respectively.

Here, $\|\cdot\|_{2\to2}$ and all adjoints are taken in $L^2(\Pi)$.

\begin{lemma_ap}[Average edge error controls the movement operator]
\label{lem:c-practical-operator-bound}
Under \cref{eq:c-quantitative-weight-bounds,eq:c-edge-error,eq:c-edge-moving-density-moment},
$\bar P$ is bounded on $L^2(\Pi)$ and
\begin{equation}
\|\bar P_{\mathrm{move}}-P_{\mathrm{move}}\|_{2\to2}
\leq\|\bar p_{\mathrm{move}}-p_{\mathrm{move}}\|_{L^2(\Pi\otimes\Pi)}
\leq\Delta_{\rho,\mathrm{av}}.
\label{eq:c-practical-operator-error}
\end{equation}
The same bound holds for
$\|\bar M-M\|_{2\to2}$, where
$M=(P_{\mathrm{move}}+P_{\mathrm{move}}^*)/2$ and
$\bar M=(\bar P_{\mathrm{move}}+\bar P_{\mathrm{move}}^*)/2$.
\end{lemma_ap}

\begin{proof}
Write $b_*:=\min\{r_{\min},v_{\min}\}>0$ and define the uncorrected weighted edge measures
\begin{equation}
\widehat C_1^{\,\rho=1}(dx,dy):=r(y)J_1^{\mathrm B}(dx,dy),\quad
\widehat C_j^{\,\rho=1}(dx,dy)
:=\sqrt{\widehat V_j(x)\widehat V_{j-1}(y)}J_j^{\mathrm B}(dx,dy),
\; 2\leq j\leq J.
\label{eq:c-uncorrected-weighted-edges}
\end{equation}
Then $\widehat C_j=\rho_j\widehat C_j^{\,\rho=1}$ and
$\widehat C_j^{\,\rho=1}\leq J_j^{\mathrm B}$.
Writing $\mathcal E_{\rho,j}:=\int(\rho_j-1)^2\,d\widehat C_j^{\,\rho=1}$,
the reward and verifier bounds give
\begin{equation*}
\mathcal E_{\rho,j}\leq \varepsilon_{\rho,\mathrm{av}}^2,\qquad
\calZ=3\widehat C_1(\mathcal X,\mathcal X)
+4\sum_{j=2}^{J-1}\widehat C_j(\mathcal X,\mathcal X)
+3\widehat C_J(\mathcal X,\mathcal X)\geq(4J-2)b_*.
\end{equation*}
Hence
\begin{equation}
\frac{\mathcal E_{\rho,1}+4\sum_{j=2}^J\mathcal E_{\rho,j}}{\calZ}
\leq\frac{4J-3}{(4J-2)b_*}\,\varepsilon_{\rho,\mathrm{av}}^2
\leq\frac{\varepsilon_{\rho,\mathrm{av}}^2}{b_*}.
\label{eq:c-reference-to-weighted-rho-error}
\end{equation}
For probability measures $R,S$, define the \emph{triangular discrimination} and \emph{squared Hellinger distance}, respectively, by
\begin{equation*}
\mathcal T(R,S):=\int\frac{(dR-dS)^2}{dR+dS},\qquad
\mathcal H^2(R,S):=\int(\sqrt{dR}-\sqrt{dS})^2,
\end{equation*}
For densities $u,v$ with respect to a common dominating measure, $(u-v)^2/(u+v)=(\sqrt u-\sqrt v)^2(\sqrt u+\sqrt v)^2/(u+v)\leq2(\sqrt u-\sqrt v)^2$, since $(\sqrt u+\sqrt v)^2\leq2(u+v)$. Integrating gives $\mathcal T(R,S)\leq2\mathcal H^2(R,S)$.
For finite nonzero measures $A,B$ with total masses $a,b$, respectively, let $c_0:=\int\sqrt{d(A/a)d(B/b)}\in[0,1]$. Then
\begin{equation}
a\,\mathcal T(A/a,B/b)
\overset{(\circ)}{\leq}2a\mathcal H^2(A/a,B/b)
=4a(1-c_0)
\overset{(\triangle)}{\leq}4\int(\sqrt{dA}-\sqrt{dB})^2,
\label{eq:c-normalized-triangular-bound}
\end{equation}
where $(\circ)$ uses $\mathcal T\leq2\mathcal H^2$, the equality follows from $\mathcal H^2(A/a,B/b)=2(1-c_0)$, and $(\triangle)$ uses $\int(\sqrt{dA}-\sqrt{dB})^2=a+b-2\sqrt{ab}c_0=(\sqrt b-\sqrt a\,c_0)^2+a(1-c_0^2)\geq a(1-c_0)$.

Fix an interior source $x$ at checkpoint $j$. On disjoint destination copies labelled F and B, define
\begin{align*}
A^\rho_{j,x}(dD,dy)
&:=a_j^{\mathrm F}(x)\delta_{\mathrm F}(dD)Q_j^{\mathrm F}(dy\mid x)
 +\sqrt{\widehat V_j(x)}\delta_{\mathrm B}(dD)
 K_j^{\mathrm B}(dy\mid x)\sqrt{\widehat V_{j+1}(y)}\rho_{j+1}(y,x),\\
A^{\rho=1}_{j,x}(dD,dy)
&:=a_j^{\mathrm F}(x)\delta_{\mathrm F}(dD)Q_j^{\mathrm F}(dy\mid x)
 +\sqrt{\widehat V_j(x)}\delta_{\mathrm B}(dD)
 K_j^{\mathrm B}(dy\mid x)\sqrt{\widehat V_{j+1}(y)}.
\end{align*}
For this fixed $(j,x)$, write $a:=A^{\rho}_{j,x}(\{\mathrm F,\mathrm B\}\times\calX)=a_j^{\mathrm F}(x)+a_j^{\mathrm B}(x)$ for the total mass of $A^\rho_{j,x}$ and $b:=A^{\rho=1}_{j,x}(\{\mathrm F,\mathrm B\}\times\calX)$. Dividing by these total masses gives the probabilities of moving in each current direction:
\begin{align*}
P_{\mathrm{move}}((j,x,\mathrm F),(j-1,dy,\mathrm F))
&=\frac{A^\rho_{j,x}(\mathrm F,dy)}{a},\\
P_{\mathrm{move}}((j,x,\mathrm B),(j+1,dy,\mathrm B))
&=\frac{A^\rho_{j,x}(\mathrm B,dy)}{a}.
\end{align*}
The corresponding probabilities for $\bar P$ are obtained by replacing $A^\rho_{j,x}/a$ with $A^{\rho=1}_{j,x}/b$. Moreover, each current direction has stationary measure
\begin{equation*}
\Pi(j,dx,\mathrm F)=\Pi(j,dx,\mathrm B)=\frac{a\,\pi_j(dx)}{\calZ}.
\end{equation*}
For each fixed $(j,x)$, set $R:=A^\rho_{j,x}/a$ and $S:=A^{\rho=1}_{j,x}/b$. Splitting the defining integral of $\mathcal T(R,S)$ over the F and B destinations gives
\begin{align*}
\mathcal T(R,S)
&=\sum_{d\in\{\mathrm F,\mathrm B\}}\int_{\calX}
\frac{[dR(d,y)-dS(d,y)]^2}{dR(d,y)+dS(d,y)}\\
&=\sum_{d\in\{\mathrm F,\mathrm B\}}\int_{\mathsf S}
\frac{[\bar p_{\mathrm{move}}((j,x,d),z')-p_{\mathrm{move}}((j,x,d),z')]^2}
{\bar p_{\mathrm{move}}((j,x,d),z')+p_{\mathrm{move}}((j,x,d),z')}\,\Pi(dz'),
\end{align*}
where the second equality uses the moving densities $p_{\mathrm{move}}$ and $\bar p_{\mathrm{move}}$ with respect to the common dominating measure $\Pi$, with $0/0:=0$. Multiplying by $\Pi(j,dx,d)=a\,\pi_j(dx)/\calZ$, which is the same for both directions, and integrating over $x$ therefore gives
\begin{align*}
&\sum_{d\in\{\mathrm F,\mathrm B\}}\int_{\calX}\Pi(j,dx,d)
\int_{\mathsf S}\frac{[\bar p_{\mathrm{move}}((j,x,d),z')-p_{\mathrm{move}}((j,x,d),z')]^2}
{\bar p_{\mathrm{move}}((j,x,d),z')+p_{\mathrm{move}}((j,x,d),z')}\,\Pi(dz')\\
&\qquad=\frac1{\calZ}\int_{\calX}a\,\mathcal T(R,S)\,\pi_j(dx)
=\frac1{\calZ}\int_{\calX}a\,\mathcal T(A^\rho_{j,x}/a,A^{\rho=1}_{j,x}/b)\,\pi_j(dx)\\
&\qquad\leq\frac4{\calZ}\int_{\calX}\pi_j(dx)
\int_{\{\mathrm F,\mathrm B\}\times\calX}
\left(\sqrt{dA^\rho_{j,x}}-\sqrt{dA^{\rho=1}_{j,x}}\right)^2,
\end{align*}
where the inequality is \cref{eq:c-normalized-triangular-bound}. The F terms agree, so only the B terms remain in the last integral. Since $(\sqrt\rho-1)^2\leq(\rho-1)^2$,
\begin{align*}
&\frac4{\calZ}\int_{\calX}\pi_j(dx)
\int_{\{\mathrm F,\mathrm B\}\times\calX}
\left(\sqrt{dA^\rho_{j,x}}-\sqrt{dA^{\rho=1}_{j,x}}\right)^2\\
&\qquad=\frac4{\calZ}\iint\sqrt{\widehat V_j(x)\widehat V_{j+1}(y)}
\left(\sqrt{\rho_{j+1}(y,x)}-1\right)^2\pi_j(dx)K_j^{\mathrm B}(dy\mid x)\\
&\qquad\leq\frac4{\calZ}\iint[\rho_{j+1}(y,x)-1]^2
\,\widehat C_{j+1}^{\,\rho=1}(dy,dx)
=\frac{4\mathcal E_{\rho,j+1}}{\calZ}.
\end{align*}
At the clean boundary, fix a clean sample $y$ and compare the two renoising distributions $R:=K_0^{\mathrm B,\rho}(\cdot\mid y)$ and $S:=K_0^{\mathrm B}(\cdot\mid y)$. Both are probability measures, and $dR/dS=\rho_1(\cdot,y)$ gives
\begin{equation*}
\mathcal T(R,S)
=\int_{\calX}\frac{[\rho_1(x,y)-1]^2}{\rho_1(x,y)+1}\,K_0^{\mathrm B}(dx\mid y)
\leq\int_{\calX}[\rho_1(x,y)-1]^2\,K_0^{\mathrm B}(dx\mid y).
\end{equation*}
Integrating against $\Pi(0,dy,\mathrm B)=r(y)\pi_0(dy)/\calZ$ gives
\begin{equation*}
\int\mathcal T(R,S)\,\Pi(0,dy,\mathrm B)
\begin{aligned}[t]
&\leq\frac1{\calZ}\iint[\rho_1(x,y)-1]^2r(y)\pi_0(dy)K_0^{\mathrm B}(dx\mid y)\\
&=\frac1{\calZ}\int(\rho_1-1)^2\,d\widehat C_1^{\,\rho=1}
=\frac{\mathcal E_{\rho,1}}{\calZ}.
\end{aligned}
\end{equation*}
At checkpoint $J$, both kernels force an F move with the same transition $Q_J^{\mathrm F}$, so this boundary contributes no error.
Consequently,
\begin{equation}
\iint\frac{[\bar p_{\mathrm{move}}(z,z')-p_{\mathrm{move}}(z,z')]^2}{\bar p_{\mathrm{move}}(z,z')+p_{\mathrm{move}}(z,z')}
\,\Pi(dz)\Pi(dz')
\leq\frac{\mathcal E_{\rho,1}+4\sum_{j=2}^J\mathcal E_{\rho,j}}{\calZ}
\leq\frac{\varepsilon_{\rho,\mathrm{av}}^2}{b_*}.
\label{eq:c-moving-triangular-error}
\end{equation}
Since $|\bar p_{\mathrm{move}}-p_{\mathrm{move}}|\leq\bar p_{\mathrm{move}}+p_{\mathrm{move}}$, H\"older's inequality gives, for $p>2$,
\begin{align*}
&\iint|\bar p_{\mathrm{move}}-p_{\mathrm{move}}|^2\,d(\Pi\otimes\Pi)\\
&\quad\leq\iint\left(\frac{|\bar p_{\mathrm{move}}-p_{\mathrm{move}}|^2}{\bar p_{\mathrm{move}}+p_{\mathrm{move}}}\right)^{\frac{p-2}{p-1}}
(\bar p_{\mathrm{move}}+p_{\mathrm{move}})^{\frac{p}{p-1}}
\,d(\Pi\otimes\Pi)\\
&\quad\leq\left(\iint\frac{|\bar p_{\mathrm{move}}-p_{\mathrm{move}}|^2}{\bar p_{\mathrm{move}}+p_{\mathrm{move}}}\,d(\Pi\otimes\Pi)\right)^{\frac{p-2}{p-1}}
\left(\iint(\bar p_{\mathrm{move}}+p_{\mathrm{move}})^p\,d(\Pi\otimes\Pi)\right)^{\frac1{p-1}}.
\end{align*}
Taking square roots proves
$\|\bar p_{\mathrm{move}}-p_{\mathrm{move}}\|_2\leq
H_{\rho,p}^{p/[2(p-1)]}(\varepsilon_{\rho,\mathrm{av}}/\sqrt{b_*})^{(p-2)/(p-1)}$.
Applying Cauchy--Schwarz to the integral defining $(\bar P_{\mathrm{move}}-P_{\mathrm{move}})f$ gives
$\|(\bar P_{\mathrm{move}}-P_{\mathrm{move}})f\|_{2,\Pi}\leq\|\bar p_{\mathrm{move}}-p_{\mathrm{move}}\|_{L^2(\Pi\otimes\Pi)}\|f\|_{2,\Pi}$, which bounds the operator norm by the same $L^2$ norm. Taking the infimum over admissible $p$ gives
\cref{eq:c-practical-operator-error}. Since $\|A^*\|_{2\to2}=\|A\|_{2\to2}$, symmetrization gives $\|\bar M-M\|_{2\to2}\leq\tfrac12(\|\bar P_{\mathrm{move}}-P_{\mathrm{move}}\|_{2\to2}+\|(\bar P_{\mathrm{move}}-P_{\mathrm{move}})^*\|_{2\to2})=\|\bar P_{\mathrm{move}}-P_{\mathrm{move}}\|_{2\to2}\leq\Delta_{\rho,\mathrm{av}}$, so the same bound holds for $\bar M-M$.

Finally, write $\bar P=\bar P_{\mathrm{move}}+\bar S$, where $\bar S$ contains flips and stays. At each interior $(j,x)$, it acts on the two directional values of $f$ through
\begin{equation}
\bar S_j(x)=
\begin{pmatrix}
\chi\bar\ell_j(x)&\bar p_j^{\mathrm B}(x)-\chi\bar\ell_j(x)\\
\bar p_j^{\mathrm F}(x)-\chi\bar\ell_j(x)&\chi\bar\ell_j(x)
\end{pmatrix},\qquad
\bar\ell_j(x):=\min\{\bar p_j^{\mathrm F}(x),\bar p_j^{\mathrm B}(x)\},
\label{eq:c-uncorrected-local-block}
\end{equation}
where $\bar p_j^d$ denotes the directional probability for $\bar P$. All entries are nonnegative, and the row sums are $(\bar p_j^{\mathrm B},\bar p_j^{\mathrm F})$, while the column sums are $(\bar p_j^{\mathrm F},\bar p_j^{\mathrm B})$. Thus the Euclidean operator norm satisfies $\|\bar S_j(x)\|_2\leq\sqrt{\|\bar S_j(x)\|_1\|\bar S_j(x)\|_\infty}\leq1$. Since $\Pi(j,dx,\mathrm F)=\Pi(j,dx,\mathrm B)$ and $\bar S$ vanishes at the boundaries, integrating this inequality gives $\|\bar S f\|_{2,\Pi}\leq\|f\|_{2,\Pi}$. Consequently, $\|\bar P\|_{2\to2}\leq\|\bar P_{\mathrm{move}}\|_{2\to2}+1<\infty$, which proves the boundedness of $\bar{P}$.
\end{proof}

\paragraph{Stationary bias under average edge error.}
We now use the movement error bound in \cref{lem:c-practical-operator-bound} to control the stationary bias caused by omitting $\rho_j$. To do so, we compare $\Delta_{\rho,\mathrm{av}}$ with the lower bound on the exact chain's movement form established in the proof of \cref{thm:c-average-moment-poincare}:
\begin{equation}
\langle f,(cI-M)f\rangle_\Pi
\geq\gamma_{\mathrm{CS}}\operatorname{Var}_\Pi(f),
\qquad
c(j,x,d):=\begin{cases}1,&j\in\{0,J\},\\1/2,&1\leq j<J,\end{cases}
\label{eq:c-practical-movement-gap}
\end{equation}
Here $M=(P_{\mathrm{move}}+P_{\mathrm{move}}^*)/2$, as above, and we take $0<\gamma_{\mathrm{CS}}\leq1/2$ by reducing the lower bound if necessary. Under their respective assumptions, \cref{prop:c-explicit-lifted-poincare,thm:c-average-verifier-gap} provide alternative lower bounds for $\gamma_{\mathrm{CS}}$ in \cref{eq:c-practical-movement-gap}. The result below shows that if the movement error is smaller than this lower bound, the uncorrected chain has a stationary law close to $\Pi$.

\begin{theorem_ap}[Stability under average omitted-edge error]
\label{thm:c-practical-bias}
Assume \cref{eq:c-quantitative-weight-bounds,eq:c-edge-error,eq:c-edge-moving-density-moment,eq:c-practical-movement-gap} and $\Delta_{\rho,\mathrm{av}}<\gamma_{\mathrm{CS}}$.
Then $\bar P$ has a unique invariant probability $\bar\Pi\ll\Pi$ among those with an $L^2(\Pi)$ density, and
\begin{equation}
\|\bar\Pi-\Pi\|_{\mathrm{TV}}
\leq\frac{\Delta_{\rho,\mathrm{av}}}{\gamma_{\mathrm{CS}}-\Delta_{\rho,\mathrm{av}}}.
\label{eq:c-practical-bias}
\end{equation}
If $\Delta_{\rho,\mathrm{av}}/(\gamma_{\mathrm{CS}}-\Delta_{\rho,\mathrm{av}})<\Pi(j=0)=Z/\calZ$, then the stationary clean conditional
$\bar\pi_0$ is well-defined and
\begin{equation}
\|\bar\pi_0-\pi_0^\star\|_{\mathrm{TV}}
\leq\frac{\Delta_{\rho,\mathrm{av}}}{\Pi(j=0)\,(\gamma_{\mathrm{CS}}-\Delta_{\rho,\mathrm{av}})}.
\label{eq:c-average-rho-clean-bias}
\end{equation}
These bias bounds hold for every $\chi\in[0,1]$, with the same error budget
and movement-gap constant.
\end{theorem_ap}

\begin{proof}
For the proof, write $g_\rho:=\gamma_{\mathrm{CS}}-\Delta_{\rho,\mathrm{av}}>0$. Recall $\bar P=\bar P_{\mathrm{move}}+\bar S$ and the matrix in \cref{eq:c-uncorrected-local-block}. Since the F and B states at each $(j,x)$ have equal $\Pi$ weight, the adjoint $\bar S^*$ acts there through $\bar S_j(x)^{\mathsf T}$. By the definition of the inner product,
\begin{equation*}
\operatorname{Re}\langle f,(I-\bar S)f\rangle_\Pi
=\int_{\mathsf S}\operatorname{Re}\!\left[\overline{f(z)}\bigl(f(z)-(\bar Sf)(z)\bigr)\right]\Pi(dz).
\end{equation*}
At an interior $(j,x)$, recall $f_j^d(x)=f(j,x,d)$. With the argument $x$ suppressed, the flip/stay probabilities give
\begin{align*}
(\bar Sf)_j^{\mathrm F}
&=\chi\bar\ell_j f_j^{\mathrm F}+(\bar p_j^{\mathrm B}-\chi\bar\ell_j)f_j^{\mathrm B}, \qquad (\bar Sf)_j^{\mathrm B}
=(\bar p_j^{\mathrm F}-\chi\bar\ell_j)f_j^{\mathrm F}+\chi\bar\ell_j f_j^{\mathrm B}.
\end{align*}
Substituting these expressions into the inner-product integrand and using $\bar p_j^{\mathrm F}+\bar p_j^{\mathrm B}=1$ yields
\begin{align*}
\sum_{d\in\{\mathrm F,\mathrm B\}}\operatorname{Re}\!\left[\overline{f_j^d}\bigl(f_j^d-(\bar Sf)_j^d\bigr)\right] &=(1-\chi\bar\ell_j)(|f_j^{\mathrm F}|^2+|f_j^{\mathrm B}|^2)
-(1-2\chi\bar\ell_j)\operatorname{Re}(\overline{f_j^{\mathrm F}}f_j^{\mathrm B})\\
&=\tfrac12(|f_j^{\mathrm F}|^2+|f_j^{\mathrm B}|^2)
+(\tfrac12-\chi\bar\ell_j)|f_j^{\mathrm F}-f_j^{\mathrm B}|^2.
\end{align*}
At the boundaries, $\bar S=0$ leaves $|f|^2$. Thus the first terms combine into $\langle f,cf\rangle_\Pi$, with $c=1/2$ in the interior and $c=1$ at the boundaries. Integrating the two interior directional terms against their common measure $\mu_j(dx)/\calZ$ yields
\begin{align}
\operatorname{Re}\langle f,(I-\bar S)f\rangle_\Pi
&\mkern-3mu=\mkern-3mu\langle f,cf\rangle_\Pi
\mkern-2mu+\mkern-2mu\sum_{j=1}^{J-1}\mkern-3mu\int_{\calX}\mkern-3mu\left(\tfrac12-\chi\bar\ell_j(x)\right)
\mkern-2mu|f(j,x,\mathrm F)-f(j,x,\mathrm B)|^2\,\frac{\mu_j(dx)}{\calZ}\mkern-3mu\geq\mkern-3mu\langle f,cf\rangle_\Pi,
\label{eq:c-positive-direction-block}
\end{align}
since $\chi\bar\ell_j\leq1/2$. Thus \cref{eq:c-practical-movement-gap,eq:c-practical-operator-error} give, for $\Pi(f)=0$,
\begin{align}
\operatorname{Re}\langle f,(I-\bar P)f\rangle_\Pi
&\geq\langle f,cf\rangle_\Pi-\operatorname{Re}\langle f,\bar P_{\mathrm{move}}f\rangle_\Pi\notag\\
&=\langle f,cf\rangle_\Pi-\left\langle f,\frac{\bar P_{\mathrm{move}}+\bar P_{\mathrm{move}}^*}{2}f\right\rangle_\Pi
=\langle f,(cI-\bar M)f\rangle_\Pi\notag\\
&=\langle f,(cI-M)f\rangle_\Pi
 -\langle f,(\bar M-M)f\rangle_\Pi\notag\\
&\geq(\gamma_{\mathrm{CS}}-\Delta_{\rho,\mathrm{av}})\|f\|_{2,\Pi}^2
=g_\rho\|f\|_{2,\Pi}^2.
\label{eq:c-average-rho-coercivity}
\end{align}
The same inequality holds with $\bar P^*$ in place of $\bar P$.

Also, $\bar S^*\mathbf1=\bar P_{\mathrm{move}}\mathbf1+\mathbf1-2c$ and $M\mathbf1=c$, so
\begin{align}
b_\rho:=\bar P^*\mathbf1-\mathbf1
&=\bar P_{\mathrm{move}}^*\mathbf1+\bar S^*\mathbf1-\mathbf1=2(\bar M-M)\mathbf1, \quad \Pi(b_\rho)=0,\quad \|b_\rho\|_{2,\Pi}\leq2\Delta_{\rho,\mathrm{av}}.
\label{eq:c-average-rho-stationarity-defect}
\end{align}
To construct an invariant density, set
\begin{equation*}
T_t:=e^{t(\bar P-I)}
=e^{-t}\sum_{n=0}^{\infty}\frac{t^n}{n!}\bar P^n.
\end{equation*}
Because $T_t$ is positive and $T_t\mathbf1=\mathbf1$, $T_t^*$ maps the closed set of nonnegative, unit-integral $L^2(\Pi)$ densities into itself, i.e.,
\begin{equation*}
h\geq0,\quad \Pi(h)=1,\quad h\in L^2(\Pi)
\quad\Longrightarrow\quad
T_t^*h\geq0,\quad \Pi(T_t^*h)=1,\quad T_t^*h\in L^2(\Pi).
\end{equation*}
The set of these densities is complete as a closed subset of $L^2(\Pi)$. To show contraction, consider any $f\in L^2(\Pi)$ with $\Pi(f)=0$ and put $f_t:=T_t^*f$. Then $\Pi(f_t)=0$, and \cref{eq:c-average-rho-coercivity} gives
\begin{equation}
\frac{d}{dt}\|f_t\|_{2,\Pi}^2
=2\operatorname{Re}\langle f_t,(\bar P^*-I)f_t\rangle_\Pi
\leq-2g_\rho\|f_t\|_{2,\Pi}^2,
\quad \|T_t^*f\|_{2,\Pi}=\|f_t\|_{2,\Pi}\leq e^{-g_\rho t}\|f\|_{2,\Pi}.
\label{eq:c-average-rho-semigroup-contraction}
\end{equation}
Applying this bound to the difference of two probability densities shows that, for fixed $t>0$, $T_t^*$ is a strict contraction on the complete set of densities above. The contraction mapping theorem~\citep[Section~5.1]{kreyszig1978functional} therefore gives a unique fixed density $\bar h_t$. For every $s\geq0$, commutativity gives
\begin{equation*}
T_t^*(T_s^*\bar h_t)=T_s^*(T_t^*\bar h_t)=T_s^*\bar h_t.
\end{equation*}
Thus $T_s^*\bar h_t$ is also a fixed density of $T_t^*$, so uniqueness gives $T_s^*\bar h_t=\bar h_t$. For $s>0$, uniqueness for $T_s^*$ then gives $\bar h_t=\bar h_s$. Denote this common density by $\bar h$. Differentiating $T_s^*\bar h=\bar h$ at $s=0$ yields $(\bar P^*-I)\bar h=0$, hence $\bar P^*\bar h=\bar h$.

Define $\bar\Pi(dz):=\bar h(z)\Pi(dz)$. Then $\bar P^*\bar h=\bar h$ implies that $\bar\Pi$ is stationary for $\bar P$, with $d\bar\Pi/d\Pi=\bar h$.

Set $f:=\bar h-1$. The invariant equation gives
$(I-\bar P^*)f=b_\rho$. Since $\Pi(f)=0$, applying \cref{eq:c-average-rho-coercivity} and then Cauchy--Schwarz with \cref{eq:c-average-rho-stationarity-defect} gives
\begin{equation*}
 g_\rho\|f\|_{2,\Pi}^2
 \leq\operatorname{Re}\langle f,b_\rho\rangle_\Pi
 \leq2\Delta_{\rho,\mathrm{av}}\|f\|_{2,\Pi}.
\end{equation*}
Consequently,
\begin{equation*}
\|\bar h-1\|_{2,\Pi}\leq\frac{2\Delta_{\rho,\mathrm{av}}}{g_\rho},\qquad
\|\bar\Pi-\Pi\|_{\mathrm{TV}}=\frac12\Pi(|\bar h-1|)
\leq\frac12\|\bar h-1\|_{2,\Pi}\leq\frac{\Delta_{\rho,\mathrm{av}}}{g_\rho}.
\end{equation*}
The condition $\Delta_{\rho,\mathrm{av}}/g_\rho<\Pi(j=0)$ ensures $\bar\Pi(j=0)\geq\Pi(j=0)-\Delta_{\rho,\mathrm{av}}/g_\rho>0$, and \cref{eq:c-clean-conditioning-transfer} yields
\begin{equation*}
\|\bar\pi_0-\pi_0^\star\|_{\mathrm{TV}}
\leq\frac{\|\bar\Pi-\Pi\|_{\mathrm{TV}}}{\Pi(j=0)}
\leq\frac{\Delta_{\rho,\mathrm{av}}}{\Pi(j=0)\,g_\rho}.
\end{equation*}
\end{proof}

\paragraph{Convergence under edge-correction omission.}
Following the lazification in \cref{eq:c-mixing-lazy-kernel}, we denote by $\bar P_{\mathrm L}:=(I+\bar P)/2$ the $1/2$-lazy version of $\bar P$. We now show geometric convergence to $\bar\Pi$ and combine it with the stationary-bias bound to control the total error relative to $\Pi$.

\begin{corollary_ap}[Exponential convergence under average edge error]
\label{cor:c-average-rho-lazy-convergence}
Under the assumptions of \cref{thm:c-practical-bias}, there is a finite constant $C_{\rho,\chi}$ such that, for every initial law $\nu\ll\Pi$ with $d\nu/d\Pi\in L^2(\Pi)$ and $n\geq0$,
\begin{equation}
\|\nu\bar P_{\mathrm L}^n-\bar\Pi\|_{\mathrm{TV}}
\leq\frac{C_{\rho,\chi}}2e^{-(\gamma_{\mathrm{CS}}-\Delta_{\rho,\mathrm{av}})n/8}
\left\|\frac{d\nu}{d\Pi}-\frac{d\bar\Pi}{d\Pi}\right\|_{2,\Pi}.
\label{eq:c-average-rho-lazy-convergence}
\end{equation}
In particular, for $D_\nu:=\|d\nu/d\Pi-1\|_{2,\Pi}<\infty$,
\begin{equation}
\|\nu\bar P_{\mathrm L}^n-\Pi\|_{\mathrm{TV}}
\leq\underbrace{\frac{C_{\rho,\chi}}2e^{-(\gamma_{\mathrm{CS}}-\Delta_{\rho,\mathrm{av}})n/8}
\left(D_\nu+\frac{2\Delta_{\rho,\mathrm{av}}}{\gamma_{\mathrm{CS}}-\Delta_{\rho,\mathrm{av}}}\right)
+\frac{\Delta_{\rho,\mathrm{av}}}{\gamma_{\mathrm{CS}}-\Delta_{\rho,\mathrm{av}}}}_{(\star)}.
\label{eq:c-practical-total-error}
\end{equation}
If $(\star)<\Pi(j=0)$, the clean conditional of $\nu\bar P_{\mathrm L}^n$ is well-defined, and \cref{eq:c-clean-conditioning-transfer} bounds its total-variation distance from $\pi_0^\star$ by $(\star)/\Pi(j=0)$.
\end{corollary_ap}

\begin{proof}
For the proof, write $\bar h:=d\bar\Pi/d\Pi$ and $g_\rho:=\gamma_{\mathrm{CS}}-\Delta_{\rho,\mathrm{av}}>0$, and recall $\bar P=\bar P_{\mathrm{move}}+\bar S$, where $\bar S$ is the flip/stay operator from \cref{eq:c-uncorrected-local-block}, vanishing at the boundaries. We first bound the spectrum of $\bar P$, then obtain a strict bound for the lazy kernel on mean-zero functions.
Since $\Pi\otimes\Pi$ is a probability measure and $p>2$,
\begin{equation*}
\|\bar p_{\mathrm{move}}\|_{L^2(\Pi\otimes\Pi)}
\leq\|\bar p_{\mathrm{move}}\|_{L^p(\Pi\otimes\Pi)}<\infty.
\end{equation*}
To see compactness, approximate this density in $L^2(\Pi\otimes\Pi)$ by finite sums $\sum_{i=1}^N u_i(z)v_i(z')$ with real-valued $u_i,v_i$. The corresponding operators $A_N$ act as
\begin{equation*}
(A_Nf)(z):=\int\left[\sum_{i=1}^N u_i(z)v_i(z')\right]f(z')\,\Pi(dz')
=\sum_{i=1}^N u_i(z)\int v_i(z')f(z')\,\Pi(dz').
\end{equation*}
Their range lies in $\operatorname{span}\{u_1,\ldots,u_N\}$ and is therefore finite-dimensional. Cauchy--Schwarz gives
\begin{equation}
\left\|\bar P_{\mathrm{move}}-A_N\right\|_{2\to2}
\leq\left\|\bar p_{\mathrm{move}}-\sum_{i=1}^N u_i\otimes v_i\right\|_{L^2(\Pi\otimes\Pi)}\longrightarrow0.
\label{eq:c-practical-move-finite-rank-approximation}
\end{equation}
Thus $\bar P_{\mathrm{move}}$ is compact as an operator-norm limit of finite-rank operators, and so is $\bar P_{\mathrm{move}}^*$: its approximants satisfy $(A_N^*f)(z)=\sum_{i=1}^N v_i(z)\int u_i(z')f(z')\,\Pi(dz')$ and hence have finite rank, while $\|\bar P_{\mathrm{move}}^*-A_N^*\|_{2\to2}=\|\bar P_{\mathrm{move}}-A_N\|_{2\to2}\to0$ by \cref{eq:c-practical-move-finite-rank-approximation}.

We use the following consequence of the Fredholm alternative for compact operators~\citep[Theorem~8.7-2]{kreyszig1978functional}. If $A=B+K$, with $K$ compact, then every spectral value $\lambda$ of $A$ with $|\lambda|>\|B\|$ is an eigenvalue. Indeed, $\lambda I-B$ is invertible by its Neumann series, and
\begin{equation*}
\lambda I-A=(\lambda I-B)\left[I-(\lambda I-B)^{-1}K\right].
\end{equation*}
The operator $(\lambda I-B)^{-1}K$ is compact. By the Fredholm alternative, noninvertibility of the bracket implies a nonzero vector in its kernel, hence in the kernel of $\lambda I-A$.

The proof of \cref{thm:c-practical-bias} gives $T_t^*\bar h=\bar h$ and the exponential contraction on mean-zero functions in \cref{eq:c-average-rho-semigroup-contraction}. For any $f\in L^2(\Pi)$, write
\begin{equation*}
f=\Pi(f)\bar h+\bigl(f-\Pi(f)\bar h\bigr),\qquad
\Pi\bigl(f-\Pi(f)\bar h\bigr)=0.
\end{equation*}
Since $T_t^*\bar h=\bar h$, applying the contraction bound to $f-\Pi(f)\bar h$ gives
\begin{equation*}
\|T_t^*f\|_{2,\Pi}
\mkern-5mu\leq\mkern-5mu|\Pi(f)|\|\bar h\|_{2,\Pi}
+e^{-g_\rho t}\|f-\Pi(f)\bar h\|_{2,\Pi}
\mkern-5mu\leq\mkern-5mu\|f\|_{2,\Pi}+2|\Pi(f)|\|\bar h\|_{2,\Pi}
\mkern-5mu\leq\mkern-5mu(1+2\|\bar h\|_{2,\Pi})\|f\|_{2,\Pi},
\end{equation*}
where we used $e^{-g_\rho t}\leq1$, the triangle inequality, and $|\Pi(f)|\leq\|f\|_{2,\Pi}$. Taking the supremum over $\|f\|_{2,\Pi}=1$ and $t\geq0$ yields
\begin{equation}
\sup_{t\geq0}\|T_t\|_{2\to2}=\sup_{t\geq0}\|T_t^*\|_{2\to2}
\leq1+2\|\bar h\|_{2,\Pi}<\infty.
\label{eq:c-average-rho-semigroup-uniform-bound}
\end{equation}
Suppose $\bar P f=\lambda f$ for $f\ne0$ and $|\lambda|>1$. Positivity of $\bar{P}$ and the triangle inequality imply
\begin{equation*}
\bar P|f|\geq|\bar P f|=|\lambda||f|,
\qquad \bar P^n|f|\geq|\lambda|^n|f|.
\end{equation*}
Recalling $T_t=e^{-t}\sum_{n=0}^{\infty}(t^n/n!)\bar P^n$, we obtain
\begin{equation*}
T_t|f|=e^{-t}\sum_{n=0}^{\infty}\frac{t^n}{n!}\bar P^n|f|
\geq e^{-t}\sum_{n=0}^{\infty}\frac{(t|\lambda|)^n}{n!}|f|
=e^{(|\lambda|-1)t}|f|.
\end{equation*}
Consequently $\|T_t\|_{2\to2}\geq e^{(|\lambda|-1)t}$, contradicting the uniform bound in \cref{eq:c-average-rho-semigroup-uniform-bound} as $t\to\infty$. Since $\bar P=\bar S+\bar P_{\mathrm{move}}$, with $\|\bar S\|_{2\to2}\leq1$ and $\bar P_{\mathrm{move}}$ compact, the Fredholm argument above excludes all spectral values of $\bar P$ outside the unit disk:
\begin{equation*}
\sigma(\bar P)\subseteq\{\lambda\in\mathbb C:|\lambda|\leq1\},\qquad
\bar P\mathbf1=\mathbf1\ \Longrightarrow\ 1\in\sigma(\bar P),\qquad
r(\bar P):=\sup_{\lambda\in\sigma(\bar P)}|\lambda|=1=r(\bar P^*).
\end{equation*}
We next bound the part of the lazy kernel that does not move. By \cref{eq:c-positive-direction-block} and $c\geq1/2$,
\begin{equation*}
\operatorname{Re}\langle f,\bar S f\rangle_\Pi
\leq\|f\|_{2,\Pi}^2-\langle f,cf\rangle_\Pi
\leq\tfrac12\|f\|_{2,\Pi}^2.
\end{equation*}
Together with $\|\bar S\|_{2\to2}\leq1$, this yields
\begin{equation*}
\left\|\frac{I+\bar S}{2}f\right\|_{2,\Pi}^2
=\frac{\|f\|_{2,\Pi}^2+2\operatorname{Re}\langle f,\bar S f\rangle_\Pi+\|\bar S f\|_{2,\Pi}^2}{4}
\leq\frac34\|f\|_{2,\Pi}^2.
\end{equation*}
Let $L_0^2(\Pi):=\{f\in L^2(\Pi):\Pi(f)=0\}$. This subspace is invariant under $\bar P_{\mathrm L}^*$ because
\begin{equation*}
\Pi(\bar P_{\mathrm L}^*f)
=\langle\bar P_{\mathrm L}\mathbf1,f\rangle_\Pi=\Pi(f).
\end{equation*}
For its orthogonal projection $Qf:=f-\Pi(f)\mathbf1$, $\|Q\|_{2\to2}\leq1$, and hence on $L_0^2(\Pi)$,
\begin{equation*}
\bar P_{\mathrm L}^*\big|_{L_0^2(\Pi)}
=\left.Q\frac{I+\bar S^*}{2}\right|_{L_0^2(\Pi)}
+\left.\frac12Q\bar P_{\mathrm{move}}^*\right|_{L_0^2(\Pi)}.
\end{equation*}
The first term has norm at most $\sqrt3/2$ and the second is compact. The same Fredholm argument therefore shows that every spectral value of this restriction outside $|\lambda|\leq\sqrt3/2$ is an eigenvalue.
For such an eigenvalue, choose $0\ne f\in L_0^2(\Pi)$ with $\bar P_{\mathrm L}^*f=\lambda f$. Then
\begin{equation*}
\bar P^*f=(2\lambda-1)f,\qquad
\mu:=2\lambda-1,\qquad |\mu|\leq r(\bar P^*)=1.
\end{equation*}
Moreover, \cref{eq:c-average-rho-coercivity} gives
\begin{equation*}
g_\rho\|f\|_{2,\Pi}^2
\leq\operatorname{Re}\langle f,(I-\bar P^*)f\rangle_\Pi
=(1-\operatorname{Re}\mu)\|f\|_{2,\Pi}^2,
\qquad \operatorname{Re}\mu\leq1-g_\rho.
\end{equation*}
Thus
\begin{equation*}
|\lambda|^2=\frac{1+2\operatorname{Re}\mu+|\mu|^2}{4}
\leq\frac{1+2(1-g_\rho)+1}{4}=1-\frac{g_\rho}{2}.
\end{equation*}
Since $0<g_\rho\leq1/2$, we also have $3/4\leq1-g_\rho/2$. This bounds the entire spectrum of the restriction, giving
\begin{equation*}
r\left(\bar P_{\mathrm L}^*\big|_{L_0^2(\Pi)}\right)
\leq\sqrt{1-g_\rho/2}\leq e^{-g_\rho/4}<e^{-g_\rho/8}.
\end{equation*}
By \cref{lem:c-practical-operator-bound}, $\bar P$ is bounded on $L^2(\Pi)$, so $\bar P_{\mathrm L}^*\big|_{L_0^2(\Pi)}$ is a bounded linear operator on $L_0^2(\Pi)$. The spectral-radius formula~\citep[Theorem~7.5-5]{kreyszig1978functional} therefore gives
\begin{equation*}
\lim_{n\to\infty}\left\|(\bar P_{\mathrm L}^*)^n\big|_{L_0^2(\Pi)}\right\|_{2\to2}^{1/n}
=r\left(\bar P_{\mathrm L}^*\big|_{L_0^2(\Pi)}\right)<e^{-g_\rho/8}.
\end{equation*}
Hence the norm is at most $e^{-g_\rho n/8}$ for all sufficiently large $n$. Enlarging a finite constant $C_{\rho,\chi}$ to cover the remaining values of $n$ gives
\begin{equation*}
\left\|(\bar P_{\mathrm L}^*)^n f\right\|_{2,\Pi}
\leq C_{\rho,\chi}e^{-g_\rho n/8}\|f\|_{2,\Pi},
\qquad f\in L_0^2(\Pi),\quad n\geq0.
\end{equation*}
Finally, $d\nu/d\Pi-\bar h$ has zero $\Pi$ mean and $\bar P_{\mathrm L}^*\bar h=\bar h$. Therefore
\begin{align*}
\|\nu\bar P_{\mathrm L}^n-\bar\Pi\|_{\mathrm{TV}}
&=\frac12\Pi\left(\left|(\bar P_{\mathrm L}^*)^n\left(\frac{d\nu}{d\Pi}-\bar h\right)\right|\right)\\
&\leq\frac12\left\|(\bar P_{\mathrm L}^*)^n\left(\frac{d\nu}{d\Pi}-\bar h\right)\right\|_{2,\Pi}
\leq\frac{C_{\rho,\chi}}2e^{-g_\rho n/8}\left\|\frac{d\nu}{d\Pi}-\bar h\right\|_{2,\Pi}.
\end{align*}
The proof of \cref{thm:c-practical-bias}, together with the triangle inequality, also gives
\begin{equation*}
\left\|\frac{d\nu}{d\Pi}-\bar h\right\|_{2,\Pi}
\leq D_\nu+\|\bar h-1\|_{2,\Pi}
\leq D_\nu+\frac{2\Delta_{\rho,\mathrm{av}}}{g_\rho}.
\end{equation*}
Combining this with
$\|\nu\bar P_{\mathrm L}^n-\Pi\|_{\mathrm{TV}}
\leq\|\nu\bar P_{\mathrm L}^n-\bar\Pi\|_{\mathrm{TV}}+\|\bar\Pi-\Pi\|_{\mathrm{TV}}$
and \cref{eq:c-practical-bias} proves \cref{eq:c-practical-total-error}. The clean-conditional bound follows from \cref{eq:c-clean-conditioning-transfer}.
\end{proof}

Finally, we remark that the following condition on the reference edges $J_j^{\mathrm F}$ and $J_j^{\mathrm B}$ ensures $H_{\rho,p}<\infty$ (\cref{eq:c-edge-moving-density-moment}) without requiring $J_j^{\mathrm F}=J_j^{\mathrm B}$.

\begin{lemma_ap}[Sufficient conditions for finite movement-density moments]
\label{lem:c-edge-density-moment-condition}
Assume \cref{eq:c-quantitative-weight-bounds}. If $J_j^{\mathrm F},J_j^{\mathrm B}\ll\pi_j\otimes\pi_{j-1}$ and, for some finite $p>2$,
\begin{equation}
\max_{1\leq j\leq J}\left\{
\left\|\frac{dJ_j^{\mathrm F}}{d(\pi_j\otimes\pi_{j-1})}\right\|_{L^p(\pi_j\otimes\pi_{j-1})},\,
\left\|\frac{dJ_j^{\mathrm B}}{d(\pi_j\otimes\pi_{j-1})}\right\|_{L^p(\pi_j\otimes\pi_{j-1})}
\right\}<\infty,
\label{eq:c-reference-edge-density-moments}
\end{equation}
then the movement densities in \cref{eq:c-edge-moving-densities} exist and $H_{\rho,p}<\infty$.
\end{lemma_ap}

\begin{proof}
Set $b_*:=\min\{r_{\min},v_{\min}\}$. Applying \cref{eq:c-quantitative-weight-bounds} to the edge definitions in \cref{eq:cs-vgb-interior-conductance,eq:cs-vgb-clean-conductance} gives
\begin{equation*}
\widehat C_j(dx,dy)=\sqrt{\widehat V_j(x)\widehat V_{j-1}(y)}\,J_j^{\mathrm F}(dx,dy)
\quad\Longrightarrow\quad
b_*J_j^{\mathrm F}\leq\widehat C_j\leq J_j^{\mathrm F},\qquad j\geq2,
\end{equation*}
where the same bound holds for $j=1$ since $\widehat C_1(dx,dy)=r(y)J_1^{\mathrm F}(dx,dy)$ and $b_*\leq r(y)\leq1$.
Since the reference edge has marginals
\begin{equation*}
J_j^{\mathrm F}(dx,\mathcal X)=\pi_j(dx),\qquad
J_j^{\mathrm F}(\mathcal X,dy)=\pi_{j-1}(dy),
\end{equation*}
each of the two terms $\widehat C_j(dx,\mathcal X)$ and $\widehat C_{j+1}(\mathcal X,dx)$ defining $\mu_j(dx)$ at an interior checkpoint lies between $b_*\pi_j(dx)$ and $\pi_j(dx)$. Thus \cref{eq:c-incident-measures} gives
\begin{equation*}
\mu_j(dx)=\widehat C_j(dx,\mathcal X)+\widehat C_{j+1}(\mathcal X,dx)
\quad\Longrightarrow\quad
2b_*\pi_j\leq\mu_j\leq2\pi_j.
\end{equation*}
At either boundary only one edge contributes, giving $b_*\pi_j\leq\mu_j\leq\pi_j$. Hence, for every admissible $(j,d)$, \cref{eq:c-stationary-law} implies
\begin{equation*}
b_*\pi_j\leq\mu_j\leq2\pi_j,\qquad
\frac{b_*}{\calZ}\pi_j(dx)
\leq\Pi(j,dx,d)=\frac{\mu_j(dx)}{\calZ}
\leq\frac2{\calZ}\pi_j(dx).
\end{equation*}
The second bound above and the assumption $J_j^{\mathrm F},J_j^{\mathrm B}\ll\pi_j\otimes\pi_{j-1}$, which gives $K_j^{\mathrm F}(\cdot\mid x)\ll\pi_{j-1}$ and $K_{j-1}^{\mathrm B}(\cdot\mid y)\ll\pi_j$ for $\pi_j$-almost every $x$ and $\pi_{j-1}$-almost every $y$, respectively (also for the corrected B kernel), imply
\begin{equation*}
P_{\mathrm{move}}(z,\cdot)\ll\Pi,\qquad
\bar P_{\mathrm{move}}(z,\cdot)\ll\Pi
\quad\text{for }\Pi\text{-almost every }z,
\end{equation*}
so the two densities in \cref{eq:c-edge-moving-densities} exist.
For an F move with $2\leq j<J$, multiplying the move probability by the sampling transition gives
\begin{equation*}
\begin{aligned}
P_{\mathrm{move}}((j,x,\mathrm F),(j-1,dy,\mathrm F))
&=p_j^{\mathrm F}(x)Q_j^{\mathrm F}(dy\mid x) =\frac{\sqrt{\widehat V_j(x)\widehat V_{j-1}(y)}}{a_j^{\mathrm F}(x)+a_j^{\mathrm B}(x)}K_j^{\mathrm F}(dy\mid x).
\end{aligned}
\end{equation*}
Each directional mass $a_j^d$ integrates a weight in $[b_*,1]$ against a probability kernel, so $a_j^d\geq b_*$. Hence $a_j^{\mathrm F}+a_j^{\mathrm B}\geq2b_*$ and the numerator weight is at most $1$, giving $P_{\mathrm{move}}((j,x,\mathrm F),(j-1,dy,\mathrm F))\leq K_j^{\mathrm F}(dy\mid x)/b_*$. For $z=(j,x,\mathrm F)$ and $z'=(j-1,y,\mathrm F)$, taking densities and using $J_j^{\mathrm F}(dx,dy)=\pi_j(dx)K_j^{\mathrm F}(dy\mid x)$ yields
\begin{equation*}
p_{\mathrm{move}}(z,z')
\underbrace{\frac{d\Pi(j-1,\cdot,\mathrm F)}{d\pi_{j-1}}(y)}_{\displaystyle\geq b_*/\calZ}
\leq\frac1{b_*}\frac{dK_j^{\mathrm F}(\cdot\mid x)}{d\pi_{j-1}}(y)
=\frac1{b_*}\frac{dJ_j^{\mathrm F}}{d(\pi_j\otimes\pi_{j-1})}(x,y).
\end{equation*}
For the uncorrected F move, whose density relative to $\Pi$ is $\bar p_{\mathrm{move}}$, omitting $\rho$ leaves the numerator unchanged and may change the B mass in the denominator, which remains at least $b_*$. Dividing both movement measures by the same stationary reference measure $\Pi$ therefore gives
\begin{equation*}
p_{\mathrm{move}}(z,z')
\leq\frac{\calZ}{b_*^2}
\frac{dJ_j^{\mathrm F}}{d(\pi_j\otimes\pi_{j-1})}(x,y),\qquad
\bar p_{\mathrm{move}}(z,z')
\leq\frac{\calZ}{b_*^2}
\frac{dJ_j^{\mathrm F}}{d(\pi_j\otimes\pi_{j-1})}(x,y).
\end{equation*}
The same bound holds at boundaries because the forced transition $Q_j^d$ has denominator $a_j^d\geq b_*$, and on the clean edge because $r\leq1$, with directions reset inward. For B moves, including the correction $\rho$ gives $\pi_{j-1}(dy)K_{j-1}^{\mathrm B,\rho}(dx\mid y)=J_j^{\mathrm F}(dx,dy)$, whereas omitting it gives $\pi_{j-1}(dy)K_{j-1}^{\mathrm B}(dx\mid y)=J_j^{\mathrm B}(dx,dy)$. The same calculation therefore gives, for $z=(j-1,y,\mathrm B)$ and $z'=(j,x,\mathrm B)$ when $j<J$, with $z'=(J,x,\mathrm F)$ when $j=J$,
\begin{equation*}
p_{\mathrm{move}}(z,z')\leq\frac{\calZ}{b_*^2}
\frac{dJ_j^{\mathrm F}}{d(\pi_j\otimes\pi_{j-1})}(x,y),\qquad
\bar p_{\mathrm{move}}(z,z')\leq\frac{\calZ}{b_*^2}
\frac{dJ_j^{\mathrm B}}{d(\pi_j\otimes\pi_{j-1})}(x,y).
\end{equation*}
Adding the bounds for $p_{\mathrm{move}}$ and $\bar p_{\mathrm{move}}$ in each direction yields
\begin{equation*}
p_{\mathrm{move}}(z,z')+\bar p_{\mathrm{move}}(z,z')
\leq\frac{\calZ}{b_*^2}
\begin{cases}
2\dfrac{dJ_j^{\mathrm F}}{d(\pi_j\otimes\pi_{j-1})}(x,y),&\mathrm F\text{ move},\\[6pt]
\dfrac{dJ_j^{\mathrm F}}{d(\pi_j\otimes\pi_{j-1})}(x,y)
+\dfrac{dJ_j^{\mathrm B}}{d(\pi_j\otimes\pi_{j-1})}(x,y),&\mathrm B\text{ move}.
\end{cases}
\end{equation*}
For each allowed move between checkpoints $j$ and $j-1$, the stationary upper bound gives $\Pi(dz)\Pi(dz')\leq4\pi_j(dx)\pi_{j-1}(dy)/\calZ^2$. Using these bounds and $(a+b)^p\leq2^{p-1}(a^p+b^p)$, we integrate over $x,y$ and sum over all edges and both move directions to obtain
\begin{equation*}
H_{\rho,p}^p\leq C\sum_{j=1}^J\sum_{d\in\{\mathrm F,\mathrm B\}}
\left\|\frac{dJ_j^d}{d(\pi_j\otimes\pi_{j-1})}\right\|_{L^p(\pi_j\otimes\pi_{j-1})}^p<\infty
\end{equation*}
for some finite constant $C$, by \cref{eq:c-reference-edge-density-moments}.
\end{proof}

}

\section{Finite-Population Synchronized CS-VGB}
\label{app:subsec:synchronized-cs-vgb}

Running $N$ independent single-chain CS-VGB samplers explores several trajectories in parallel, but the chains generally occupy different checkpoints and directions. In this case, process-verifier values at different noise levels need not be directly comparable, making it difficult to allocate search effort across chains. We therefore synchronize the checkpoint and direction while retaining distinct particle states, placing all particles at a common noise level where their verifier values can be compared without differences in checkpoint calibration.

We first construct \textbf{Multi-chain Synchronized CS-VGB (MCS-CS-VGB)}, which preserves the product clean target $(\pi_0^\star)^{\otimes N}$ for every finite $N$ under the assumptions below. To adapt this synchronized structure to finite-budget search with one-step posterior maps, we share the highest-reward endpoint among $N m_e$ posterior draws, together with its parent, across particles; each particle then generates its own local candidate pool and selects the highest-weight candidate (\cref{eq:search-best-endpoint-sharing,eq:search-local-argmax}). This best-endpoint sharing and local selection heuristic combines parallel exploration with recurrent search, but does not inherit the stationary-target guarantee of the exact sampler developed below.

\paragraph{Exact MCS-CS-VGB kernel.}
For $N\geq1$, write $\mathbf x=(x^{(1)},\ldots,x^{(N)})$ and $\boldsymbol\pi_j:=\pi_j^{\otimes N}$. The population state is $(j,\mathbf x,d)$, where all particles share $j$ and $d$. Define the \emph{product edge measure}
\begin{equation}
\widehat C_j^{(N)}(d\mathbf x,d\mathbf y)
:=\bigotimes_{n=1}^{N}\widehat C_j(dx^{(n)},dy^{(n)}).
\label{eq:sync-product-conductance}
\end{equation}
Recalling the single-chain directional masses $a_j^{\mathrm F},a_j^{\mathrm B}$ from \cref{eq:cs-vgb-directional-masses}, the population directional masses at $1\leq j<J$ are
\begin{equation}
A_j^{\mathrm F}(\mathbf x):=\prod_{n=1}^{N}a_j^{\mathrm F}(x^{(n)}),
\qquad
A_j^{\mathrm B}(\mathbf x):=\prod_{n=1}^{N}a_j^{\mathrm B}(x^{(n)}),
\label{eq:sync-directional-masses}
\end{equation}
and
\begin{equation*}
p_j^{\mathrm F}:=\frac{A_j^{\mathrm F}}{A_j^{\mathrm F}+A_j^{\mathrm B}},
\qquad
p_j^{\mathrm B}:=\frac{A_j^{\mathrm B}}{A_j^{\mathrm F}+A_j^{\mathrm B}},
\qquad
\ell_j:=\min\{p_j^{\mathrm F},p_j^{\mathrm B}\}.
\end{equation*}
All particles share the action, but conditional on a move each draws from its own transition:
\begin{equation}
Q_j^{\mathrm F,(N)}(d\mathbf y\mid\mathbf x)
:=\prod_{n=1}^{N}Q_j^{\mathrm F}(dy^{(n)}\mid x^{(n)}),
\qquad
Q_j^{\mathrm B,\rho,(N)}(d\mathbf y\mid\mathbf x)
:=\prod_{n=1}^{N}Q_j^{\mathrm B,\rho}(dy^{(n)}\mid x^{(n)}).
\label{eq:sync-product-transitions}
\end{equation}
The action probabilities are those of \cref{tab:cs-vgb-momentum-update} with the population values $p_j^{\mathrm F},p_j^{\mathrm B},\ell_j$. These shared direction probabilities and the particlewise transitions above are induced by the same product edge measure in \cref{eq:sync-product-conductance}, whose marginals are
\begin{equation}
\widehat C_j^{(N)}(d\mathbf x,\mathcal X^{N})
=\boldsymbol\pi_j(d\mathbf x)A_j^{\mathrm F}(\mathbf x),
\qquad
\widehat C_{j+1}^{(N)}(\mathcal X^{N},d\mathbf x)
=\boldsymbol\pi_j(d\mathbf x)A_j^{\mathrm B}(\mathbf x).
\label{eq:sync-edge-marginals}
\end{equation}
Define
\begin{equation}
\begin{aligned}
\mu_0^{(N)}(d\mathbf x)&:=\widehat C_1^{(N)}(\mathcal X^{N},d\mathbf x),
&\mu_J^{(N)}(d\mathbf x)&:=\widehat C_J^{(N)}(d\mathbf x,\mathcal X^{N}),\\
\mu_j^{(N)}(d\mathbf x)&:=\boldsymbol\pi_j(d\mathbf x)\sbra{A_j^{\mathrm F}(\mathbf x)+A_j^{\mathrm B}(\mathbf x)},
&1&\leq j<J.
\end{aligned}
\label{eq:sync-incident-measures}
\end{equation}
and
\begin{equation}
\mathcal Z_{N}
:=\mu_0^{(N)}(\mathcal X^{N})
+2\sum_{j=1}^{J-1}\mu_j^{(N)}(\mathcal X^{N})
+\mu_J^{(N)}(\mathcal X^{N})\in(0,\infty).
\label{eq:sync-normalizer}
\end{equation}
\Needspace{6\baselineskip}
\begin{theorem_ap}[Stationarity of MCS-CS-VGB]
\label{thm:sync-stationarity}
Under the assumptions of \cref{thm:c-exact-stationarity}, the MCS-CS-VGB kernel has stationary law
\begin{equation}
\Pi_{N}(0,d\mathbf x,\mathrm B):=\frac{\mu_0^{(N)}(d\mathbf x)}{\mathcal Z_{N}},\quad
\Pi_{N}(J,d\mathbf x,\mathrm F):=\frac{\mu_J^{(N)}(d\mathbf x)}{\mathcal Z_{N}},\quad
\Pi_{N}(j,d\mathbf x,d):=\frac{\mu_j^{(N)}(d\mathbf x)}{\mathcal Z_{N}},
\label{eq:sync-stationary-law}
\end{equation}
where $1\leq j<J$ and $d\in\{\mathrm F,\mathrm B\}$. Its clean conditional is
\begin{equation}
\Pi_{N}(d\mathbf x\mid j=0)
=\bigl(\pi_0^\star\bigr)^{\otimes N}(d\mathbf x).
\label{eq:sync-clean-product-target}
\end{equation}
\end{theorem_ap}

\begin{proof}
Write $P_N$ for the MCS-CS-VGB kernel. We verify $\Pi_NP_N=\Pi_N$ by summing the one-update measures entering each state. At an interior state $(j,\mathbf x,\mathrm F)$, an F move arrives from $j+1$. Since $Q_{j+1}^{\mathrm F,(N)}$ is the conditional transition of the product edge measure, \cref{eq:sync-edge-marginals,eq:sync-incident-measures} gives
\begin{equation}
\begin{aligned}
\int_{\mathcal X^N}\Pi_N(j+1,d\mathbf x',\mathrm F)
p_{j+1}^{\mathrm F}(\mathbf x')
Q_{j+1}^{\mathrm F,(N)}(d\mathbf x\mid\mathbf x')=\frac1{\mathcal Z_N}\int_{\mathcal X^N}
\widehat C_{j+1}^{(N)}(d\mathbf x',d\mathbf x)
=\frac{\mu_j^{(N)}(d\mathbf x)p_j^{\mathrm B}(\mathbf x)}{\mathcal Z_N}.
\end{aligned}
\label{eq:sync-incoming-move}
\end{equation}
Here $p_J^{\mathrm F}=1$ for the forced move at the noise boundary. A B-to-F flip at the same $(j,\mathbf x)$ contributes $\Pi_N(j,d\mathbf x,\mathrm B)[p_j^{\mathrm F}(\mathbf x)-\chi\ell_j(\mathbf x)]$, and an F stay contributes $\Pi_N(j,d\mathbf x,\mathrm F)\chi\ell_j(\mathbf x)$. Thus
\begin{equation*}
\begin{aligned}
(\Pi_NP_N)(j,d\mathbf x,\mathrm F)
=\frac{\mu_j^{(N)}(d\mathbf x)}{\mathcal Z_N}
\sbra{p_j^{\mathrm B}+p_j^{\mathrm F}-\chi\ell_j+\chi\ell_j}=\Pi_N(j,d\mathbf x,\mathrm F).
\end{aligned}
\end{equation*}
The B case is symmetric. At the boundaries, the incoming measures are $\widehat C_1^{(N)}(\mathcal X^N,d\mathbf x)/\mathcal Z_N=\Pi_N(0,d\mathbf x,\mathrm B)$ and $\widehat C_J^{(N)}(d\mathbf x,\mathcal X^N)/\mathcal Z_N=\Pi_N(J,d\mathbf x,\mathrm F)$, respectively, proving stationarity. Finally,
\begin{equation*}
\mu_0^{(N)}(d\mathbf x)
=\bigotimes_{n=1}^{N}\sbra{r(x^{(n)})\pi_0(dx^{(n)})}
=Z^{N}\bigl(\pi_0^\star\bigr)^{\otimes N}(d\mathbf x),
\end{equation*}
which proves \cref{eq:sync-clean-product-target} after conditioning on $j=0$.
\end{proof}

\paragraph{Finite-candidate implementation of MCS-CS-VGB.}
Use the local pools and coordinatewise reverse insertion of \cref{app:subsec:cs-vgb-calibrated-lazy}. For fixed $\alpha>0$, denote the depth-weighted candidate contributions in \cref{eq:cs-vgb-depth-weighted-candidate-contributions} for particle $n$ by $e_{j,n,i}^d$, with $d\in\{\mathrm F,\mathrm B\}$. Define
\begin{equation}
\begin{aligned}
\widehat A_j^{\mathrm F}&:=\prod_{n=1}^N\left(\frac1{L_F}\sum_{i=1}^{L_F}e_{j,n,i}^{\mathrm F}\right),
& E_{j,\mathbf i}^{\mathrm F}&:=\prod_{n=1}^Ne_{j,n,i_n}^{\mathrm F},\\
\widehat A_j^{\mathrm B}&:=\prod_{n=1}^N\left(\frac1{L_B}\sum_{k=1}^{L_B}e_{j,n,k}^{\mathrm B}\right),
& E_{j,\mathbf k}^{\mathrm B}&:=\prod_{n=1}^Ne_{j,n,k_n}^{\mathrm B}.
\end{aligned}
\label{eq:sync-finite-pool-masses}
\end{equation}
Without lazy pool evaluation, substitute $\widehat A_j^{\mathrm F},\widehat A_j^{\mathrm B}$ into the synchronized action rule. On a move in direction $d\in\{\mathrm F,\mathrm B\}$, independently select each particle's candidate with probability
\begin{equation}
\bbP(I_n=i\mid d\text{ move})
=\frac{e_{j,n,i}^d}{\sum_{h=1}^{L_d}e_{j,n,h}^d},
\qquad
\bbP(\mathbf I=\mathbf i\mid d\text{ move})
=\prod_{n=1}^N\frac{e_{j,n,i_n}^d}{\sum_{h=1}^{L_d}e_{j,n,h}^d}
=\frac{E_{j,\mathbf i}^d}{L_d^N\widehat A_j^d},
\label{eq:sync-factorized-index-selection}
\end{equation}
and reverse-insert the previous states into independent uniform opposite-pool slots.

With \emph{lazy pool evaluation}, an F update uses the full F pools and B mark $\mathbf k$; a B update uses the full B pools and F mark $\mathbf i$. In direction F, draw $I_n\sim\operatorname{Unif}([L_F])$ independently and write $\mathbf I=(I_1,\ldots,I_N)$; in direction B, draw $K_n\sim\operatorname{Unif}([L_B])$ independently and write $\mathbf K=(K_1,\ldots,K_N)$. Conditional on these indices, the probabilities of $(\mathsf{move},\mathsf{stay},\mathsf{flip})$ are
\begin{equation}
\begin{aligned}
\mathrm F:\quad&
\frac{\bigl(\widehat A_j^{\mathrm F},\,
\chi\min\{E_{j,\mathbf I}^{\mathrm F},E_{j,\mathbf k}^{\mathrm B}\},\,
E_{j,\mathbf k}^{\mathrm B}-\chi\min\{E_{j,\mathbf I}^{\mathrm F},E_{j,\mathbf k}^{\mathrm B}\}\bigr)}
{\widehat A_j^{\mathrm F}+E_{j,\mathbf k}^{\mathrm B}},\\
\mathrm B:\quad&
\frac{\bigl(\widehat A_j^{\mathrm B},\,
\chi\min\{E_{j,\mathbf K}^{\mathrm B},E_{j,\mathbf i}^{\mathrm F}\},\,
E_{j,\mathbf i}^{\mathrm F}-\chi\min\{E_{j,\mathbf K}^{\mathrm B},E_{j,\mathbf i}^{\mathrm F}\}\bigr)}
{\widehat A_j^{\mathrm B}+E_{j,\mathbf i}^{\mathrm F}}.
\end{aligned}
\label{eq:sync-lazy-auxiliary-gate}
\end{equation}
On a move, discard the uniform indices and redraw by \cref{eq:sync-factorized-index-selection}, and mark the reverse-inserted slots at an interior destination. On an F-to-B flip, mark the preselected indices $\mathbf I$ in the retained F pools and complete only missing B entries; the B-to-F case uses the preselected indices $\mathbf K$ symmetrically. A stay changes nothing; generated entries are retained across flips and stays. Boundary moves are forced without marks.

Write $\mathsf G_j^{(N)}:=\prod_n\mathsf G_j$. Let $\mathcal Z_{N,\alpha}$ be the normalizer in \cref{eq:sync-normalizer} with each edge $\widehat C_j^{(N)}$ replaced by $\alpha^{N(j-1)}\widehat C_j^{(N)}$.
\Needspace{5\baselineskip}
\begin{theorem_ap}[Stationary clean target with finite local pools]
\label{thm:sync-finite-pool-stationarity}
\label{thm:sync-lazy-finite-pool-stationarity}
Without lazy evaluation, the interior stationary law is
\begin{equation}
\Pi_{\mathrm{pool},N}(j,d\mathbf x,d\mathbf u,d)
=\frac{\boldsymbol\pi_j(d\mathbf x)\mathsf G_j^{(N)}(d\mathbf u\mid\mathbf x)}{\mathcal Z_{N,\alpha}}
(\widehat A_j^{\mathrm F}+\widehat A_j^{\mathrm B}).
\label{eq:sync-pool-stationary-law}
\end{equation}
With lazy evaluation, the F law is
\begin{equation}
\Pi_{\mathrm{mark},N}(j,d\mathbf x,d\mathbf u,\mathrm F,\mathbf k)
=\frac{\boldsymbol\pi_j(d\mathbf x)\mathsf G_j^{(N)}(d\mathbf u\mid\mathbf x)}{\mathcal Z_{N,\alpha}}
\frac{\widehat A_j^{\mathrm F}+E_{j,\mathbf k}^{\mathrm B}}{L_B^N},
\label{eq:sync-marked-stationary-law}
\end{equation}
and the B law is symmetric. At a boundary, both laws use only the available directional mass. Summing over the mark recovers $\Pi_{\mathrm{pool},N}$, and
\begin{equation}
\Pi_{\mathrm{mark},N}(d\mathbf x\mid j=0)
=\Pi_{\mathrm{pool},N}(d\mathbf x\mid j=0)
=(\pi_0^\star)^{\otimes N}(d\mathbf x).
\label{eq:sync-pool-clean-target}
\end{equation}
\end{theorem_ap}

\begin{proof}

The reverse-insertion identity in \cref{eq:c-indexed-edge-criterion} applies independently to each particle, so taking products gives the corresponding identity for the population.
With the factorized selection rule in \cref{eq:sync-factorized-index-selection} and independent uniform insertion slots, the proof of \cref{thm:c-joint-pool-invariance} therefore applies with
\begin{equation*}
(L_d,e_{j,i}^d,\widehat a_j^d)
\longmapsto
(L_d^N,E_{j,\mathbf i}^d,\widehat A_j^d).
\end{equation*}
The uniform tuple draws and conditional action probabilities in \cref{eq:sync-lazy-auxiliary-gate} give the same flip--stay cancellation.
Thus both stated laws are stationary, and summing over the mark recovers the unmarked law.
At the clean boundary, the edge weight is one and integrating out the pools gives a measure proportional to
$\bigotimes_{n=1}^N[r(x^{(n)})\pi_0(dx^{(n)})]$,
whose normalization is $(\pi_0^\star)^{\otimes N}$.

\end{proof}

\section{Additional Related Works}
\label{sec:main-related}

\paragraph{Guidance and particle sampling.}
Like CS-VGB, guidance and candidate-selection methods use intermediate information to favor desirable outputs, either by modifying denoising predictions~\citep{main:dhariwal2021guided,main:ho2022cfg,main:song2023lgd} or selecting candidates using soft values~\citep{li2025svdd}. TDS and Feynman--Kac steering address the related target-sampling problem through particle weighting and resampling~\citep{wu2023tds,singhal2025fk}. TDS retains asymptotic exactness through importance corrections as the particle count increases, whereas our corrected construction preserves the stationary clean target at fixed finite pool sizes. 

\paragraph{Search and iterative refinement.}
Our use of additional compute to revisit generation decisions connects CS-VGB to verifier-based search~\citep{rw:ma2025scaling,zhang2025classical} and the noising--denoising refinement of RERD and ABCD~\citep{uehara2025refinement,lee2025cyclic}. Distributional guarantees also appear in DTS, which improves tree-based values as rollouts accumulate~\citep{jain2025dts}, and IterRef, which applies multiple-try Metropolis to intermediate distributions in discrete diffusion~\citep{rw:lee2025iterref}. Our focus is instead on preserving the continuous-state clean target despite process-verifier error, without requiring those estimates to become exact.

\paragraph{Verifier error and backtracking.}
The closest conceptual predecessors of CS-VGB are AR-VGB and MDM-VGB, which separate imperfect process verifiers from the terminal target in discrete generation~\citep{rohatgi2026vgb,jeon2026mdmvgb}. CS-VGB extends this principle to continuous states by pairing denoising and renoising transitions. A related approach, DOIT, uses Monte Carlo estimates of a Doob $h$-correction to steer pretrained diffusion models toward a reward-tilted target and analyzes the resulting approximation error~\citep{rw:zhu2026doit}. CS-VGB instead uses process verifiers within a bidirectional stationary construction, where verifier error affects convergence rates but not the clean target under correct transition pairing. We separately quantify the bias caused by uncorrected transition mismatch.

\paragraph{Stochastic transitions and posterior maps.}
We instantiate CS-VGB with diffusion models and one-step posterior maps. For diffusion models, reverse denoising steps and forward noising transitions provide the checkpoint kernels~\citep{ho2020ddpm,song2021score}, whereas Meta Flow Maps and Posterior Diamond Maps generate clean posterior samples that we combine with Gaussian bridges to reach intermediate checkpoints~\citep{potaptchik2026meta,holderrieth2026diamond}. Related approaches include Noise Conditional Variational Score Distillation (NCVSD), which distills diffusion models into stochastic denoisers that approximate clean posterior samples across noise levels~\citep{rw:peng2025ncvsd}, and Strong Stochastic Flow Maps (SSFM), which learn the pathwise solution map of an additive-noise SDE and thus approximate its conditional transitions~\citep{rw:mccallum2026ssfm}. These methods provide transition samplers that can be used within CS-VGB, which determines how to traverse checkpoint kernels in both directions while preserving the stationary clean target under correct transition pairing. In particular, as conditional samplers of clean states, NCVSD and SSFM are also compatible with CS-VGB through the Gaussian-bridge construction (\cref{prop:cs-vgb-posterior-bridge-pairing}).

\section{Experiment Settings and Supplementary Results}
\label{app:sec:experiment-settings}

\paragraph{Settings.}
All experiments were implemented in Python using PyTorch~\citep{paszke2019pytorch} and conducted on Ubuntu Linux. We used NVIDIA H100 GPUs with 80\,GB of memory, NVIDIA H200 GPUs with 141\,GB of memory, and NVIDIA RTX PRO 6000 Blackwell GPUs with 96\,GB of memory.

\subsection{Shared Search Protocol}
\label{app:subsec:practical-implementation-details}
Unlike the analytic sampling experiments, practical search sets $\rho_j=1$ and proposal importance weights $dK/dG=1$. It archives clean outputs in $\mathcal S$ and, unless otherwise specified, returns $x^\star\in\arg\max_{x\in\mathcal S}r(x)$, with task-specific endpoint archiving as described below. These choices prioritize reward rather than exact target sampling. We check the NFE budget before each search update and report the actual NFE, which may exceed the requested budget when completing the forward-only fallback.

\paragraph{Gradient guidance for posterior bridges.}
For ImageNet and CelebA, we use normalized proposal-mean shifts before candidate selection, rather than the covariance-scaled bridge guidance in \cref{eq:posterior-bridge-covariance-guidance}. Following the Monte Carlo reward-gradient estimation used in MFM and PDM~\citep{potaptchik2026meta,holderrieth2026diamond}, we differentiate the log mean reward of posterior samples. Write $t_j=t_{\kappa_j}$ and use $m_j,a_j,\sigma_j$ from \cref{eq:cs-vgb-flowmap-noise-variances}. Let $\zeta_\ell\overset{\mathrm{iid}}{\sim}\mathcal N(0,I)$, $\ell=1,\ldots,m_g$, be the noise inputs to the posterior sampler. Define
\begin{equation}
Z_\ell(x)=\Phi_{t_j,0}^{\theta}(x,\zeta_\ell),\qquad
h_j(x)=\nabla_x\log\!\left[\frac{1}{m_g}\sum_{\ell=1}^{m_g}r(Z_\ell(x))\right],\qquad
\bar Z=\frac{1}{m_g}\sum_{\ell=1}^{m_g}Z_\ell(x).
\label{eq:search-posterior-gradient}
\end{equation}
The gradient is taken through the posterior sampler; at $j=0$, $h_0=\nabla\log r$. The implemented guided proposals are
\begin{align}
G_j^{\mathrm F}(dy\mid x,Z)
&=\mathcal N\!\left(dy;m_j(x,Z)+g_F(t_j-t_{j-1})\left\|\frac{\bar Z-x}{t_j}\right\|_2\frac{h_j(x)}{\|h_j(x)\|_2},\frac{t_{j-1}^2\sigma_j^2}{t_j^2}I\right),
\label{eq:search-guided-forward}\\
G_j^{\mathrm B}(dy\mid x)
&=\mathcal N\!\left(dy;a_{j+1}x+g_B\sigma_{j+1}\frac{h_j(x)}{\operatorname{RMS}(h_j(x))},\sigma_{j+1}^2I\right),
\label{eq:search-guided-backward}
\end{align}
These kernels are conditional on the guidance draws, with $Z$ a clean posterior draw and $\operatorname{RMS}(h)=\|h\|_2/\sqrt{d}$ for state dimension $d$. Both guidance shifts are defined to be zero when $h_j(x)=0$. Averaging over the posterior draws gives the proposal kernels conditional on $x$ alone.

\paragraph{Best-endpoint sharing and local candidate selection.}
For unguided search with one-step posterior maps, we follow MFM-Search~\citep{potaptchik2026meta} in sharing the highest-reward clean endpoint across particles. From each parent $x_j^{(n)}$, draw $m_e$ posterior endpoints $Z_{n,\ell}$ and select
\begin{equation}
(n^\star,\ell^\star)\in\arg\max_{1\leq n\leq N,\,1\leq\ell\leq m_e}r(Z_{n,\ell}),\qquad
(x^\star,Z^\star):=(x_j^{(n^\star)},Z_{n^\star,\ell^\star}).
\label{eq:search-best-endpoint-sharing}
\end{equation}
Each particle then forms its own $L_F$ bridge candidates from this shared parent--endpoint pair and selects the highest-weight local child:
\begin{equation}
Y_{n,i}=m_j(x^\star,Z^\star)+\frac{t_{j-1}\sigma_j}{t_j}\xi_{n,i},\qquad
\xi_{n,i}\overset{\mathrm{iid}}{\sim}\mathcal N(0,I),\qquad
i_n^\star\in\arg\max_{1\leq i\leq L_F}e_{j,n,i}^{\mathrm F}.
\label{eq:search-local-argmax}
\end{equation}
Here $e_{j,n,i}^{\mathrm F}$ is the depth-weighted candidate contribution in \cref{eq:cs-vgb-depth-weighted-candidate-contributions}, with the search corrections specified above. After moving to $Y_{n,i_n^\star}$, reverse insertion uses the shared parent $x^\star$, not the particle's previous parent, and the reverse-insertion mark is updated accordingly at interior destinations. In unguided search, backward moves likewise select the highest-weight candidate from the local pool.

\paragraph{Renoising warp.}
Finite-budget search favors frequent clean outputs, motivating a smaller depth weight $\alpha$. However, this can cause the chain to switch back to denoising after only shallow renoising, spending NFE on repeated local refinements near the clean endpoint. To encourage broader exploration within the same budget, we optionally force renoising to depth $j=\lceil\beta J\rceil$ after a clean visit; $\beta=0$ disables this warp. For $\beta>0$, we force successive backward moves to $j$, subject to the remaining budget, retaining the usual proposal generation, candidate selection, and reverse insertion. We then resume ordinary action selection without resetting the direction or pools, subject to the standard boundary rules. This renoising warp is a search-only heuristic, distinct from ordinary backtracking.

\paragraph{Skipping stays.}
Since stays leave the state, pools, and mark unchanged, we sample the next non-stay action directly:
\begin{equation*}
\bbP(\mathsf{move}\mid\text{not stay})
=\frac{p_{\mathrm{move}}}{1-p_{\mathrm{stay}}},\qquad
\bbP(\mathsf{flip}\mid\text{not stay})
=\frac{p_{\mathrm{flip}}}{1-p_{\mathrm{stay}}}.
\end{equation*}
Candidate and mark selection are unchanged. Skipping stays alone preserves the chain's stationary clean conditional because clean-boundary states have no stays.

\subsection{Target Recovery in Two Dimensions}
\label{app:subsec:toy-verifier-error}
\label{app:subsec:sampling-additional-results}

We use a 24-component Gaussian mixture as the reference and define the reward by
\[
\pi_0(x)=\sum_{k=1}^{24}w_k\mathcal N(x;\mu_k,\Sigma_k),
\qquad
r(x)=10^{-3}+\sum_{\ell=1}^{6}a_\ell
\exp\!\left[-\tfrac12(x-c_\ell)^\top S_\ell^{-1}(x-c_\ell)\right].
\]
Here $w_k$, $\mu_k$, and $\Sigma_k$ are the mixture weights, means, and covariance matrices, while $a_\ell$, $c_\ell$, and $S_\ell$ specify the strength, center, and spatial spread of each reward term.
We use 32 intervals uniformly spaced in $-\log(1-t)$, with $t_j=1-100^{-j/32}$ for $j=0,\ldots,32$.
Renoising is Gaussian and denoising uses its exact conditional reverse, so $\rho_j=1$.
The initial state follows the reference marginal at $t_{32}=0.99$.

\paragraph{Verifier error.}
The ideal verifier $V_j(x)$ is evaluated analytically using Gaussian-mixture posterior expectations of the terminal reward. To model a spatially and temporally coherent error rather than fresh evaluation noise, we use the fixed random Fourier field
\begin{equation}
\eta(x,t)=\sqrt{\frac{2}{M}}\sum_{m=1}^{M}c_m\cos(\omega_m^\top x+\nu_m t+\phi_m),
\qquad s_\eta^2=\frac1M\sum_{m=1}^{M}c_m^2,
\label{eq:app-toy-verifier-field}
\end{equation}
with $M=16$, $\omega_m\sim\mathcal N(0,I_2)$, $\nu_m,c_m\sim\mathcal N(0,1)$, and $\phi_m\sim\operatorname{Unif}[0,2\pi)$. The field is fixed across methods and repetitions; $s_\eta^2$ is its variance over random phases conditional on the amplitudes.

The perturbed verifier is defined by
\begin{equation}
\log\widehat V_j(x)=\log V_j(x)+\lambda t\,\eta(x,t)-\tfrac12(\lambda t)^2s_\eta^2,\qquad t=t_{\kappa_j}.
\label{eq:exp-verifier-perturbation}
\end{equation}
We use $\lambda\in\{0,3,10,15,20\}$, and both perturbation terms vanish at $t=0$, preserving the exact terminal reward.

\paragraph{Baseline protocols.}
Noisy guidance uses 16,384 Euler--Maruyama reverse-SDE steps, the analytic reference score, and unit-strength $\nabla\log\widehat V_t$ guidance. SMC uses 512 particles, 32 exact denoising intervals, incremental weights $\widehat V_{j-1}/\widehat V_j$, and resampling at every interval. RERD uses 512 samples, forward candidate pools of four, and 65,536 refinement updates with uniform renoising depths. 

\subsection{ImageNet Prompt-Guided Generation}
\label{app:subsec:imagenet-settings}

\paragraph{Generator and evaluation prompts.}
We use the pretrained class-conditional MFM checkpoint~\citep{potaptchik2026meta}\footnote{\url{https://github.com/adh1s/mfm}} for ImageNet $256\times256$~\citep{main:deng2009imagenet}, with Stability AI's fine-tuned VAE decoder\footnote{\url{https://huggingface.co/stabilityai/sd-vae-ft-mse}}, classifier-free guidance scale $2.5$, and one model step per posterior draw. Class guidance is shared by all methods.

We construct 100 fixed evaluation prompts by pairing ImageNet classes with additional conditions: 21 actions, 20 object interactions, 20 visual states, 20 contexts, and 19 spatial relations. The generator is conditioned on the class label, while the full prompt is used to compute the reward and evaluate prompt alignment. \Cref{fig:imagenet-prompt-examples} provides ten example prompts.

\begin{figure}[!htb]
\centering
\begin{mdframed}[linewidth=0.6pt,linecolor=black!65,backgroundcolor=black!3,innerleftmargin=11pt,innerrightmargin=11pt,innertopmargin=10pt,innerbottommargin=10pt,skipabove=0pt,skipbelow=0pt]

\textbf{ImageNet: example evaluation prompts}\par\smallskip
\hrule height 0.4pt\relax\medskip
\small\raggedright
\begin{enumerate}[label=\textbf{\arabic*.},start=0,leftmargin=1.8em,itemsep=6pt,topsep=0pt,parsep=0pt]
\item \textbf{volcano (980)}\\\texttt{majestic volcano erupting with lava flowing down its sides}
\item \textbf{space shuttle (812)}\\\texttt{space shuttle launching into space, fiery fumes}
\item \textbf{hammerhead (4)}\\\texttt{a hammerhead nudging a floating buoy with its snout}
\item \textbf{stingray (6)}\\\texttt{a stingray partially buried in pale sand}
\item \textbf{hen (8)}\\\texttt{a hen in a sunlit farmyard}
\item \textbf{brambling (10)}\\\texttt{a brambling perched on a frosted branch}
\item \textbf{house finch (12)}\\\texttt{a house finch pulling a seed from a sunflower head}
\item \textbf{indigo bunting (14)}\\\texttt{an indigo bunting with rain-soaked plumage}
\item \textbf{bulbul (16)}\\\texttt{a bulbul taking flight}
\item \textbf{magpie (18)}\\\texttt{a magpie in a rain-soaked city park}
\end{enumerate}
\end{mdframed}
\caption{Example prompts for ImageNet generation.}
\label{fig:imagenet-prompt-examples}
\end{figure}

\paragraph{Reward and independent evaluation.}
We use HPSv2~\citep{exp:wu2023hpsv2} with v2.1 weights. The reported score is $s(x,c)=100\langle f_I(x),f_T(c)\rangle$, where $f_I(x)$ and $f_T(c)$ are the unit-normalized embeddings of image $x$ and prompt $c$ produced by the HPSv2 image and text encoders, respectively. For search, we use the calibrated log reward
\begin{equation*}
\log r(x;c)=\frac{s(x,c)-\mu_c}{T\sigma_c},\qquad T=0.5,
\end{equation*}
where $\mu_c$ and $\sigma_c$ are the mean and standard deviation of HPSv2 scores for prompt $c$, computed from 64 images generated by the base MFM under the corresponding class label without reward guidance or search. The same normalization statistics are used for all methods. We remark that the plotted HPSv2 values are raw scores, not standardized log rewards. We evaluate prompt alignment with VQAScore~\citep{exp:lin2024vqascore}, using the 3B-parameter instruction-tuned Qwen2.5-VL model~\citep{exp:bai2025qwen25vl} as the evaluation backbone\footnote{\url{https://huggingface.co/Qwen/Qwen2.5-VL-3B-Instruct}} (\cref{fig:main-imagenet-mfm}). For the question \emph{Does this figure show \{prompt\}? Please answer Yes or No.}, VQAScore reports the full-vocabulary next-token probability of \emph{Yes}. CLIP-IQA~\citep{exp:wang2023clipiqa} is shown for both ImageNet and CelebA in \cref{fig:app-imagenet-heldout}. Neither metric is used for optimization.

\paragraph{Guided CS-VGB.}
CS-VGB uses the posterior bridges in \cref{app:subsec:model-checkpoint-transitions}, with a uniform grid $t_j=j/J$, initialized from Gaussian noise at $t=1$. We set $\alpha=0.3$, $\chi=1$, and one posterior sample per value estimate, with lazy evaluation, skipped stays, and reverse insertion. We set $\beta=0.2$. If the search ends without any clean visits, a forward-only fallback completes denoising from the current state to the clean level. Edge and proposal corrections are omitted as specified in \cref{app:subsec:practical-implementation-details}.

Guided runs use one chain; F guides forward proposals only, while F+B also guides backward proposals. \Cref{tab:imagenet-guided-settings} lists F+B settings: posterior draws $m_g$, guidance scales $g_F,g_B$. In our implementation, the guided method returns the highest-reward image among clean visits and clean endpoints evaluated for value estimation or gradient guidance.

\begin{table}[!htb]
\centering\small
\caption{\textbf{ImageNet guided CS-VGB configurations.} $B$ is the requested NFE budget; $J$ counts checkpoint intervals.}
\label{tab:imagenet-guided-settings}
\renewcommand{\arraystretch}{0.9}
\begin{tabular}{rrrrrrr}
\toprule
$B$ & $J$ & $L_F$ & $L_B$ & $m_g$ & $g_F$ & $g_B$\\
\midrule
$2^9$ &95&1&1&1&3.5&1.0\\
$2^{10}$ &79&1&1&2&4.0&1.5\\
$2^{11}$ &64&1&1&4&2.0&1.5\\
$2^{12}$ &64&1&1&4&1.75&1.5\\
$2^{13}$ &64&4&4&8&2&1.5\\
$10{,}000$ &64&4&4&8&2&1.5\\
$19{,}609$ &128&8&8&16&3&2\\
$42{,}362$ &129&16&8&16&3.5&2\\
$2^{16}$ &129&16&8&16&3.5&2\\
\bottomrule
\end{tabular}
\end{table}

\paragraph{Unguided CS-VGB search.}
Unguided runs use $L_F=L_B=4$ and a budget-dependent population size $N$, drawing $m_e$ clean endpoint candidates per particle for each denoising proposal (\cref{tab:imagenet-search-settings}). Following MFM-Search~\citep{potaptchik2026meta}, we select the highest-reward clean endpoint across particles. We then select the highest-weight child among the $L_F$ local candidates for each particle and reverse-insert its parent. Following the search protocol of Posterior Diamond Maps~\citep{holderrieth2026diamond}, we also archive clean endpoints generated for verifier evaluation and return the highest-reward image in the archive.

\begin{table}[!htb]
\centering\small
\caption{\textbf{ImageNet unguided CS-VGB configurations.} Columns give requested NFE budget $B$, intervals $J$, particles $N$, and clean endpoint candidates per particle $m_e$.}
\label{tab:imagenet-search-settings}
\renewcommand{\arraystretch}{0.9}
\begin{tabular}{rrrr@{\qquad\qquad}rrrr}
\toprule
$B$&$J$&$N$&$m_e$&$B$&$J$&$N$&$m_e$\\
\midrule
1,433&16&4&8&10,623&32&8&24\\
1,854&16&4&16&12,637&32&8&24\\
1,902&16&4&16&12,767&32&8&24\\
1,953&16&4&16&16,827&32&8&24\\
2,037&16&4&16&21,716&64&8&24\\
3,975&16&4&32&25,859&64&8&24\\
6,230&16&8&24&34,333&64&8&24\\
7,956&16&8&32&64,954&64&8&24\\
10,393&16&8&24&131,029&64&16&52\\
\bottomrule
\end{tabular}
\end{table}

\noindent\begin{minipage}{\linewidth}
\subsubsection{Qualitative comparisons on ImageNet}
\label{app:subsec:imagenet-qualitative}
At approximately 65k NFE, we compare outputs for five of the first 20 evaluation prompts. 

\begin{table}[H]
\centering
\setlength{\tabcolsep}{2pt}
\caption{\textbf{ImageNet prompt-aligned outputs near 65k NFE.} Generated images with HPSv2 scores (reported on the $\times100$ scale) and actual NFE. Bold marks the largest HPSv2 score in each row.}
\label{tab:imagenet-qualitative}
\begin{tabular*}{\linewidth}{@{\extracolsep{\fill}}p{0.23\linewidth}ccccc@{}}
\toprule
Prompt & BoN & \shortstack{MFM\\Search} & \shortstack{CS-VGB\\Search} & MFM-G & CS-VGB-G \\
\midrule
\parbox[c]{\linewidth}{\footnotesize\raggedright\textbf{ID 004}\\a hen in a sunlit farmyard.} & \raisebox{-0.5\height}{\shortstack{\includegraphics[width=0.135\linewidth]{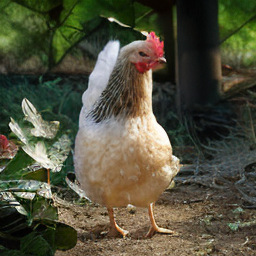}\\30.25\\\scriptsize NFE 65,536}} & \raisebox{-0.5\height}{\shortstack{\includegraphics[width=0.135\linewidth]{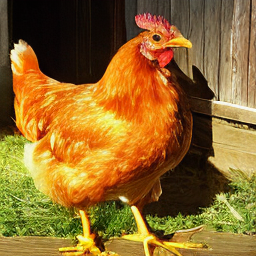}\\35.81\\\scriptsize NFE 65,536}} & \raisebox{-0.5\height}{\shortstack{\includegraphics[width=0.135\linewidth]{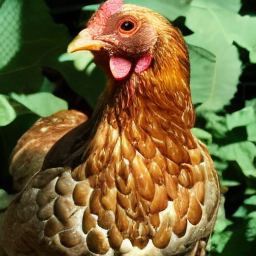}\\36.61\\\scriptsize NFE 64,768}} & \raisebox{-0.5\height}{\shortstack{\includegraphics[width=0.135\linewidth]{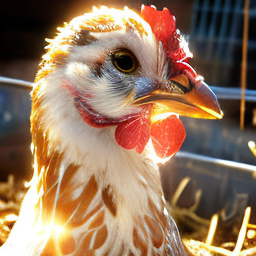}\\46.96\\\scriptsize NFE 64,990}} & \raisebox{-0.5\height}{\shortstack{\includegraphics[width=0.135\linewidth]{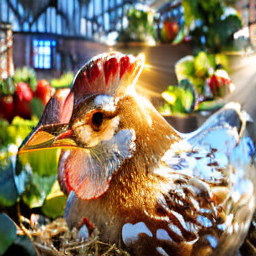}\\\textbf{55.29}\\\scriptsize NFE 65,548}} \\[3pt]
\parbox[c]{\linewidth}{\footnotesize\raggedright\textbf{ID 005}\\a brambling perched on a frosted branch.} & \raisebox{-0.5\height}{\shortstack{\includegraphics[width=0.135\linewidth]{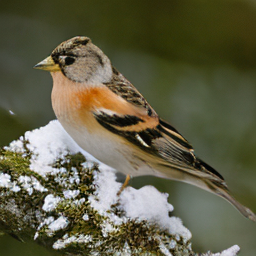}\\32.26\\\scriptsize NFE 65,536}} & \raisebox{-0.5\height}{\shortstack{\includegraphics[width=0.135\linewidth]{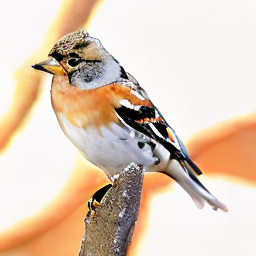}\\31.56\\\scriptsize NFE 65,536}} & \raisebox{-0.5\height}{\shortstack{\includegraphics[width=0.135\linewidth]{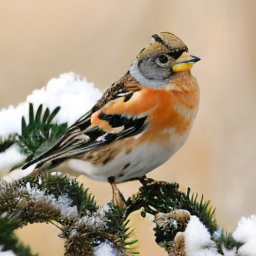}\\38.72\\\scriptsize NFE 64,712}} & \raisebox{-0.5\height}{\shortstack{\includegraphics[width=0.135\linewidth]{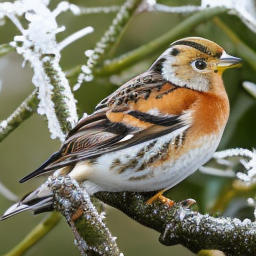}\\48.05\\\scriptsize NFE 64,990}} & \raisebox{-0.5\height}{\shortstack{\includegraphics[width=0.135\linewidth]{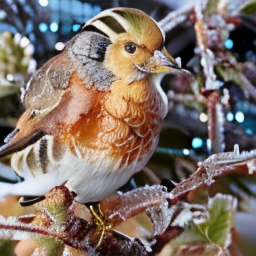}\\\textbf{51.73}\\\scriptsize NFE 65,536}} \\[3pt]
\parbox[c]{\linewidth}{\footnotesize\raggedright\textbf{ID 009}\\a magpie in a rain-soaked city park.} & \raisebox{-0.5\height}{\shortstack{\includegraphics[width=0.135\linewidth]{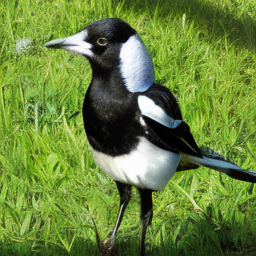}\\30.97\\\scriptsize NFE 65,536}} & \raisebox{-0.5\height}{\shortstack{\includegraphics[width=0.135\linewidth]{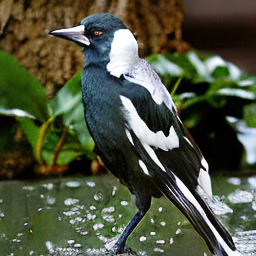}\\35.86\\\scriptsize NFE 65,536}} & \raisebox{-0.5\height}{\shortstack{\includegraphics[width=0.135\linewidth]{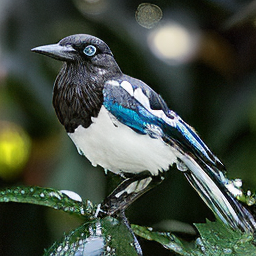}\\39.32\\\scriptsize NFE 64,904}} & \raisebox{-0.5\height}{\shortstack{\includegraphics[width=0.135\linewidth]{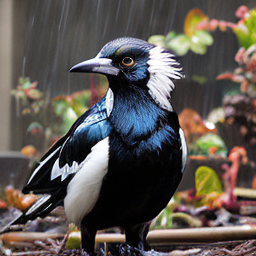}\\47.33\\\scriptsize NFE 64,990}} & \raisebox{-0.5\height}{\shortstack{\includegraphics[width=0.135\linewidth]{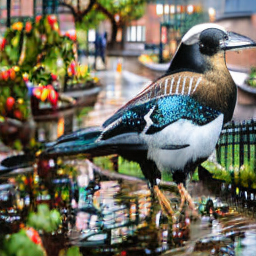}\\\textbf{56.08}\\\scriptsize NFE 65,567}} \\[3pt]
\parbox[c]{\linewidth}{\footnotesize\raggedright\textbf{ID 014}\\a spotted salamander on a damp forest floor.} & \raisebox{-0.5\height}{\shortstack{\includegraphics[width=0.135\linewidth]{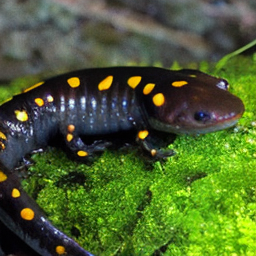}\\31.61\\\scriptsize NFE 65,536}} & \raisebox{-0.5\height}{\shortstack{\includegraphics[width=0.135\linewidth]{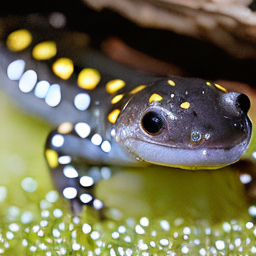}\\31.96\\\scriptsize NFE 65,536}} & \raisebox{-0.5\height}{\shortstack{\includegraphics[width=0.135\linewidth]{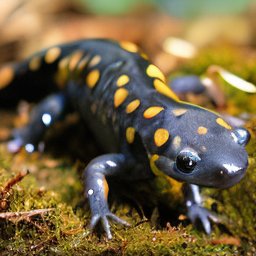}\\36.58\\\scriptsize NFE 64,864}} & \raisebox{-0.5\height}{\shortstack{\includegraphics[width=0.135\linewidth]{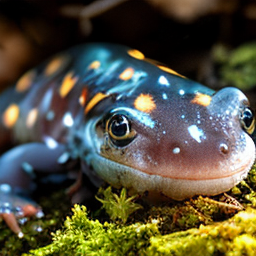}\\45.95\\\scriptsize NFE 64,990}} & \raisebox{-0.5\height}{\shortstack{\includegraphics[width=0.135\linewidth]{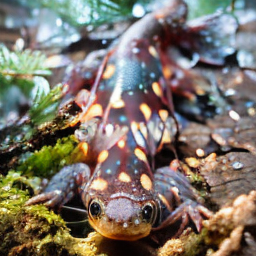}\\\textbf{51.71}\\\scriptsize NFE 65,573}} \\[3pt]
\parbox[c]{\linewidth}{\footnotesize\raggedright\textbf{ID 016}\\a tailed frog leaping from the water.} & \raisebox{-0.5\height}{\shortstack{\includegraphics[width=0.135\linewidth]{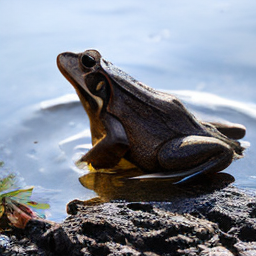}\\29.55\\\scriptsize NFE 65,536}} & \raisebox{-0.5\height}{\shortstack{\includegraphics[width=0.135\linewidth]{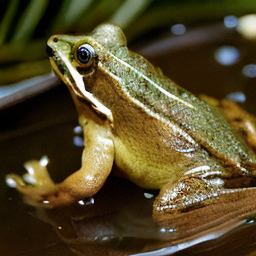}\\34.16\\\scriptsize NFE 65,536}} & \raisebox{-0.5\height}{\shortstack{\includegraphics[width=0.135\linewidth]{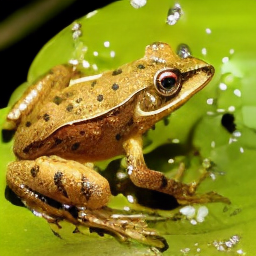}\\37.49\\\scriptsize NFE 64,736}} & \raisebox{-0.5\height}{\shortstack{\includegraphics[width=0.135\linewidth]{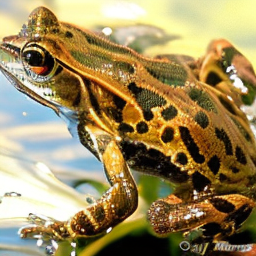}\\49.16\\\scriptsize NFE 64,990}} & \raisebox{-0.5\height}{\shortstack{\includegraphics[width=0.135\linewidth]{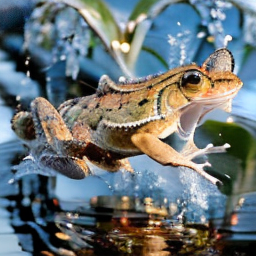}\\\textbf{51.96}\\\scriptsize NFE 65,575}} \\[3pt]
\bottomrule
\end{tabular*}
\end{table}
\end{minipage}

CS-VGB-G depicts a wet urban setting for ID 009 and an airborne frog with splashes for ID 016, with HPSv2 gains of $2.80$--$8.75$ over MFM-G across these cases. Higher scores do not verify every detail; aggregate reward and alignment results appear in \cref{fig:main-imagenet-mfm}.
\FloatBarrier

\subsection{CelebA Facial-Attribute Generation}
\label{app:subsec:celeba-settings}

\paragraph{Generator and conditions.}
We use the pretrained PDM checkpoint~\citep{holderrieth2026diamond}\footnote{\url{https://github.com/PeterHolderrieth/diamond_maps}} for CelebA $64\times64$~\citep{liu2015faceattributes} and one model step per posterior draw or transition. PDM-G additionally uses the flow-map checkpoint from the same release for its drift.

\paragraph{Attribute conditions.}
We construct 100 fixed attribute conditions, each requesting presence or absence of ten attributes: Smiling, Eyeglasses, Bangs, Wearing Hat, Mouth Slightly Open, No Beard, Bushy Eyebrows, Wearing Lipstick, Wearing Earrings, and one of Black Hair, Brown Hair, or Blond Hair. Negative hair-color targets exclude that color. \Cref{fig:celeba-prompt-examples} shows ten examples.

\begin{figure}[!htb]
\centering
\begin{mdframed}[linewidth=0.6pt,linecolor=black!65,backgroundcolor=black!3,innerleftmargin=11pt,innerrightmargin=11pt,innertopmargin=10pt,innerbottommargin=10pt,skipabove=0pt,skipbelow=0pt]

\textbf{CelebA: example attribute conditions}\par\smallskip
\hrule height 0.4pt\relax\medskip
\small\raggedright
{\ttfamily a close-up portrait photo of a person who}\par\medskip
\begin{enumerate}[label=\textbf{\arabic*.},start=0,leftmargin=1.8em,itemsep=7pt,topsep=0pt,parsep=0pt]
\ttfamily
\item is not smiling, is not wearing eyeglasses, has bangs, is not wearing a hat, has their mouth slightly open, has blond hair, has a beard, has bushy eyebrows, is wearing lipstick, and is not wearing earrings
\item is smiling, is wearing eyeglasses, does not have bangs, is wearing a hat, has their mouth closed, does not have black hair, does not have a beard, does not have bushy eyebrows, is not wearing lipstick, and is wearing earrings
\item is not smiling, is wearing eyeglasses, has bangs, is not wearing a hat, has their mouth closed, does not have brown hair, has a beard, has bushy eyebrows, is wearing lipstick, and is wearing earrings
\item is smiling, is not wearing eyeglasses, does not have bangs, is wearing a hat, has their mouth slightly open, has black hair, does not have a beard, does not have bushy eyebrows, is not wearing lipstick, and is not wearing earrings
\item is smiling, is wearing eyeglasses, does not have bangs, is wearing a hat, has their mouth slightly open, has brown hair, has a beard, does not have bushy eyebrows, is wearing lipstick, and is wearing earrings
\item is not smiling, is not wearing eyeglasses, has bangs, is wearing a hat, has their mouth closed, does not have blond hair, does not have a beard, has bushy eyebrows, is not wearing lipstick, and is not wearing earrings
\item is smiling, is not wearing eyeglasses, has bangs, is not wearing a hat, has their mouth closed, does not have brown hair, has a beard, does not have bushy eyebrows, is not wearing lipstick, and is not wearing earrings
\item is not smiling, is wearing eyeglasses, does not have bangs, is not wearing a hat, has their mouth slightly open, has blond hair, does not have a beard, has bushy eyebrows, is wearing lipstick, and is wearing earrings
\item is smiling, is not wearing eyeglasses, has bangs, is not wearing a hat, has their mouth slightly open, has brown hair, does not have a beard, has bushy eyebrows, is not wearing lipstick, and is wearing earrings
\item is not smiling, is wearing eyeglasses, does not have bangs, is wearing a hat, has their mouth closed, does not have black hair, has a beard, does not have bushy eyebrows, is wearing lipstick, and is not wearing earrings
\end{enumerate}
\end{mdframed}
\caption{Example attribute conditions for CelebA generation.}
\label{fig:celeba-prompt-examples}
\end{figure}

\paragraph{Reward and evaluation metrics.}
We construct the attribute reward using the CelebA-trained FaRL attribute classifier~\citep{exp:zheng2022farl} and a MobileNet-based RetinaFace detector from FACER\footnote{\url{https://github.com/FacePerceiver/facer}}, upscaling images to $256\times256$ before evaluation. Let $p_a(x)$ denote the classifier probability of the requested presence or absence of attribute $a$, clipped to $[10^{-6},1-10^{-6}]$. With reward temperature one, the optimization reward and the reported \emph{attribute geometric mean} are
\begin{equation*}
\log r(x)=\sum_{a=1}^{10}\log p_a(x),
\qquad
\operatorname{GM}(x)=\exp\!\left(\frac1{10}\sum_{a=1}^{10}\log p_a(x)\right).
\end{equation*}
If no face is detected, optimization assigns $p_a=10^{-4}$ with zero image gradient, while evaluation assigns GM zero. 

Held-out metrics are CLIP-IQA~\citep{exp:wang2023clipiqa} and VQAScore~\citep{exp:lin2024vqascore}. VQAScore uses the same evaluation procedure and Qwen2.5-VL 3B backbone as in the ImageNet experiments.

\paragraph{Guided CS-VGB.}
CS-VGB uses the posterior bridges in \cref{app:subsec:model-checkpoint-transitions}, a uniform grid $t_j=j/J$, $\alpha=0.3$, $\chi=1$, lazy evaluation, skipped stays, and one posterior draw per value estimate. Edge and proposal corrections are omitted. We set $\beta=0.2$ and use the same forward-only fallback as in the ImageNet experiments if the search ends without any clean visits. Guided variants use one chain and $L_F=L_B=2$, with $g_F=0.1$ and $g_B=0$ (F) or $0.1$ (F+B).

\paragraph{Unguided CS-VGB search.}
Unguided runs use the same posterior bridges, depth weight, persistence, lazy evaluation, renoising, fallback, and archive policy described above, without gradient guidance. Budget-dependent settings appear in \cref{tab:celeba-cs-settings}. Each proposal draws $m_e$ endpoints per particle and builds bridges from the highest-reward endpoint's parent.

\begin{table}[H]
\centering\small
\caption{\textbf{CS-VGB settings for CelebA.} Guided columns apply to both F-only and F+B variants. Here, $m_g$ denotes the number of posterior endpoint samples used for guidance, $J$ the number of checkpoint intervals, $N$ the number of particles, $L=L_F=L_B$ the candidate-pool size, and $m_e$ the number of posterior endpoint samples per particle used for unguided search.}
\label{tab:celeba-cs-settings}
\begin{tabular}{c cc cccc}
\toprule
& \multicolumn{2}{c}{Guided ($N=1$, $L=2$)} & \multicolumn{4}{c}{Unguided} \\
\cmidrule(lr){2-3}\cmidrule(lr){4-7}
Target NFE & $J$ & $m_g$ & $J$ & $N$ & $L$ & $m_e$ \\
\midrule
$2^6$  & 4  & 1 & 2  & 2 & 2 & 4 \\
$2^7$  & 4  & 1 & 4  & 2 & 4 & 4 \\
$2^8$  & 4  & 1 & 4  & 4 & 8 & 8 \\
$2^9$  & 8  & 1 & 8  & 4 & 4 & 8 \\
$2^{10}$ & 16 & 2 & 16 & 8 & 4 & 4 \\
$2^{11}$ & 16 & 2 & 8  & 8 & 8 & 16 \\
$2^{12}$ & 32 & 4 & 16 & 8 & 8 & 8 \\
$2^{13}$ & 32 & 4 & 16 & 8 & 8 & 32 \\
\bottomrule
\end{tabular}
\end{table}

\FloatBarrier

\clearpage
\subsubsection{Qualitative comparisons on CelebA}
\label{app:subsec:celeba-qualitative}
At around 4,096-NFE target, we compare outputs of baselines for five attribute conditions. 

\begin{table}[H]
\centering
\setlength{\tabcolsep}{2pt}
\caption{\textbf{CelebA attribute-conditioned outputs at a 4,096-NFE target budget.}  $+$/$-$ denote required presence/absence; beard is expressed directly. Bold marks the largest GM in each row.}
\label{tab:celeba-qualitative}
\begin{tabular*}{\linewidth}{@{\extracolsep{\fill}}p{0.23\linewidth}ccccc@{}}
\toprule
Condition & BoN & PDM-SMC & \shortstack{CS-VGB\\Search} & PDM-G & CS-VGB-G \\
\midrule
\parbox[c]{\linewidth}{\footnotesize\raggedright\textbf{ID 011}\\$-$ smiling, $+$ eyeglasses, $-$ bangs, $+$ hat, $-$ mouth open, $+$ black hair, Beard, $+$ bushy brows, $-$ lipstick, $+$ earrings.} & \raisebox{-0.5\height}{\shortstack{\includegraphics[width=0.135\linewidth]{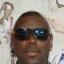}\\0.2688\\\scriptsize NFE 4,096}} & \raisebox{-0.5\height}{\shortstack{\includegraphics[width=0.135\linewidth]{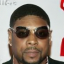}\\0.7476\\\scriptsize NFE 4,068}} & \raisebox{-0.5\height}{\shortstack{\includegraphics[width=0.135\linewidth]{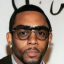}\\0.7634\\\scriptsize NFE 4,152}} & \raisebox{-0.5\height}{\shortstack{\includegraphics[width=0.135\linewidth]{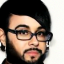}\\0.9453\\\scriptsize NFE 4,068}} & \raisebox{-0.5\height}{\shortstack{\includegraphics[width=0.135\linewidth]{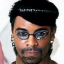}\\\textbf{0.9805}\\\scriptsize NFE 4,101}} \\[5pt]
\parbox[c]{\linewidth}{\footnotesize\raggedright\textbf{ID 013}\\$-$ smiling, $+$ eyeglasses, $+$ bangs, $-$ hat, $+$ mouth open, $-$ blond hair, No beard, $+$ bushy brows, $+$ lipstick, $-$ earrings.} & \raisebox{-0.5\height}{\shortstack{\includegraphics[width=0.135\linewidth]{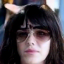}\\0.4232\\\scriptsize NFE 4,096}} & \raisebox{-0.5\height}{\shortstack{\includegraphics[width=0.135\linewidth]{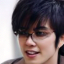}\\0.7046\\\scriptsize NFE 4,068}} & \raisebox{-0.5\height}{\shortstack{\includegraphics[width=0.135\linewidth]{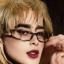}\\0.9511\\\scriptsize NFE 4,184}} & \raisebox{-0.5\height}{\shortstack{\includegraphics[width=0.135\linewidth]{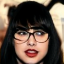}\\0.9668\\\scriptsize NFE 4,068}} & \raisebox{-0.5\height}{\shortstack{\includegraphics[width=0.135\linewidth]{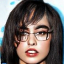}\\\textbf{0.9923}\\\scriptsize NFE 4,103}} \\[5pt]
\parbox[c]{\linewidth}{\footnotesize\raggedright\textbf{ID 001}\\$+$ smiling, $+$ eyeglasses, $-$ bangs, $+$ hat, $-$ mouth open, $-$ black hair, No beard, $-$ bushy brows, $-$ lipstick, $+$ earrings.} & \raisebox{-0.5\height}{\shortstack{\includegraphics[width=0.135\linewidth]{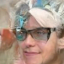}\\0.5476\\\scriptsize NFE 4,096}} & \raisebox{-0.5\height}{\shortstack{\includegraphics[width=0.135\linewidth]{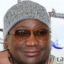}\\0.9456\\\scriptsize NFE 4,068}} & \raisebox{-0.5\height}{\shortstack{\includegraphics[width=0.135\linewidth]{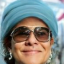}\\0.9775\\\scriptsize NFE 4,104}} & \raisebox{-0.5\height}{\shortstack{\includegraphics[width=0.135\linewidth]{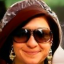}\\0.9678\\\scriptsize NFE 4,068}} & \raisebox{-0.5\height}{\shortstack{\includegraphics[width=0.135\linewidth]{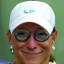}\\\textbf{0.9900}\\\scriptsize NFE 4,101}} \\[5pt]
\parbox[c]{\linewidth}{\footnotesize\raggedright\textbf{ID 003}\\$+$ smiling, $-$ eyeglasses, $-$ bangs, $+$ hat, $+$ mouth open, $+$ black hair, No beard, $-$ bushy brows, $-$ lipstick, $-$ earrings.} & \raisebox{-0.5\height}{\shortstack{\includegraphics[width=0.135\linewidth]{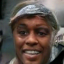}\\0.6736\\\scriptsize NFE 4,096}} & \raisebox{-0.5\height}{\shortstack{\includegraphics[width=0.135\linewidth]{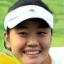}\\0.7743\\\scriptsize NFE 4,068}} & \raisebox{-0.5\height}{\shortstack{\includegraphics[width=0.135\linewidth]{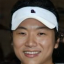}\\0.9759\\\scriptsize NFE 4,184}} & \raisebox{-0.5\height}{\shortstack{\includegraphics[width=0.135\linewidth]{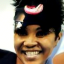}\\0.9670\\\scriptsize NFE 4,068}} & \raisebox{-0.5\height}{\shortstack{\includegraphics[width=0.135\linewidth]{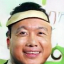}\\\textbf{0.9889}\\\scriptsize NFE 4,107}} \\[5pt]
\parbox[c]{\linewidth}{\footnotesize\raggedright\textbf{ID 006}\\$+$ smiling, $-$ eyeglasses, $+$ bangs, $-$ hat, $-$ mouth open, $-$ brown hair, Beard, $-$ bushy brows, $-$ lipstick, $-$ earrings.} & \raisebox{-0.5\height}{\shortstack{\includegraphics[width=0.135\linewidth]{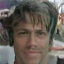}\\0.7856\\\scriptsize NFE 4,096}} & \raisebox{-0.5\height}{\shortstack{\includegraphics[width=0.135\linewidth]{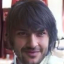}\\0.9525\\\scriptsize NFE 4,068}} & \raisebox{-0.5\height}{\shortstack{\includegraphics[width=0.135\linewidth]{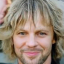}\\0.9921\\\scriptsize NFE 4,168}} & \raisebox{-0.5\height}{\shortstack{\includegraphics[width=0.135\linewidth]{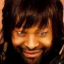}\\0.9782\\\scriptsize NFE 4,068}} & \raisebox{-0.5\height}{\shortstack{\includegraphics[width=0.135\linewidth]{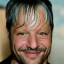}\\\textbf{0.9969}\\\scriptsize NFE 4,106}} \\[5pt]
\bottomrule
\end{tabular*}
\end{table}
\FloatBarrier

\begin{figure}[!htb]
\centering
\vspace{-4pt}
\captionsetup{skip=3pt}
\includegraphics[width=0.76\linewidth]{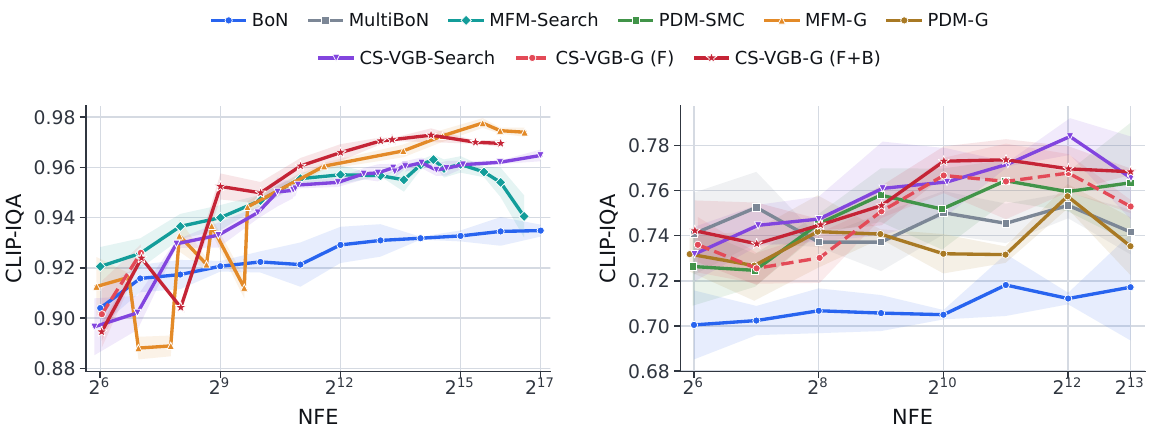}
\par
\makebox[0.38\linewidth][c]{\small (a) ImageNet with MFM}%
\makebox[0.38\linewidth][c]{\small (b) CelebA with PDM}
\par
\caption{\textbf{Independent image-quality evaluation.} CLIP-IQA of images selected using HPSv2 on ImageNet and attribute GM on CelebA. Curves show mean $\pm1$ SD across three seeds, each evaluated on 100 conditions, against actual mean NFE. CLIP-IQA is not used for reward optimization.}
\label{fig:app-imagenet-heldout}
\end{figure}

\subsection{Multi-Robot Planning with Diffusion Models}
\label{app:subsec:planning-settings}

\paragraph{Tasks and reference models.}
We use MMD's pretrained diffusion models ~\citep{exp:shaoul2025mmd}\footnote{\url{https://github.com/yoraish/mmd}}. Highways and Circle involve 16 robots; Conveyor Moderate and Hard involve four. Circle has a central obstacle of radius $0.20$, and the Conveyor tasks differ in blocking geometry (\cref{fig:planning-success-trajectories}). Joint success requires all robots to satisfy their prescribed start and goal positions and remain collision-free within the workspace. Collision checks account for robot radii and cover the linearly interpolated segments between consecutive waypoints. Each task has 64 fixed instances generated by perturbing starts, goals, and obstacles.

\begin{figure}[!htbp]
\centering
\begin{minipage}{0.245\linewidth}
\centering
\includegraphics[width=\linewidth,trim=45 63.2 1125 85,clip]{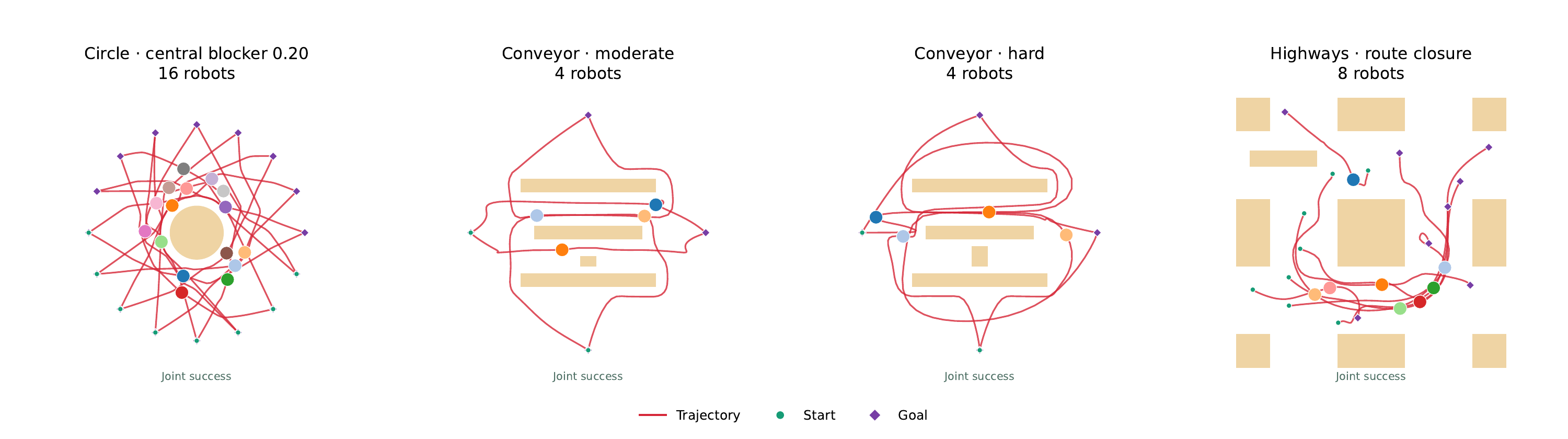}
\par\smallskip
{\small (a) Circle\par}
\end{minipage}\hfill
\begin{minipage}{0.245\linewidth}
\centering
\includegraphics[width=\linewidth,trim=405 63.2 765 85,clip]{figures/planning_success_row_source.pdf}
\par\smallskip
{\small (b) Conveyor Moderate\par}
\end{minipage}\hfill
\begin{minipage}{0.245\linewidth}
\centering
\includegraphics[width=\linewidth,trim=765 63.2 405 85,clip]{figures/planning_success_row_source.pdf}
\par\smallskip
{\small (c) Conveyor Hard\par}
\end{minipage}\hfill
\begin{minipage}{0.245\linewidth}
\centering
\includegraphics[width=\linewidth,trim=1125 63.2 45 85,clip]{figures/planning_success_row_source.pdf}
\par\smallskip
{\small (d) Highways\par}
\end{minipage}
\caption{\textbf{Successful multi-robot trajectories from CS-VGB-G.} Examples in Circle, Conveyor Moderate, Conveyor Hard, and Highways. Red curves show trajectories, green dots mark starts, purple diamonds mark goals, and shaded regions indicate obstacles. The Highways example uses eight robots, whereas the quantitative Highways experiments use 16.}
\label{fig:planning-success-trajectories}
\end{figure}

\paragraph{Reward and process verifier.}
Following the cost-based guidance formulation of MMD~\citep{exp:shaoul2025mmd}, we use $C=\lambda_{\mathrm{col}}C_{\mathrm{col}}+\lambda_{\mathrm{smooth}}C_{\mathrm{smooth}}+\lambda_{\mathrm{obj}}C_{\mathrm{obj}}$, combining penalties for inter-robot collisions, trajectory smoothness, and obstacle avoidance, respectively. Costs are evaluated in physical coordinates, with smoothness measured by a constant-velocity Gaussian-process energy. For a joint trajectory $x$, we define $r(x)=\exp(-C(x)/2)$. At a noisy checkpoint, we set $\widehat V_j(x_j)=r(\widehat x_0(x_j))$ using a single denoiser clean prediction, clipped to the valid range of the model's normalized trajectory coordinates and conditioned on the prescribed start and goal states; at the clean level, $\widehat V_0=r$.

\paragraph{CS-VGB updates and guidance.}
CS-VGB uses one chain, checkpoint grid $[6,2,0]$, and guided DDPM rollouts between checkpoints. After each forward DDPM step, we apply iterative cost guidance while keeping the start and goal states fixed. Backward proposals are unguided. We set $L_F=L_B=2$, $\alpha=0.3$, and $\chi=1$, with reverse insertion, lazy pool evaluation, and skipped stays. We use the search protocol in \cref{app:subsec:practical-implementation-details}, archive evaluated clean trajectories, and stop upon feasibility or the NFE/update limit (12,288 updates). We disable renoising warp for this task. If the search ends with neither a clean visit nor a feasible candidate, we complete denoising to the clean level using a forward-only fallback, which may exceed the requested NFE budget. For output selection, we prioritize feasible trajectories and select the lowest-cost one among those evaluated.

\paragraph{Evaluation and computational cost.}
Guided NFE caps are $2^9,2^{10},2^{11},2^{12},2^{13},12{,}288,2^{14}$. We plot actual mean model NFE per joint trial, including early stopping, without dividing by robot count. \Cref{tab:planning-final-results} reports overall and task-wise success rates alongside actual mean NFE at the largest cap. CS-VGB-G achieves the highest overall success with the fewest model evaluations on average; its advantage is concentrated in Highways and Circle, while all methods are near ceiling on the Conveyor tasks.

\begin{table}[!htb]
\centering\small
\caption{\textbf{Planning results at the largest evaluated setting.} All methods use NFE cap $2^{14}$. Success entries are percentages.}
\label{tab:planning-final-results}
\begin{tabular}{lrrrrrr}
\toprule
Method & Overall $\pm$ SD & Mean NFE & Highways & Circle & Conv. Mod. & Conv. Hard\\
\midrule
BoN-G&$67.58\pm1.56$&6,355.5&64.69&5.62&100.00&100.00\\
DFS-G&$60.08\pm1.25$&8,029.7&40.00&0.31&100.00&100.00\\
ABCD-G&$71.17\pm1.52$&5,765.2&73.12&11.56&100.00&100.00\\
CS-VGB-G&$84.84\pm1.98$&5,044.8&80.00&60.00&99.69&99.69\\
\bottomrule
\end{tabular}
\end{table}
\par
\FloatBarrier

\subsection{Additional Ablations on Search Dynamics}
\label{app:subsec:search-efficiency-ablations}
We study lazy pool evaluation and directional persistence on 100 CelebA conditions (examples in \cref{fig:celeba-prompt-examples}). To measure evaluation cost, we compare lazy ON and OFF at $\chi=1$ with the same number of updates within each search variant: 313 for guided CS-VGB and 45 for unguided search. Unlike a fixed-NFE comparison, this protocol measures how much computation each variant consumes for a fixed number of search updates.

\begin{table}[!htbp]
\centering\small
\setlength{\tabcolsep}{4pt}
\caption{\textbf{Lazy pool evaluation at matched update counts on CelebA.} NFE averages all 100 runs.}
\label{tab:celeba-search-efficiency-ablations}
\begin{tabular}{@{}lrrrrrr@{}}
\toprule
& & \multicolumn{2}{c}{Mean NFE} & & \multicolumn{2}{c}{Attribute GM}\\
\cmidrule(lr){3-4}\cmidrule(lr){6-7}
Variant & Updates & \multicolumn{1}{c}{ON} & \multicolumn{1}{c}{OFF} & Saved & \multicolumn{1}{c}{ON} & \multicolumn{1}{c@{}}{OFF}\\
\midrule
Guided   &313&4,103.89&5,955.02&31.1\%&0.98138&0.98182\\
Unguided &45 &4,174.24&5,738.72&27.3\%&0.83989&0.83953\\
\bottomrule
\end{tabular}
\end{table}

\paragraph{Lazy pool evaluation.}
At matched update counts, lazy evaluation reduces mean NFE by $31.1\%$ with guidance (\cref{tab:celeba-search-efficiency-ablations}). Guided search retains nearly the same mean GM, while mean runtime decreases from $73.27$ to $48.63$ seconds, a $33.6\%$ reduction. A similar computational saving is observed in unguided search, where lazy evaluation reduces mean NFE by $27.3\%$ with similar mean GM among runs that reach a clean output. These results illustrate how deferring opposite-pool evaluations reduces the computational cost of continued search.

\paragraph{Directional persistence.}
We also vary directional persistence with lazy ON and an initial target budget of 4,096 NFE (\cref{tab:celeba-persistence-ablation}). Direction reversals and clean visits are measured at this initial budget over all 100 runs, including zero visits for runs without a clean output. Reward averages use all 100 conditions after supplementing missing outputs at higher NFE caps and are therefore not compute-matched across $\chi$.

\begin{table}[!htbp]
\centering\small
\caption{\textbf{Directional persistence on CelebA with lazy ON.} G/U denote guided/unguided search. Panel (a) reports supplemented attribute GM; panel (b) reports mean counts per run at the initial 4,096-NFE target, comparing $\chi=0$ with $\chi=1$.}
\label{tab:celeba-persistence-ablation}
\begin{minipage}[t]{0.32\linewidth}
\vspace{0pt}\centering
\setlength{\tabcolsep}{3pt}
\begin{tabular*}{\linewidth}{@{\extracolsep{\fill}}crr@{}}
\toprule
\multicolumn{3}{c}{(a) Attribute GM}\\
\midrule
$\chi$ & \multicolumn{1}{c}{G} & \multicolumn{1}{c@{}}{U}\\
\midrule
0   &0.9787&0.8214\\
0.3 &0.9803&0.8296\\
0.5 &0.9816&0.8233\\
0.7 &0.9804&0.8282\\
1   &0.9804&0.8264\\
\bottomrule
\end{tabular*}
\end{minipage}\hfill
\begin{minipage}[t]{0.65\linewidth}
\vspace{0pt}\centering
\setlength{\tabcolsep}{3pt}
\begin{tabular*}{\linewidth}{@{\extracolsep{\fill}}lccc@{}}
\toprule
\multicolumn{4}{c}{(b) Search dynamics: $\chi=0\ \to\ \chi=1$}\\
\midrule
Variant & F-to-B flips & B-to-F flips & Clean visits\\
\midrule
G & $45.81\to8.85$ & $55.19\to20.53$ & $12.47\to15.99$\\
U & $7.78\to4.35$ & $9.10\to6.17$ & $1.81\to2.21$\\
\bottomrule
\end{tabular*}
\end{minipage}
\end{table}

Increasing $\chi$ from 0 to 1 reduces flips in both directions and increases clean visits for both variants. This pattern is qualitatively consistent with the role of persistence in \cref{prop:main-hitting}, which establishes $\Theta(J)$ expected updates to the first clean visit under its stated assumptions. Reward changes are small and nonmonotonic, with guided GM varying by 0.0029 across the sweep. Persistence therefore primarily changes checkpoint traversal in these experiments, rather than uniformly improving reward.

\end{document}